\documentclass{article}

\pdfoutput=1

\usepackage{times}
\usepackage[left=1.125in, top=1in, bottom=1in, right=1.125in]{geometry}
\parskip 2.mm
\parindent 0.mm

\usepackage{ifthen}

\newboolean{withSupp}
\setboolean{withSupp}{true}

\ifthenelse{\boolean{withSupp}}{\newcommand{\Supp}[2]{#1}}{\newcommand{\Supp}[2]{#2}}

\usepackage{upgreek,morenotations,rotating}
\usepackage{graphicx} 
\usepackage{subfigure} 
\usepackage{multirow}

\graphicspath{{Figs//}{Plots//}{Supp-Plots//}}

\usepackage{natbib}

\usepackage{algorithm}
\usepackage{algorithmic}
\usepackage{makecell}
\usepackage{tabularx}
\usepackage{enumitem}
\usepackage[usenames,dvipsnames]{xcolor}

\usepackage[utf8]{inputenc} 
\usepackage[T1]{fontenc}    
\usepackage[colorlinks=true,citecolor=red]{hyperref} 
\usepackage{url}            
\usepackage{booktabs}       
\usepackage{amsfonts}       
\usepackage{nicefrac}       
\usepackage{microtype}      

\usepackage{cleveref}

\title{A scaled Bregman theorem with applications}

%

\author{
  Richard Nock \qquad Aditya Krishna Menon \qquad Cheng Soon Ong\\
  Data61 and the Australian National University \\
  \texttt{\{richard.nock, aditya.menon, chengsoon.ong\}@data61.csiro.au} \\
}

\newcommand{\sign}{\mathrm{sign}}

\newcommand{\ie}{i.e.~}

\newenvironment{remark}{\par\noindent{\bf Remark.\ }}{\hfill\BlackBox}
\newenvironment{example}{\par\noindent{\bf Example.\ }}{\hfill\BlackBox}

\newcommand{\XCal}{\mathcal{X}}
\newcommand{\Real}{\mathbb{R}}

\newcommand{\SSf}{\mathsf{S}}

\bibliographystyle{plainnat}


\date{\empty}

\begin{document}

\maketitle

\begin{abstract}

Bregman divergences play a central role in the design and analysis of a range of machine learning algorithms.
This paper explores the use of Bregman divergences to establish \emph{reductions} between such algorithms and their analyses.
We present a new \emph{scaled isodistortion theorem involving Bregman
divergences} (scaled Bregman theorem for short) which shows that
certain ``Bregman distortions'' (employing a potentially \emph{non-convex} generator)
may be \textit{exactly} re-written as a scaled Bregman divergence computed over transformed data.
Admissible distortions include {geodesic distances} on curved manifolds and projections or gauge-normalisation,
while admissible data include scalars, vectors and matrices.

Our theorem allows one to leverage to the wealth and convenience of Bregman divergences when analysing algorithms relying on the aforementioned Bregman distortions. 
We illustrate this with three novel applications of our theorem:
a reduction from multi-class density ratio to class-probability estimation,
a new adaptive
projection free yet norm-enforcing 
dual norm mirror descent
algorithm, 
and
a reduction from clustering on flat manifolds to clustering on curved manifolds.
Experiments on each of these domains validate the analyses and suggest that the scaled Bregman theorem might be a worthy addition to the popular handful of Bregman divergence properties that have been pervasive in machine learning.
\end{abstract}

\section{Introduction: Bregman divergences as a reduction tool}

Bregman divergences play a central role in the design and analysis of a range of machine learning algorithms.
In recent years, Bregman divergences have arisen in procedures for
convex optimisation \citep{Beck:2003},
online learning \citep[Chapter 11]{Cesa-Bianchi:2006}
clustering \citep{bmdgCWj},
matrix approximation \citep{Dhillon:2008},
class-probability estimation \citep{Buja:2005,nnBD,Reid:2010,Reid:2011},
density ratio estimation \citep{Sugiyama:2012},
boosting \citep{cssLRj},
variational inference \citep{hlrhbtBB},
and
computational geometry \citep{bnnBV}.
Despite these being very different applications, 
many of these algorithms and their analyses basically
rely on
three beautiful analytic properties of Bregman divergences,
properties that we summarize for differentiable scalar convex functions $\varphi$ with derivative $\varphi'$, conjugate $\varphi^\star$, and divergence $D_\varphi$:
\vspace{-0.05in}
\begin{itemize}[leftmargin=0.25in]
\item the triangle equality: $D_\varphi(x\|y) + D_\varphi(y\|z) - D_\varphi(x\|z) = ( \varphi'(z) -  \varphi'(y) ) (x-y)$;
\item the dual symmetry property: $D_\varphi(x\|y) = D_{\varphi^\star}(\varphi'(y)\| \varphi'(x))$;
\item the right-centroid (population minimizer) is the average: $\arg\min_\mu \expect [D_\varphi(\X\| \mu)] = \expect[\X]$.
\end{itemize}
\vspace{-0.05in}
Casting a problem as a Bregman minimisation
allows one to
employ these properties to simplify analysis;
for example, by interpreting mirror descent as applying a particular Bregman regulariser, \citet{Beck:2003} relied on the triangle equality above to simplify its proof of convergence.

Another intriguing possibility 
is that one may
derive \emph{reductions} amongst learning problems by connecting their underlying Bregman minimisations.
\citet{Menon:2016} recently established how 
(binary) density ratio estimation (DRE) can be exactly reduced to class-probability estimation (CPE).
This was facilitated by interpreting CPE as a Bregman minimisation \citep[Section 19]{Buja:2005}, and a new property of Bregman divergences ---
\citet[Lemma 2]{Menon:2016} showed that for \emph{any} twice differentiable scalar convex $\varphi$, 
for $g(x) = 1 + x$ and $\varphi^\dagger(x) \defeq g(x) \cdot \varphi(x/g(x))$,
\begin{equation}
	\label{eqn:menon-ong-lemma}
 	g(x) \cdot D_\varphi( x/g(x) \| y/g(y) ) = D_{\varphi^\dagger}( x \| y ).
\end{equation} 
Since the binary class-probability function $\eta(\ve{x}) = \Pr(\Y=1 | \X=\ve{x})$ is related to the class-conditional density ratio 
$r(\ve{x}) = \Pr(\X=\ve{x} | \Y=1)/\Pr(\X=\ve{x} | \Y=-1)$
via Bayes' rule as $\eta(\ve{x}) = r(\ve{x})/g(r(\ve{x}))$,
any $\hat{\eta}$ with small $D_{\varphi}( \eta\| \hat{\eta} )$ implicitly produces an $\hat{r}$ with low $D_{\varphi^\dagger}( r\| \hat{r} )$
\ie a good estimate of the density ratio.
The Bregman property of Equation \ref{eqn:menon-ong-lemma} thus establishes a reduction from DRE to CPE.
Two natural questions arise from this analysis:
can we generalise Equation \ref{eqn:menon-ong-lemma} to other $g(\cdot)$,
and if so,
can we similarly relate \emph{other} problems to each other?

This paper presents a new Bregman identity (Theorem \ref{th00}), the
\emph{scaled Bregman theorem}, a significant
generalisation of 
\citet[Lemma 2]{Menon:2016}.
It shows that general \emph{distortions} $D_{\varphi^\dagger}$ 
-- which are not necessarily convex, positive, bounded or symmetric --
may be re-expressed as a Bregman divergence $D_\varphi$
computed over transformed data, 
where this transformation can be as simple as a projection or normalisation by a gauge,
or more involved like the exponential map on lifted coordinates for a curved manifold.
Interestingly, candidate distortions include {geodesic distances} on curved manifolds. 
Equivalently, Theorem \ref{th00} shows various distortions
can be ``reverse engineered'' as Bregman divergences
(despite appearing \emph{prima facie} to be a very different object),
and thus inherit their good properties.
Hence, Bregman divergences can embed several distances in a different --- and arguably less involved --- way than the transformations known to date \citep{abbBD}. 

As with the aforementioned key properties of Bregman divergences,
Theorem \ref{th00} has potentially wide applicability.
We present three such novel applications 
(see Table \ref{t-red})
to vastly different problems:
\vspace{-\topsep}
\begin{itemize}[leftmargin=0.25in]
	\item a reduction of multiple density ratio estimation to multiclass-probability estimation (\S\ref{sec:multi-class-dr}), generalising the results of \cite{Menon:2016} for the binary label case,
	\item a 
	\emph{projection-free} yet norm-enforcing mirror gradient
        algorithm (enforced norms are those of mirrored vectors
        \textit{and} of the offset) with 
	guarantees for adaptive filtering (\S\ref{sec:pLMS}), and
      \item  a seeding approach for clustering on positively or
        negatively (constant) curved manifolds based on a popular
        seeding for flat manifolds and with the same approximation
        guarantees (\S\ref{sec:clustering}).
\end{itemize}
Experiments on each of these domains (\S\ref{sec:experiments}) validate our analysis.
The Supplementary Material details the proofs of all results, provides the experimental results \textit{in extenso} and some additional (nascent) applications of the scaled Bregman theorem to exponential families and computational geometry.

\begin{table}[!t]
\begin{center}
\renewcommand{\arraystretch}{1.25}
\scalebox{0.9}{
\begin{tabular}{lll}
\hline
\hline
 \textbf{Problem A} & \textbf{Problem B that Theorem \ref{th00} reduces A to} & \textbf{Reference} \\ \hline
Multiclass density-ratio estimation & Multiclass class-probability estimation & \S\ref{sec:multi-class-dr}, Lemma \ref{lemm:multiclass-dr} \\
Online optimisation on $L_q$ ball & Convex unconstrained online learning & \S\ref{sec:pLMS}, Lemma \ref{lemlplq} \\
Clustering on curved manifolds & Clustering on flat manifolds & \S\ref{sec:clustering}, Lemma \ref{lemm:kmeanspp} \\
\hline
\hline
\end{tabular}
}
\end{center}
\caption{Applications of our scaled Bregman Theorem (Theorem \ref{th00})
  --- ``Reduction'' encompasses shortcuts on algorithms \textit{and} on
  analyses (algorithm/proof
  A uses algorithm/proof B as subroutine).\label{t-red}}
\vspace{-0.175in}
\end{table}


\section{Main result: the scaled Bregman theorem}

In the remaining, $[k] \defeq \{0, 1, ..., k\}$ and $[k]_* \defeq \{1, 2, ..., k\}$ for $k\in \mathbb{N}$.
For any differentiable (but not necessarily convex) $\varphi : {\XCal} \rightarrow {\Real}$, we define the Bregman ``distortion'' $D_\varphi$ as
\begin{eqnarray}
D_{\varphi}(
  \ve{x}\|\ve{y}) & \defeq & \varphi(\ve{x}) - \varphi(\ve{y}) -
  (\ve{x}-\ve{y})^\top \nabla \varphi(\ve{y})\:\:. \label{eqBreg}
\end{eqnarray}
When $\varphi$ is convex, $D_\varphi$ is the familiar Bregman divergence with generator $\varphi$. 

Without further ado, we present our main result.
\begin{theorem}\label{th00}
Let, $\varphi :
{\XCal} \rightarrow {\Real}$ be 
convex differentiable, and $g :
{\XCal} \rightarrow {\Real}_{*}$ be differentiable. Then,
\begin{eqnarray}
g(\ve{x}) \cdot D_{\varphi}\left(\ve{x}/g(\ve{x}) \bigm\| \ve{y}/g(\ve{y}) \right) & = & D_{\varphi^\dagger}\left( \ve{x} \bigm\| \ve{y}\right)\:\:, \forall \ve{x}, \ve{y}\in \XCal\:\:,\label{eq11} \\
\text{ where }\varphi^\dagger(\ve{x}) & \defeq & g(\ve{x})\cdot \varphi\left( \ve{x}/{g(\ve{x})} \right)\:\:,\label{defdagger}
\end{eqnarray}
if and only if
(i) $g$ is affine on $\XCal$,
and/or
(ii) for every $\ve{z} \in \XCal_g \defeq\{ (1/g(\ve{x}))\cdot \ve{x} : \ve{x}\in \XCal\}$,
\begin{equation}
  \label{eqn:gradient-condition}
  \varphi\left(\ve{z}\right) = \ve{z}^\top \nabla\varphi(\ve{z})\:\:.
\end{equation}
\end{theorem}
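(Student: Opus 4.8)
The plan is to prove both directions of the equivalence by collapsing everything into a single algebraic identity. Write $u(\ve{x}) \defeq \ve{x}/g(\ve{x})$ for the data transform, so that $\XCal_g = \{u(\ve{x}) : \ve{x}\in\XCal\}$ and $\varphi^\dagger(\ve{x}) = g(\ve{x})\cdot\varphi(u(\ve{x}))$. The single most important computation --- and the step I expect to be the main obstacle --- is a clean formula for $\nabla\varphi^\dagger$. Differentiating the product $g(\ve{x})\varphi(u(\ve{x}))$ requires the Jacobian of $u$, namely $J_u(\ve{x}) = g(\ve{x})^{-1} I - g(\ve{x})^{-2}\,\ve{x}\,\nabla g(\ve{x})^\top$; after applying the chain rule and using the scaling relation $\ve{x} = g(\ve{x})\,u(\ve{x})$ to cancel the awkward $g^{-2}$ terms, I expect the Jacobian contribution to collapse to
\[
\nabla\varphi^\dagger(\ve{x}) = \nabla\varphi(u(\ve{x})) + \left(\varphi(u(\ve{x})) - u(\ve{x})^\top\nabla\varphi(u(\ve{x}))\right)\nabla g(\ve{x}).
\]
The coefficient of $\nabla g(\ve{x})$ is exactly the quantity that vanishes under condition (ii), which is the first indication that the theorem should fall out cleanly.

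With this gradient in hand, I would substitute the definitions of $\varphi^\dagger$ and $\nabla\varphi^\dagger$ into $D_{\varphi^\dagger}(\ve{x}\|\ve{y})$, expand $g(\ve{x})\cdot D_\varphi(u(\ve{x})\|u(\ve{y}))$ using $g(\ve{x})u(\ve{x}) = \ve{x}$ and $g(\ve{x})u(\ve{y}) = (g(\ve{x})/g(\ve{y}))\ve{y}$, and subtract. The $g(\ve{x})\varphi(u(\ve{x}))$ terms cancel, and collecting the $\varphi(u(\ve{y}))$- and $\nabla\varphi(u(\ve{y}))$-terms (again invoking $\ve{y} = g(\ve{y})u(\ve{y})$) should produce the master identity
\[
D_{\varphi^\dagger}(\ve{x}\|\ve{y}) - g(\ve{x})\cdot D_\varphi(u(\ve{x})\|u(\ve{y})) = \left(\varphi(u(\ve{y})) - u(\ve{y})^\top\nabla\varphi(u(\ve{y}))\right)\cdot D_g(\ve{x}\|\ve{y}),
\]
where $D_g(\ve{x}\|\ve{y}) = g(\ve{x}) - g(\ve{y}) - (\ve{x}-\ve{y})^\top\nabla g(\ve{y})$ is the (formal) Bregman distortion of $g$ itself. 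Everything then reduces to understanding when the right-hand side vanishes for all $\ve{x},\ve{y}\in\XCal$.

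For the $(\Leftarrow)$ direction this is immediate: if $g$ is affine on $\XCal$ then its first-order Taylor remainder $D_g$ is identically zero, while if condition (ii) holds then the factor $\varphi(u(\ve{y})) - u(\ve{y})^\top\nabla\varphi(u(\ve{y}))$ vanishes for every $\ve{y}$ (since $u(\ve{y})\in\XCal_g$); in either case the master identity yields equality. For the $(\Rightarrow)$ direction I argue by contraposition on (ii): suppose (ii) fails, so there is $\ve{z}_0 = u(\ve{y}_0)\in\XCal_g$ with $\varphi(\ve{z}_0) \neq \ve{z}_0^\top\nabla\varphi(\ve{z}_0)$, i.e.\ the bracketed factor at $\ve{y}_0$ is nonzero. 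Then equality for all $\ve{x}$ forces $D_g(\ve{x}\|\ve{y}_0) = 0$ for every $\ve{x}\in\XCal$, which says precisely that $g$ coincides with its tangent hyperplane at $\ve{y}_0$ throughout $\XCal$ --- hence $g$ is affine on $\XCal$ and (i) holds. This establishes ``(i) and/or (ii)'' as claimed.

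The only genuinely technical point is the gradient computation; the rest is careful but routine algebra, and the decisive simplifications all stem from the homogeneity identity $\ve{x} = g(\ve{x})u(\ve{x})$. I would take care in the converse to confirm that the chosen $\ve{y}_0$ indeed lies in $\XCal$ --- which it does, since $\XCal_g$ is by definition the image of $\XCal$ under $u$ --- so that the vanishing remainder delivers affinity of $g$ on $\XCal$ exactly, and not merely on a subset.
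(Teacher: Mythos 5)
Your proposal is correct and follows essentially the same route as the paper's proof: the same Jacobian/chain-rule computation of $\nabla\varphi^\dagger$, the same master identity expressing $\mathrm{RHS}-\mathrm{LHS}$ as the product of the homogeneity defect $\varphi(\ve{v})-\ve{v}^\top\nabla\varphi(\ve{v})$ with the Bregman distortion $D_g(\ve{x}\|\ve{y})$ of $g$, and the same case analysis on when that product vanishes. If anything, your contraposition argument for the forward direction (failure of (ii) at a single $\ve{y}_0$ forces $D_g(\cdot\|\ve{y}_0)\equiv 0$, hence $g$ coincides with its tangent affine map everywhere) spells out the ``and/or'' logic slightly more carefully than the paper does.
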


Table \ref{t-ex-short2} presents some examples
of (sometimes involved) triplets $(D_\varphi,  D_{\varphi^\dagger}, g )$ for which Equation \ref{eq11} holds;
related proofs are in \Supp{Appendix \ref{sec-app-der}}{Appendix C}. If we fold $g$ into
$D_\varphi$ in the left hand-side of eq. (\ref{eq11}), then 
Theorem \ref{th00} states a scaled isodistortion
(\textit{sometimes} it turns out to be equivalently an
\textit{adaptive} scaled isometry, see \Supp{Appendix \ref{app:scaled-iso}}{Appendix I}) property
between $\XCal$ and $\XCal_g$. Because $D_\varphi$ is such an
important object, we do not perform this folding and refer to Theorem
\ref{th00} as the \emph{scaled Bregman theorem} for short.

\begin{remark}
If $\XCal_g$ is a vector space, 
$\varphi$ satisfies Equation \ref{eqn:gradient-condition} if and only if it is positive homogeneous of degree 1 on $\XCal_g$ (i.e.\ $\varphi( \alpha \ve{z} ) = \alpha \cdot \varphi( \ve{z} )$ for any $\alpha > 0$) from Euler's homogenous function theorem.
When $\XCal_g$ is not a vector space,
this only holds for $\alpha$ such that $\alpha \ve{z} \in \XCal_g$ as well.
We thus call the gradient condition of Equation \ref{eqn:gradient-condition} ``restricted positive homogeneity'' for simplicity.
\end{remark}

\begin{remark}
\Supp{Appendix \ref{app:deep-bregman}}{Appendix D} gives a ``deep composition'' extension of Theorem \ref{th00}.
\end{remark}

For the special case where $\XCal = \Real$, and $g( x ) = 1 + x$, Theorem \ref{th00} is exactly \citet[Lemma 2]{Menon:2016} (c.f.\ Equation \ref{eqn:menon-ong-lemma}).
We wish to highlight a few points with regard to our more general result.
First, the ``distortion'' generator $\varphi^\dagger$ may be\footnote{Evidently, $\varphi^\dagger$ is convex iff $g$ is non-negative, by Equation (\ref{eq11}) and the fact that a function is convex iff its Bregman ``distortion'' is nonnegative \citep[Section 3.1.3]{Boyd:2004}.}
\emph{non-convex}, as the following illustrates.

\begin{example}
Suppose $\varphi( \ve{x} ) = \frac{1}{2} \| \ve{x} \|_2^2$ corresponds to the generator for squared Euclidean distance.
Then, for $g( \ve{x} ) = 1 + \ve{1}^\top \ve{x}$, we have
$ \varphi^{\dagger}( \ve{x} ) = \frac{1}{2} \cdot \frac{\| \ve{x} \|_2^2}{1 + \ve{1}^\top \ve{x}}, $
which is non-convex on $\XCal = \Real^d$.
\end{example}

When $\varphi^\dagger$ is non-convex, the right hand side in Equation \ref{eq11} is an object
that ostensibly bears only a superficial similarity to a Bregman divergence;
it is somewhat remarkable that Theorem \ref{th00} shows this general ``distortion'' between a pair $(\ve{x}, \ve{y})$
to be entirely equivalent to a (scaling of a) Bregman divergence between some {transformation} of the points.
Second, when $g$ is linear, Equation \ref{eq11} holds for \emph{any} convex $\varphi$.
(This was the case considered in \citet{Menon:2016}.)
When $g$ is non-linear, however, $\varphi$ must be chosen carefully so that $( \varphi, g )$ satisfies the restricted homogeneity conditon\footnote{We stress that this condition only needs to hold on $\XCal_g \subseteq \XCal$.; it would not be really interesting in general for $\varphi$ to be homogeneous \textit{everywhere} in its domain, since we would basically have $\varphi^\dagger = \varphi$.} of Equation \ref{eqn:gradient-condition}.
In general, given a convex $\varphi$, one can ``reverse engineer'' a suitable $g$ to guarantee this conditon,
as illustrated by the following example.

\begin{example}
Suppose\footnote{The constant $1/2$ added in $\varphi$ does \textit{not} change $D_\varphi$, since a Bregman divergence is invariant to affine terms; removing this however would make the divergences $D_\varphi$ and $D_{\varphi^\dagger}$ differ by a constant.}
$\varphi( \ve{x} ) = (1 + \| \ve{x} \|_2^2)/2$.
Then, Equation \ref{eqn:gradient-condition} requires that
$\|\ve{x}\|_2^2 = 1$ for every $\ve{x} \in \XCal_g$, i.e.\ $\XCal_g$ is (a subset of) the unit sphere.
This is afforded by the choice $g( \ve{x} ) = \| \ve{x} \|_2$.
\end{example}

\begin{table}[!t]
\renewcommand{\arraystretch}{1.25}
\begin{center}
\scalebox{0.775}{
\begin{tabular}{l|l|l|l}
\hline\hline
 $\XCal$ & $D_{\varphi}\left(\ve{x}\|\ve{y}\right)$ & $D_{\varphi^\dagger}\left(\ve{x}\|\ve{y}\right)$ & $g(\ve{x})$ \\
 \hline 
 $\Real^d$ & $\frac{1}{2}\cdot \|\ve{x}-\ve{y}\|_2^2$ & $\|\ve{x}\|_2 \cdot \left( 1 - \cos \angle \ve{x}, \ve{y}\right)$ & $\| \ve{x} \|_2$ \\
 $\Real^d$ & $\frac{1}{2}\cdot(\|\ve{x}\|_q^2-\|\ve{y}\|_q^2) -\sum_i \frac{(x_i - y_i) \cdot \mathrm{sign}(y_i) \cdot |y_i|^{q-1}}{\|\ve{y}\|_q^{q-2}}$ &  $W \cdot \|\ve{x}\|_q - W \cdot \sum_i \frac{x_i \cdot \mathrm{sign}(y_i) \cdot |y_i|^{q-1}}{\|\ve{y}\|_q^{q-1}}$ & $\| \ve{x} \|_q/W$ \\
 $\Real^d\times \Real$ & $\frac{1}{2}\cdot \|\ve{x}^S-\ve{y}^S\|_2^2$ & $\frac{\|\ve{x}\|_2}{\sin \|\ve{x}\|_2} \cdot \left( 1 - \cos D_G(\ve{x}, \ve{y})\right)$ & $\| \ve{x} \|_2/\sin \| \ve{x} \|_2$ \\
 $\Real^d\times \mathbb{C}$ & $\frac{1}{2}\cdot \|\ve{x}^H-\ve{y}^H\|_2^2$ & $-\frac{\|\ve{x}\|_2}{\sinh \|\ve{x}\|_2} \cdot \left( \cosh D_G(\ve{x}, \ve{y}) - 1\right)$ & $-\| \ve{x} \|_2/\sinh \| \ve{x} \|_2$ \\
 $\Real_+^d$ & $\sum_i x_i \log \frac{x_i}{y_i}- \ve{1}^\top(\ve{x}-\ve{y})$ & $\sum_i x_i\log \frac{x_i}{y_i} -d\cdot \expect[\X] \cdot \log \frac{\expect[\X]}{\expect[\Y]}$ & $\ve{1}^\top \ve{x}$ \\ 
 $\Real_+^d$ & $\sum_i\frac{x_i}{y_i} -\sum_i\log\frac{x_i}{y_i} - d$ & $\sum_i \frac{x_i (\prod_j y_j)^{1/d}}{y_i}  - d (\prod_j x_j)^{1/d}$ & $\prod_i x_i^{1/d}$ \\ 
 $\textbf{S}(d)$ & $\trace{\matrice{x}\log \matrice{x} - \matrice{x}\log \matrice{y}}-\trace{\matrice{x}} + \trace{\matrice{y}}$ & $\trace{\matrice{x}\log \matrice{x} - \matrice{x}\log \matrice{y}}-\trace{\matrice{x}} \cdot \log\frac{\trace{\matrice{x}}}{\trace{\matrice{y}}}$ & $\trace{\matrice{x}}$ \\
 $\textbf{S}(d)$ & $\trace{\matrice{x}\matrice{y}^{-1}}-\log\det (\matrice{x}\matrice{y}^{-1}) - d$ & $\det (\matrice{y}^{1/d}) \trace{\matrice{x}\matrice{y}^{-1}} -d\cdot \det (\matrice{x}^{1/d})$ & $\det(\matrice{x}^{1/d})$ \\
\bottomrule
\end{tabular}
}
\end{center}
\caption{Examples of $( D_\varphi, D_{\varphi^\dagger}, g )$ for which Equation \ref{eq11} holds.
Function $\ve{x}^S \defeq f(\ve{x}) : {\Real}^d \rightarrow
{\Real}^{d+1}$ and $\ve{x}^H \defeq f(\ve{x}) : {\Real}^d \rightarrow
{\Real}^{d}\times \mathbb{C}$ are the Sphere and Hyperbolic lifting
maps defined in \Supp{Equation \ref{defSmap}, \ref{defHymap}}{Equation 67, 78}. $W>0$ is a constant. $D_G$ denotes the $G$eodesic
distance on the sphere (for $\ve{x}^S$) or the hyperboloid (for $\ve{x}^H$).
$\textbf{S}(d)$ is the set of symmetric real matrices.
Related proofs are in \Supp{Section \ref{sec-app-der}}{Section C}.\label{t-ex-short2}}
\vspace{-0.25in}
\end{table}

Third, Theorem \ref{th00} is not merely a mathematical curiosity:
we now show that it facilitates novel results in three very different domains,
namely
estimating multiclass density ratios,
constrained online optimisation,
and
clustering data on a manifold with non-zero  
curvature.
We discuss nascent applications to exponential families and computational geometry in \Supp{Appendices \ref{app:exp-family} and \ref{app:comp-geom}}{Appendices E and F}.

\section{Multiclass density-ratio estimation via class-probability estimation}

Given samples from a number of densities, density ratio estimation concerns estimating the ratio between each density and some reference density.
This has applications in the covariate shift problem wherein the train and test distributions over instances differ \citep{Shimodaira:2000}.
Our first application of Theorem \ref{th00} is to show how density ratio estimation can be reduced to class-probability estimation \citep{Buja:2005,Reid:2010}.

To proceed, we fix notation.
For some integer $C \geq 1$, consider a distribution $\pr(\X, \Y)$ over an (instance, label) space $\XCal \times [C]$.
Let $(\{P_c\}_{c=1}^C,\ve{\pi})$ be densities giving $\pr(\X | \Y = c)$ and $\pr(\Y = c)$ respectively, and $M$ giving $\pr(\X)$ accordingly.
Fix $c^* \in [C]$ a reference class, and suppose for simplicity that $c^* = C$.
Let $\tilde{\ve{\pi}} \in \bigtriangleup^{C-1}$ such that
$\tilde{\pi}_c \defeq \pi_c / (1-\pi_C)$.
	\emph{Density ratio estimation} \citep{Sugiyama:2012} concerns inferring the vector $\ve{r}(\ve{x}) \in {\Real}^{C-1}$ of density ratios relative to $C$, with
$ r_c (\ve{x}) \defeq {\pr(\X = \ve{x} | \Y = c)}/{\pr(\X = \ve{x} | \Y = C)}\:\:, $
while
	\emph{class-probability estimation} \citep{Buja:2005} concerns inferring the vector $\ve{\eta}(\ve{x}) \in {\Real}^{C-1}$ of class-probabilities, with
$ \eta_c (\ve{x}) \defeq \pr(\Y = c | \X = \ve{x})/\tilde{\pi}_c \:\:. $
In both cases, we estimate the respective quantities given an iid sample $\SSf \sim \pr(\X, \Y)^N$.

\label{sec:multi-class-dr}




The genesis of the reduction from density ratio to class-probability estimation is
the fact that $\ve{r}( \ve{x} ) = (\pi_C/(1 - \pi_C)) \cdot \ve{\eta}(\ve{x})/\eta_C(\ve{x})$.
In practice one will only have an estimate $\hat{\ve{\eta}}$, typically derived by minimising a suitable loss on the given $\SSf$ \citep{Williamson:2014},
with a canonical example being multiclass logistic regression.
Given $\hat{\ve{\eta}}$, it is natural to estimate the density ratio via:
\begin{eqnarray}
 \hat{\ve{r}}(\ve{x}) & = & \frac{\hat{\ve{\eta}}(\ve{x})}{\hat{{\eta}}_C( \ve{x} )} \label{eqn:dr-from-cpe}\:\:. 
\end{eqnarray}
While this estimate is intuitive, to establish a formal reduction we must relate the quality of $\hat{\ve{r}}$ to that of $\hat{\ve{\eta}}$.
Since the minimisation of a suitable loss for class-probability estimation is equivalent to a Bregman minimisation \citep[Section 19]{Buja:2005}, \citep[Proposition 7]{Williamson:2014},
this is however immediate by Theorem \ref{th00}, as shown below.

\begin{lemma}
\label{lemm:multiclass-dr}
Given a class-probability estimator $\hat{\eta} \colon \XCal \to [0, 1]^{C - 1}$, let the density ratio estimator $\hat{r}$ be
as per Equation \ref{eqn:dr-from-cpe}.
Then for any convex differentiable $\varphi \colon [0, 1]^{C - 1} \to \Real$,
\begin{eqnarray}
\label{eqn:nock-menon-ong}
\expect_{\X \sim M}[D_\varphi(\ve{\eta}(\X) \| \hat{\ve{\eta}}(\X))] & = & (1-\pi_C) \cdot \expect_{\X \sim P_C}\left[ D_{\varphi^\dagger}(\ve{r}(\X) \| \hat{\ve{r}}(\X))\right]\:\:
\end{eqnarray}
where $\varphi^\dagger$ is as per Equation \ref{defdagger} with
$ g(\ve{x}) \defeq {\pi_C}/({1-\pi_C}) + \tilde{\ve{\pi}}^\top \ve{x} \:\:.$ 
\end{lemma}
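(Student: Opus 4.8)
The plan is to apply Theorem~\ref{th00} at each instance $\X$ and then integrate, exploiting the fact that the scaling factor $g$ simultaneously performs the Bregman rescaling and the change of measure from $M$ to $P_C$. The first observation is that $g(\ve{x}) = \pi_C/(1-\pi_C) + \tilde{\ve{\pi}}^\top\ve{x}$ is \emph{affine}, so condition~(i) of Theorem~\ref{th00} is met and Equation~\ref{eq11} holds for the given convex differentiable $\varphi$ with \emph{no} homogeneity requirement --- which is why the lemma needs only convexity of $\varphi$. It then remains to match the two sides of Equation~\ref{eqn:nock-menon-ong} to the two sides of Equation~\ref{eq11} under the map $\ve{z}\mapsto\ve{z}/g(\ve{z})$.

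The crucial algebraic step is to check that this map carries density ratios to class probabilities, \emph{both} for the population quantities and for their estimates. Writing $\rho = \pi_C/(1-\pi_C)$, I would start from the Bayes-rule identity $\ve{r}(\X) = \rho\cdot\ve{\eta}(\X)/\eta_C(\X)$ with $\eta_C(\X) = \pr(\Y=C\mid\X) = 1 - \tilde{\ve{\pi}}^\top\ve{\eta}(\X)$, and invert it: substituting this expression for $\eta_C$ and solving for $\ve{\eta}$ shows that $\ve{\eta}(\X) = \ve{r}(\X)/g(\ve{r}(\X))$, since $g(\ve{r}(\X)) = \rho + \tilde{\ve{\pi}}^\top\ve{r}(\X) = \rho/\eta_C(\X)$. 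The estimator $\hat{\ve{r}}$ of Equation~\ref{eqn:dr-from-cpe} is, by construction, a positive scalar multiple of $\hat{\ve{\eta}}$, normalised so that the identical inversion yields $\hat{\ve{\eta}}(\X) = \hat{\ve{r}}(\X)/g(\hat{\ve{r}}(\X))$. Instantiating Theorem~\ref{th00} at the pair $(\ve{r}(\X), \hat{\ve{r}}(\X))$ and substituting these two identities gives the pointwise equality
\[
g(\ve{r}(\X))\cdot D_\varphi(\ve{\eta}(\X)\,\|\,\hat{\ve{\eta}}(\X)) \;=\; D_{\varphi^\dagger}(\ve{r}(\X)\,\|\,\hat{\ve{r}}(\X)).
\]

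The step I expect to be the crux is reconciling the scaling factor $g(\ve{r}(\X))$ with the change of measure from $M$ to $P_C$. Here I would compute $g(\ve{r}(\X))$ explicitly: expanding $g(\ve{r}(\X)) = \rho + \sum_{c<C}\tilde{\pi}_c\, r_c(\X)$, substituting $\tilde{\pi}_c = \pi_c/(1-\pi_C)$ and $r_c(\X) = P_c(\X)/P_C(\X)$, and using $\sum_{c=1}^{C}\pi_c P_c(\X) = M(\X)$, one obtains the change-of-measure identity $g(\ve{r}(\X)) = M(\X)/((1-\pi_C)\,P_C(\X))$ --- so $g(\ve{r})$ is, up to the constant $1-\pi_C$, exactly the density ratio $\mathrm{d}M/\mathrm{d}P_C$. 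Dividing the displayed pointwise equality by $g(\ve{r}(\X))$ and taking $\expect_{\X\sim M}$ then converts the weight $M(\X)/g(\ve{r}(\X))$ into $(1-\pi_C)\,P_C(\X)$, turning the $M$-expectation of $D_\varphi$ into $(1-\pi_C)$ times the $P_C$-expectation of $D_{\varphi^\dagger}$, which is precisely Equation~\ref{eqn:nock-menon-ong}. The only genuine subtlety is carrying the constant $\rho$ consistently so that one and the same affine $g$ governs both the Bregman rescaling and the measure change; once the change-of-measure identity is in hand, the remainder is a direct substitution.
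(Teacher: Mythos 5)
Your proposal is correct and follows essentially the same route as the paper's proof: exploit the affineness of $g$ to invoke Theorem \ref{th00} for arbitrary convex differentiable $\varphi$, establish the pointwise identities $\ve{\eta} = \ve{r}/g(\ve{r})$ and $\hat{\ve{\eta}} = \hat{\ve{r}}/g(\hat{\ve{r}})$, and use the change-of-measure identity $M(\ve{x}) = (1-\pi_C)\, g(\ve{r}(\ve{x}))\, P_C(\ve{x})$ to convert the $M$-expectation into a $P_C$-expectation. The only cosmetic difference is that you verify $\ve{\eta} = \ve{r}/g(\ve{r})$ via the Bayes-rule relation and the simplex constraint (obtaining $g(\ve{r}) = \rho/\eta_C$), whereas the paper substitutes the class-conditional density expressions directly; both arguments, like the paper's, implicitly assume the estimates $\hat{\ve{\eta}}, \hat{\eta}_C$ are normalised so that the rescaled probabilities sum to one.
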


Lemma \ref{lemm:multiclass-dr} generalises \citet[Proposition 3]{Menon:2016}, which focussed on the binary case with $\pi = \nicefrac[]{1}{2}$.
(See \Supp{Appendix \ref{app:binary-dr}}{Appendix G} for a review of that result.)
Unpacking the Lemma, the LHS in Equation \ref{eqn:nock-menon-ong} represents the object minimised by some suitable loss for class-probability estimation.
Since $g$ is affine, we can use \emph{any} convex, differentiable $\varphi$,
and so can use \emph{any} suitable class-probability loss to estimate $\hat{\ve{\eta}}$. 
Lemma \ref{lemm:multiclass-dr} thus implies that
producing $\hat{\ve{\eta}}$ by minimising any class-probability loss \emph{equivalently}
produces an $\hat{\ve{r}}$ as per Equation \ref{eqn:dr-from-cpe} that
minimises a Bregman divergence to the true $\ve{r}$.
Thus, Theorem \ref{th00} provides a reduction from density ratio to multiclass probability estimation.

We now detail two applications where $g(\cdot)$ is no longer affine, and $\varphi$ must be chosen more carefully.

\section{Dual norm mirror descent: projection-free online learning on $L_p$ balls}
\label{sec:pLMS}

A substantial amount of work in the intersection of machine learning and convex optimisation has focused on constrained optimisation 
within
a ball \citep{Shalev-Shwartz:2007,Duchi:2008}.
This optimisation is typically via projection operators that can be expensive to compute \citep{Hazan:2012,jRF}.
We now show that \emph{gauge functions} can be used as an inexpensive alternative,
and that Theorem \ref{th00} easily yields guarantees for this procedure in online learning.

%

We consider the adaptive filtering problem, closely related to the online least squares problem with linear predictors \citep[Chapter 11]{Cesa-Bianchi:2006}.
Here, over a sequence of $T$ rounds, we observe some $\ve{x}_t \in \XCal$.
We must the predict a target value $\hat{y}_t = \ve{w}_{t-1}^\top \ve{x}_t$ using our current weight vector $\ve{w}_{t-1}$.
The true target $y_t = \ve{u}^\top \ve{x}_t + \epsilon_t$ is then revealed, where $\epsilon_t$ is some unknown noise, and we may update our weight to $\ve{w}_t$.
Our goal is to minimise the regret of the sequence $\{ \ve{w}_t \}_{t = 0}^{T}$,
\begin{equation}
  \label{eqn:regret}
   R( \ve{w}_{1:T} | \ve{u} )  \defeq \sum_{t=1}^T \left( \ve{u}^\top \ve{x}_t - \ve{w}_{t-1}^\top \ve{x}_t \right)^2 - \sum_{t=1}^T \left( \ve{u}^\top \ve{x}_t - y_t \right)^2\:\:.
\end{equation}
Let $q \in (1, 2]$ and $p$ be such that $1/p + 1/q = 1$.
For $\varphi \defeq \frac{1}{2} \cdot \| \ve{x} \|_q^2$
and loss $\ell_t( \ve{w} ) = \frac{1}{2} \cdot (y_t - \ve{w}^\top \ve{x}_t)^2$,
the $p$-LMS algorithm \citep{kwhTP} employs the stochastic mirror gradient updates
\begin{eqnarray}
  \label{eqn:plms-implicit}
  \ve{w}_t & \defeq & \underset{\ve{w}}{\operatorname{argmin}} \, {\eta_t} \cdot \ell_t( \ve{w} ) + D_{\varphi}( \ve{w} \| \ve{w}_{t - 1} )
  \label{eqn:plms-explicit}
  = (\nabla\varphi)^{-1}\left( \nabla\varphi(\ve{w}_{t-1}) - \eta_t \cdot \nabla\ell_t \right),
\end{eqnarray}
where 
$\eta_t$ is a learning rate to be specified by the user.
\citet[Theorem 2]{kwhTP} shows that 
for appropriate $\eta_t$, one has $R( \ve{w}_{1:T} | \ve{u} ) \leq (p - 1) \cdot \max_{\ve{x} \in \XCal} \| \ve{x} \|_p^2 \cdot \| \ve{u} \|_q^2$.

%

The $p$-LMS updates do not provide any explicit control on $\|
\ve{w}_t \|$, \ie there is no regularisation. Experiments ($\S$\ref{sec:experiments}) suggest that leaving $\|
\ve{w}_t \|$ uncontrolled may not be a good idea as the increase
of the norm sometimes prevents (significant) updates
(\ref{eqn:plms-explicit}). Also, the wide success of regularisation in
machine learning calls for regularised variants that \emph{retain} the regret guarantees and computational efficiency of $p$-LMS.
(Adding a projection step to Equation \ref{eqn:plms-explicit} would
not achieve both.)
We now do just this.
For fixed $W > 0$, let $\varphi \defeq (1/2)(W^2 + \|\ve{x}\|_q^2)$, a translation of that used in $p$-LMS.
Invoking Theorem \ref{th00} with the admissible $g_q(\ve{x}) = ||\ve{x}||_q/W$
yields $\varphi^\dagger \defeq \varphi_q^\dagger = W \|\ve{x}\|_q$ (see Table \ref{t-ex-short2}).
Using the fact that $L_p$ and $L_q$ norms are dual of each other,
we replace Equation \ref{eqn:plms-implicit} by:
\begin{eqnarray}
\ve{w}_t & \defeq & \nabla{\varphi_p^\dagger}\left( \nabla\varphi_q^\dagger(\ve{w}_{t-1}) - \eta_t \cdot \nabla\ell_t \right)\:\:.\label{eqn:plms-dagger-explicit}
\end{eqnarray}
See \Supp{Lemma \ref{lemm:varphi-properties}}{Lemma 6} of the Appendix for the simple forms of $\nabla\varphi_{\{p,q\}}^\dagger$.
We call update (\ref{eqn:plms-dagger-explicit}) the 
\emph{dual norm $p$-LMS (DN-pLMS) algorithm},
noting that the dual refers to the polar transform
of the norm,
and $g$ stems from a gauge normalization for ${\mathcal{B}}_q(W)$, the
closed $L_q$ ball with radius $W > 0$. Namely, we have
$\gamma_{\textsc{gau}}(\ve{x}) = W /\|\ve{x}\|_q = g(\ve{x})^{-1}$ for
the gauge $\gamma_{\textsc{gau}}(\ve{x}) \defeq \sup\{z\geq 0 : z \cdot \ve{x} \in {\mathcal{B}}_q(W)\}$,
so that $\varphi_q^{\dagger}$ implicitly performs gauge normalisation of the data.
This update is no more computationally expensive than Equation
\ref{eqn:plms-explicit} --- we simply need to compute the $p$- and
$q$-norms of appropriate terms --- but, crucially, automatically
constrains the norms of $\ve{w}_t$ and its image by $\nabla{\varphi_q^{\dagger}}$.

\begin{lemma}
\label{lemm:plms-update-norm}
For the update in Equation \ref{eqn:plms-dagger-explicit},
$\|\ve{w}_t\|_q = \| \nabla{\varphi_q^{\dagger}}(\ve{w}_t) \|_p = W, \forall t > 0$.
\end{lemma}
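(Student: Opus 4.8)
The plan is to reduce both equalities to a single ``norm-collapsing'' property of the mirror maps, and then read off the claim from the shape of the update \eqref{eqn:plms-dagger-explicit}. Concretely, I would first establish that for \emph{every} nonzero input $\ve{x}$,
\begin{equation*}
\| \nabla\varphi_q^{\dagger}(\ve{x}) \|_p = W,
\end{equation*}
irrespective of $\ve{x}$, and, by the symmetric roles of $p$ and $q$, that $\| \nabla\varphi_p^{\dagger}(\ve{y}) \|_q = W$ for every nonzero $\ve{y}$. Granting this, the lemma is immediate. Writing the update as $\ve{w}_t = \nabla\varphi_p^{\dagger}(\ve{z}_t)$ with $\ve{z}_t \defeq \nabla\varphi_q^{\dagger}(\ve{w}_{t-1}) - \eta_t \cdot \nabla\ell_t$, the symmetric statement applied to $\ve{z}_t$ gives $\| \ve{w}_t \|_q = W$; and then the first statement applied with $\ve{x} = \ve{w}_t$ gives $\| \nabla\varphi_q^{\dagger}(\ve{w}_t) \|_p = W$. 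Thus the whole lemma hinges on the one norm identity.

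For that identity I would use the explicit gradient (recorded in \Supp{Lemma \ref{lemm:varphi-properties}}{Lemma 6}), namely $\nabla\varphi_q^{\dagger}(\ve{x}) = W \cdot \|\ve{x}\|_q^{-(q-1)} \cdot \big( \sign(x_i)\,|x_i|^{q-1} \big)_i$, and compute its $p$-norm directly. The arithmetic driver is the conjugacy relation $1/p + 1/q = 1$, which is equivalent to $p\,(q-1) = q$. Raising each coordinate $|x_i|^{q-1}$ to the power $p$ and summing therefore returns $\sum_i |x_i|^{q} = \|\ve{x}\|_q^{q}$, and taking the $1/p$-th power yields $\|\ve{x}\|_q^{q/p} = \|\ve{x}\|_q^{\,q-1}$, since $q/p = q\,(1 - 1/q) = q-1$. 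This factor cancels the $\|\ve{x}\|_q^{-(q-1)}$ in front, leaving exactly $W$. The companion claim $\| \nabla\varphi_p^{\dagger}(\ve{y}) \|_q = W$ is the same computation with $p$ and $q$ interchanged.

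Conceptually this is no accident: $\varphi_q^{\dagger} = W\|\cdot\|_q$ is positively homogeneous of degree $1$, so $\nabla\varphi_q^{\dagger}$ is degree-$0$ homogeneous and lands on the dual sphere of radius $W$ --- precisely the gauge-normalisation/restricted-homogeneity picture of Theorem \ref{th00}. One could alternatively derive $\|\nabla\varphi_q^{\dagger}(\ve{x})\|_p = W$ from Euler's theorem, which gives $\ve{x}^\top \nabla\varphi_q^{\dagger}(\ve{x}) = W\|\ve{x}\|_q$, combined with the tightness case of H\"older's inequality; but the coordinatewise computation above is cleaner and self-contained.

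I do not expect a serious obstacle here --- the content is essentially the exponent bookkeeping $p(q-1)=q$. The only point to handle with care is that each gradient be evaluated at a nonzero argument so that the norm identity is well defined. This is why the statement is restricted to $t > 0$: since $\nabla\varphi_p^{\dagger}$ always outputs a vector of $q$-norm $W > 0$, we have $\ve{w}_t \neq \ve{0}$ for every $t > 0$ automatically, so the second application (to $\ve{w}_t$) is always legitimate, and the only genuine requirement is $\ve{z}_t \neq \ve{0}$, which holds in the regime where the update is meaningful.
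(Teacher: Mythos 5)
Your proposal is correct and follows essentially the same route as the paper: the paper also reduces the claim to the constant-dual-norm property of the gradient maps (its Corollary \ref{corr:phi-gradient-norms}, $\| \nabla\varphi_q^{\dagger}(\ve{w}) \|_p = W$ and symmetrically in $p,q$, derived from the explicit gradients in Lemma \ref{lemm:varphi-properties}), and then observes that $\ve{w}_t = \nabla\varphi_p^{\dagger}(\ve{\theta}_{t-1})$ to conclude both equalities. Your version merely spells out the exponent computation ($p(q-1)=q$) that the paper leaves implicit, plus the harmless nonzero-argument caveat.
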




Lemma \ref{lemm:plms-update-norm} is remarkable, since \emph{nowhere in Equation \ref{eqn:plms-dagger-explicit} do we project onto the $L_q$ ball}.
Nonetheless, for the DN-pLMS updates to be principled, we need a similar regret guarantee to the original $p$-LMS.
Fortunately, this may be done using Theorem \ref{th00} to exploit the original proof of \citet{kwhTP}.
For any $\ve{u}\in {\Real}^d$,
define the \emph{$q$-normalised regret} of $\{ \ve{w}_t \}_{t = 0}^{T}$ by
\begin{eqnarray}
R_q(\ve{w}_{1:T} | \ve{u})  & \defeq & \sum_{t=1}^T \left((1/{g_{q}(\ve{u})}) \cdot \ve{u}^\top \ve{x}_t - \ve{w}_{t-1}^\top \ve{x}_t\right)^2 - \sum_{t=1}^T \left((1/{g_{q}(\ve{u})})\cdot \ve{u}^\top \ve{x}_t - y_t\right)^2\:\:.
\end{eqnarray}
We have the following bound on $R_q$ for the DN-pLMS updates.
(We cannot expect a bound on the unnormalised $R(\cdot)$ of
Equation \ref{eqn:regret}, since by Lemma \ref{lemm:plms-update-norm} we
can only compete against norm $W$ vectors.)

\begin{lemma}\label{lemlplq}
Pick any $\ve{u}\in {\Real}^d$, $p, q$ satisfying $1/p + 1/q = 1$
and $p>2$, and $W > 0$.
Suppose $\|\ve{x}_t\|_p \leq X_p$ and $|y_t| \leq Y, \forall t \leq T$.
Let $\{ \ve{w}_t \}$ be as per Equation
\ref{eqn:plms-dagger-explicit}, using learning rate
\begin{eqnarray}
\eta_t & \defeq& \upgamma_t \cdot \frac{W}{4(p-1)\max\{W,X_p\}X_pW+|y_t -
  \ve{w}_{t-1}^\top \ve{x}_t|X_p}\:\:,
\end{eqnarray}
for \textbf{any} desired $\upgamma_t \in [1/2, 1]$. Then,
\begin{eqnarray}
R_q(\ve{w}_{1:T} | \ve{u}) & \leq & 4(p-1)X_p^2W^2 + (16p-8) \max\{W,X_p\}X_p^2W + 8YX^2_p\:\:.\label{eqb1}
\end{eqnarray}
\end{lemma}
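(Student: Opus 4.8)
The plan is to prove the bound by reducing the DN-pLMS run to an ordinary $p$-LMS run over gauge-normalised data and then invoking the potential argument behind \citet[Theorem 2]{kwhTP}, with Theorem \ref{th00} doing the work of making the normalisation transparent. First I would reinterpret the normalised regret. Writing $\tilde{\ve{u}} \defeq \ve{u}/g_q(\ve{u}) = W\ve{u}/\|\ve{u}\|_q$, the definition of $g_q$ gives $\|\tilde{\ve{u}}\|_q = W$, and the terms $(1/g_q(\ve{u}))\ve{u}^\top\ve{x}_t$ appearing in $R_q$ are exactly $\tilde{\ve{u}}^\top\ve{x}_t$. Hence $R_q(\ve{w}_{1:T}\mid\ve{u})$ is literally the ordinary adaptive-filtering regret of $\{\ve{w}_t\}$ against the norm-$W$ comparator $\tilde{\ve{u}}$. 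By Lemma \ref{lemm:plms-update-norm} the iterates themselves satisfy $\|\ve{w}_t\|_q = W$, so $g_q(\ve{w}_t) = 1$ and every player lives on the same $L_q$ sphere of radius $W$.

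Next I would use Theorem \ref{th00} to move the whole analysis onto that sphere. For the admissible $g_q$ with $\varphi_q = \frac12\|\cdot\|_q^2$ and $\varphi_q^\dagger = W\|\cdot\|_q$, the theorem gives $D_{\varphi_q^\dagger}(\ve{a}\|\ve{b}) = g_q(\ve{a})\cdot D_{\varphi_q}(\ve{a}/g_q(\ve{a})\,\|\,\ve{b}/g_q(\ve{b}))$. Because every iterate has $g_q(\ve{w}_t)=1$, the natural DN-pLMS potential $D_{\varphi_q^\dagger}(\ve{u}\|\ve{w}_t)$ equals the scaled $p$-LMS potential $g_q(\ve{u})\,D_{\varphi_q}(\tilde{\ve{u}}\|\ve{w}_t)$; moreover on the sphere the mirror map $\nabla\varphi_q^\dagger$ coincides with the $p$-LMS map $\nabla\varphi_q$, so the update of Equation \ref{eqn:plms-dagger-explicit} is a $p$-LMS mirror step followed by a (free) gauge renormalisation back onto the sphere. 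This is precisely the configuration in which the original proof operates.

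I would then run the standard potential telescoping with $\Phi_t \defeq D_{\varphi_q}(\tilde{\ve{u}}\|\ve{w}_t)$. The squared-loss identity $(\tilde{\ve{u}}^\top\ve{x}_t - \ve{w}_{t-1}^\top\ve{x}_t)^2 - (\tilde{\ve{u}}^\top\ve{x}_t - y_t)^2 = 2\nabla\ell_t^\top(\ve{w}_{t-1}-\tilde{\ve{u}}) - 2\ell_t(\ve{w}_{t-1})$ reduces $R_q$ to the linearised regret $2\sum_t\nabla\ell_t^\top(\ve{w}_{t-1}-\tilde{\ve{u}})$ (dropping the nonnegative $2\ell_t(\ve{w}_{t-1})$). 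Applying the triangle equality to $D_{\varphi_q}$ on the triple $(\tilde{\ve{u}}, \ve{w}_t, \ve{w}_{t-1})$ converts each linearised term into a telescoping potential drop minus a curvature term $D_{\varphi_q}(\ve{w}_t\|\ve{w}_{t-1})$; summation leaves the boundary term $\Phi_0 = D_{\varphi_q}(\tilde{\ve{u}}\|\ve{w}_0)$ plus accumulated slack. Strong convexity of $\frac12\|\cdot\|_q^2$ with modulus $q-1 = 1/(p-1)$ bounds the curvature by a multiple of $\|\nabla\ell_t\|_p^2$, producing the $(p-1)$ factors, and the hypotheses $\|\ve{x}_t\|_p\leq X_p$, $|y_t|\leq Y$, $\|\tilde{\ve{u}}\|_q = W$ assemble the three explicit terms, the factor $4$ on the leading term tracing to the $2\times$ from the squared-loss identity and the $2\times$ from the worst-case learning rate $\upgamma_t = 1/2$.

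The main obstacle I expect is controlling the interaction between the renormalisation step and the adaptive learning rate. Unlike vanilla $p$-LMS, each DN-pLMS step renormalises onto the $L_q$ sphere, so the iterate feeding the next three-point identity is not the raw mirror-descent output and the dual identity $\nabla\varphi_q^\dagger(\ve{w}_t) = \nabla\varphi_q^\dagger(\ve{w}_{t-1}) - \eta_t\nabla\ell_t$ holds only up to a rescaling. I must verify, via Theorem \ref{th00} together with Lemma \ref{lemm:plms-update-norm}, that measured in the scaled potential this renormalisation does not spoil the per-round progress (a generalised-Pythagorean effect, since the comparator $\tilde{\ve{u}}$ also sits on the sphere), and that the slack stays bounded uniformly for \emph{every} admissible $\upgamma_t\in[1/2,1]$. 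Pinning down the constants that follow — in particular isolating the $\max\{W,X_p\}$ factor and the $8YX_p^2$ noise term — is the delicate bookkeeping that turns the clean reduction into the stated numerical bound.
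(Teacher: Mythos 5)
Your skeleton coincides with the paper's: the proof there also telescopes the potential $D_{\varphi_q^\dagger}(\ve{u}\|\ve{w}_t)$, branches to $D_{\varphi_q}$ on the $L_q$ sphere via Theorem \ref{th00} together with $g_q(\ve{w}_t)=1$ (Lemma \ref{lemm:gq-constant}), uses the Bregman three-point identity per round (Lemma \ref{lemm:iterate-bound-1}), and imports the curvature bound of \citet[Appendix I]{kwhTP}, extended to $\varphi^\dagger$ in Lemma \ref{lemupdate}. However, there is a genuine gap exactly at the point you defer to a ``generalised-Pythagorean effect.'' The DN-pLMS step $\ve{w}_t = \nabla\varphi_p^\dagger(\ve{\theta}_t)$ rescales the raw mirror iterate radially onto the sphere $\|\cdot\|_q = W$; writing $\upalpha_t = W/\|\ve{\theta}_t\|_p$, this is an inward projection only when $\upalpha_t \leq 1$. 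Whenever the raw iterate lands strictly \emph{inside} the ball ($\upalpha_t > 1$), the renormalisation pushes the point \emph{outward} onto the sphere, which is not a Bregman projection onto a convex set containing the comparator, and the potential $D_{\varphi_q}(\tilde{\ve{u}}\|\cdot)$ can strictly increase on that round. So no Pythagorean inequality holds uniformly over rounds, and your plan has no mechanism to control these bad rounds.

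The paper's proof supplies precisely this missing mechanism, and it is not bookkeeping but the core of the argument. In Lemma \ref{lemm:iterate-bound-1} the per-round potential drop is decomposed as $Q+R+S+T$, where $Q$ is the regret-progress term you want and the terms $R$ and $S$ carry the (possibly negative) renormalisation cost through the factor $(1-\upalpha_t)$. Lemma \ref{lemm:rst} then shows $R+S+T\geq 0$: the prescribed adaptive learning rate forces $\upalpha_t\in[1-\epsilon_t,1+\epsilon_t]$ for an $\epsilon_t$ chosen as a function of the current error $|s_t-r_t| = |y_t-\ve{w}_{t-1}^\top\ve{x}_t|$ (Equation \ref{valepsilon2}), and the progress term $T$ is shown to dominate the worst-case loss $-2\epsilon_t W^2 - (p-1)\epsilon_t^2 W^2$ from $R+S$; Lemma \ref{lemm:eta-allowed-range} verifies that the stated $\eta_t$ with $\upgamma_t\in[1/2,1]$ lies in the feasible interval this analysis requires. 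This is where the $\max\{W,X_p\}$ factor, the restriction on $\upgamma_t$, and the $8YX_p^2$ term all originate; in particular the dependence of the final bound on the signal bound $Y$ --- absent from the original bound of \citet{kwhTP} --- is the quantitative signature of the renormalisation cost your proposal leaves uncontrolled. Your attribution of the constants (a factor $2$ from the squared-loss identity times $2$ from $\upgamma_t=1/2$) is accordingly not how they arise.
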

Several remarks can be made. First, the bound depends on the maximal
signal value $Y$, but this is the maximal signal in the observed
sequence, so it may not be very large in practice; if
it is comparable to $W$, then our bound is looser than \cite{kwhTP} by
just a constant factor. Second, the learning rate is adaptive in the
sense that its choice depends on the last mistake made. There is a
nice way to represent the ``offset'' vector $\eta_t \cdot \nabla\ell_t $ in
eq. (\ref{eqn:plms-dagger-explicit}), since we have, for $Q'' \defeq 4(p-1)\max\{W,X_p\}X_pW$,
\begin{eqnarray}
\eta_t \cdot \nabla\ell_t & = & W \cdot \frac{|y_t - \ve{w}_{t-1}^\top \ve{x}_t|X_p}{Q''+|y_t - \ve{w}_{t-1}^\top \ve{x}_t|X_p} \cdot \mathrm{sign}(y_t - \ve{w}_{t-1}^\top \ve{x}_t)\cdot \left(\frac{1}{X_p}\cdot \ve{x}\right)\:\:,
\end{eqnarray}
so the $L_p$ norm of the offset is actually equal to $W \cdot Q$, where $Q\in
[0,1]$ is all the smaller as the vector $\ve{w}_.$ gets better. Hence,
the
update in eq. (\ref{eqn:plms-dagger-explicit}) controls in fact
\textit{all} norms (that of $\ve{w}_.$, its image by
$\nabla{\varphi_q^{\dagger}}$ and the offset). Third, because of the normalisation of $\ve{u}$, the bound actually does not depend on $\ve{u}$, but on the radius $W$ chosen for the $L_q$ ball.

\section{Clustering on a manifold via data transformation}
\label{sec:clustering}

\begin{figure}[t]
\centering
\scalebox{0.925}{
\begin{tabular}{c|c||c}\hline\hline
\multicolumn{2}{c||}{Sphere} & Hyperboloid\\ \hline
\includegraphics[trim=70bp 250bp 0bp 0bp,clip,width=.40\linewidth]{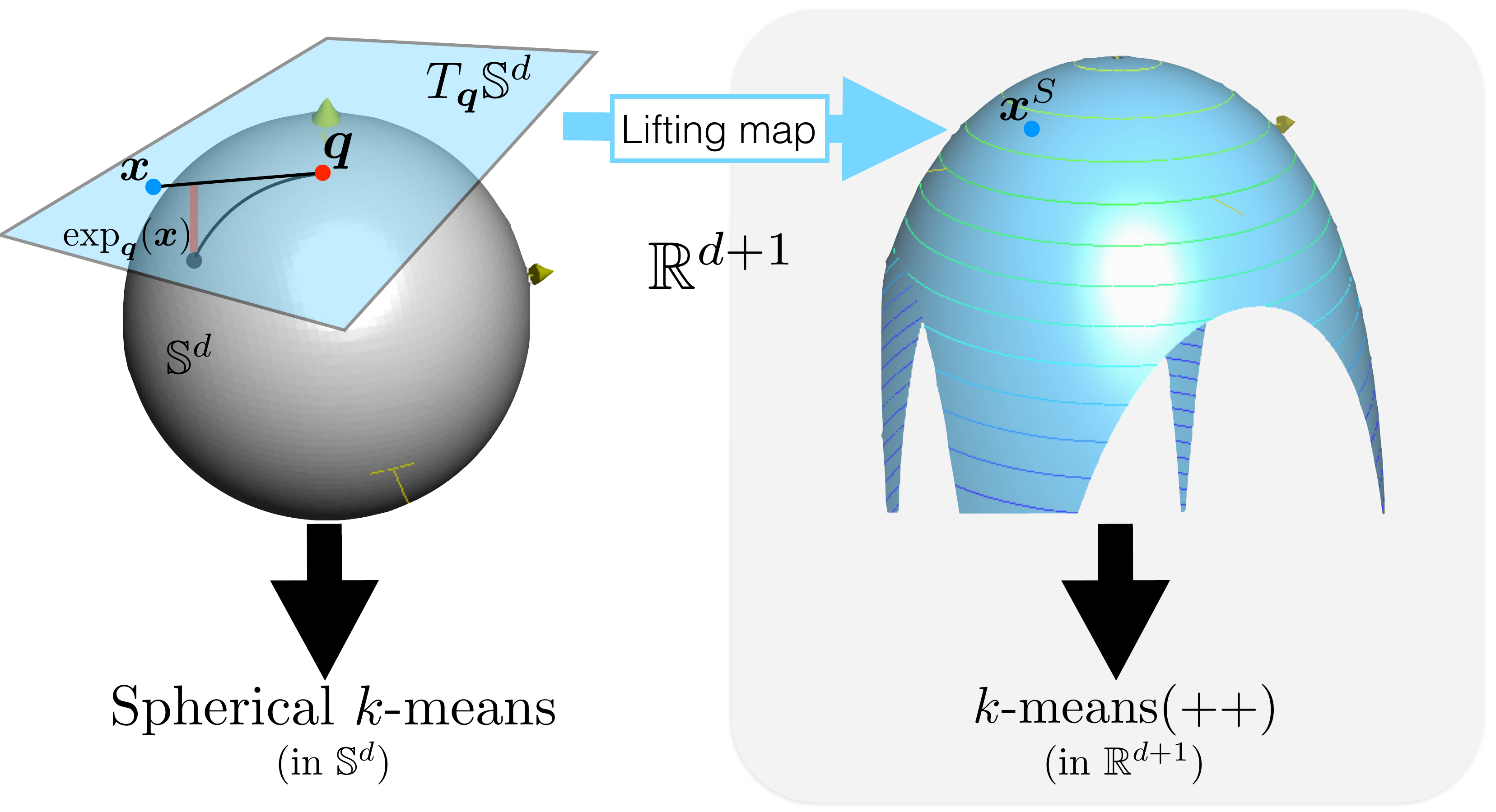} & \includegraphics[trim=40bp 0bp 40bp 0bp,clip,width=.21\linewidth]{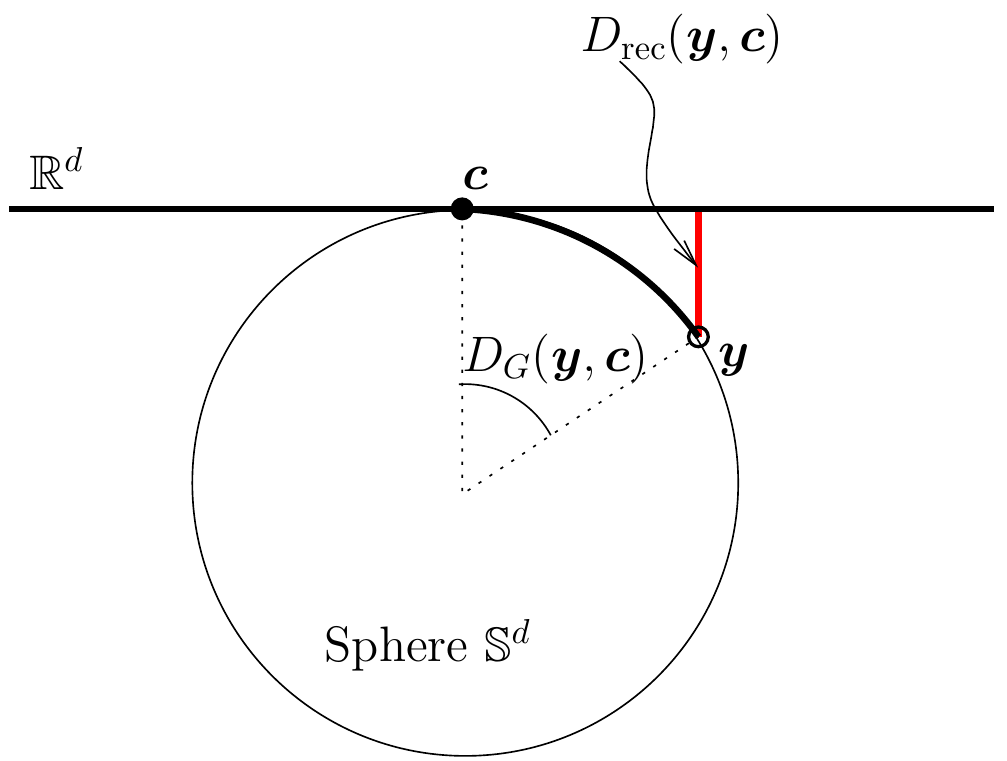} & \includegraphics[trim=470bp 250bp 0bp 0bp,clip,width=.23\linewidth]{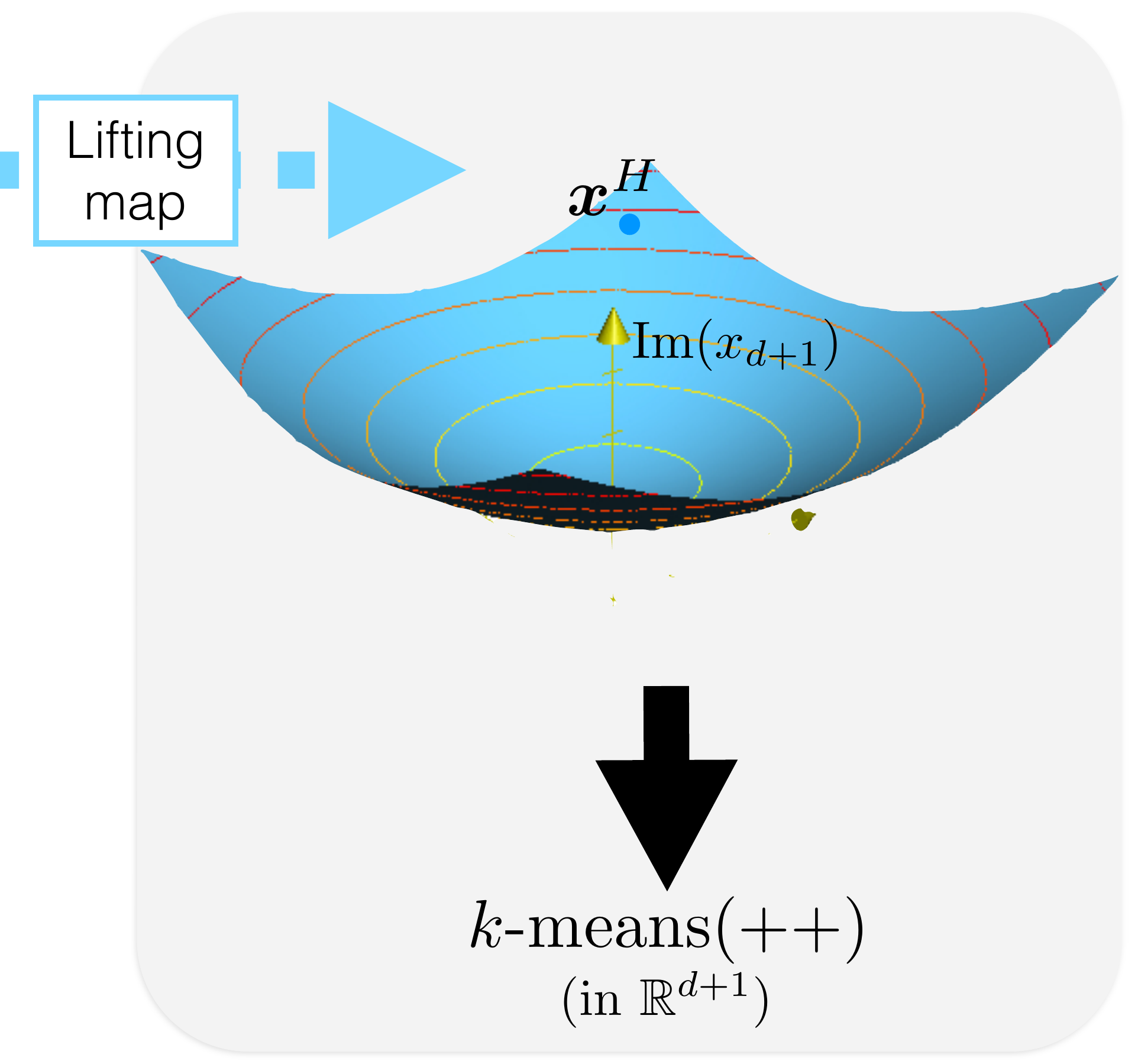}\\ \hline\hline
\end{tabular}
}

\caption{(L) Lifting map into $\Real^d \times \mathbb{\Real}$ for clustering on the sphere with k-means++.
(M) $D_\rec$ in Eq. (\ref{cpsol}) in vertical thick red line.
(R) Lifting map into $\Real^d \times \mathbb{C}$ for the hyperboloid.
	\label{fig:lift}}
\vspace{-0.125in}
\end{figure}

Our final application can be related to two problems that have received a
steadily growing interest over the past decade in unsupervised machine learning: clustering on a
non-linear manifold \citep{dmCD}, and subspace custering
\citep{vSC}. We consider two fundamental manifolds investigated by \cite{gAC} to compute
centers of mass from relativistic theory: the sphere $\mathbb{S}^d$ and the
hyperboloid $\mathbb{H}^d$, the former being of positive curvature, and the latter of
negative curvature. 
Applications involving these specific manifolds are numerous in
text processing, computer vision, geometric modelling, computer
graphics, to name a few
\citep{bfSA,dmCD,emSK,kypSN,rjgHC,sgrgfvTD,sblfRT,schfSV,scffAD,srflfAM}. We
emphasize the fact that the clustering problem has significant practical impact for $d$ as
small as 2 in computer vision \citep{srflfAM}.

The problem is non-trivial for two separate reasons.
First, the ambient space, \textit{i.e.} the space of registration of the input data, is often implicitly Euclidean and therefore \textit{not} the manifold \citep{dmCD}: if the mapping to the manifold is not carefully done, then geodesic distances measured on the manifold may be inconsistent with respect to the ambient space.
Second, the fact
that the manifold has non-zero curvature essentially prevents the direct use of Euclidean optimization algorithms \citep{zsFO} --- put simply, the average of two points that belong to a manifold does not necessarily belong to the manifold, so we have to be careful on how to compute centroids for hard clustering \citep{gAC,nnaOC,rjgHC,snLM}.

What we show now is that Riemannian manifolds with constant
sectional curvature may be clustered with the $k$-means++
seeding for flat manifolds \citep{avKM}, \emph{without even touching a line of the algorithm}.
To formalise the problem, we need three key components of Riemannian
geometry: tangent planes, exponential map and geodesics
\citep{anMO}. We assume that the ambient space is a tangent plane to
the manifold $\mathcal{M}$, which conveniently makes it look Euclidean (see Figure \ref{fig:lift}).
The point of tangency is called $\ve{q}$, and the tangent plane
$T_{\ve{q}}\mathcal{M}$. The exponential map,
$\exp_{\ve{q}}: T_{\ve{q}}\mathcal{M} \rightarrow \mathcal{M}$,
performs a distance preserving mapping: the geodesic length between
$\ve{q}$ and $\exp_{\ve{q}}(\ve{x})$ in $\mathcal{M}$ is the same as the Euclidean
length between $\ve{q}$ and $\ve{x}$ in $T_{\ve{q}}\mathcal{M}$. Our
clustering objective is to find $\mathcal{C} \defeq \{\ve{c}_1,
\ve{c}_2, ... \ve{c}_k\} \subset \mathcal{M}$ such that
$D_\rec(\mathcal{S} : \mathcal{C}) = \inf_{\mathcal{C}' \subset
  \mathcal{M}, |\mathcal{C}'|=k} D_\rec(\mathcal{S}, \mathcal{C}')$, with
\begin{eqnarray}
D_\rec(\mathcal{S}, \mathcal{C}) & \defeq \sum_{i\in [m]_*} \min_{j\in [k]_*}
D_\rec(\exp_{\ve{q}}(\ve{x}_i), \ve{c}_j)\:\:,\label{cpsol}
\end{eqnarray}
where $D_\rec$ is a \textit{rec}onstruction loss, a function
of the geodesic distance between
$\exp_{\ve{q}}(\ve{x}_i)$ and $\ve{c}_j$. We use two loss functions
defined from \cite{gAC} and used in machine learning for more
than a decade \citep{dmCD}:
\begin{eqnarray}
{\mathbb{R}}_+ \ni D_\rec(\ve{y}, \ve{c}) & \defeq &
\left\{\begin{array}{rcl}
1 - \cos D_G(\ve{y}, \ve{c}) &
\mbox{for } & \mathcal{M} = \mathbb{S}^d\\
\cosh D_G(\ve{y}, \ve{c}) - 1&
\mbox{for } & \mathcal{M} = \mathbb{H}^d
\end{array}\right.\:\:.\label{defdist}
\end{eqnarray}
Here, $D_G(\ve{y}, \ve{c})$ is the corresponding geodesic distance of
$\mathcal{M}$ between $\ve{y}$ and $\ve{c}$. Figure \ref{fig:lift}
shows that $D_\rec(\ve{y}, \ve{c})$ is the orthogonal distance between $T_{\ve{c}}\mathcal{M}$
and $\ve{y}$ when $\mathcal{M} = \mathbb{S}^d$. The solution to the
clustering problem in eq. (\ref{cpsol}) is therefore the one that
minimizes the error between tangent planes defined at the centroids,
and points on the manifold.

\begin{table}[t]
 \centering
\scalebox{0.9}
{
\begin{tabular}{c|c}\hline\hline
(Sphere) \GKM(${\mathcal{S}}, k$) & (Hyperboloid) \HKM(${\mathcal{S}}, k$) \\ \hline
\begin{minipage}{.49\linewidth}
\begin{algorithmic}
\STATE  \textbf{Input:} dataset ${\mathcal{S}} \subset T_{\ve{q}}{\mathbb{S}^d}, k\in {\mathbb{N}}_*$;
\STATE Step 1: ${\mathcal{S}}^+ \leftarrow \{g^{-1}_{S}(\ve{x}^S) \cdot \ve{x}^S : \ve{x}^S\in \mathrm{lift}({\mathcal{S}})\}$; 
\STATE Step 2: ${\mathcal{C}}^+\leftarrow k\mbox{-means++$\_$seeding}({\mathcal{S}}^+, k)$;
\STATE Step 3: ${\mathcal{C}}\leftarrow \exp^{-1}_{\ve{q}}({\mathcal{C}}^+)$;
\STATE \textbf{Output:} Cluster centers ${\mathcal{C}} \in T_{\ve{q}}{\mathbb{S}^d}$;
\end{algorithmic}
\end{minipage}
&
\begin{minipage}{.49\linewidth}
\begin{algorithmic}
\STATE  \textbf{Input:} dataset ${\mathcal{S}} \subset T_{\ve{q}}{\mathbb{H}^d}, k\in {\mathbb{N}}_*$;
\STATE Step 1: ${\mathcal{S}}^+ \leftarrow \{g^{-1}_{H}(\ve{x}^H) \cdot \ve{x}^H : \ve{x}^H\in \mathrm{lift}({\mathcal{S}})\}$; 
\STATE Step 2: ${\mathcal{C}}^+\leftarrow k\mbox{-means++$\_$seeding}({\mathcal{S}}^+, k)$;
\STATE Step 3: ${\mathcal{C}}\leftarrow \exp^{-1}_{\ve{q}}({\mathcal{C}}^+)$;
\STATE \textbf{Output:} Cluster centers ${\mathcal{C}} \in T_{\ve{q}}{\mathbb{H}^d}$;
\end{algorithmic}
\end{minipage}
\\ \hline
$\ve{x}^S  \defeq  [x_1 \quad x_2 \quad \cdots \quad x_d \quad \|\ve{x}\|_2\cot \|\ve{x}\|_2]$ & $\ve{x}^H  \defeq  [x_1 \quad x_2 \quad \cdots \quad x_d \quad
i \|\ve{x}\|_2\coth \|\ve{x}\|_2]$ \\
$g_{S}(\ve{x}^S) \defeq \|\ve{x}\|_2/\sin \|\ve{x}\|_2$& $g_{H}(\ve{x}^H) \defeq -\|\ve{x}\|_2/\sinh\|\ve{x}\|_2$\\ \hline\hline
\end{tabular}
}
\caption{How to use $k$-means++ to cluster points on the sphere (left) or the hyperboloid (right).\label{tab:algos}}
\vspace{-0.2in}
\end{table}

It turns out that both distances in \ref{defdist} can be engineered as
Bregman divergences via Theorem \ref{th00}, as seen in Table \ref{t-ex-short2}. Furthermore, they imply the
same $\varphi$, which is just the generator of Mahalanobis distortion,
but a different $g$. The construction
involves a third party, a \textit{lifting map} ($\mathrm{lift}(.)$) that increases
the dimension by one.
The \textit{Sphere} lifting map $\Real^{d}\ni \ve{x} \mapsto
\ve{x}^S \in \Real^{d+1}$ is indicated in Table \ref{tab:algos}
(left). The new coordinate depends on the norm of $\ve{x}$. The
\textit{Hyperbolic} lifting map, $\Real^{d}\ni \ve{x} \mapsto
\ve{x}^H \in \Real^{d}\times \mathbb{C}$, involves a pure imaginary additional
coordinate, is indicated in in Table \ref{tab:algos} (right, with a slight abuse of notation) and Figure \ref{fig:lift}. Both $\ve{x}^S$ and
$\ve{x}^H$ live on a $d$-dimensional manifold, depicted in Figure \ref{fig:lift}. When they are scaled by
the corresponding $g_.(.)$, they happen to be mapped to $\mathbb{S}^d$
or $\mathbb{H}^d$, respectively, by what happens to be the manifold's exponential
map for the original $\ve{x}$ (see \Supp{Appendix \ref{sec-app-der}}{Appendix C}).

Theorem \ref{th00} is interesting in this case because $\varphi$ corresponds to a Mahalanobis distortion: this shows that $k$-means++ seeding \citep{avKM,nlkMB} can be used directly on the scaled coordinates ($g^{-1}_{\{S,H\}}(\ve{x}^{\{S,H\}})\cdot \ve{x}^{\{S,H\}}$) to pick centroids that yield an approximation of the global optimum for the clustering problem on the manifold which is just \textit{as good as} the original Euclidean approximation bound \citep{avKM}.

\begin{lemma}
\label{lemm:kmeanspp}
The expected potential 
of
\GKM~seeding over the random choices of $\mathcal{C}^+$ satisfies:
\begin{eqnarray}
\expect[D_\rec(\mathcal{S} : \mathcal{C})] & \leq & 8(2+\log k) \cdot \inf_{\mathcal{C}'\in \mathbb{S}^d} D_\rec(\mathcal{S} : \mathcal{C}')\:\:.
\end{eqnarray}
The same approximation bounds holds for \HKM~seeding on the
hyperboloid ($\mathcal{C}', \mathcal{C}^+\in \mathbb{H}^d$).
\end{lemma}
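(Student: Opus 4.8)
The plan is to use Theorem~\ref{th00}, in the concrete form recorded in Table~\ref{t-ex-short2}, to reveal that the manifold reconstruction loss is \emph{literally} a squared Euclidean distance between the scaled lifted coordinates, and then to invoke the Arthur--Vassilvitskii analysis of $k$-means++ seeding \citep{avKM} essentially verbatim. For the sphere, the relevant row of Table~\ref{t-ex-short2} reads $D_{\varphi^\dagger}(\ve{x}^S\|\ve{y}^S) = g_S(\ve{x}^S)\cdot(1-\cos D_G) = g_S(\ve{x}^S)\cdot D_\rec$ with $g_S(\ve{x}^S)=\|\ve{x}\|_2/\sin\|\ve{x}\|_2$, while Theorem~\ref{th00} gives $D_{\varphi^\dagger}(\ve{x}^S\|\ve{y}^S) = g_S(\ve{x}^S)\cdot D_\varphi(\ve{x}^S/g_S(\ve{x}^S)\,\|\,\ve{y}^S/g_S(\ve{y}^S))$. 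Equating and cancelling the factor $g_S(\ve{x}^S)$ yields $D_\rec = D_\varphi(\ve{x}^S/g_S(\ve{x}^S)\,\|\,\ve{y}^S/g_S(\ve{y}^S)) = \tfrac12\|\ve{x}^S/g_S(\ve{x}^S) - \ve{y}^S/g_S(\ve{y}^S)\|_2^2$. The point I would stress is exactly this cancellation: unlike the raw distortion $D_{\varphi^\dagger}$, the reconstruction loss $D_\rec$ carries \emph{no} per-point weight, so the objective of Equation~\ref{cpsol} coincides exactly (up to the harmless global factor $\tfrac12$) with the ordinary, unweighted Euclidean $k$-means potential evaluated on $\mathcal{S}^+ = \{g^{-1}_S(\ve{x}^S)\,\ve{x}^S\}$, the points that Step~1 of \GKM produces and that lie on the unit sphere in $\Real^{d+1}$.

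Next I would identify the two sides of the claimed inequality with a $k$-means potential and its optimum. Since $k$-means++ seeding returns centres $\mathcal{C}^+$ that are themselves points of $\mathcal{S}^+$, mapping them back by $\exp_{\ve{q}}^{-1}$ (Step~3) and re-reading the identity above shows $D_\rec(\mathcal{S}:\mathcal{C}) = \tfrac12\,\phi(\mathcal{S}^+,\mathcal{C}^+)$, where $\phi$ is the squared-distance potential with the seeds as centres; likewise $\inf_{\mathcal{C}'\subset\mathbb{S}^d} D_\rec(\mathcal{S}:\mathcal{C}') = \tfrac12\,\phi^{\mathrm{sph}}_{\mathrm{opt}}$, the optimal potential with centres confined to the sphere. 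The Arthur--Vassilvitskii bound \citep{avKM} gives $\expect[\phi(\mathcal{S}^+,\mathcal{C}^+)] \leq 8(\ln k + 2)\,\phi^{\mathrm{free}}_{\mathrm{opt}}$, where $\phi^{\mathrm{free}}_{\mathrm{opt}}$ optimises centres over all of $\Real^{d+1}$. Because optimising over a superset can only decrease the cost, $\phi^{\mathrm{free}}_{\mathrm{opt}} \leq \phi^{\mathrm{sph}}_{\mathrm{opt}}$, and the global $\tfrac12$ cancels on both sides, so $\expect[D_\rec(\mathcal{S}:\mathcal{C})] \leq 8(\ln k+2)\,\inf_{\mathcal{C}'} D_\rec(\mathcal{S}:\mathcal{C}')$ follows immediately for the sphere (with $\ln k = \log k$ the natural logarithm, matching the stated $8(2+\log k)$).

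The hyperboloid case is where the real difficulty lies, and it is the step I expect to be the main obstacle. The corresponding row of Table~\ref{t-ex-short2} again makes $D_\rec = \cosh D_G - 1$ equal to $\tfrac12\|\cdot-\cdot\|^2$ of the scaled coordinates, but now the lifting $\ve{x}^H$ carries a \emph{pure imaginary} last coordinate, so the ambient quadratic form is the indefinite Minkowski form rather than a Euclidean one. On the upper sheet it still restricts to the non-negative quantity $\cosh D_G - 1 = 2\sinh^2(D_G/2)$, but over all of $\Real^{d}\times\mathbb{C}$ it is unbounded below, so one \emph{cannot} compare the seeding cost to a free ambient optimum as in the sphere case --- that free optimum is $-\infty$. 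The resolution I would pursue is to keep the competing centres on the manifold and re-establish, directly for $D_\rec$ on $\mathbb{H}^d$, the only two structural facts the Arthur--Vassilvitskii argument actually uses: a relaxed squared triangle inequality of the form $D_\rec(\ve{a},\ve{c}) \leq 2\,D_\rec(\ve{a},\ve{b}) + 2\,D_\rec(\ve{b},\ve{c})$, and the additive centroid (bias--variance) decomposition of the per-cluster cost. Verifying these for the hyperbolic reconstruction loss --- equivalently, checking that $\sqrt{D_\rec}=\sqrt{2}\,\sinh(D_G/2)$ behaves enough like a metric on the sheet --- is the crux; once it is in place, the $D^2$-sampling analysis transfers unchanged and delivers the same $8(2+\log k)$ guarantee for \HKM.
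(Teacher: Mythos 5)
Your reduction of the manifold objective to ordinary Euclidean $k$-means on the scaled lifted points, and your way of closing the argument for the sphere, are correct; your finish there is in fact cleaner than the paper's. The paper completes the sphere case by asserting that the key Lemmata 3.2--3.3 of \citet{avKM} go through because there exists a coordinate system (polar coordinates) in which within-cluster minimisers are averages of the cluster points; you instead compare the seeding potential of $\mathcal{S}^+$ against the \emph{unconstrained} optimum in $\Real^{d+1}$, which \citet{avKM} controls directly, and then note that the unconstrained optimum is at most the sphere-constrained one. Both routes yield the stated $8(2+\log k)$ factor for \GKM.

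The hyperboloid half of your proposal, however, has a genuine gap, and it sits exactly at the step you yourself call the crux. You are right that the ambient quadratic form on $\Real^d\times\mathbb{C}$ is the indefinite Minkowski form, so the free-optimum comparison is unavailable there. But the repair you propose --- proving the defect triangle inequality $D_\rec(\ve{a},\ve{c}) \leq 2\,D_\rec(\ve{a},\ve{b}) + 2\,D_\rec(\ve{b},\ve{c})$ on $\mathbb{H}^d$ --- cannot be carried out, because that inequality is \emph{false}. Take $\ve{a}, \ve{b}, \ve{c}$ on a common geodesic with $D_G(\ve{a},\ve{b}) = D_G(\ve{b},\ve{c}) = D$, so that $D_G(\ve{a},\ve{c}) = 2D$. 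Using $\cosh 2D - 1 = 2\sinh^2 D$, $\cosh D - 1 = 2\sinh^2(D/2)$ and $\sinh D = 2\sinh(D/2)\cosh(D/2)$,
\begin{eqnarray}
D_\rec(\ve{a},\ve{c}) & = & 2\cosh^2(D/2)\cdot \left( D_\rec(\ve{a},\ve{b}) + D_\rec(\ve{b},\ve{c}) \right)\:\:,\nonumber
\end{eqnarray}
and $2\cosh^2(D/2) > 2$ for every $D > 0$, growing without bound with $D$; equivalently, $\sqrt{D_\rec} = \sqrt{2}\,\sinh(D_G/2)$ violates, rather than approximately satisfies, the triangle inequality along geodesics. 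So no fixed-constant relaxed triangle inequality holds on the sheet, and the $D^2$-sampling analysis cannot be transferred by that route (at best one gets a constant depending on the data diameter). The paper's own proof avoids this entirely: for both manifolds it argues via the existence of a coordinate system in which cluster centroids are (rescaled) averages of the cluster points --- polar coordinates on the sphere, hyperbolic barycentric coordinates \citep[Section 18]{Ungar2014} on the hyperboloid --- so that Lemmata 3.2--3.3 of \citet{avKM} can be replayed without modification. To complete your hyperboloid argument you would need to substitute such a centroidal argument for the defect triangle inequality you defer to; as written, that deferred step is not merely unproven but unprovable.
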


Lemma \ref{lemm:kmeanspp} is notable since
it was only recently shown that such a bound is possible for the
sphere \citep{emSK}, and to our knowledge, no such approximation
quality is known for clustering on the hyperboloid
\citep{rjgHC,snLM}. Notice that Lloyd iterations on non-linear
manifolds would require repetitive renormalizations to keep centers on
the manifold \citep{dmCD}, an additional disadvantage compared to
clustering on flat
manifolds that $\{\mathrm{G,K}\}$-means++ seedings do not bear.

\def\permille{\ensuremath{{}^\text{o}\mkern-5mu/\mkern-3mu_\text{oo}}}

\section{Experimental validation}
\label{sec:experiments}

We present some experiments validating our theoretical analysis for the applications above.

\textbf{Multiple density ratio estimation}.
See \Supp{Appendix \ref{app:multiclass-dr}}{Appendix H.1} for experiments in this domain.

\textbf{Dual norm $p$-LMS (DN-$p$-LMS)}.
We ran $p$-LMS and 
the DN-pLMS of \S\ref{sec:pLMS}
on the experimental setting of \citet{kwhTP}.
We refer to that paper for an exhaustive description of the experimental setting, which we briefly summarize: it is a noisy signal processing setting, involving a dense or a sparse target. We compute,
over the signal received, the error of our predictor on the signal. We
keep all parameters as they are in \citep{kwhTP}, except for one: we
make sure that data are scaled to fit in a $L_p$ ball of prescribed radius,
to test the assumption related in \citep{kwhTP} that fixing
the learning rate $\eta_t$ is not straightforward in
$p$-LMS. Knowing the true value of $X_p$, we then scale it by a
misestimation factor $\rho$, typically in $[0.1, 1.7]$. We use the
same misestimation in DN-$p$-LMS. Thus, both algorithms
suffer the same source of uncertainty. Also, we periodically change
the signal (each 1000 iterations), to assess the performances of the
algorithms in tracking changes in the signal.

\begin{table}[!t]
\centering
\scalebox{0.525}{
\begin{tabular}{ccc||ccc}\hline \hline
$(p,q) = (1.17,6.9)$ & $(p,q) = (2.0,2.0)$ & $(p,q) = (6.9,1.17)$ & $(p,q) = (1.17,6.9)$ & $(p,q) = (2.0,2.0)$ & $(p,q) = (6.9,1.17)$ \\ \hline
\includegraphics[trim=10bp 10bp 30bp 10bp,clip,width=.28\linewidth]{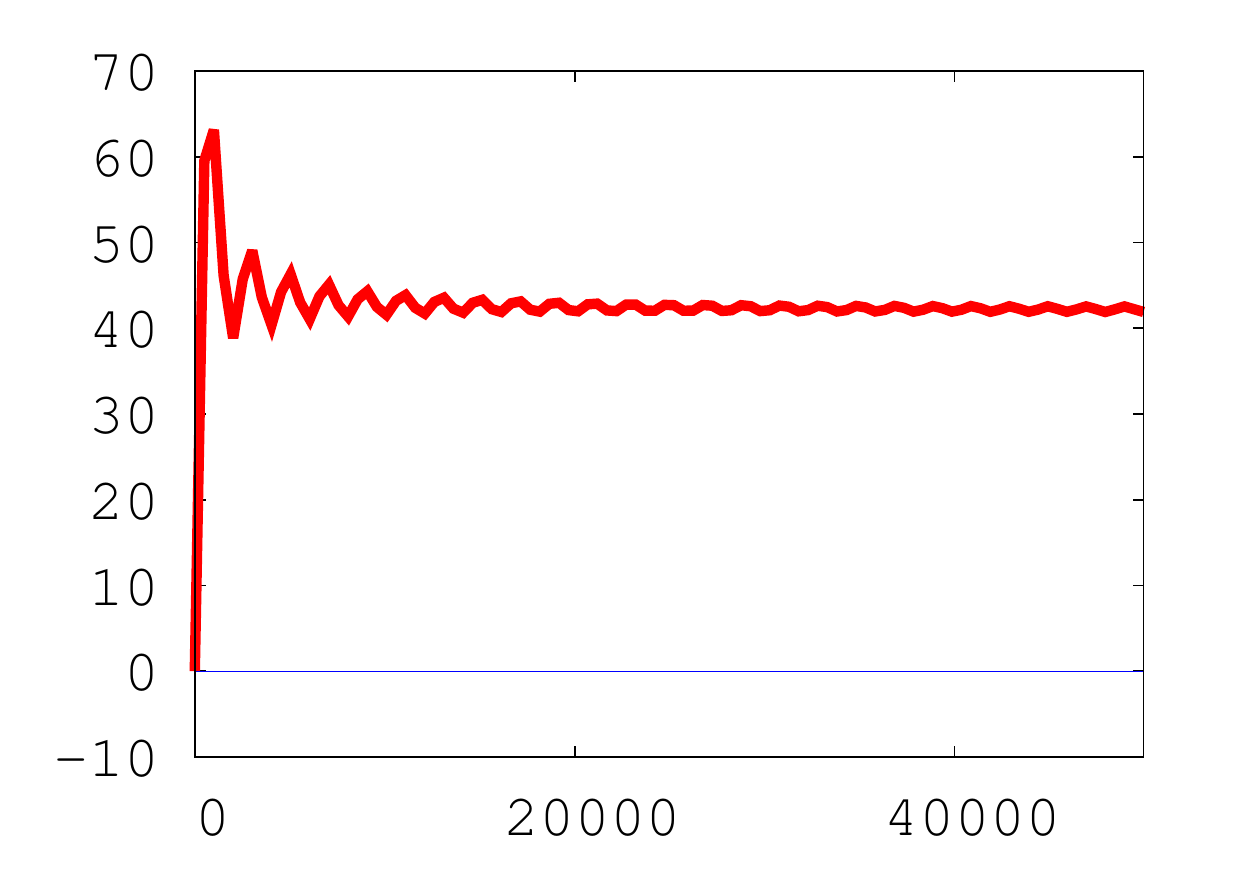} &  \includegraphics[trim=10bp 10bp 30bp 10bp,clip,width=.28\linewidth]{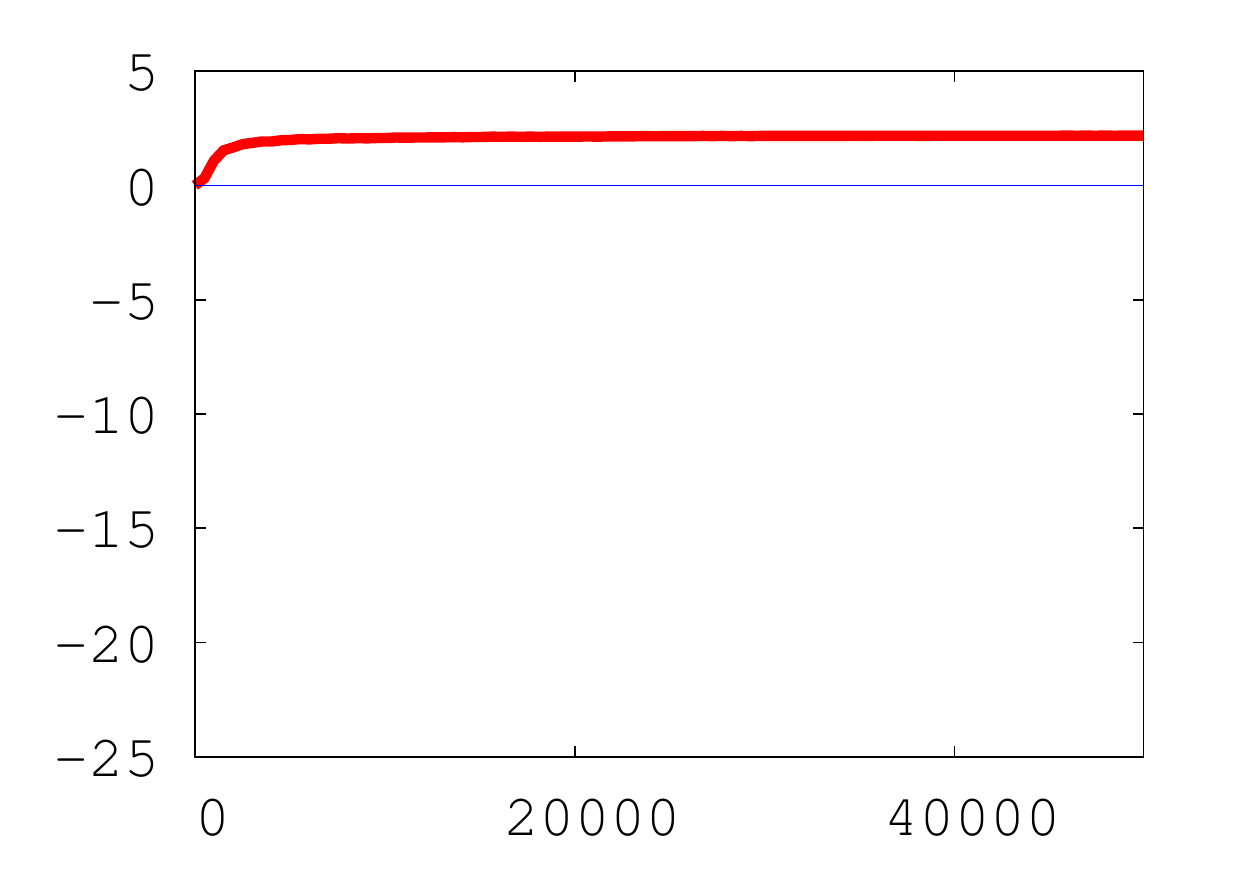}
& \includegraphics[trim=10bp 10bp 30bp 10bp,clip,width=.28\linewidth]{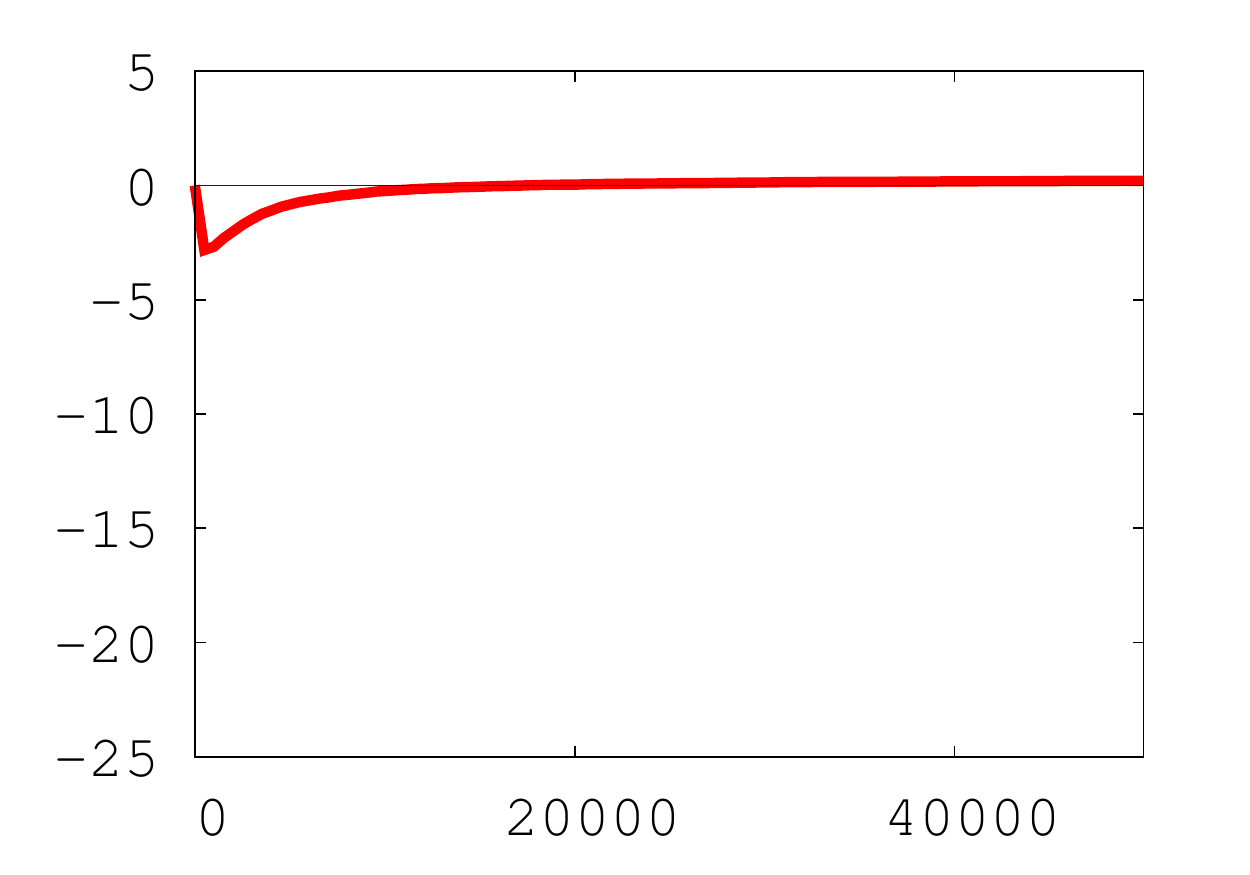}
& \includegraphics[trim=10bp 10bp 30bp 10bp,clip,width=.28\linewidth]{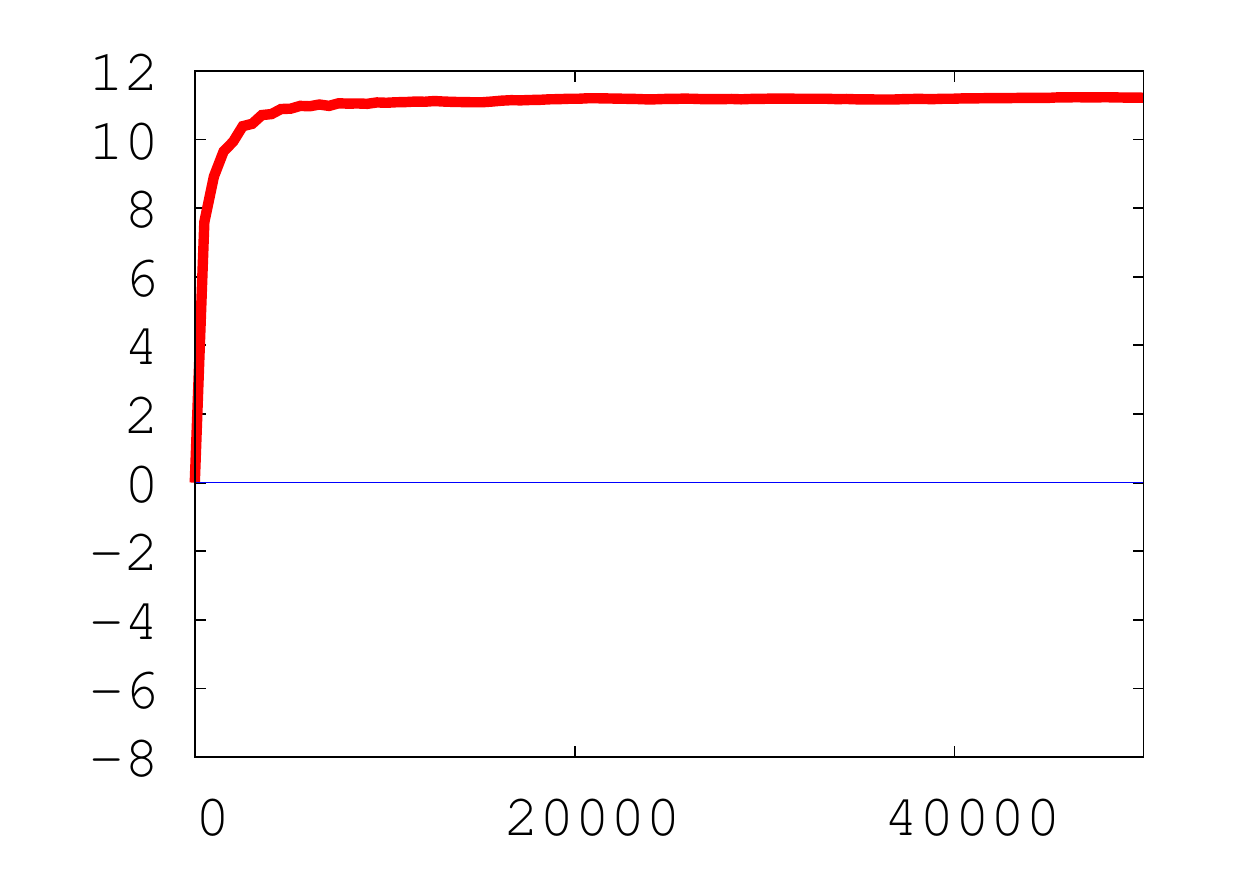}
&  \includegraphics[trim=10bp 10bp 30bp 10bp,clip,width=.28\linewidth]{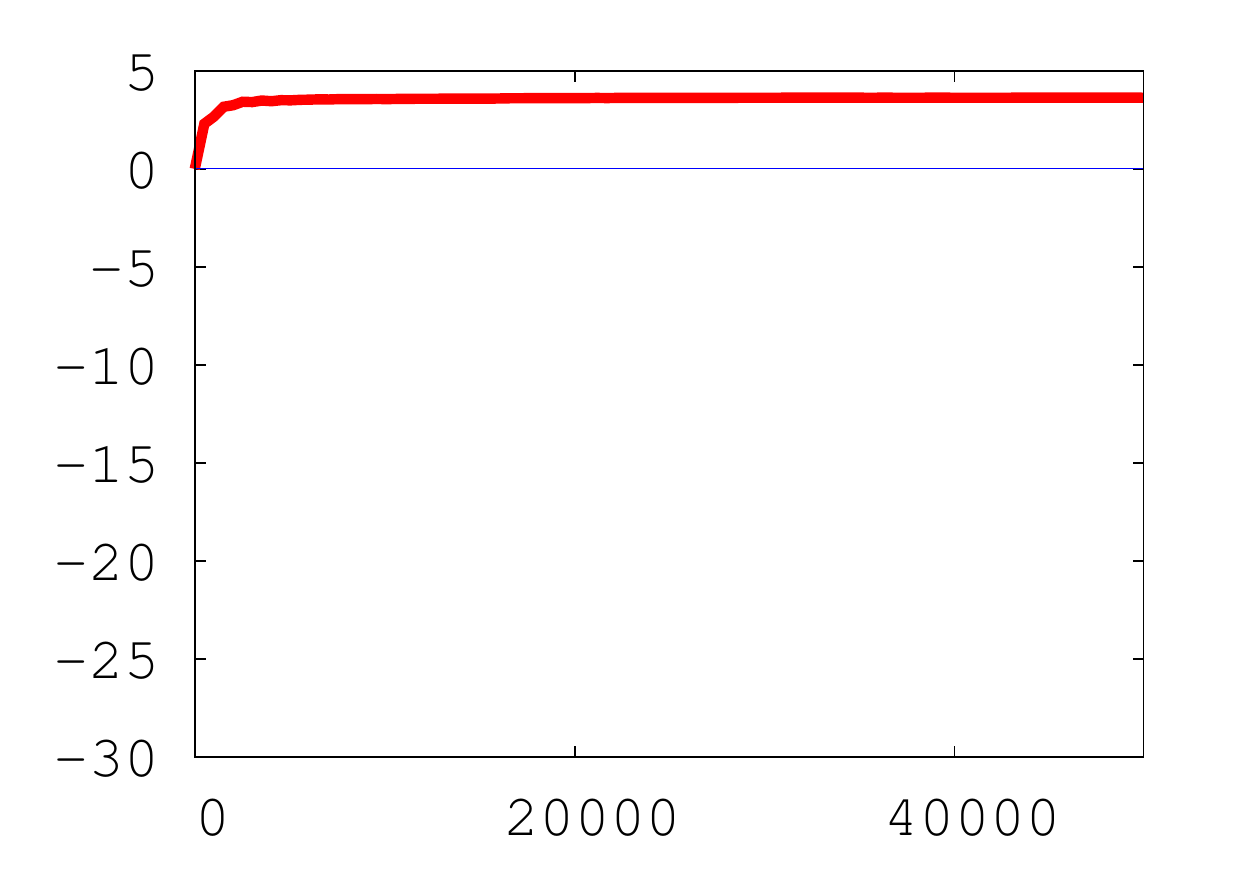}
& \includegraphics[trim=10bp 10bp 30bp 10bp,clip,width=.28\linewidth]{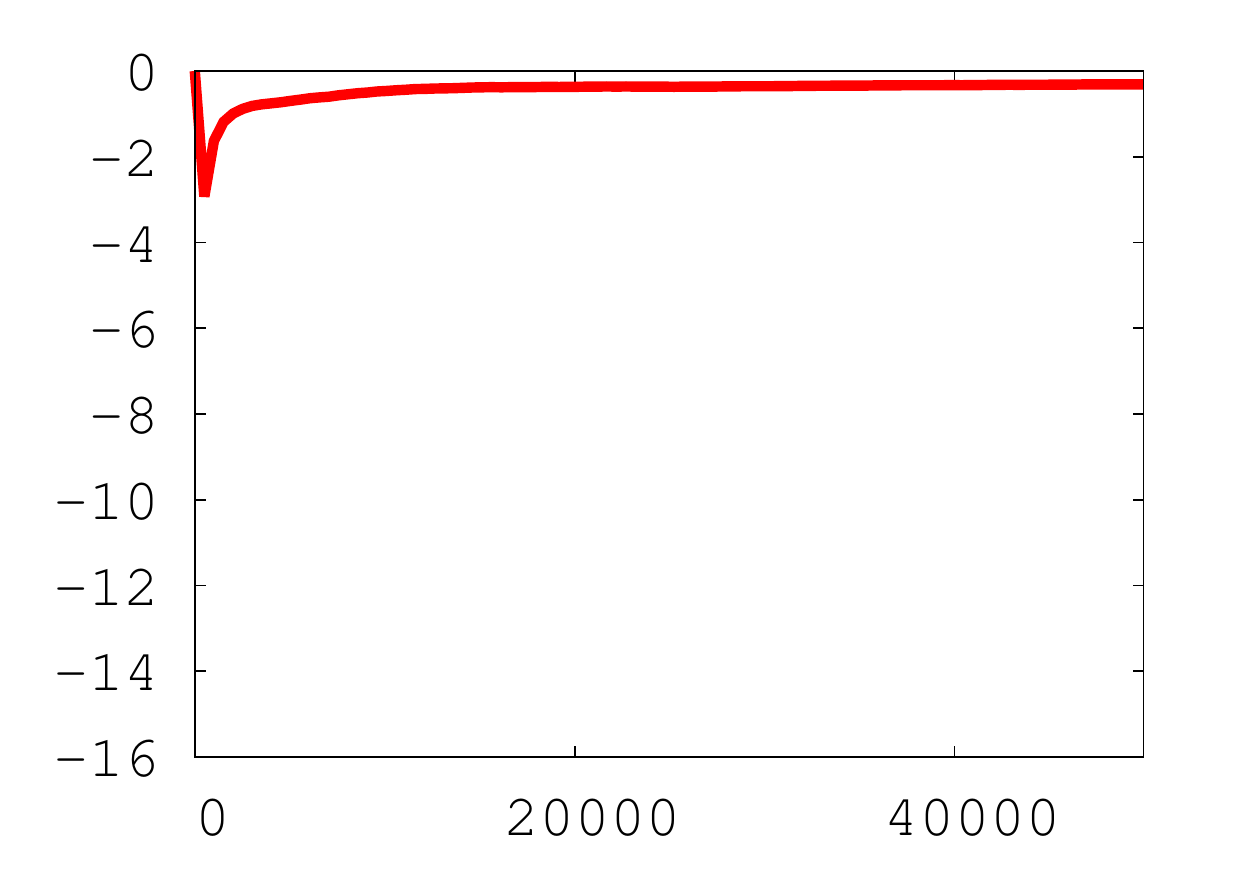}\\
$\rho = 1.0$ & $\rho = 1.0$ & $\rho = 1.0$ & $\rho = 0.5$ & $\rho =1.3$ & $\rho =0.2$ \\ \hline \hline
\end{tabular}
}
\caption{Summary of the experiments displaying ($y$) the error of $p$-LMS minus error of DN-$p$-LMS (when $>0$, DN-$p$-LMS beats $p$-LMS) as a function of $t$, in the setting of \cite{kwhTP}, for various values of $(p,q)$ (columns).
Left panel: (D)ense target; Right panel: (S)parse target.
\label{tab:exp-plms}}
  \vspace{-0.175in}
\end{table}

\begin{table}[!t]
\centering
\scalebox{0.725}{
\begin{tabular}{c|c}\hline \hline
\includegraphics[trim=0bp 0bp 0bp
0bp,clip,width=.40\linewidth]{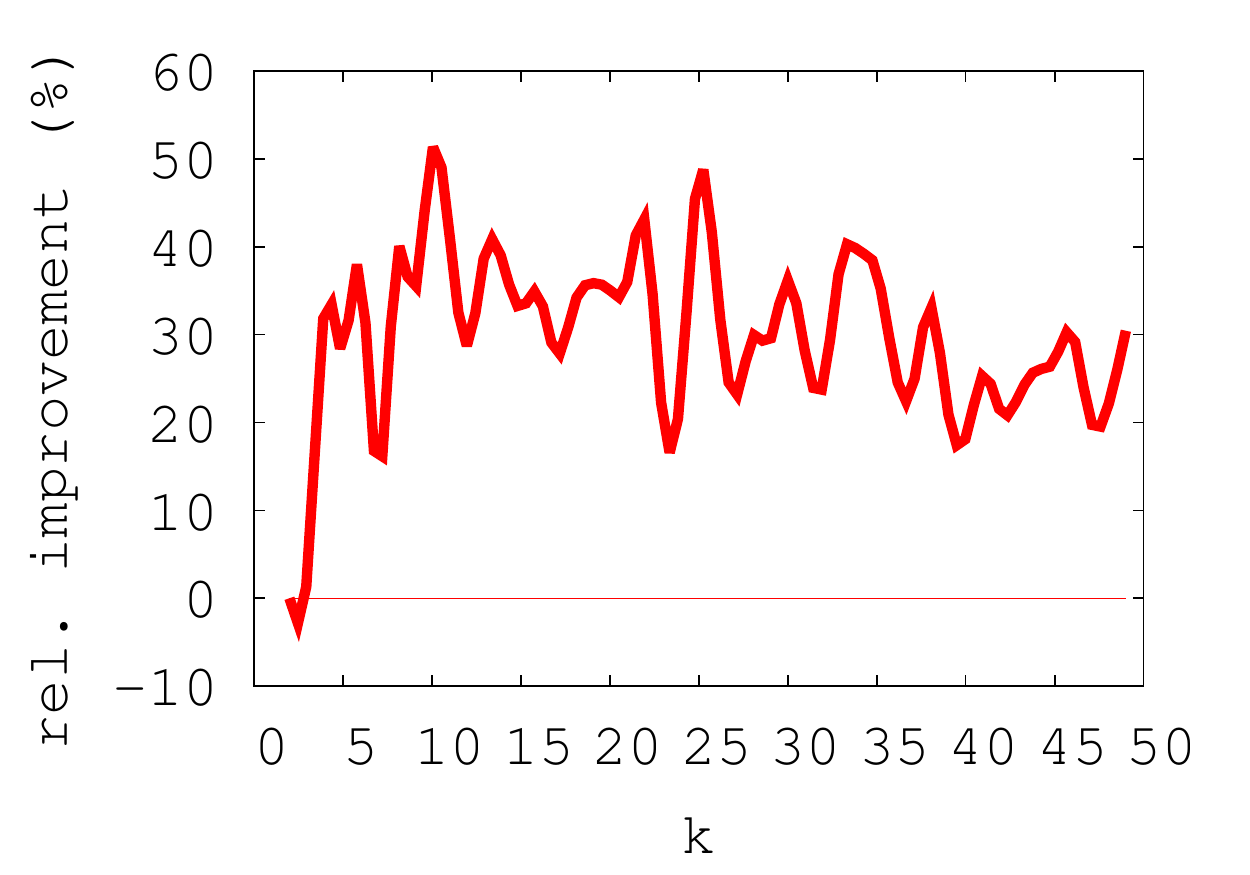}
& \includegraphics[trim=0bp 0bp 0bp
0bp,clip,width=.40\linewidth]{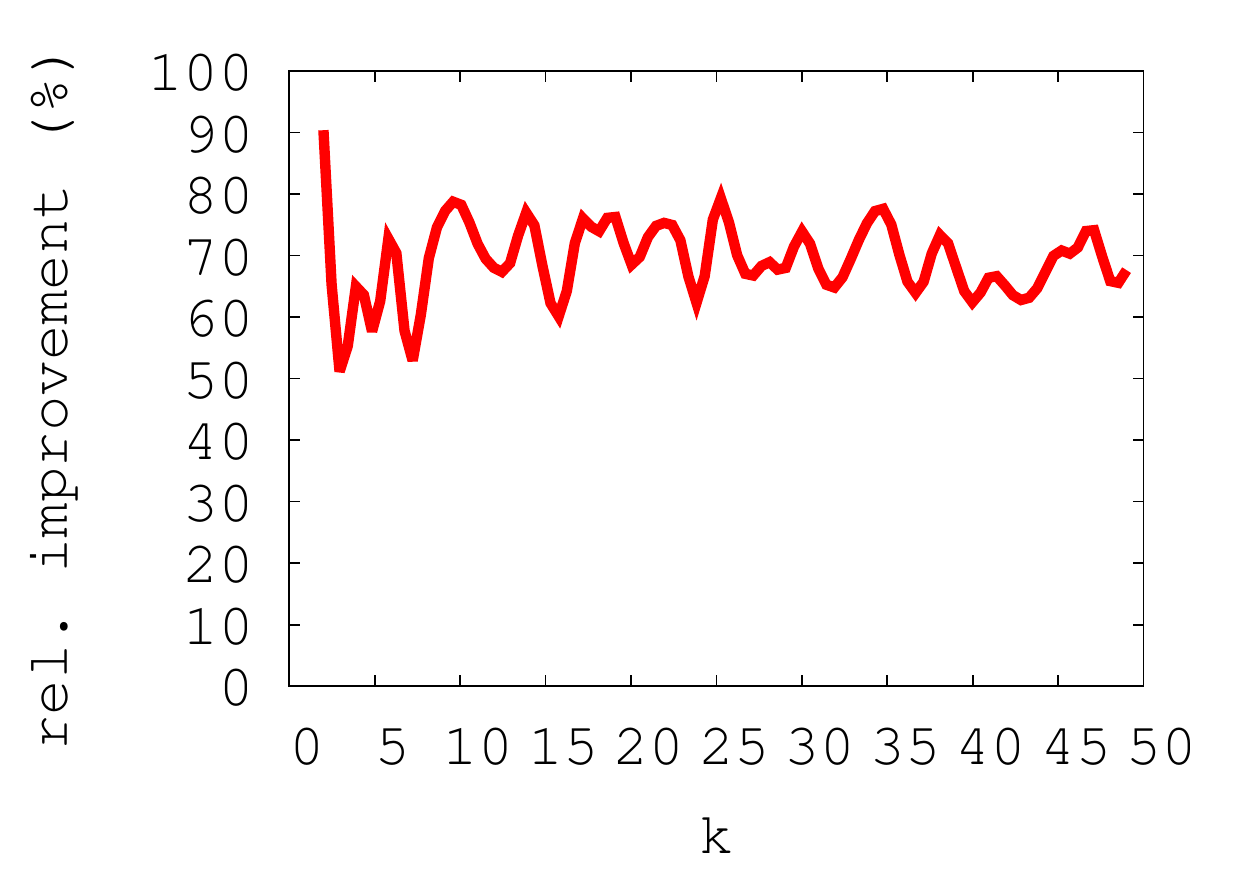}\\ \hline \hline
\end{tabular}
}
\caption{(L) Relative improvement (decrease) in $k$-means potential of
  SKM$\circ$\GKM~compared to SKM alone.
(R) Relative improvement of \GKM~over Forgy initialization on the sphere.\label{tab:kmeansDM1}}
\vspace{-0.0825in}
\end{table}

\begin{table}[!t]
\centering
\scalebox{0.825}{
\begin{tabular}{c|c||c}\hline \hline
\includegraphics[trim=0bp 0bp 0bp
0bp,clip,width=.30\linewidth]{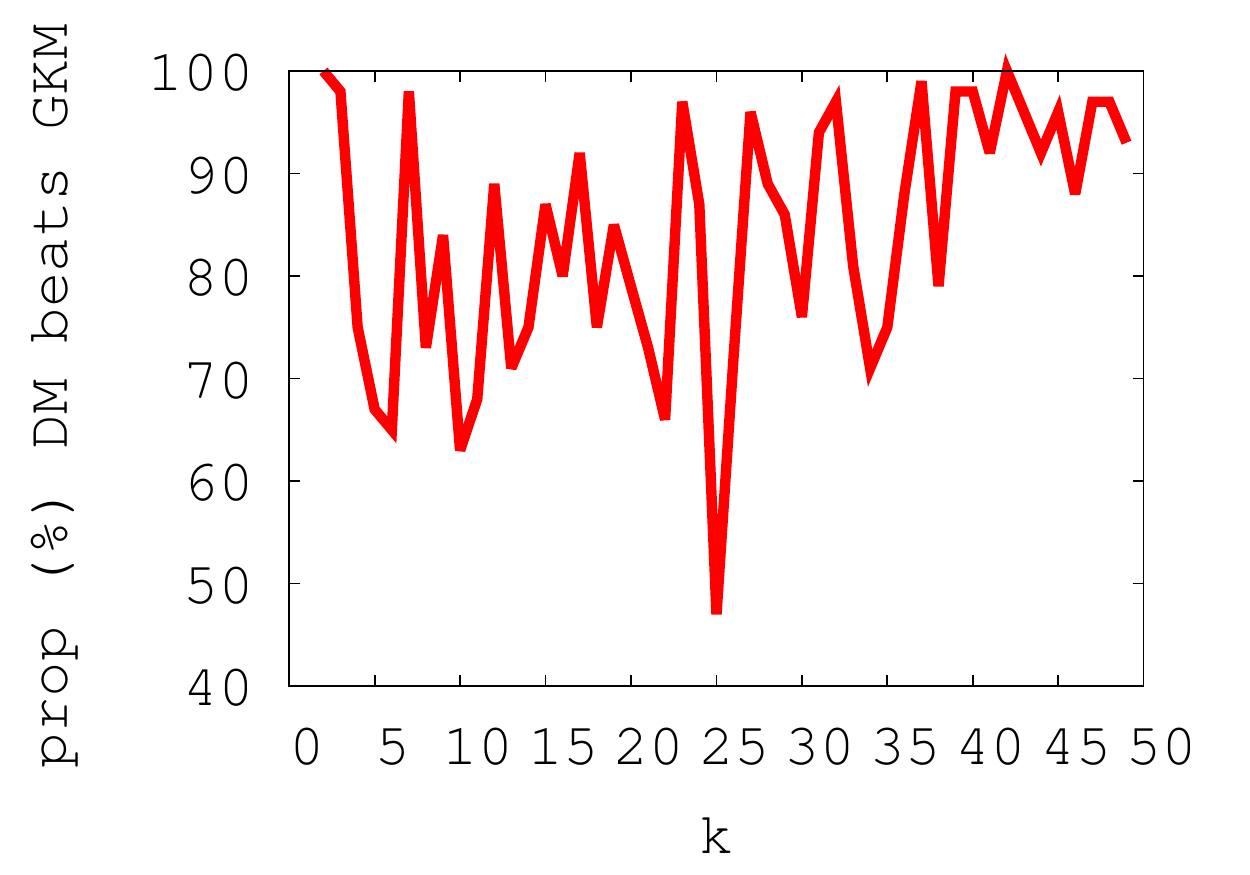}
& \includegraphics[trim=0bp 0bp 0bp
0bp,clip,width=.30\linewidth]{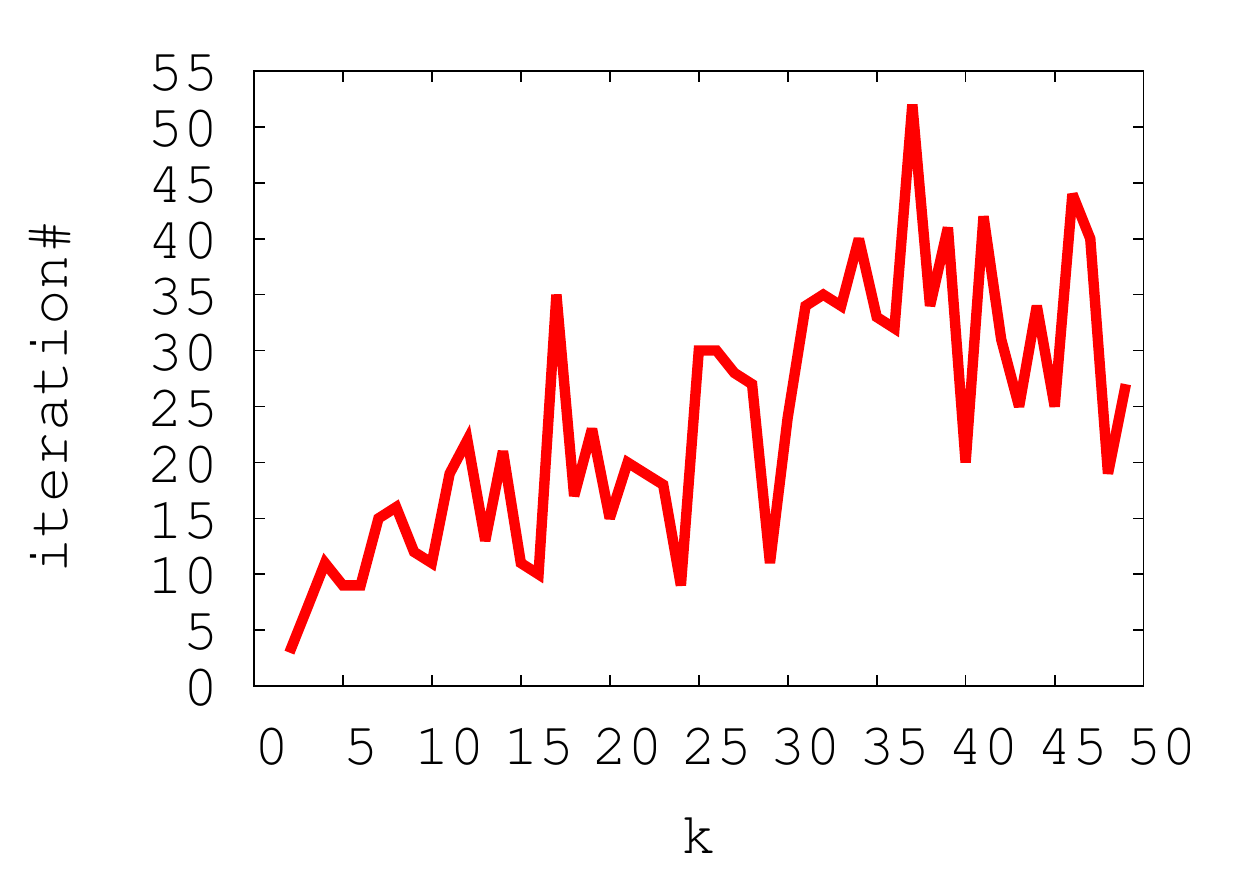}& \includegraphics[trim=0bp 0bp 0bp
0bp,clip,width=.30\linewidth]{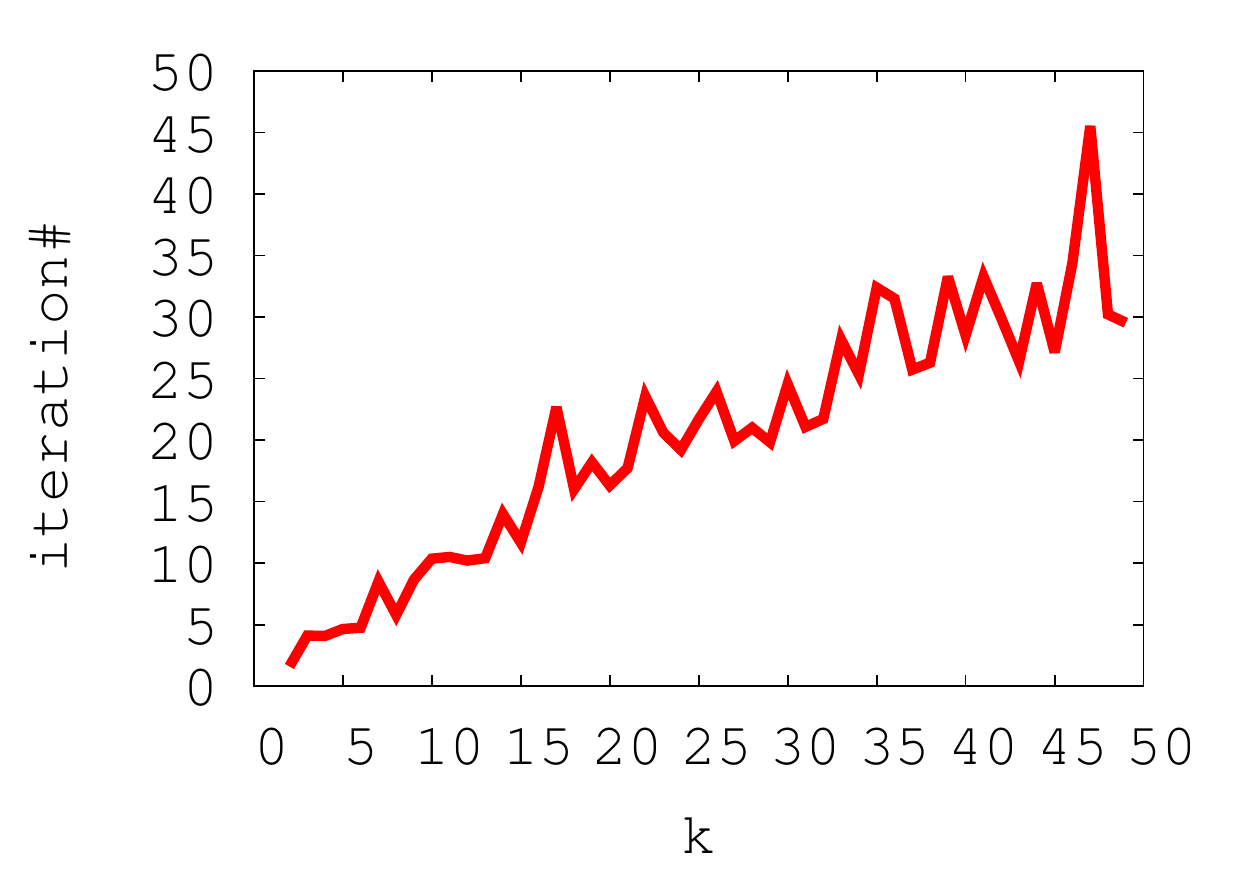}\\ \hline \hline
\end{tabular}
}
\caption{(L) \% of the number of runs of SKM whose output (\textit{when it has converged}) is better than \GKM.
(C) Maximal \# of iterations for SKM after which it beats \GKM~(ignoring runs of SKM that do not beat \GKM).
(R) Average \# of iterations for SKM to converge.\label{tab:kmeansDM2}}
  \vspace{-0.0825in}
\end{table}

Experiments, given \textit{in extenso} in
\Supp{Appendix \ref{app:exp-full}}{Appendix H.2}, are sumarized in Table \ref{tab:exp-plms}.
The following trends emerge: in the mid to long run, DN-$p$-LMS is
never beaten by $p$-LMS by more than a fraction of percent. On the
other hand, DN-$p$-LM can beat $p$-LMS by very significant
differences (exceeding 40$\%$), in particular when $p<2$,
\textit{i.e.} when we are outside the
regime of the proof of \cite{kwhTP}. This indicates that
significantly stronger and more general results than the one of Lemma \ref{lemlplq} may
be expected. Also, it seems that the problem of $p$-LMS lies in an ``exploding'' norm problem: in various cases, we observe that $\|\ve{w}_t\|$ (in any norm) blows up with $t$, and this correlates with a very significant degradation of its performances.
Clearly, DN-$p$-LMS does not have this problem since all relevant
norms are under tight control. Finally, even when the norm does not explode, DN-$p$-LMS can still beat $p$-LMS, by less important differences though.
Of course, the output of $p$-LMS can repeatedly be normalised, but the normalisation would escape the theory of \cite{kwhTP} and it is not clear which kind of normalisation would bring the best results.

\textbf{Clustering on the sphere}.
For $k \in [50]_*$, we simulate on
$T_{\ve{0}}{\mathbb{S}}^2$ a mixture of spherical Gaussian and uniform
  densities with $2k$ components. We run three algorithms: (i)
  SKM \citep{dmCD} on the data embedded on ${\mathbb{S}}^2$
  with random (Forgy) initialization, (ii), \GKM~and (iii)
  SKM with \GKM~initialisation.
  Results are averaged over the algorithms' runs.

  Table \ref{tab:kmeansDM1} (left)
  displays that using \GKM~as
  initialization for SKM brings a very significant leverage over SKM
  alone, since we almost divide the $k$-means potential by a factor 2
  on some runs. The right plot of Table
  \ref{tab:kmeansDM1} shows that \SKM~consistently reduces the
  $k$-means potential by at least a factor 2 over Forgy.
The left plot in Table
  \ref{tab:kmeansDM2} displays that even when it has converged, SKM
  does \textit{not} necessarily beat \GKM. Finally, the center+right plots in Table
  \ref{tab:kmeansDM2} display that even when it does beat \GKM~when it
  has converged, the iteration number after which SKM beats
  \GKM~increases with $k$, and in the worst case may \textit{exceed}
  the average number of iterations needed for SKM to converge (we
  stopped SKM if relative improvement is not above $1\permille$).

\section{Conclusion}
\label{sec:conclusion}

We presented a new scaled Bregman identity (Theorem \ref{th00}),
and used it to derive novel results
in multiple density ratio estimation,
adaptive filtering,
and clustering on curved manifolds.
We believe that, like other established properties of Bregman divergences,
there is potential for several other applications of the result; \Supp{Appendix \ref{app:exp-family}, \ref{app:comp-geom}}{the Appendix} present preliminary thoughts in this direction.

\newpage

\ifthenelse{\boolean{withSupp}}{
\appendix
\let\stdsection\section
\renewcommand\section{\newpage\stdsection}

\section{Additional helper lemmas}

We begin with some helper lemmas that will be used in some of the proofs.
In what follows, let
\begin{align*}
  \varphi_q(\ve{w}) &= (1/2)(W^2 + \|\ve{w}\|_q^2) \\
  \varphi^\dagger_q(\ve{w}) &= W \cdot \|\ve{w}\|_q
\end{align*}
for some $W > 0$
and
$p, q \in (1, \infty)$ such that $1/p + 1/q = 1$.

\subsection{Properties of $\varphi_q$ and $\varphi_q^\dagger$}

We use the following properties of $\varphi, \varphi^\dagger$.

\begin{lemma}
\label{lemm:varphi-properties}
For any $\ve{w}$,
\begin{align*}
  \nabla\varphi_q( \ve{w} ) &= \| \ve{w} \|_q^{2 - q} \cdot \sign( \ve{w} ) \otimes |\ve{w}|^{q - 1} \\
  \nabla\varphi_q^\dagger( \ve{w} ) &= W \cdot \| \ve{w} \|_q^{1 - q} \cdot \sign( \ve{w} ) \otimes |\ve{w}|^{q - 1} ,
\end{align*}
where $\otimes$ denotes Hadamard product.
\end{lemma}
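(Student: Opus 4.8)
The plan is to reduce both identities to a single computation, namely the gradient of the $L_q$ norm $\ve{w} \mapsto \|\ve{w}\|_q$, and then recover each claim by an elementary scaling. Writing $\|\ve{w}\|_q = \left(\sum_i |w_i|^q\right)^{1/q}$ and setting $S(\ve{w}) \defeq \sum_i |w_i|^q$, so that $\|\ve{w}\|_q = S^{1/q}$, I would first record the scalar fact that for $q \in (1,\infty)$ the map $t \mapsto |t|^q$ is continuously differentiable with $\frac{d}{dt}|t|^q = q\,|t|^{q-1}\sign(t)$; this is where the hypothesis $q>1$ is used, since it guarantees differentiability through $t=0$ (the derivative vanishing there).

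Next I would differentiate $\|\ve{w}\|_q$ coordinatewise. By the chain rule, $\frac{\partial S}{\partial w_j} = q\,|w_j|^{q-1}\sign(w_j)$, and hence $\frac{\partial}{\partial w_j}\|\ve{w}\|_q = \frac{1}{q}\,S^{1/q - 1}\cdot q\,|w_j|^{q-1}\sign(w_j) = S^{(1-q)/q}\,|w_j|^{q-1}\sign(w_j)$. Since $S^{(1-q)/q} = (S^{1/q})^{1-q} = \|\ve{w}\|_q^{1-q}$, assembling over coordinates gives the key identity
\begin{equation*}
  \nabla \|\ve{w}\|_q = \|\ve{w}\|_q^{1-q}\cdot \sign(\ve{w}) \otimes |\ve{w}|^{q-1},
\end{equation*}
valid for every $\ve{w} \neq \ve{0}$.

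From here both displays follow immediately. For $\varphi_q^\dagger(\ve{w}) = W\cdot\|\ve{w}\|_q$ I would simply multiply the identity above by $W$, yielding $\nabla\varphi_q^\dagger(\ve{w}) = W\cdot\|\ve{w}\|_q^{1-q}\cdot\sign(\ve{w})\otimes|\ve{w}|^{q-1}$, as claimed. For $\varphi_q(\ve{w}) = \tfrac{1}{2}(W^2 + \|\ve{w}\|_q^2)$ the additive constant $W^2/2$ contributes nothing, and a second application of the chain rule to $\tfrac12\|\ve{w}\|_q^2$ gives $\nabla\varphi_q(\ve{w}) = \|\ve{w}\|_q\cdot\nabla\|\ve{w}\|_q = \|\ve{w}\|_q^{2-q}\cdot\sign(\ve{w})\otimes|\ve{w}|^{q-1}$, matching the first display.

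I do not expect any genuine obstacle here: the statement is a routine gradient computation, and the only points requiring mild care are the bookkeeping of the $\sign$ and $|\cdot|^{q-1}$ factors (handled by the scalar derivative of $|t|^q$) and the restriction to $\ve{w}\neq\ve{0}$, where the norm fails to be differentiable. The exponents $2-q$ and $1-q$ arise purely from combining the $S^{(1-q)/q}$ factor with the outer chain-rule multipliers $\|\ve{w}\|_q$ and $1$ respectively, so no additional case analysis is needed.
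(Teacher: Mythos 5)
Your proof is correct and is essentially the computation the paper relies on: the paper's own proof simply cites \citet[Example 1]{kwhTP} for the first identity and calls the second a ``simple calculation,'' whereas you carry that calculation out explicitly, differentiating $\|\ve{w}\|_q$ once and recovering both gradients by scaling with $\|\ve{w}\|_q$ and $W$ respectively. Your added caveat that the identities only make sense for $\ve{w}\neq\ve{0}$ (where the norm is differentiable) is a point of care that the paper's blanket ``for any $\ve{w}$'' glosses over.
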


\begin{proof}
The first identity was shown in \citet[Example 1]{kwhTP}.
The second identity follows from a simple calculation.
\end{proof}

This implies the follows useful relation between the gradients of $\varphi_q$ and $\varphi_q^\dagger$.

\begin{corollary}
\label{corr:phi-gradient-norms}
For any $\ve{w}$,
\begin{align*}
  \nabla\varphi_q(\ve{w}) &= (\|\ve{w}\|_q/W)\cdot \nabla\varphi^\dagger_q(\ve{w}) \\
  \| \nabla\varphi^\dagger_q(\ve{w}) \|_p &= W \\
  \| \nabla\varphi_q(\ve{w}) \|_p &= \| \ve{w} \|_q.
\end{align*}
\end{corollary}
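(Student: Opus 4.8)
The plan is to read all three identities directly off the closed forms for $\nabla\varphi_q$ and $\nabla\varphi_q^\dagger$ supplied by Lemma \ref{lemm:varphi-properties}, so that no fresh differentiation is needed; the only substantive ingredient is the conjugacy arithmetic $1/p + 1/q = 1$.

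First I would establish the proportionality $\nabla\varphi_q(\ve{w}) = (\|\ve{w}\|_q/W)\cdot \nabla\varphi_q^\dagger(\ve{w})$ by inspection. Both gradients share the common direction $\sign(\ve{w}) \otimes |\ve{w}|^{q-1}$, and their scalar prefactors $\|\ve{w}\|_q^{2-q}$ and $W\|\ve{w}\|_q^{1-q}$ differ by exactly the factor $\|\ve{w}\|_q/W$. This is a one-line substitution into the two expressions of the Lemma.

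Next I would compute $\|\nabla\varphi_q^\dagger(\ve{w})\|_p$. Writing the $i$-th coordinate of $\nabla\varphi_q^\dagger(\ve{w})$ as $W\|\ve{w}\|_q^{1-q}\sign(w_i)|w_i|^{q-1}$, its absolute value raised to the $p$-th power is $W^p \|\ve{w}\|_q^{(1-q)p}|w_i|^{(q-1)p}$. The key step --- the only place duality enters --- is that $1/p + 1/q = 1$ gives $p = q/(q-1)$, hence $(q-1)p = q$ and $(1-q)p = -q$. Summing over $i$ then collapses $\sum_i |w_i|^q = \|\ve{w}\|_q^q$ against the prefactor $\|\ve{w}\|_q^{-q}$, leaving $\sum_i |(\nabla\varphi_q^\dagger(\ve{w}))_i|^p = W^p$, so the $p$-norm is exactly $W$.

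Finally, the third identity follows from the first two: substituting the proportionality into $\|\nabla\varphi_q(\ve{w})\|_p$ and pulling the nonnegative scalar out of the norm gives $(\|\ve{w}\|_q/W)\cdot\|\nabla\varphi_q^\dagger(\ve{w})\|_p = (\|\ve{w}\|_q/W)\cdot W = \|\ve{w}\|_q$. I do not anticipate any genuine obstacle; the argument is entirely mechanical once Lemma \ref{lemm:varphi-properties} is in hand, and the single point requiring care is keeping the exponent bookkeeping for $(q-1)p$ and $(1-q)p$ straight via the conjugacy relation $1/p+1/q=1$.
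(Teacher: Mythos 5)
Your proposal is correct and follows essentially the same route as the paper: a direct application of the closed forms in Lemma \ref{lemm:varphi-properties} combined with the conjugacy relation $1/p+1/q=1$, which is all the paper's (much terser) proof invokes. The only cosmetic difference is that you derive the third identity from the first two, whereas the paper cites \citet[Appendix I]{kwhTP} for it; your exponent bookkeeping $(q-1)p=q$, $(1-q)p=-q$ is exactly right, so the argument goes through.
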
 

\begin{proof}[Proof of Corollary \ref{corr:phi-gradient-norms}]
The proof follows by direct application of Lemma \ref{lemm:varphi-properties} and the definition of $p, q$.
Note the third identity was shown in \citet[Appendix I]{kwhTP}.
\end{proof}

As a consequence, we conclude that the gradients of $\varphi$ and $\varphi^\dagger$ coincide when considering vectors on the $W$-sphere.

\begin{lemma}
\label{lemm:gradient-same}
For any $\|\ve{w}\|_q = W$,
$$ \nabla\varphi_q\left(\ve{w}\right) = \nabla\varphi_q^{\dagger}\left( \ve{w}\right). $$
\end{lemma}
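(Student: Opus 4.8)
The plan is to derive this immediately from Corollary \ref{corr:phi-gradient-norms}, which already packages the precise relationship between the two gradients. That corollary gives the identity $\nabla\varphi_q(\ve{w}) = (\|\ve{w}\|_q/W)\cdot \nabla\varphi^\dagger_q(\ve{w})$ for \emph{every} $\ve{w}$, so the entire content of the lemma reduces to specializing the scalar prefactor $\|\ve{w}\|_q/W$ to the hypothesis at hand.

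Concretely, the first step is to invoke the hypothesis $\|\ve{w}\|_q = W$, which makes the multiplicative factor satisfy $\|\ve{w}\|_q/W = W/W = 1$. Substituting this into the corollary's first identity then yields $\nabla\varphi_q(\ve{w}) = 1\cdot \nabla\varphi^\dagger_q(\ve{w}) = \nabla\varphi^\dagger_q(\ve{w})$, which is exactly the claim. This is a one-line consequence once the corollary is in hand.

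As an alternative route that avoids citing the corollary, I would substitute $\|\ve{w}\|_q = W$ directly into the two closed forms from Lemma \ref{lemm:varphi-properties}. There, $\nabla\varphi_q(\ve{w})$ carries the factor $\|\ve{w}\|_q^{2-q}$ while $\nabla\varphi_q^\dagger(\ve{w})$ carries $W\cdot \|\ve{w}\|_q^{1-q}$; both multiply the identical direction term $\sign(\ve{w})\otimes|\ve{w}|^{q-1}$. On the $W$-sphere these prefactors become $W^{2-q}$ and $W\cdot W^{1-q} = W^{2-q}$ respectively, so they agree and the gradients coincide.

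I do not anticipate any genuine obstacle here: the result is a direct substitution, and the only thing to be careful about is correctly tracking the norm exponents ($2-q$ versus $1-q$) so that the constant $W$ in $\varphi_q^\dagger$ compensates exactly for the shift. The heavy lifting has already been done in establishing Lemma \ref{lemm:varphi-properties} and Corollary \ref{corr:phi-gradient-norms}; this lemma is essentially their corollary restricted to the relevant sphere, and it will be used downstream to glue together the $p$-LMS and DN-$p$-LMS updates at points where the iterates have norm exactly $W$ (as guaranteed by Lemma \ref{lemm:plms-update-norm}).
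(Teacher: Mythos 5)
Your proposal is correct and matches the paper's own argument, which likewise deduces the lemma from the gradient relation $\nabla\varphi_q(\ve{w}) = (\|\ve{w}\|_q/W)\cdot\nabla\varphi_q^\dagger(\ve{w})$ coming out of Lemma \ref{lemm:varphi-properties}. Both your primary route (via Corollary \ref{corr:phi-gradient-norms}) and your alternative route (direct substitution of $\|\ve{w}\|_q = W$ into the closed forms) are the same one-line specialization the paper performs.
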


\begin{proof}
This follows from the relation between $\nabla\varphi_q$ and $\nabla\varphi_q^\dagger$ from Lemma \ref{lemm:varphi-properties}.
\end{proof}

Finally, we have the following result about the composition of gradients.

\begin{lemma}
\label{lemm:nabla-phi-composition}
For any $\ve{w}$,
\begin{align*}
  \nabla\varphi_q\circ {\nabla\varphi_p^\dagger}(\ve{w}) &= \nabla\varphi^\dagger_q\circ {\nabla\varphi_p^\dagger}(\ve{w}) = \frac{W}{\|\ve{w}\|_p} \cdot \ve{w}.
\end{align*}
\end{lemma}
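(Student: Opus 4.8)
The plan is to abbreviate $\ve{v} \defeq \nabla\varphi_p^\dagger(\ve{w})$ and reduce everything to the closed-form gradients of Lemma \ref{lemm:varphi-properties}. The statement then splits into two independent claims: that the two compositions agree, and that their common value equals $(W/\|\ve{w}\|_p)\cdot\ve{w}$. I would treat these separately, handling the first by a sphere argument and the second by a direct component-wise simplification.

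For the first equality I would compute nothing. By the $p \leftrightarrow q$ symmetric version of the second identity in Corollary \ref{corr:phi-gradient-norms}, the intermediate vector $\ve{v} = \nabla\varphi_p^\dagger(\ve{w})$ satisfies $\|\ve{v}\|_q = W$; that is, $\ve{v}$ lands on the $W$-sphere in the $q$-norm. Lemma \ref{lemm:gradient-same} says precisely that $\nabla\varphi_q$ and $\nabla\varphi_q^\dagger$ coincide on that sphere, so $\nabla\varphi_q(\ve{v}) = \nabla\varphi_q^\dagger(\ve{v})$, which is exactly the first claimed equality. This also means I only ever have to evaluate $\nabla\varphi_q^\dagger\circ\nabla\varphi_p^\dagger$ and get the other composition for free.

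For the common value I would substitute the closed form $\nabla\varphi_p^\dagger(\ve{w}) = W\,\|\ve{w}\|_p^{1-p}\cdot\sign(\ve{w})\otimes|\ve{w}|^{p-1}$ into $\nabla\varphi_q^\dagger$ and simplify coordinate-wise. Writing $v_i = W\|\ve{w}\|_p^{1-p}\sign(w_i)|w_i|^{p-1}$, the computation rests entirely on the conjugacy relations $p+q = pq$, $(p-1)q = p$, and $(p-1)(q-1) = 1$, all immediate from $1/p+1/q = 1$. These collapse the accumulated exponents: one gets $|v_i|^{q-1} = W^{q-1}\|\ve{w}\|_p^{-1}|w_i|$ and $\|\ve{v}\|_q = W$, and plugging into $\nabla\varphi_q^\dagger(\ve{v}) = W\|\ve{v}\|_q^{1-q}\sign(\ve{v})\otimes|\ve{v}|^{q-1}$ makes every power of $W$ cancel but one, leaving $W\cdot w_i/\|\ve{w}\|_p$ in coordinate $i$ after using $\sign(w_i)|w_i| = w_i$.

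There is no real conceptual obstacle; the only delicate part is the bookkeeping of exponents in the second step, where it is easy to mishandle the interplay between the normalising powers $\|\ve{w}\|_p^{1-p}$ and $\|\ve{v}\|_q^{1-q}$ and the conjugate identities. The one substantive idea is recognising that Lemma \ref{lemm:gradient-same} applies because the intermediate vector already lies on the $q$-normed $W$-sphere, which is what lets me avoid evaluating $\nabla\varphi_q\circ\nabla\varphi_p^\dagger$ directly and reuse the single $\nabla\varphi_q^\dagger$ computation for both compositions.
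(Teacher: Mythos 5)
Your proof is correct and takes essentially the same route as the paper's: both rest on the closed-form gradients of Lemma \ref{lemm:varphi-properties}, the conjugate-exponent identities following from $1/p+1/q=1$, and the fact that the intermediate vector $\nabla\varphi_p^\dagger(\ve{w})$ has $q$-norm $W$. The only difference is cosmetic: the paper does the exponent bookkeeping directly on $\nabla\varphi_q\circ\nabla\varphi_p^\dagger$ and then transfers the result to the $\varphi_q^\dagger$ composition via Corollary \ref{corr:phi-gradient-norms}, whereas you do the bookkeeping on $\nabla\varphi_q^\dagger\circ\nabla\varphi_p^\dagger$ and transfer in the other direction via Lemma \ref{lemm:gradient-same}, which is itself just a repackaging of that same corollary.
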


\begin{proof}
For the first identity, applying Lemma \ref{lemm:varphi-properties} twice,
\begin{eqnarray}
\nabla\varphi_q\circ {\nabla\varphi_p^\dagger}(\ve{w}) & = &
\frac{1}{\|{\nabla\varphi_p^\dagger}(\ve{w})\|_q^{q-2}}\cdot
\mathrm{sign}({\nabla\varphi_p^\dagger}(\ve{w}))\otimes
|{\nabla\varphi_p^\dagger}(\ve{w})|^{q-1}\nonumber\\
 & = & \frac{1}{W^{q-2}_q} \cdot \mathrm{sign}(\ve{w})\otimes
 \frac{W^{q-1}}{\|\ve{w}\|_p^{(p-1)(q-1)}}\cdot |\ve{w}|^{(p-1)(q-1)}\nonumber\\
 & = & \frac{W}{\|\ve{w}\|_p} \cdot \ve{w}\:\:.
\end{eqnarray}

For the second identity,
use Corollary \ref{corr:phi-gradient-norms} to conclude that
\begin{align*}
  \nabla\varphi^\dagger_q\circ {\nabla\varphi_p^\dagger}(\ve{w}) &= \frac{W}{\| \nabla\varphi_p^\dagger( \ve{w} ) \|_q} \cdot \nabla\varphi_q( \nabla\varphi_p^\dagger( \ve{w} ) ) \\
  &= W \cdot \frac{\ve{w}}{\| \ve{w} \|_p}.
\end{align*}
\end{proof}

\subsection{Bound on successive iterate divergence}

The following Lemma extends \citep[Appendix I]{kwhTP} to $\varphi^\dagger$.

\begin{lemma}\label{lemupdate}
For any $\ve{w}$ and $\ve{\delta}$,
\begin{eqnarray}
\lefteqn{D_{\varphi^\dagger_q}\left(\ve{w}\|
  {\nabla\varphi_p^\dagger}\left(\nabla\varphi_q^\dagger(\ve{w}) +
    \ve{\delta}\right)\right)}\nonumber\\
 & \leq &\frac{(p-1)
  \|\ve{w}\|_qW}{2}\cdot\left\| \frac{1}{ \|\nabla\varphi_q^\dagger(\ve{w}) +
    \ve{\delta}\|_p}\cdot \left(\nabla\varphi_q^\dagger(\ve{w}) +
    \ve{\delta}\right) -
    \frac{1}{W}\cdot \nabla\varphi^\dagger_q\left(\ve{w}\right)\right\|_p^2\:\:.
\end{eqnarray}
\end{lemma}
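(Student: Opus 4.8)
The plan is to use the scaled Bregman theorem to convert the $\varphi_q^\dagger$-distortion on the left into an \emph{ordinary} $\varphi_q$-divergence between two vectors that lie on the $L_q$-sphere of radius $W$, where the known smoothness estimate behind the original $p$-LMS analysis applies verbatim. Write $g_q(\ve{x}) \defeq \|\ve{x}\|_q/W$ (the admissible gauge from \S\ref{sec:pLMS}) and abbreviate $\ve{v} \defeq \nabla\varphi_p^\dagger(\nabla\varphi_q^\dagger(\ve{w}) + \ve{\delta})$, so that the left-hand side is $D_{\varphi_q^\dagger}(\ve{w}\|\ve{v})$. Theorem \ref{th00}, applied with $g=g_q$, gives
$$D_{\varphi_q^\dagger}(\ve{w}\|\ve{v}) = g_q(\ve{w}) \cdot D_{\varphi_q}\!\left(\frac{\ve{w}}{g_q(\ve{w})} \,\Big\|\, \frac{\ve{v}}{g_q(\ve{v})}\right) = \frac{\|\ve{w}\|_q}{W}\cdot D_{\varphi_q}(\ve{a}\|\ve{b}),$$
where $\ve{a} \defeq W\ve{w}/\|\ve{w}\|_q$ and $\ve{b} \defeq W\ve{v}/\|\ve{v}\|_q$ both have $q$-norm exactly $W$. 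The whole point of this first step is that it moves us onto the $W$-sphere, which is what lets the gradient simplifications of the helper lemmas fire.

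Next I would invoke the $(p-1)$-smoothness estimate for $\varphi_q$ that underlies \citep[Appendix I]{kwhTP}. Since $p>2$ (equivalently $q\in(1,2)$), the conjugate $\varphi_q^\star = \tfrac12\|\cdot\|_p^2$ (up to an additive constant irrelevant to the divergence) is $(p-1)$-smooth with respect to $\|\cdot\|_p$, so by the dual-symmetry property
$$D_{\varphi_q}(\ve{a}\|\ve{b}) = D_{\varphi_q^\star}(\nabla\varphi_q(\ve{b})\|\nabla\varphi_q(\ve{a})) \leq \frac{p-1}{2}\,\|\nabla\varphi_q(\ve{a}) - \nabla\varphi_q(\ve{b})\|_p^2.$$
It then remains to rewrite the two gradients in the claimed form. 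Because $\|\ve{a}\|_q = \|\ve{b}\|_q = W$, Lemma \ref{lemm:gradient-same} replaces $\nabla\varphi_q$ by $\nabla\varphi_q^\dagger$ at both points; and since the explicit formula of Lemma \ref{lemm:varphi-properties} shows $\nabla\varphi_q^\dagger$ is invariant under positive rescaling of its argument, this yields $\nabla\varphi_q(\ve{a}) = \nabla\varphi_q^\dagger(\ve{w})$ and $\nabla\varphi_q(\ve{b}) = \nabla\varphi_q^\dagger(\ve{v})$. Finally Lemma \ref{lemm:nabla-phi-composition} evaluates the composition,
$$\nabla\varphi_q^\dagger(\ve{v}) = \nabla\varphi_q^\dagger\circ\nabla\varphi_p^\dagger\big(\nabla\varphi_q^\dagger(\ve{w}) + \ve{\delta}\big) = \frac{W}{\|\nabla\varphi_q^\dagger(\ve{w}) + \ve{\delta}\|_p}\cdot\big(\nabla\varphi_q^\dagger(\ve{w}) + \ve{\delta}\big).$$

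Combining these pieces gives
$$D_{\varphi_q^\dagger}(\ve{w}\|\ve{v}) \leq \frac{\|\ve{w}\|_q}{W}\cdot\frac{p-1}{2}\left\|\frac{W}{\|\nabla\varphi_q^\dagger(\ve{w}) + \ve{\delta}\|_p}\big(\nabla\varphi_q^\dagger(\ve{w}) + \ve{\delta}\big) - \nabla\varphi_q^\dagger(\ve{w})\right\|_p^2,$$
and factoring $W$ out of the squared $p$-norm (which scales it by $W^2$) cancels against the leading $1/W$, leaving exactly $\tfrac{(p-1)\|\ve{w}\|_q W}{2}$ times the claimed squared norm. I expect the only genuinely delicate point to be the second step: justifying that the smoothness estimate is the correct one and applied in the right argument order, which is precisely where the hypothesis $p>2$ enters. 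Everything else is bookkeeping around the scaled Bregman identity together with the gradient facts already recorded in Lemmas \ref{lemm:gradient-same}, \ref{lemm:varphi-properties} and \ref{lemm:nabla-phi-composition}.
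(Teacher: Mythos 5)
Your proposal is correct and follows essentially the same route as the paper's proof: both use Theorem \ref{th00} to pass to a $D_{\varphi_q}$ divergence on the $L_q$ sphere of radius $W$, then combine dual symmetry with the $(p-1)$-smoothness bound of \citet[Appendix I]{kwhTP} for $\frac{1}{2}\|\cdot\|_p^2$, and finish with the gradient identities (Corollary \ref{corr:phi-gradient-norms}, Lemmas \ref{lemm:gradient-same} and \ref{lemm:nabla-phi-composition}) plus $W$-rescaling bookkeeping. The only differences are cosmetic (you apply the smoothness estimate immediately after dual symmetry, while the paper simplifies all gradients first and invokes the kwhTP bound last).
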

\begin{proof}[Proof of Lemma \ref{lemupdate}]
In this proof, $\circ$ denotes composition and $\otimes$ is Hadamard
product. The key step in the proof is the use of Theorem \ref{th00} to
``branch'' on the proof of \citep[Appendix I]{kwhTP} on the first
following identity (letting $\varphi_q(\ve{w})
\defeq (1/2) \cdot (W^2 + \|\ve{w}\|_q^2)$). We also make use
of the dual symmetry of
Bregman divergences and we obtain third identity of:
\begin{eqnarray}
\lefteqn{D_{\varphi^\dagger_q}\left(\ve{w}\|
  {\nabla\varphi_p^\dagger}\left(\nabla\varphi_q^\dagger(\ve{w}) +
    \ve{\delta}\right)\right)} \nonumber\\
& = & \frac{\|\ve{w}\|_q}{W}\cdot
D_{\varphi_q}\left(\frac{W}{\|\ve{w}\|_q}\cdot \ve{w}\left\|
  \frac{W}{\|\nabla\varphi_p^\dagger(\nabla\varphi_q^\dagger(\ve{w}) +
    \ve{\delta})\|_q}\cdot {\nabla\varphi_p^\dagger}\left(\nabla\varphi_q^\dagger(\ve{w}) +
    \ve{\delta}\right.\right)\right) \nonumber\\
& = & \frac{\|\ve{w}\|_q}{W}\cdot
D_{\varphi_q}\left(\frac{W}{\|\ve{w}\|_q}\cdot \ve{w}\left\|
  {\nabla\varphi_p^\dagger}\left(\nabla\varphi_q^\dagger(\ve{w}) +
    \ve{\delta}\right.\right)\right) \label{simpl1}\\
& = & \frac{\|\ve{w}\|_q}{W}\cdot
D_{\varphi_p}\left(
  \nabla\varphi_q\circ{\nabla\varphi_p^\dagger}\left(\nabla\varphi_q^\dagger(\ve{w}) +
    \ve{\delta}\right)  \left\|\nabla\varphi_q\left(\frac{W}{\|\ve{w}\|_q}\cdot \ve{w}\right)\right.\right) \nonumber \text{ by dual symmetry } \\
 & = & \frac{\|\ve{w}\|_q}{W}\cdot
D_{\varphi_p}\left(
  \frac{W}{ \|\nabla\varphi_q^\dagger(\ve{w}) +
    \ve{\delta}\|_p}\cdot \left(\nabla\varphi_q^\dagger(\ve{w}) +
    \ve{\delta}\right) \left\|\frac{W}{\|\ve{w}\|_q}\cdot
    \nabla\varphi_q\left(\ve{w}\right)\right.\right) \label{simpnorm}\\
& = & \frac{\|\ve{w}\|_q}{W}\cdot
D_{\varphi_p}\left(
  \frac{W}{ \|\nabla\varphi_q^\dagger(\ve{w}) +
    \ve{\delta}\|_p}\cdot \left(\nabla\varphi_q^\dagger(\ve{w}) +
    \ve{\delta}\right) \left\|
    \nabla\varphi^\dagger_q\left(\ve{w}\right)\right.\right) \label{simpnorm2}\\
& = & \|\ve{w}\|_qW\cdot
D_{\varphi_p}\left(
  \frac{1}{ \|\nabla\varphi_q^\dagger(\ve{w}) +
    \ve{\delta}\|_p}\cdot \left(\nabla\varphi_q^\dagger(\ve{w}) +
    \ve{\delta}\right) \left\|
    \frac{1}{W}\cdot \nabla\varphi^\dagger_q\left(\ve{w}\right)\right.\right) \:\:.\label{simpnorm3}
\end{eqnarray}
Equations (\ref{simpl1}) -- (\ref{simpnorm2}) hold because of Corollary \ref{corr:phi-gradient-norms}.
We now use
Appendix I\footnote{This result is stated as a bound on $D_{\varphi_q}( \ve{w} \| (\nabla\varphi_q)^{-1}( \nabla\varphi_q( \ve{w} ) + \ve{\delta} ) )$, which by the Bregman dual symmetry property is equivalent to a bound on $D_{\varphi_p}( \nabla\varphi_q( \ve{w} ) + \ve{\delta} \| \nabla\varphi_q( \ve{w} ) )$.} in \cite{kwhTP} on Equation (\ref{simpnorm3}) and obtain
\begin{eqnarray}
\lefteqn{D_{\varphi^\dagger_q}\left(\ve{w}\|
  {\nabla\varphi_p^\dagger}\left(\nabla\varphi_q^\dagger(\ve{w}) +
    \ve{\delta}\right)\right)}\nonumber\\
 & \leq & \frac{(p-1)
  \|\ve{w}\|_qW}{2}\cdot\left\| \frac{1}{ \|\nabla\varphi_q^\dagger(\ve{w}) +
    \ve{\delta}\|_p}\cdot \left(\nabla\varphi_q^\dagger(\ve{w}) +
    \ve{\delta}\right) -
    \frac{1}{W}\cdot \nabla\varphi^\dagger_q\left(\ve{w}\right)\right\|_p^2\:\:, \nonumber
\end{eqnarray}
as claimed.
\end{proof}

\subsection{Bound on successive iterate divergence to target}

In what follows, we write the DN-pLMS updates as
$\ve{w}_{t} = \nabla \varphi_p^\dagger( \ve{\theta}_t )$, where
$$ \ve{\theta}_t \defeq \nabla \varphi_q^\dagger( \ve{w}_{t - 1} ) - \Delta_t $$
for
$\Delta_t = \eta_t \cdot ( \ve{w}_{t - 1}^\top \ve{x}_t - y_t ) \cdot \ve{x}_t$.
Further, for notational ease, we write
$$ \bar{\ve{u}} \defeq \frac{\ve{u}}{g_{q}(\ve{u})} $$
and
$$ \bar{\ve{\theta}}_t \defeq \frac{\ve{\theta}_t}{\| \ve{\theta}_t \|_p}. $$

We have the following preliminary bound on the distance from iterates of DN-pLMS to the (normalised) target.

\begin{lemma}
\label{lemm:iterate-bound-1}
Fix any learning rate sequence $\{ \eta_t \}_{t = 1}^T$.
Pick any $\ve{u}$, and consider iterates $\{ \ve{w}_t \}_{t = 0}^{T}$ as per Equation \ref{eqn:plms-dagger-explicit}.
Denote $s_t \defeq (\bar{\ve{u}} - \ve{w}_{t-1})^\top\ve{x}_t, r_t \defeq \bar{\ve{u}}^\top\ve{x}_t - y_t$, and $\upalpha_t \defeq \frac{W}{ \| \ve{\theta}_t \|_p}$.
Suppose $\| \ve{x}_t \|_p \leq X_p$.
Then,
\begin{eqnarray}
D_{\varphi_q}\left(\bar{\ve{u}} \left\| \ve{w}_{t-1}\right.\right) - D_{\varphi_q}\left(\bar{\ve{u}} \left\| \ve{w}_{t}\right.\right) & \geq &  Q + R + S + T\:\:, \nonumber
\end{eqnarray}
with 
\begin{eqnarray}
Q & \defeq & \frac{\upalpha_t}{2}\eta_t(s^2_t - r^2_t), \nonumber \\
R & \defeq & (1-\upalpha_t)\cdot \underbrace{\left(W^2 -  \bar{\ve{u}}^\top \nabla\varphi_q^\dagger(\ve{w}_{t-1})\right)}_{\in [0, 2W^2]} \:\:, \nonumber \\
S & \defeq & \frac{p-1}{2}\cdot\underbrace{\left(2 \upalpha^2_t\eta^2_t (s_t-r_t)^2X_p^2 - \left\| (s_t-r_t)\eta_t \upalpha_t\cdot \ve{x}_t - (1-\upalpha_t) \cdot\nabla\varphi_q^\dagger(\ve{w}_{t-1})\right\|_p^2 \right)}_{\geq -2 (1-\upalpha_t)^2 W^2} \:\:, \nonumber\\
T & \defeq & \frac{\upalpha_t}{2} \eta_t(s_t - r_t)^2 \left(1-2(p-1) \eta_t\upalpha_tX_p^2\right) \:\: \nonumber.
\end{eqnarray}
\end{lemma}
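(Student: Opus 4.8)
The plan is to reduce the whole statement to the single dual-smoothness estimate of \citet[Appendix~I]{kwhTP} (equivalently Lemma~\ref{lemupdate}), after first collapsing the divergence drop to a purely linear term. First I would pass to gradient coordinates. By Lemma~\ref{lemm:nabla-phi-composition} the update $\ve{w}_t = \nabla\varphi_p^\dagger(\ve{\theta}_t)$ gives $\nabla\varphi_q(\ve{w}_t) = (W/\|\ve{\theta}_t\|_p)\cdot\ve{\theta}_t = \upalpha_t\ve{\theta}_t$, and since $\Delta_t = \eta_t(\ve{w}_{t-1}^\top\ve{x}_t - y_t)\ve{x}_t = -\eta_t(s_t - r_t)\ve{x}_t$ I can write, using $\nabla\varphi_q(\ve{w}_{t-1}) = \nabla\varphi_q^\dagger(\ve{w}_{t-1})$ from Lemma~\ref{lemm:gradient-same}, the dual displacement $\ve{m}_t \defeq \nabla\varphi_q(\ve{w}_t) - \nabla\varphi_q(\ve{w}_{t-1}) = -(1-\upalpha_t)\nabla\varphi_q^\dagger(\ve{w}_{t-1}) + \upalpha_t\eta_t(s_t-r_t)\ve{x}_t$. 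Throughout I record the norm-preservation facts $\|\ve{w}_t\|_q = \|\ve{w}_{t-1}\|_q = W$ (Lemma~\ref{lemm:plms-update-norm}) and $\|\nabla\varphi_q(\ve{w}_t)\|_p = \|\nabla\varphi_q^\dagger(\ve{w}_{t-1})\|_p = W$ (Corollary~\ref{corr:phi-gradient-norms}).

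Next I would expand the drop directly from the definition of $D_{\varphi_q}$. Because $\varphi_q(\ve{w}) = \frac{1}{2}(W^2 + \|\ve{w}\|_q^2)$ and $\|\cdot\|_q^2$ is positively homogeneous of degree two, Euler's relation gives $\ve{w}^\top\nabla\varphi_q(\ve{w}) = \|\ve{w}\|_q^2$; hence the two $\varphi_q$ values cancel, both self inner products equal $W^2$, and the drop collapses to the single linear term $D_{\varphi_q}(\bar{\ve{u}}\|\ve{w}_{t-1}) - D_{\varphi_q}(\bar{\ve{u}}\|\ve{w}_t) = \bar{\ve{u}}^\top\ve{m}_t$. (Equivalently, one applies the triangle equality and observes that the emerging $D_{\varphi_q}(\ve{w}_t\|\ve{w}_{t-1})$ term is exactly cancelled by $-\ve{w}_t^\top\ve{m}_t$.)

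It then remains to show $\bar{\ve{u}}^\top\ve{m}_t \geq Q + R + S + T$. Substituting $\ve{m}_t$ and using $\bar{\ve{u}}^\top\ve{x}_t = s_t + \ve{w}_{t-1}^\top\ve{x}_t$ together with $\ve{w}_{t-1}^\top\nabla\varphi_q^\dagger(\ve{w}_{t-1}) = W^2$, the definition of $R$ peels off the $(1-\upalpha_t)$ piece, and the elementary identity $(s_t^2 - r_t^2) + (s_t - r_t)^2 = 2 s_t(s_t - r_t)$ shows that $Q$, $S$ and $T$ repackage the remaining contributions, with $-\frac{p-1}{2}\|\ve{m}_t\|_p^2$ hidden inside $S$ (the $p$-norm appearing in $S$ is exactly $\|\ve{m}_t\|_p$). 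After this bookkeeping the residual between the exact drop and $Q+R+S+T$ reduces to $\frac{p-1}{2}\|\ve{m}_t\|_p^2 - \bigl(W^2 - \ve{w}_{t-1}^\top\nabla\varphi_q(\ve{w}_t)\bigr)$, so the claim is equivalent to $\frac{p-1}{2}\|\ve{m}_t\|_p^2 \geq W^2 - \ve{w}_{t-1}^\top\nabla\varphi_q(\ve{w}_t)$.

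The final step, and the main obstacle, is recognising that the right-hand side is precisely $D_{\varphi_q}(\ve{w}_{t-1}\|\ve{w}_t)$ (again by the homogeneity identity and $\|\ve{w}_{t-1}\|_q = W$), which by Bregman dual symmetry equals $D_{\varphi_p}(\nabla\varphi_q(\ve{w}_t)\|\nabla\varphi_q^\dagger(\ve{w}_{t-1}))$. The needed inequality is therefore nothing but the $(p-1)$-smoothness estimate $D_{\varphi_p}(\ve{a}\|\ve{b}) \leq \frac{p-1}{2}\|\ve{a}-\ve{b}\|_p^2$ for $\varphi_p = \frac{1}{2}\|\cdot\|_p^2$ with $p > 2$, i.e.\ the dual-space bound of \citet[Appendix~I]{kwhTP}; thus the residual is exactly the nonnegative smoothness gap. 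The two auxiliary under-brace bounds are then routine: $R \in [0, 2W^2]$ follows from H\"older with $\|\bar{\ve{u}}\|_q = W$, and the $S$ under-brace from $\|\ve{a}+\ve{b}\|_p^2 \leq 2\|\ve{a}\|_p^2 + 2\|\ve{b}\|_p^2$ combined with $\|\ve{x}_t\|_p \leq X_p$ and $\|\nabla\varphi_q^\dagger(\ve{w}_{t-1})\|_p = W$.
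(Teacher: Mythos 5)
Your proposal is correct: I verified the exact linear collapse of the drop (on the $W$-sphere, $\varphi_q$ is constant and Euler's relation kills both self inner-product terms, so the drop equals $\bar{\ve{u}}^\top\ve{m}_t$ exactly), the algebraic identification of the residual with $\frac{p-1}{2}\|\ve{m}_t\|_p^2 - D_{\varphi_q}(\ve{w}_{t-1}\|\ve{w}_t)$, and the final appeal to dual symmetry plus the $(p-1)$-smoothness bound of \citet[Appendix I]{kwhTP}. The route differs from the paper's in a meaningful way. The paper starts from the Bregman triangle equality, immediately swaps to the $\varphi_q^\dagger$ quantities via Lemmas \ref{lemm:plms-update-norm} and \ref{lemm:gradient-same}, and then bounds the emerging $D_{\varphi_q^\dagger}(\ve{w}_{t-1}\|\ve{w}_t)$ term with Lemma \ref{lemupdate} --- whose own proof is where Theorem \ref{th00} (the scaled Bregman theorem) and dual symmetry are invoked; this ``branching'' through the dagger divergence is deliberate, since it showcases the paper's main theorem and yields a reusable bound valid for iterates of arbitrary norm. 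You instead exploit that both $\ve{w}_{t-1}$ and $\ve{w}_t$ lie on the $W$-sphere to stay entirely at the level of $\varphi_q$: the dagger generator and Theorem \ref{th00} never appear, the drop is an exact identity rather than an inequality until the very last step, and the only inequality used is the kwhTP smoothness gap $D_{\varphi_p}(\ve{a}\|\ve{b}) \leq \frac{p-1}{2}\|\ve{a}-\ve{b}\|_p^2$ applied after a single dual-symmetry swap. What your version buys is transparency (the loss relative to equality is exactly the nonnegative smoothness gap, and the dagger machinery is revealed as unnecessary for this particular lemma); what the paper's version buys is generality of the intermediate Lemma \ref{lemupdate} and a demonstration of its central theorem. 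One shared caveat, not a gap in your argument relative to the paper's: both proofs use $\|\ve{w}_{t-1}\|_q = W$, which Lemma \ref{lemm:plms-update-norm} guarantees only for $t-1 > 0$, so the case $t=1$ (where $\ve{w}_0 = \ve{0}$) is silently excluded in both treatments.
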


\begin{proof}[Proof of Lemma \ref{lemm:iterate-bound-1}]
The Bregman triangle equality (also called the three points property) \citep[Property 5]{bnnBV}, \citep[Lemma 11.1]{Cesa-Bianchi:2006} brings:
\begin{eqnarray}
\lefteqn{D_{\varphi_q}\left(\bar{\ve{u}} \left\| \ve{w}_{t-1}\right.\right) - D_{\varphi_q}\left(\bar{\ve{u}} \left\| \ve{w}_{t}\right.\right)} \nonumber \\
& = & \left( \bar{\ve{u}} - \ve{w}_{t-1}\right)^\top \left(\nabla\varphi_q\left( \ve{w}_{t}\right)-\nabla\varphi_q\left( \ve{w}_{t-1}\right)\right) - D_{\varphi_q}\left(  \ve{w}_{t-1}\left\| \ve{w}_{t}\right.\right) \nonumber\\
 & = & \left( \bar{\ve{u}} - \ve{w}_{t-1}\right)^\top \left( \nabla\varphi_q^{\dagger}\left( \ve{w}_{t}\right)- \nabla\varphi_q^\dagger\left( \ve{w}_{t-1}\right)\right) -  D_{\varphi_q^\dagger}\left( \ve{w}_{t-1}\left\| \ve{w}_{t}\right.\right) \text{ by Lemmas \ref{lemm:plms-update-norm}, \ref{lemm:gradient-same} } \nonumber \:\:.
\end{eqnarray}

We now have
\begin{eqnarray}
\nabla\varphi_q^{\dagger}\left( \ve{w}_{t}\right) = \nabla\varphi_q^{\dagger}\circ {\nabla\varphi_p^\dagger}(\ve{\theta}_t) 
 = W \cdot \bar{\ve{\theta}}_t \:\: \nonumber
\end{eqnarray}
by Corollary \ref{corr:phi-gradient-norms}.
We get
\begin{eqnarray}
\lefteqn{D_{\varphi_q}\left(\bar{\ve{u}} \left\| \ve{w}_{t-1}\right.\right) - D_{\varphi_q}\left(\bar{\ve{u}} \left\| \ve{w}_{t}\right.\right)} \nonumber \\
& \geq & \left( \bar{\ve{u}} - \ve{w}_{t-1}\right)^\top \left( W \cdot \bar{\ve{\theta}}_t -\nabla\varphi_q^\dagger(\ve{w}_{t-1})\right) - \frac{(p-1)
W^2}{2}\cdot\left\| \bar{\ve{\theta}}_t - \frac{1}{W}\cdot \nabla\varphi^\dagger_q\left(\ve{w}_{t-1}\right)\right\|_p^2 \nonumber \\
& = & \left( \bar{\ve{u}} - \ve{w}_{t-1}\right)^\top \left( W \cdot \bar{\ve{\theta}}_t -\nabla\varphi_q^\dagger(\ve{w}_{t-1})\right)\nonumber - \frac{p-1}{2}\cdot\left\| W \cdot \bar{\ve{\theta}}_t - \nabla\varphi^\dagger_q\left(\ve{w}_{t-1}\right)\right\|_p^2.
\end{eqnarray}

Now, note that
$$ \ve{\theta}_t = \nabla\varphi_q^\dagger(\ve{w}_{t-1}) + \eta_t \cdot (s_t - r_t) \cdot \ve{x}_t. $$
We can thus rewrite the above as
\begin{eqnarray}
\lefteqn{D_{\varphi_q}\left(\bar{\ve{u}} \left\| \ve{w}_{t-1}\right.\right) - D_{\varphi_q}\left(\bar{\ve{u}} \left\| \ve{w}_{t}\right.\right)} \nonumber \\
& \geq & s_t(s_t-r_t)\eta_t \upalpha_t +
(1-\upalpha_t)\left(\ve{w}_{t-1}^\top \nabla\varphi_q^\dagger(\ve{w}_{t-1})
-  \bar{\ve{u}}^\top
\nabla\varphi_q^\dagger(\ve{w}_{t-1})\right)\nonumber\\
  & & - \frac{p-1}{2}\cdot\left\| (s_t-r_t)\eta_t \upalpha_t\cdot
    \ve{x}_t - (1-\upalpha_t)
    \cdot\nabla\varphi_q^\dagger(\ve{w}_{t-1})\right\|_p^2 \nonumber
  \\
& = & s_t(s_t-r_t)\eta_t \upalpha_t +
(1-\upalpha_t)\left(W^2
-  \bar{\ve{u}}^\top
\nabla\varphi_q^\dagger(\ve{w}_{t-1})\right)\nonumber\\
  & & - \frac{p-1}{2}\cdot\left\| (s_t-r_t)\eta_t \upalpha_t\cdot
    \ve{x}_t - (1-\upalpha_t)
    \cdot\nabla\varphi_q^\dagger(\ve{w}_{t-1})\right\|_p^2 
  \nonumber \text{ by definition of } \nabla\varphi_q^\dagger \\
 & = &  Q + R + S + T\:\:.\nonumber
\end{eqnarray}
\end{proof}

We can show that the sum $R + S + T \geq 0$.
This proof involves chaining together multiple simple inequalities.
We give a high level overview in Figure \ref{fig:dplms-proof-schematic}.

\begin{figure}[!t]
  \centering
  \resizebox{\linewidth}{!}{
  \includegraphics[scale=0.95]{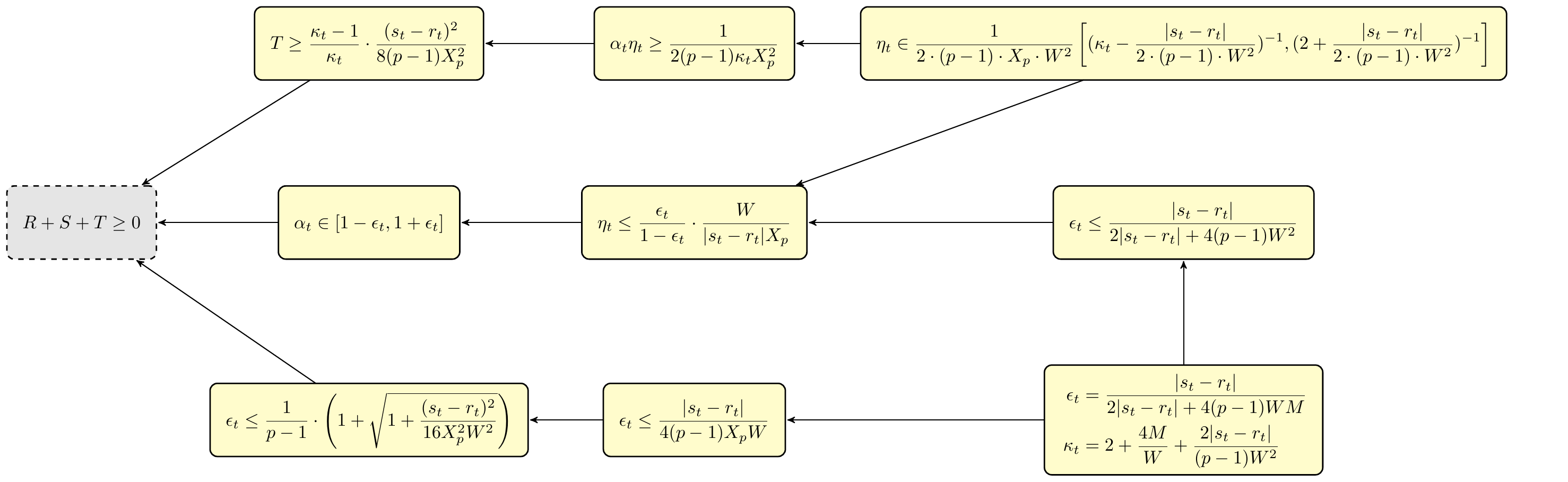}
  }
  \caption{Schematic of proof of Lemma \ref{lemm:rst}. Arrows from equation $A$ to $B$ indicate that $A \implies B$.}
  \label{fig:dplms-proof-schematic}
\end{figure}

\begin{lemma}
\label{lemm:rst}
Let $R, S, T$ be as per Lemma \ref{lemm:iterate-bound-1}.
Suppose we fix
\begin{eqnarray}
\eta_t & = & \upgamma \cdot \frac{W}{4(p-1)MX_pW+|y_t - \ve{w}_{t-1}^\top \ve{x}_t|X_p} \label{eqn:final-eta} \:\:,
\end{eqnarray}
for any $\upgamma \in [1/2, 1]$, and $M \defeq \max\{ W, X_p \}$. 
Then,
$T + R + S \geq 0$.
\end{lemma}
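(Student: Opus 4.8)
The plan is to lower-bound $R$, $S$, and $T$ individually in terms of the scalars $\upalpha_t$, $\eta_t$, $|s_t - r_t|$, $X_p$, $W$, and then show the positive term $T$ absorbs whatever is negative in $R$ and $S$. Everything hinges on a single estimate on the product $\eta_t \upalpha_t X_p^2$, so I would prove first that $\eta_t \upalpha_t X_p^2 \leq 1/(4(p-1))$. Recall $\upalpha_t = W/\| \ve{\theta}_t \|_p$ with $\ve{\theta}_t = \nabla\varphi_q^\dagger(\ve{w}_{t-1}) + \eta_t (s_t - r_t) \ve{x}_t$; since $\| \nabla\varphi_q^\dagger(\ve{w}_{t-1}) \|_p = W$ by Corollary \ref{corr:phi-gradient-norms} and $\| \ve{x}_t \|_p \leq X_p$, the reverse triangle inequality gives $\| \ve{\theta}_t \|_p \geq W - \eta_t |s_t - r_t| X_p$. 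Substituting the explicit $\eta_t$ (using $|s_t - r_t| = |y_t - \ve{w}_{t-1}^\top \ve{x}_t|$) and simplifying with $M = \max\{W, X_p\} \geq X_p$ and $\upgamma \leq 1$ should collapse the product to at most $\upgamma/(4(p-1)) \leq 1/(4(p-1))$.

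Second, I would record two structural bounds. For $S$, applying $\| \ve{a} - \ve{b} \|_p^2 \leq 2\| \ve{a} \|_p^2 + 2\| \ve{b} \|_p^2$ to the norm inside $S$, with $\| \ve{x}_t \|_p \leq X_p$ and $\| \nabla\varphi_q^\dagger(\ve{w}_{t-1}) \|_p = W$, cancels the leading $2\upalpha_t^2\eta_t^2(s_t-r_t)^2 X_p^2$ and leaves $S \geq -(p-1)(1 - \upalpha_t)^2 W^2$, matching the stated underbrace. For $R$, the bracket lies in $[0, 2W^2]$ because $\| \bar{\ve{u}} \|_q = W$ (as $\bar{\ve{u}} = \ve{u}/g_q(\ve{u})$) and $\| \nabla\varphi_q^\dagger(\ve{w}_{t-1}) \|_p = W$, so H\"older bounds $\bar{\ve{u}}^\top \nabla\varphi_q^\dagger(\ve{w}_{t-1}) \in [-W^2, W^2]$. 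I would also carry the gauge estimate $|1 - \upalpha_t| \leq \upalpha_t \eta_t |s_t - r_t| X_p / W$, which is immediate from the same reverse triangle bound.

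Third, combine. From $\eta_t \upalpha_t X_p^2 \leq 1/(4(p-1))$ we get $1 - 2(p-1)\eta_t\upalpha_t X_p^2 \geq 1/2$, hence $T \geq \frac14 \upalpha_t \eta_t (s_t - r_t)^2 \geq 0$. Squaring the gauge estimate gives $(p-1)(1-\upalpha_t)^2 W^2 \leq (p-1)(\eta_t \upalpha_t X_p^2)\cdot \upalpha_t \eta_t (s_t-r_t)^2 \leq \frac14 \upalpha_t \eta_t (s_t-r_t)^2 \leq T$, so $S + T \geq 0$. When $\upalpha_t \leq 1$ the bracket in $R$ is multiplied by a nonnegative factor, so $R \geq 0$ and the claim follows at once.

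The hard part will be the branch $\upalpha_t > 1$, where $1 - \upalpha_t < 0$ makes $R$ negative, as small as $2(1 - \upalpha_t) W^2$. This term is only \emph{linear} in the mistake $|s_t - r_t|$ (through $|1 - \upalpha_t|$), whereas $T$ is quadratic, so the crude, term-by-term bounds above are too lossy to close the gap for small mistakes. I expect one must avoid discarding the exact norm term $\| (s_t - r_t)\eta_t\upalpha_t \ve{x}_t - (1 - \upalpha_t)\nabla\varphi_q^\dagger(\ve{w}_{t-1}) \|_p$ inside $S$ --- which is smallest precisely when $R$ is most negative --- and play it directly against $R$ before invoking the learning-rate bound, rather than bounding $R$, $S$, $T$ in isolation. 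Organising this interlocking of the three quantities is, I suspect, exactly what the chain of implications in Figure \ref{fig:dplms-proof-schematic} is set up to do, and it is the step I would budget the most care for.
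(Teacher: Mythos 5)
The portion of your argument that you complete is correct, and each step checks out: with the prescribed $\eta_t$ one indeed gets $\upalpha_t\eta_t X_p^2\le \upgamma/(4(p-1))\le 1/(4(p-1))$, hence $T\ge \frac{1}{4}\upalpha_t\eta_t(s_t-r_t)^2$; combined with $|1-\upalpha_t|\le \upalpha_t\eta_t|s_t-r_t|X_p/W$ and $S\ge -(p-1)(1-\upalpha_t)^2W^2$ this gives $S+T\ge 0$, and since the bracket in $R$ lies in $[0,2W^2]$, the claim follows whenever $\upalpha_t\le 1$. The case $\upalpha_t>1$ that you leave open is therefore a genuine gap in your write-up --- but your diagnosis of it is right in a stronger sense than you suggest: it cannot be closed, because the claim is false in that regime. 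Take any admissible $(p,q)$, $\ve{w}_{t-1}=W\ve{e}_1$ (a legal iterate: $\|\ve{w}_{t-1}\|_q=W$, $\nabla\varphi_q^\dagger(\ve{w}_{t-1})=W\ve{e}_1$), $\ve{x}_t=-c\,\ve{e}_1$ with $0<c\le X_p$, $y_t=0$, and $\ve{u}=-\ve{e}_1$, so that $\bar{\ve{u}}=-W\ve{e}_1$ and the bracket in $R$ equals $2W^2$. Then $s_t-r_t=Wc$ and $\ve{\theta}_t=W(1-\eta_tc^2)\ve{e}_1$ (note $\eta_tc^2<1$), so $\upalpha_t=1/(1-\eta_tc^2)>1$ and $1-\upalpha_t=-\upalpha_t\eta_tc^2$; consequently the vector inside the norm of $S$ vanishes identically, the $(p-1)$-terms of $S$ and of $T$ cancel exactly, and
\[
T+R+S \;=\; \frac{1}{2}\,\upalpha_t\eta_tW^2c^2 \;-\; 2\,\upalpha_t\eta_tW^2c^2 \;=\; -\frac{3}{2}\,\upalpha_t\eta_tW^2c^2 \;<\; 0
\]
for every $\upgamma\in[1/2,1]$. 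Geometrically, in this configuration the gauge renormalisation exactly undoes the gradient step, so $\ve{w}_t=\ve{w}_{t-1}$ and the potential $D_{\varphi_q}(\bar{\ve{u}}\|\cdot)$ cannot decrease, while $Q=\frac{3}{2}\upalpha_t\eta_tW^2c^2>0$; since $Q+R+S+T$ lower-bounds that decrease, $R+S+T=-Q<0$ is forced. Note also that your hoped-for repair --- playing the exact norm term of $S$ against $R$ --- cannot work, since here that norm term is zero precisely when $R$ is maximally negative.

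This is also exactly where the paper's own proof breaks, so you should not measure your attempt against it as if it were sound. The paper does not split on the sign of $1-\upalpha_t$; it constrains $\eta_t$ so that $\upalpha_t\in[1-\epsilon_t,1+\epsilon_t]$, bounds $R\ge -2\epsilon_tW^2$, $S\ge -(p-1)\epsilon_t^2W^2$ and $T\ge (s_t-r_t)^2/(16(p-1)X_p^2)$, and then must pick $\epsilon_t$ with $(s_t-r_t)^2/(16(p-1)X_p^2)\ge 2\epsilon_tW^2+(p-1)\epsilon_t^2W^2$. Solving this quadratic correctly forces $\epsilon_t\le \frac{1}{p-1}\bigl(\sqrt{1+(s_t-r_t)^2/(16X_p^2W^2)}-1\bigr)$, which is of order $(s_t-r_t)^2$ for small mistakes; the paper writes this root with $+1$ in place of $-1$ and then adopts $\epsilon_t=|s_t-r_t|/(2|s_t-r_t|+4(p-1)WM)$, which is \emph{linear} in $|s_t-r_t|$ and violates the requirement --- precisely the linear-versus-quadratic mismatch you identified. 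So the missing case is not a defect of your approach but of the statement itself: as written, the lemma fails (and with it Lemma \ref{lemm:iterate-divergence-bound} and the regret bound of Lemma \ref{lemlplq}, since repeating the round above leaves $\ve{w}_t$ frozen while $\sum_t(s_t^2-r_t^2)$ grows linearly in $T$), and any repair must add a hypothesis excluding such configurations, or weaken the conclusion, rather than find a cleverer chaining of $R$, $S$, $T$.
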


\begin{proof}
The triangle inequality and the fact that
$\|\nabla\varphi_q^\dagger(\ve{w}_{t-1})\|_p = W$ brings
\begin{eqnarray}
\upalpha_t & \in  & \left[ \frac{W}{\|\nabla\varphi_q^\dagger(\ve{w}_{t-1})\|_p +
    \eta_t |s_t-r_t|\cdot \|\ve{x}_t\|_p},  \frac{W}{\|\nabla\varphi_q^\dagger(\ve{w}_{t-1})\|_p -
    \eta_t |s_t-r_t|\cdot \|\ve{x}_t\|_p}\right]\nonumber\\
 & & \subseteq \left[ \frac{W}{W +
    \eta_t |s_t-r_t|X_p},  \frac{W}{W -
    \eta_t |s_t-r_t|\cdot X_p}\right],\label{intalpha}
\end{eqnarray}
\textit{assuming} that $\eta_t$ is chosen so that
\begin{eqnarray}
\eta_t & \leq & \frac{W}{|s_t-r_t|\cdot X_p}\:\:,\label{consteta0}
\end{eqnarray}
so that the right bound is non negative. To indeed ensure this, suppose that for some $0<\epsilon_t\leq 1/2$, we fix 
\begin{eqnarray}
\eta_t & \leq & \frac{\epsilon_t}{1-\epsilon_t}\cdot \frac{W}{|s_t - r_t|X_p}\:\:.\label{alphaconst1}
\end{eqnarray}
We would in addition obtain from Equation (\ref{intalpha}) that $\upalpha_t \in
[1-\epsilon_t, 1+\epsilon_t]$. Suppose $\eta_t$ is also fixed to ensure 
\begin{eqnarray}
\eta_t & \in & \left[\frac{W}{2(p-1)\upkappa_tX_pW^2 - |s_t-r_t|X_p}, \frac{W}{4(p-1)X_pW^2 + |s_t-r_t|X_p}\right]\:\:, \label{etaconst1}
\end{eqnarray}
for some $\upkappa_t $ such that
\begin{eqnarray}
\upkappa_t & \geq & 2 + \frac{|s_t-r_t|}{(p-1)W^2}\:\:.
\end{eqnarray}
Notice that constraint on $\upkappa_t$ makes the interval non empty
and its left bound strictly positive. 
Assuming (\ref{etaconst1}) holds, we would have 
\begin{eqnarray}
\upalpha_t \eta_t & \in & \left[\frac{1}{2(p-1)\upkappa_t X_p^2}, \frac{1}{4(p-1) X_p^2} \right]\:\:.\label{intetalpha}
\end{eqnarray}
The left bound of (\ref{intetalpha}) holds because 
\begin{eqnarray}
\upalpha_t \eta_t & \geq & \eta_t \cdot \frac{W}{W + \eta_t |s_t-r_t|X_p}\nonumber\\
& \geq & \frac{1}{2(p-1)\upkappa_t X_p^2}\:\:.\label{binfea}
\end{eqnarray}
The first inequality holds because of (\ref{intalpha}) and the second one holds because of
(\ref{etaconst1}). The right bound of (\ref{intetalpha}) holds because of (\ref{intalpha}), and so
\begin{eqnarray}
\upalpha_t \eta_t  & \leq & \eta_t \cdot \frac{W}{W - \eta_t
  |s_t-r_t|X_p}\nonumber\\ 
 & \leq & \frac{1}{4(p-1) X_p^2}\:\:,
\end{eqnarray}
where the last inequality is due to
(\ref{etaconst1}). 

Equation (\ref{intetalpha}) makes that
$T(\eta_t\upalpha_t)$ is at least its value when $\upalpha_t \eta_t $ attains the lower bound of (\ref{binfea}), that is,
\begin{eqnarray}
T(\eta_t\upalpha_t) & \geq & \frac{\upkappa_t - 1}{\upkappa_t}\cdot \frac{(s_t - r_t)^2}{8(p-1)X_p^2}\:\:.\label{condT}
\end{eqnarray}
Now, to guarantee $\upalpha_t \in
[1-\epsilon_t, 1+\epsilon_t]$, it is sufficient that the right-hand side
of inequality (\ref{alphaconst1}) belongs to interval (\ref{etaconst1})
\textit{and} we pick $\eta_t$ within the interval [left bound
(\ref{etaconst1}), right-hand side (\ref{alphaconst1})]. To guarantee
that the right-hand side
of inequality (\ref{alphaconst1}) falls in interval (\ref{etaconst1}), we
need first, 
\begin{eqnarray}
\frac{W}{2(p-1)\upkappa_tX_pW^2 -
    |s_t-r_t|X_p} & \leq & \frac{\epsilon_t}{1-\epsilon_t}\cdot \frac{W}{|s_t - r_t|X_p}\:\:,
\end{eqnarray}
that is,
\begin{eqnarray}
\upkappa_t & \geq & \frac{1}{\epsilon_t}\cdot \frac{|s_t - r_t|}{2(p-1)W^2}\:\:.\label{constkappa}
\end{eqnarray}
To guarantee
that the right-hand side of inequality (\ref{alphaconst1}) falls in interval (\ref{etaconst1}) we need then
\begin{eqnarray}
\frac{W}{4(p-1)X_pW^2 + |s_t-r_t|X_p} & \geq & \frac{\epsilon_t}{1-\epsilon_t}\cdot \frac{W}{|s_t - r_t|X_p}\:\:,
\end{eqnarray}
that is,
\begin{eqnarray}
\epsilon_t & \leq & \frac{|s_t - r_t|}{2|s_t - r_t|+4(p-1)W^2} \:\:.\label{constepsilon}
\end{eqnarray}
To summarize, if we pick any strictly positive $\epsilon_t$ following
inequality (\ref{constepsilon}) (note $\epsilon_t < 1$) and 
\begin{eqnarray}
\upkappa_t & \defeq & 2 + \frac{1}{\epsilon_t}\cdot\frac{|s_t-r_t|}{(p-1)W^2}\:\:,\label{defkappa}
\end{eqnarray}
then we shall have both $\upalpha_t \in
[1-\epsilon_t, 1+\epsilon_t]$ and inequality (\ref{condT}) holds as well. In
this case, we shall have
\begin{eqnarray}
T + R + S & \geq & \left(1-\frac{1}{2 + \frac{1}{\epsilon_t}\cdot\frac{|s_t-r_t|}{(p-1)W^2}}\right)\cdot \frac{(s_t -
r_t)^2}{8(p-1)X_p^2} - 2\epsilon_t W^2 -
(p-1)\epsilon_t^2W^2\nonumber\\
 & \geq & \left(1-\frac{1}{2}\right)\cdot \frac{(s_t -
r_t)^2}{8(p-1)X_p^2} - 2\epsilon_t W^2 -
(p-1)\epsilon_t^2W^2\nonumber\\
 &  & = \frac{(s_t -
r_t)^2}{16(p-1)X_p^2} - 2\epsilon_t W^2 -
(p-1)\epsilon_t^2W^2\:\:.\label{constepsilon2}
\end{eqnarray}
To finish up, we want to solve for $\epsilon_t$ the right-hand side such
that it is non negative, and we find that $\epsilon_t$ has to satisfy
\begin{eqnarray}
\epsilon_t & \leq & \frac{1}{p-1}\cdot \left(1 + \sqrt{1+\frac{(s_t-r_t)^2}{16X_p^2W^2}}\right) \:\:.
\end{eqnarray}
Since $\sqrt{1+x}\geq \sqrt{x}$, a sufficient condition is
\begin{eqnarray}
\epsilon_t & \leq & \frac{|s_t-r_t|}{4(p-1)X_pW}\:\:.
\end{eqnarray}
To ensure this and inequality (\ref{constepsilon}), it is sufficient that
we fix
\begin{eqnarray}
\epsilon_t & \defeq & \frac{|s_t - r_t|}{2|s_t - r_t|+4(p-1)WM} \:\:,\label{valepsilon2}
\end{eqnarray}
where $M\defeq \max\{W, X_p\}$. With this expression for $\epsilon_t$,
we get from (\ref{defkappa}),
\begin{eqnarray}
\upkappa_t & \defeq & 2 + \frac{4M}{W} + \frac{2|s_t -r_t|}{(p-1)W^2} \:\:.\label{defkappa2}
\end{eqnarray}
For these choices, Lemma \ref{lemm:eta-allowed-range} implies that the given $\eta_t$ is feasible.
\end{proof}

\begin{lemma}
\label{lemm:eta-allowed-range}
Suppose $\epsilon_t$ satisfies (\ref{valepsilon2})
and $\upkappa_t$ satisfies (\ref{defkappa2}).
Then, a sufficient condition for $\eta_t$ to satisfy both (\ref{alphaconst1}) and (\ref{etaconst1}) is
\begin{eqnarray}
\eta_t & = & \upgamma \cdot \frac{W}{4(p-1)MX_pW+|y_t - \ve{w}_{t-1}^\top \ve{x}_t|X_p}\:\:, \nonumber
\end{eqnarray}
for any $\upgamma \in [1/2, 1]$. 
\end{lemma}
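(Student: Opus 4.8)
The plan is purely computational: I would substitute the explicit values of $\epsilon_t$ from (\ref{valepsilon2}) and $\upkappa_t$ from (\ref{defkappa2}) into the two endpoints that bound $\eta_t$ in the proof of Lemma \ref{lemm:rst}, and then check that the proposed $\eta_t$ lands in the required range for every $\upgamma \in [1/2,1]$. The first fact I would record is the elementary identity $s_t - r_t = y_t - \ve{w}_{t-1}^\top\ve{x}_t$ (immediate from the definitions of $s_t, r_t$ in Lemma \ref{lemm:iterate-bound-1}), so that $|s_t - r_t|$ is exactly the quantity $|y_t - \ve{w}_{t-1}^\top\ve{x}_t|$ appearing in the denominator of the claimed $\eta_t$; I write $A$ for this common value to streamline the algebra.

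Next I would simplify the right-hand side of (\ref{alphaconst1}). Plugging in $\epsilon_t = A/(2A + 4(p-1)WM)$ gives $\epsilon_t/(1-\epsilon_t) = A/(A + 4(p-1)WM)$, so that this right-hand side collapses to $W/\big(4(p-1)MX_pW + A X_p\big)$ --- which is precisely the claimed $\eta_t$ at $\upgamma = 1$. Consequently every $\upgamma \in [1/2,1]$ makes $\eta_t$ no larger than the right-hand side of (\ref{alphaconst1}), so (\ref{alphaconst1}) holds automatically. In parallel I would simplify the left endpoint of the interval (\ref{etaconst1}): inserting $\upkappa_t = 2 + 4M/W + 2A/((p-1)W^2)$ makes its denominator $2(p-1)\upkappa_t X_pW^2 - A X_p$ equal to $4(p-1)X_pW^2 + 8(p-1)X_pWM + 3AX_p$.

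It then remains to establish two bounds. For the lower one, I would verify that the claimed $\eta_t$ exceeds the left endpoint of (\ref{etaconst1}) even in the worst case $\upgamma = 1/2$; cross-multiplying (all terms positive) and cancelling the matched $8(p-1)X_pWM$ contributions reduces the claim to $2(p-1)X_pW^2 + \frac{1}{2} A X_p \geq 0$, which is trivially true. For the upper one, since $M = \max\{W,X_p\} \geq W$ the denominator of the claimed $\eta_t$ dominates $4(p-1)X_pW^2 + A X_p$, so $\eta_t$ is at most the upper endpoint $W/\big(4(p-1)X_pW^2 + A X_p\big)$ of (\ref{etaconst1}) (equivalently, this is exactly what the constraint (\ref{constepsilon}) on $\epsilon_t$ encodes). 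Chaining the three inequalities shows that for all $\upgamma \in [1/2,1]$ the claimed $\eta_t$ lies between the left endpoint of (\ref{etaconst1}) and the right-hand side of (\ref{alphaconst1}), the latter sitting inside (\ref{etaconst1}); this simultaneously yields (\ref{alphaconst1}) and (\ref{etaconst1}).

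There is no conceptual obstacle --- the lemma is a verification --- so the only real work is the bookkeeping across the several nested constraints. The one point demanding care is recognising why the constant $4(p-1)MX_pW$ uses $M$ rather than $W$: the factor $M \geq W$ is exactly what simultaneously pushes $\eta_t$ below the upper endpoint of (\ref{etaconst1}) while leaving the $\frac{1}{2}$-margin above its lower endpoint, so that the whole range $\upgamma \in [1/2,1]$ is admissible. I would also note in passing that the lower bound imposed on $\upkappa_t$ keeps the interval (\ref{etaconst1}) non-empty with strictly positive left endpoint, which is what makes the above manipulations meaningful.
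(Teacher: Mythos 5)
Your proposal is correct and follows essentially the same route as the paper: substitute the choices (\ref{valepsilon2}) and (\ref{defkappa2}) into the endpoints, observe that $|s_t-r_t| = |y_t - \ve{w}_{t-1}^\top\ve{x}_t|$, note that the claimed $\eta_t$ at $\upgamma=1$ equals the right-hand side of (\ref{alphaconst1}), and check that the $\upgamma=1/2$ value still clears the (simplified) left endpoint of (\ref{etaconst1}), with $M \geq W$ handling the right endpoint. The paper organizes this as an explicit interval-containment computation (showing the admissible interval contains $[\tfrac{1}{2}R, R]$) rather than three separate inequality checks, but the algebra is identical.
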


\begin{proof}[Proof of Lemma \ref{lemm:eta-allowed-range}]
Notice the range of values authorized for $\eta_t$:
\begin{eqnarray}
\eta_t & \in & \left[\frac{W}{2(p-1)\upkappa_tX_pW^2 - |s_t-r_t|X_p}, \frac{\epsilon_t}{1-\epsilon_t}\cdot \frac{W}{|s_t - r_t|X_p}\right]\nonumber\\
 &  & = \left[\frac{W}{2(p-1)\left( 2 + \frac{4M}{W} + \frac{2|s_t - r_t|}{(p-1)W^2}\right)X_pW^2 - |s_t-r_t|X_p}, \frac{W}{4(p-1)MX_pW+|s_t - r_t|X_p}\right]\nonumber\\
 &  & = \left[\frac{W}{2(2(p-1)W^2+4M(p-1)W+2|s_t-r_t|)X_p - |s_t-r_t|X_p}, \frac{W}{4(p-1)MX_pW+|s_t - r_t|X_p}\right]\nonumber\\
 &  & = \left[\frac{W}{4(p-1)X_pW^2+8(p-1)MX_pW+3|s_t-r_t|X_p}, \frac{W}{4(p-1)MX_pW+|s_t - r_t|X_p}\right]\nonumber\\
 & \supset & \left[\frac{W}{8(p-1)MX_pW+2|s_t-r_t|X_p}, \frac{W}{4(p-1)MX_pW+|s_t - r_t|X_p}\right]\:\:. \label{etaconst2}
\end{eqnarray}
A sufficient condition for $\eta_t$ to fall in interval
(\ref{etaconst2}) is
\begin{eqnarray}
\eta_t & = & \upgamma \cdot \frac{W}{4(p-1)MX_pW+|y_t - \ve{w}_{t-1}^\top \ve{x}_t|X_p}\:\:\nonumber,
\end{eqnarray}
for any $\upgamma \in [1/2, 1]$. 
\end{proof}

\begin{lemma}
\label{lemm:iterate-divergence-bound}
Suppose we fix the learning rate as per (\ref{eqn:final-eta}).
Pick any $\ve{u}$, and consider iterates $\{ \ve{w}_t \}_{t = 0}^{T}$ as per Equation \ref{eqn:plms-dagger-explicit}.
Suppose $\|\ve{x}_t\|_p \leq X_p$ and $|y_t| \leq Y, \forall t \leq T$.
Then, for any $t$,
$$ D_{\varphi_q}\left(\bar{\ve{u}} \left\| \ve{w}_{t-1}\right.\right) - D_{\varphi_q}\left(\bar{\ve{u}} \left\| \ve{w}_{t}\right.\right) \geq \frac{1}{4(p-1)\left(2 + \frac{4M}{W} + \frac{2(Y + X_pW)}{(p-1)W^2} \right) X_p^2} \cdot (s^2_t
- r^2_t) $$
where $s_t \defeq (\bar{\ve{u}} - \ve{w}_{t-1})^\top\ve{x}_t$, $r_t \defeq \bar{\ve{u}}^\top\ve{x}_t - y_t$.
\end{lemma}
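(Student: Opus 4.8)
The plan is to combine the two immediately preceding lemmas, which already do the heavy lifting, and then turn the resulting per-round guarantee into the stated closed form by controlling the effective step size $\upalpha_t\eta_t$. First I would invoke Lemma \ref{lemm:iterate-bound-1} (legitimate since $\|\ve{x}_t\|_p \leq X_p$), which gives
\[
D_{\varphi_q}(\bar{\ve{u}}\|\ve{w}_{t-1}) - D_{\varphi_q}(\bar{\ve{u}}\|\ve{w}_{t}) \geq Q + R + S + T,
\]
with $Q = \frac{\upalpha_t}{2}\eta_t(s_t^2 - r_t^2)$ and $R,S,T$ as defined there. Because the prescribed learning rate is exactly the one of (\ref{eqn:final-eta}), Lemma \ref{lemm:rst} applies verbatim and yields $R + S + T \geq 0$, so the per-round drop in divergence is at least $Q = \frac{\upalpha_t\eta_t}{2}(s_t^2 - r_t^2)$. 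It then remains only to replace the $t$-dependent coefficient $\frac{\upalpha_t\eta_t}{2}$ by the uniform constant appearing on the right-hand side of the claim.

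Second, I would lower bound $\upalpha_t\eta_t$ uniformly in $t$. The interval (\ref{intetalpha}) established inside the proof of Lemma \ref{lemm:rst} gives $\upalpha_t\eta_t \geq \frac{1}{2(p-1)\upkappa_t X_p^2}$, where $\upkappa_t$ is the quantity fixed in (\ref{defkappa2}), namely $\upkappa_t = 2 + \frac{4M}{W} + \frac{2|s_t - r_t|}{(p-1)W^2}$. The only $t$-dependence left is through $|s_t - r_t|$, and here I would use the identity $s_t - r_t = y_t - \ve{w}_{t-1}^\top\ve{x}_t$ together with H\"older's inequality and the norm identity $\|\ve{w}_{t-1}\|_q = W$ from Lemma \ref{lemm:plms-update-norm} to obtain $|s_t - r_t| \leq |y_t| + \|\ve{w}_{t-1}\|_q\|\ve{x}_t\|_p \leq Y + X_pW$. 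Substituting this worst case for $|s_t-r_t|$ makes $\upkappa_t$ no larger than $2 + \frac{4M}{W} + \frac{2(Y + X_pW)}{(p-1)W^2}$, so that $\frac{\upalpha_t\eta_t}{2} \geq \frac{1}{4(p-1)\left(2 + \frac{4M}{W} + \frac{2(Y + X_pW)}{(p-1)W^2}\right)X_p^2}$, which is precisely the constant multiplying $(s_t^2 - r_t^2)$ in the statement.

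Finally, multiplying the bound $\frac{\upalpha_t\eta_t}{2} \geq C$ (writing $C$ for that constant) through by $(s_t^2 - r_t^2)$ delivers the claimed inequality. The step deserving the most care is the direction of this last multiplication: the clean substitution $Q \geq C(s_t^2 - r_t^2)$ is immediate on rounds with $s_t^2 \geq r_t^2$, whereas on rounds with $s_t^2 < r_t^2$ one must check that the remaining slack $R + S + T$ (rather than merely its weakened estimate $R + S + T \geq 0$) absorbs the gap between the coefficient $\frac{\upalpha_t\eta_t}{2}$ and $C$; the sharper lower bounds on $T$ and on $R+S$ already derived in the course of proving Lemma \ref{lemm:rst} are what make this case go through. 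Everything else is routine bookkeeping, so I expect the genuine obstacle to be this uniform-coefficient argument, together with keeping the sign of $s_t^2 - r_t^2$ straight, rather than any new estimate.
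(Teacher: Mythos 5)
Your proposal follows the paper's proof essentially verbatim: the paper likewise starts from Lemma \ref{lemm:iterate-bound-1}, discards $R+S+T\geq 0$ by Lemma \ref{lemm:rst}, keeps $Q=\frac{\upalpha_t\eta_t}{2}(s_t^2-r_t^2)$, lower-bounds $\upalpha_t\eta_t \geq \frac{1}{2(p-1)\upkappa_t X_p^2}$ via (\ref{intetalpha}), and then makes $\upkappa_t$ from (\ref{defkappa2}) uniform in $t$ through $|s_t-r_t|=|y_t-\ve{w}_{t-1}^\top\ve{x}_t|\leq Y+X_pW$, closing with the feasibility check of Lemma \ref{lemm:eta-allowed-range}.

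The only place you depart from the paper is the sign discussion in your last paragraph, and there you are being more careful than the paper, not less. The paper substitutes the successively smaller coefficients $\frac{\upalpha_t\eta_t}{2}\geq\frac{1}{4(p-1)\upkappa_tX_p^2}\geq C$ directly into $Q$ with no comment on the sign of $s_t^2-r_t^2$; as you observe, that substitution is only licit on rounds where $s_t^2\geq r_t^2$, since replacing a coefficient by a smaller one reverses the inequality against a negative factor. So the case split you call ``the genuine obstacle'' is simply absent from the paper's proof. Be aware, however, that your sketched resolution is itself unverified, and it is not obviously salvageable from what Lemma \ref{lemm:rst} provides: that lemma only guarantees non-negativity of $R+S+T$ (its quantitative intermediate bound (\ref{constepsilon2}) is deliberately driven to zero by the choice of $\epsilon_t$), whereas the gap you would need it to absorb on a round with $s_t^2<r_t^2$ is $\left(\frac{\upalpha_t\eta_t}{2}-C\right)(r_t^2-s_t^2)$, of order $|s_t-r_t|\,|s_t+r_t|$; nothing in the paper quantifies slack of that size. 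If you want the lemma exactly as stated, this case still requires a genuine argument; the alternative is to state the per-round bound with the round-dependent coefficient $\frac{\upalpha_t\eta_t}{2}$ and propagate the consequences into Lemma \ref{lemlplq}.
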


\begin{proof}[Proof of Lemma \ref{lemm:iterate-divergence-bound}]
We start from the bound of Lemma \ref{lemm:iterate-bound-1}:
\begin{eqnarray}
 D_{\varphi_q}\left(\bar{\ve{u}} \left\| \ve{w}_{t-1}\right.\right) - D_{\varphi_q}\left(\bar{\ve{u}} \left\| \ve{w}_{t}\right.\right) & \geq & Q + R + S + T \nonumber \\
 & \geq & Q \text{ by Lemma \ref{lemm:rst} } \nonumber  \\
 & = & \frac{\upalpha_t}{2}\eta_t(s^2_t - r^2_t) \text{ by definition } \nonumber\\
 & \geq & \frac{1}{4(p-1)\upkappa_t X_p^2} \cdot (s^2_t - r^2_t) \nonumber\\
 & \geq & \frac{1}{4(p-1)\left(2 + \frac{4M}{W} + \frac{2\max_t |y_t - \ve{w}_{t-1}^\top \ve{x}_t|}{(p-1)W^2} \right) X_p^2} \cdot (s^2_t - r^2_t) \nonumber\\
 & \geq & \frac{1}{4(p-1)\left(2 + \frac{4M}{W} + \frac{2(Y + X_pW)}{(p-1)W^2} \right) X_p^2} \cdot (s^2_t - r^2_t)\:\:.
\end{eqnarray}

The last constraint to check for this bound to be valid is our $\epsilon_t$ in
(\ref{valepsilon2}) has
to be $<1/2$ from inequality (\ref{consteta0}), which trivially holds since
$4(p-1)WM\geq 0$.

We conclude by noting Lemma \ref{lemm:eta-allowed-range} provides a feasible value of $\eta_t$.
\end{proof}

\subsection{Gauge normalisation}

The following lemma about the gauge of $\ve{x}$ will be useful.

\begin{lemma}
\label{lemm:gq-constant}
Let $g_q( \ve{x} ) = \| \ve{x} \|_q/W$ for some $W > 0$.
Then, for the iterates $\{ \ve{w}_t \}$ as per Equation \ref{eqn:plms-explicit}, $g_{q}(\ve{w}_t) = 1$.
\end{lemma}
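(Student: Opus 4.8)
The plan is to recognize that this is an immediate corollary of the norm-preservation property that has already been established for the DN-$p$-LMS iterates. By the very definition $g_q(\ve{x}) = \|\ve{x}\|_q/W$, we have $g_q(\ve{w}_t) = \|\ve{w}_t\|_q / W$, so it suffices to show that $\|\ve{w}_t\|_q = W$ for every iterate, whereupon $g_q(\ve{w}_t) = W/W = 1$ follows directly.

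First I would recall that the iterates in question are generated by the dual-norm update $\ve{w}_t = \nabla\varphi_p^\dagger(\nabla\varphi_q^\dagger(\ve{w}_{t-1}) - \eta_t \cdot \nabla\ell_t)$, so that each $\ve{w}_t$ is the image of some vector $\ve{\theta}_t = \nabla\varphi_q^\dagger(\ve{w}_{t-1}) - \eta_t \cdot \nabla\ell_t$ under the map $\nabla\varphi_p^\dagger$. Lemma \ref{lemm:plms-update-norm} states precisely that $\|\ve{w}_t\|_q = W$ for all $t > 0$, so substituting this into the definition of $g_q$ closes the argument in one line.

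If instead one prefers not to invoke Lemma \ref{lemm:plms-update-norm} as a black box, the identical conclusion is obtainable from Corollary \ref{corr:phi-gradient-norms}: the relation $\|\nabla\varphi_q^\dagger(\ve{w})\|_p = W$ holds for every $\ve{w}$, and interchanging the roles of $p$ and $q$ (which is legitimate since $1/p + 1/q = 1$ is symmetric) yields $\|\nabla\varphi_p^\dagger(\ve{\theta})\|_q = W$ for every $\ve{\theta}$. Applying this with $\ve{\theta} = \ve{\theta}_t$ gives $\|\ve{w}_t\|_q = \|\nabla\varphi_p^\dagger(\ve{\theta}_t)\|_q = W$, and hence $g_q(\ve{w}_t) = 1$ as before. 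Both routes ultimately trace back to Lemma \ref{lemm:varphi-properties}, which exhibits $\nabla\varphi_p^\dagger$ as a map whose output has fixed $q$-norm equal to $W$ by construction.

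There is essentially no technical obstacle in this proof; the only point demanding any care is correctly reading which iterates are intended. The claim of constant gauge can only refer to the DN-$p$-LMS iterates of Equation \ref{eqn:plms-dagger-explicit} --- for which the $q$-norm is pinned to $W$ precisely because $\nabla\varphi_p^\dagger$ lands on the $W$-sphere --- and \emph{not} to the unconstrained $p$-LMS iterates, whose norm is deliberately left uncontrolled (this being the very motivation for introducing the dual-norm variant). Once this is understood, the lemma is a one-line consequence of the results proved above it.
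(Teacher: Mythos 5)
Your proof is correct and takes essentially the same route as the paper: both unfold the definition $g_q(\ve{w}_t) = \|\ve{w}_t\|_q/W$ and invoke Lemma \ref{lemm:plms-update-norm} to get $\|\ve{w}_t\|_q = W$, hence $g_q(\ve{w}_t) = 1$. Your remark that the iterates must be those of the dual-norm update in Equation \ref{eqn:plms-dagger-explicit} (despite the statement's citation of Equation \ref{eqn:plms-explicit}) is also the correct reading, and is implicit in the paper's own proof since Lemma \ref{lemm:plms-update-norm} holds only for those iterates.
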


\begin{proof}
We have
\begin{eqnarray}
g_{q}(\ve{w}_t) & = & \frac{\|\ve{w}_t\|_q}{W}\nonumber \\
 & = & \frac{W}{W} \text{ by Lemma \ref{lemm:plms-update-norm} } \nonumber \\
 & = & 1\:\:, \forall t\geq 1\:\:.
\end{eqnarray}
\end{proof}

\section{Proofs of results in main body}
\label{app:proofs}

We present proofs of all results in the main body.

\begin{proof}[Proof of Theorem \ref{th00}]
Let $\matrice{J} : \XCal\rightarrow \XCal_g$ denote the Jacobian of $h \colon \ve{x} \mapsto (1/g(\ve{x}))\cdot \ve{x}$.
By an elementary calculation,
$$ g(\ve{x}) \cdot \matrice{J} = \matrice{I}_d - (1/g(\ve{x}))\cdot \ve{x}  \nabla g(\ve{x})^\top, $$
which by the chain rule brings the following expression for the gradient of $\varphi^\dagger( \ve{y} ) = g(\ve{y}) \cdot (\varphi \circ h)( \ve{y} )$:
\begin{align}
  \nonumber \nabla \varphi^\dagger \left(\ve{y}\right) &= \nabla g(\ve{y}) \cdot (\varphi \circ h)( \ve{y} ) + g(\ve{y}) \cdot \nabla (\varphi \circ h)( \ve{y} ) \\
  \nonumber &= \nabla g(\ve{y}) \cdot (\varphi \circ h)( \ve{y} ) + g(\ve{y}) \cdot \matrice{J}^\top \nabla \varphi( h( \ve{y} ) ) \\
  \nonumber &= \nabla g(\ve{y}) \cdot (\varphi \circ h)( \ve{y} ) + \nabla \varphi( h( \ve{y} ) ) - (1/g(\ve{y}))\cdot \nabla g(\ve{y}) \ve{y}^\top \nabla \varphi( h( \ve{y} ) ) \\
  &= \nabla\varphi\left( \frac{1}{g(\ve{y})}\cdot \ve{y}\right) + \left(\varphi\left(\frac{1}{g(\ve{y})}\cdot \ve{y}\right) - \frac{1}{g(\ve{y})}\cdot \ve{y}^\top \nabla\varphi\left( \frac{1}{g(\ve{y})}\cdot \ve{y}\right)\right)\cdot \nabla g(\ve{y})\:\:. \label{gradphidagger}
\end{align}

For simplicity, let $\ve{u} = \ve{x}/g(\ve{x})$ and $\ve{v} = \ve{y}/g(\ve{y})$,
so that $\varphi^{\dagger}(\ve{x}) = g(\ve{x}) \cdot \varphi( \ve{u} )$ and $\varphi^{\dagger}(\ve{y}) = g(\ve{y}) \cdot \varphi( \ve{v} )$.
The above then reads
\begin{equation}
   \label{eqn:grad-phidagger-2}
   \nabla \varphi^\dagger \left(\ve{y}\right) = \nabla\varphi\left( \ve{v} \right) + \left(\varphi\left( \ve{v} \right) - \ve{v}^\top \nabla\varphi\left( \ve{v} \right)\right)\cdot \nabla g(\ve{y}).
\end{equation}
Now, the LHS of Equation (\ref{eq11}) is
\begin{align*}
  g(\ve{x}) \cdot D_{\varphi}\left(\frac{1}{g(\ve{x})} \cdot \ve{x} \bigm\| \frac{1}{g(\ve{y})} \cdot \ve{y}\right) &= g(\ve{x}) \cdot D_{\varphi}\left( \ve{u} \| \ve{v} \right) \\
  &= g(\ve{x}) \cdot \varphi( \ve{u} ) - g( \ve{x} ) \cdot \varphi( \ve{v} ) - g( \ve{x} ) \cdot \nabla\varphi( \ve{v} )^\top ( \ve{u} - \ve{v} ) \\
  &= \varphi^\dagger( \ve{x} ) - g( \ve{x} ) \cdot \varphi( \ve{v} ) - \nabla\varphi( \ve{v} )^\top ( \ve{x} - g(\ve{x}) \cdot \ve{v} ) \\
  &= \varphi^\dagger( \ve{x} ) - g( \ve{x} ) \cdot ( \varphi( \ve{v} ) - \nabla\varphi( \ve{v} )^\top \ve{v} ) - \nabla\varphi( \ve{v} )^\top \ve{x},
\end{align*}
while the RHS is
\begin{align*}
  D_{\varphi^\dagger}\left( \ve{x} \bigm\| \ve{y}\right) &= \varphi^\dagger( \ve{x} ) - \varphi^\dagger( \ve{y} ) - \nabla \varphi^\dagger( \ve{y} )^\top ( \ve{x} - \ve{y} ) \\
  &= \varphi^\dagger( \ve{x} ) - g(\ve{y}) \cdot \varphi( \ve{v} ) - \nabla\varphi\left( \ve{v} \right)^\top ( \ve{x} - \ve{y} ) - \left( \varphi\left( \ve{v} \right) - \ve{v}^\top \nabla\varphi\left( \ve{v} \right)\right)\cdot \nabla g(\ve{y})^\top ( \ve{x} - \ve{y} ).
\end{align*}
Cancelling the common $\varphi^\dagger( \ve{x} )$ and $\nabla\varphi\left( \ve{v} \right)^\top \ve{y}$ terms, the difference $\Delta = \mathrm{RHS} - \mathrm{LHS}$ is

\resizebox{\textwidth}{!}{
\begin{minipage}{\textwidth}
\begin{align*}
  \Delta &= g( \ve{x} ) \cdot ( \varphi( \ve{v} ) - \nabla\varphi( \ve{v} )^\top \ve{v} ) - g(\ve{y}) \cdot \varphi( \ve{v} ) + \nabla\varphi\left( \ve{v} \right)^\top \ve{y} - \left( \varphi\left( \ve{v} \right) - \ve{v}^\top \nabla\varphi\left( \ve{v} \right)\right)\cdot \nabla g(\ve{y})^\top ( \ve{x} - \ve{y} ) \\
  &= g( \ve{x} ) \cdot ( \varphi( \ve{v} ) - \nabla\varphi( \ve{v} )^\top \ve{v} ) - g(\ve{y}) \cdot \varphi( \ve{v} ) + g(\ve{y}) \cdot \nabla\varphi\left( \ve{v} \right)^\top \ve{v} - \left( \varphi\left( \ve{v} \right) - \ve{v}^\top \nabla\varphi\left( \ve{v} \right)\right)\cdot \nabla g(\ve{y})^\top ( \ve{x} - \ve{y} ) \\
  &= g( \ve{x} ) \cdot ( \varphi( \ve{v} ) - \nabla\varphi( \ve{v} )^\top \ve{v} ) - g(\ve{y}) \cdot \left( \varphi( \ve{v} ) - \nabla\varphi\left( \ve{v} \right)^\top \ve{v} \right) - \left( \varphi\left( \ve{v} \right) - \ve{v}^\top \nabla\varphi\left( \ve{v} \right)\right)\cdot \nabla g(\ve{y})^\top ( \ve{x} - \ve{y} ) \\
  &= ( \varphi( \ve{v} ) - \nabla\varphi( \ve{v} )^\top \ve{v} ) \cdot ( g(\ve{x}) - g(\ve{y}) - \nabla g(\ve{y})^\top (\ve{x} - \ve{y}) ) \\
  &= ( \varphi( \ve{v} ) - \nabla\varphi( \ve{v} )^\top \ve{v} ) \cdot B_g( \ve{x} \| \ve{y} ).
\end{align*}
\end{minipage}
}

Thus, the identity holds, if and only if either
$\varphi( \ve{v} ) = \nabla\varphi( \ve{v} )^\top \ve{v}$ for every $\ve{v} \in \XCal_g$,
or $B_g( \ve{x} \| \ve{y} ) = 0$.
The latter is true if and only if $g$ is affine from Equation \ref{eqBreg}.
The result follows.
\end{proof}

It is easy to check that Theorem \ref{th00} in fact holds for separable (matrix) trace divergences \citep{ksdLR} of the form
\begin{eqnarray}
D_{\varphi}( \matrice{x}\|\matrice{y}) & \defeq & \varphi(\matrice{x}) - \varphi(\matrice{y}) - \trace{\nabla \varphi(\matrice{y})^\top (\matrice{x}-\matrice{y})} \:\:, \label{eqBregMat}
\end{eqnarray}
with $\varphi, g : \textbf{S}(d) \rightarrow \Real$ (for $\textbf{S}(d)$ the set of symmetric real matrices), with $\varphi$ convex.
In this case, the restricted positive homogeneity property becomes
\begin{eqnarray}
\varphi\left(\matrice{u}\right)  & = & \trace{\nabla\varphi(\matrice{u})^\top \matrice{u}}\:\:, \forall \matrice{u}\in {\XCal}_g\:\:.
\end{eqnarray}

\begin{proof}[Proof of Lemma \ref{lemm:multiclass-dr}]
Note that by construction, $g(\ve{r}(\ve{x})) = \pr(\X=\ve{x}) / ((1-\pi_C) \cdot
\pr(\X = \ve{x} | \Y = C))$, and so
\begin{eqnarray}
\left(\frac{1}{g(\ve{r}(\ve{x}))} \cdot \ve{r}(\ve{x})\right)_c & = &
\frac{(1-\pi_C) \cdot
\pr(\X = \ve{x} | \Y = C)}{\pr(\X=\ve{x})} \cdot \frac{\pr(\X = \ve{x} | \Y = c)}{\pr(\X = \ve{x} | \Y = C)}\nonumber\\
 & = &\frac{(1-\pi_C)}{\pi_c} \cdot \frac{\pi_c \pr(\X = \ve{x} | \Y = c)}{\pr(\X=\ve{x})} \nonumber\\
 & = & \eta(\ve{x})\:\:.\label{eqETAC}
\end{eqnarray}
Furthermore,
\begin{eqnarray}
\pr(\X=\ve{x}) & = & \sum_{c=1}^{C} {\pi_c \pr(\X = \ve{x} | \Y = c)} \nonumber\\
& = & (1-\pi_C)\cdot \left( \frac{\pi_C}{1-\pi_C} + \sum_{c<C} \frac{\pi_c}{1-\pi_C}\cdot \frac{\pr(\X = \ve{x} | \Y = c)}{\pr(\X = \ve{x} | \Y = C)}\right) \cdot  \pr(\X = \ve{x} | \Y = C)\nonumber\\
& = & (1-\pi_C)\cdot  g(\ve{r}(\ve{x})) \cdot \pr(\X = \ve{x} | \Y = C)\label{eq123C}\:\:.
\end{eqnarray}
Now let
$$ \hat{\ve{r}}(\ve{x}) = \frac{1}{\hat{\eta}_C(\ve{x})} \cdot\hat{\ve{\eta}}(\ve{x}). $$
It then comes
\begin{eqnarray}
\lefteqn{\expect_{M}[D_\varphi(\ve{\eta}(\X) \| \hat{\ve{\eta}}(\X))]}\nonumber\\
 & = & (1-\pi_C)\cdot \expect_{P_C}\left[ g(\ve{r}(\ve{x}))\cdot D_\varphi(\ve{\eta}(\X) \| \hat{\ve{\eta}}(\X))\right] \nonumber \\
 & = &  (1-\pi_C) \cdot \expect_{P_C}\left[ g(\ve{r}(\ve{x}))\cdot D_\varphi\left(\left.\frac{1}{g(\ve{r}(\ve{x}))} \cdot \ve{r}(\ve{x}) \right\| \hat{\ve{\eta}}(\X)\right)\right] \nonumber \\
 & = &  (1-\pi_C) \cdot \expect_{P_C}\left[ g(\ve{r}(\ve{x}))\cdot D_\varphi\left(\left.\frac{1}{g(\ve{r}(\ve{x}))} \cdot \ve{r}(\ve{x}) \right\| \frac{1}{g(\hat{\ve{r}}(\X))} \cdot \hat{\ve{r}}(\X)\right)\right] \nonumber \\
 & = & (1-\pi_C) \cdot \expect_{P_C}\left[ D_{\varphi^\dagger}(\ve{r}(\X) \| \hat{\ve{r}}(\X))\right]\:\:,\nonumber
\end{eqnarray}
as claimed.
\end{proof}

\begin{proof}[Proof of Lemma \ref{lemm:plms-update-norm}]
For any $\ve{x}$, $\| \nabla{\varphi_p^\dagger}( \ve{x} ) \|_q = W$ by Corollary \ref{corr:phi-gradient-norms}.
Since $\ve{w}_t = \nabla{\varphi_p^\dagger}( \ve{\theta}_{t - 1} )$ for suitable $\ve{\theta}_{t - 1}$, the result follows.
The result for $\| \nabla{\varphi_q^{\dagger}}(\ve{w}_t) \|_p$ follows similarly by Corollary \ref{corr:phi-gradient-norms}.

Note that while $\|\ve{w}_t\|_q = \| \nabla{\varphi}(\ve{w}_t) \|_p$ for the standard $p$-LMS update \citep[Appendix I]{kwhTP}, these norms may vary with each iteration \ie $\ve{w}_t$ may not lie in the $L_q$ ball.
\end{proof}

\begin{proof}[Proof of Lemma \ref{lemlplq}]
Similarly to the proof of Lemma \ref{lemupdate}, a key to the proof of
Lemma \ref{lemlplq} relies on branching on \citet{kwhTP} through the
use of Theorem \ref{th00}.
We first note that
$D_{\varphi_q^\dagger}(\ve{u}\| \ve{w}_0) = W\cdot \|\ve{u}\|_q$ since $\ve{w}_0 = \ve{0}$, and $D_{\varphi_q^\dagger}(\ve{u}\| \ve{w}_{T+1}) \geq 0$, and so

\begin{minipage}{\linewidth}
\begin{eqnarray}
W\cdot\|\ve{u}\|_q & \geq & D_{\varphi_q^\dagger}(\ve{u}\| \ve{w}_0) - D_{\varphi_q^\dagger}(\ve{u}\| \ve{w}_{T+1}) \nonumber\\
  & = & \sum_{t=1}^T  \left\{D_{\varphi_q^\dagger}(\ve{u}\| \ve{w}_{t-1}) - D_{\varphi_q^\dagger}(\ve{u}\| \ve{w}_{t})\right\}\nonumber \text{ by telescoping property } \\
  & = & g_{q}(\ve{u}) \cdot \sum_{t=1}^T \left\{D_{\varphi_q}\left(\frac{\ve{u}}{g_{q}(\ve{u})} \left\| \frac{\ve{w}_{t-1}}{g_{q}(\ve{w}_{t-1})} \right.\right) -
        D_{\varphi_q}\left(\frac{\ve{u}}{g_{q}(\ve{u})}\left\| \frac{\ve{w}_{t}}{g_{q}(\ve{w}_t)} \right.\right)\right\}\nonumber \text{ by Theorem \ref{th00} } \\
  & = & g_{q}(\ve{u}) \cdot \sum_{t=1}^T \left\{D_{\varphi_q}\left(\frac{\ve{u}}{g_{q}(\ve{u})} \left\| \ve{w}_{t-1}\right.\right) - D_{\varphi_q}\left(\frac{\ve{u}}{g_{q}(\ve{u})} \left\| \ve{w}_{t}\right.\right)\right\} \text{ by Lemma \ref{lemm:gq-constant} } \:\:.\label{eq111}
\end{eqnarray}
\end{minipage}

Recall from Lemma \ref{lemm:iterate-divergence-bound} that
$$ D_{\varphi_q}\left(\frac{\ve{u}}{g_{q}(\ve{u})} \left\| \ve{w}_{t-1}\right.\right) - D_{\varphi_q}\left(\frac{\ve{u}}{g_{q}(\ve{u})} \left\| \ve{w}_{t}\right.\right) \geq \frac{1}{4(p-1)\left(2 + \frac{4M}{W} + \frac{2(Y + X_pW)}{(p-1)W^2} \right) X_p^2} \cdot (s^2_t
- r^2_t) $$
where
\begin{align*}
  s_t &\defeq ((1/g_{q}(\ve{u}))\cdot \ve{u} - \ve{w}_{t-1})^\top\ve{x}_t \\
  r_t &\defeq (1/g_{q}(\ve{u}))\cdot \ve{u}^\top\ve{x}_t - y_t.
\end{align*}
Note that $R_q(\ve{w}_{1:T} | \ve{u}) = \sum_{t = 1}^T (s_t^2 - r_t^2)$ by definition.
Summing the above for $t=1, 2, ..., T$ and telescoping sums yields
\begin{eqnarray}
R_q(\ve{w}_{1:T} | \ve{u}) & \leq & 4(p-1)\left(2 + \frac{4M}{W} + \frac{2(Y + X_pW)}{(p-1)W^2}\right) X_p^2W^2 \nonumber\\
 & = & 4(p-1)X_p^2W^2 + 16(p-1)MX_p^2W + 8(Y+X_pW)X^2_p\nonumber \\
 & \leq & 4(p-1)X_p^2W^2 + (16p-8)MX_p^2W + 8YX^2_p\:\:.
\end{eqnarray}

See Figure \ref{fig:lplq} for some geometric intuition about the updates.
\end{proof}

\begin{figure}[!t]
\centering 
\includegraphics[trim=170bp 280bp 130bp 230bp,clip,width=.575\linewidth]{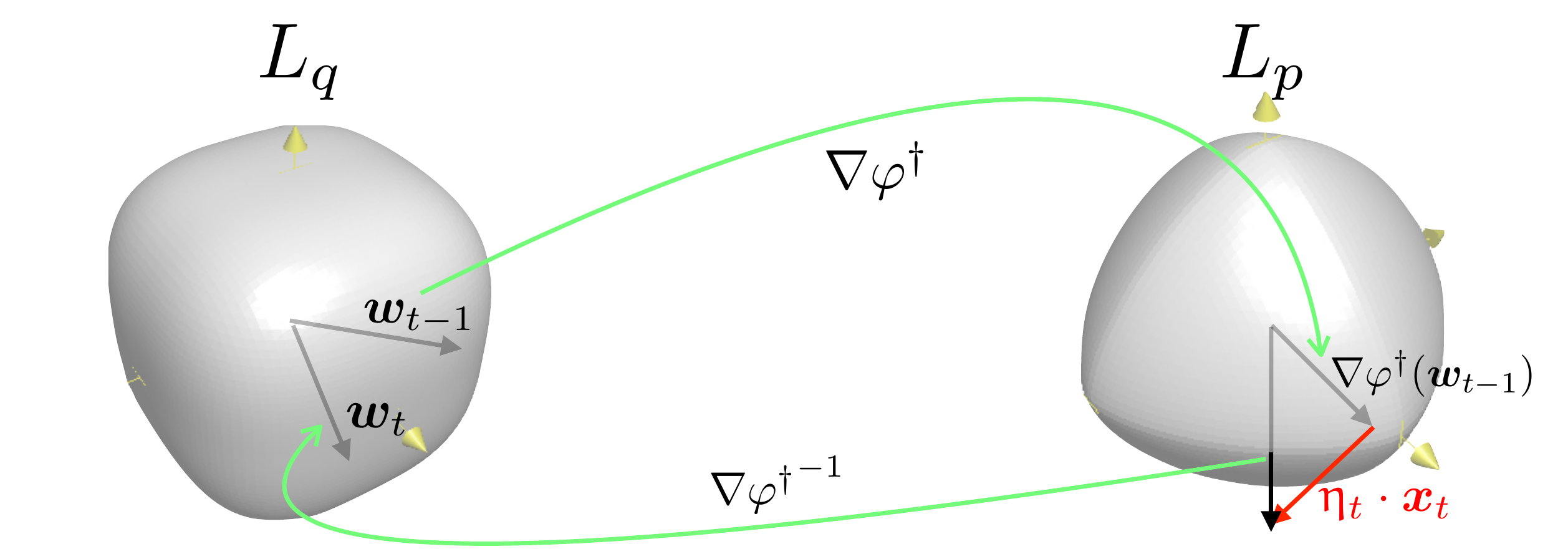} 
\caption{Illustration of the case $W=1$ for the ${\mathcal{B}}_q(W)$-update: all classifiers and image via $\nabla{\varphi^\dagger}$ belong to a ball of radius 1  (here, $q=3$, $p=3/2$).\label{fig:lplq}}
\end{figure}

\begin{proof}[Proof of Lemma \ref{lemm:kmeanspp}]
We start by the sphere. Let $\varphi(\ve{x}) \defeq (1/2) \cdot \|\ve{x}\|_2^2$. Since a
Bregman divergence is invariant to linear transformation, it comes from
Table \ref{t-ex} that
\begin{align*}
   D_\varphi\left( \frac{\ve{x}^S}{g_S(\ve{x}^S)} \bigm\| \frac{\ve{c}^S}{g_S(\ve{c}^S)} \right) = \frac{1}{g_S(\ve{c}^S)} \cdot D_{\varphi^{\dagger}}( \ve{x} \| \ve{c} )
   = 1 - \cos D_G(\ve{x}, \ve{c}), \nonumber
\end{align*}
where we recall that $D_G$ denotes the geodesic distance on the sphere
(see Figure \ref{fig:lift} and Appendix
\ref{sec-app-der}). Equivalently, 
\begin{eqnarray}
\left\| \frac{1}{g_S(\ve{x}^S)} \cdot \ve{x}^S - \frac{1}{g_S(\ve{c}^S)}\cdot  \ve{c}^S\right\|_2^2 & = & 1 - \cos D_G(\ve{x}, \ve{c})\:\:.\label{eqcostKM}
\end{eqnarray}
This equality allows us to use $k$-means++ using the LHS of
(\ref{eqcostKM}) to compute the distribution that picks a center. The
key to using the approximation property of $k$-means++ relies on the
existence of a coordinate system on the sphere for which the cluster
centroid is just the average of the cluster points (polar
coordinates), an average that eventually has to be rescaled if the coordinate
system is not that one \citep{dmCD,emSK}. The existence of this
coordinate system makes that the proof of \cite{avKM} (and in
particular the key Lemmata 3.2 and 3.3) can be carried out without
modification to yield the same approximation ratio as that of
\cite{avKM} \textit{if} the distortion at hand is the squared
Euclidean distance, which turns out to be $D_\rec(.:.)$ from eq. (\ref{eqcostKM}).\\
The case of the hyperboloid follows the exact same path, but starts
from the fact that 
Table \ref{t-ex} now brings
\begin{align*}
   D_\varphi\left( \frac{\ve{x}^H}{g_H(\ve{x}^H)} \bigm\| \frac{\ve{c}^H}{g_H(\ve{c}^H)} \right) 
   = \cosh D_G(\ve{y}, \ve{c}) - 1 
=\left\| \frac{1}{g_H(\ve{x}^H)} \cdot \ve{x}^H - \frac{1}{g_H(\ve{c}^H)}\cdot  \ve{c}^H\right\|_2^2 \:\:.\nonumber
\end{align*}
To finish, in the same way as for the Sphere, we just need the
existence of a coordinate system for which the centroid is an average
of the cluster points, which can be obtained from hyperbolic
barycentric coordinates \cite[Section 18]{Ungar2014}. 
\end{proof}


\section{Working out examples of Table \ref{t-ex}}
\label{sec-app-der}

We fill in the details justifying each of the examples of Equation \ref{eq11} provided in Table \ref{t-ex-short2}.
We also provide the form of the corresponding divergences $D_{\varphi}$ and distortions $D_{\varphi^{\dagger}}$ in the augmented Table \ref{t-ex}.

\begin{table}[!h]
\renewcommand{\arraystretch}{1.5}
\begin{center}
\scalebox{0.7}{
\begin{tabular}{l|l|l||l||l|l}\hline\hline
 & $\varphi$ & $D_{\varphi}\left(\ve{x}\|\ve{y}\right)$ & $g$ & $\varphi^\dagger$ & $D_{\varphi^\dagger}\left(\ve{x}\|\ve{y}\right)$ \\\hline
 I & $\frac{1}{2}\cdot(1 + \|\ve{x}\|_2^2)$ & $(1/2)\cdot \|\ve{x}-\ve{y}\|_2^2$ & $\|\ve{x}\|_2$ & $\|\ve{x}\|_2$ & $\|\ve{x}\|_2 \cdot \left( 1 - \cos \angle \ve{x}, \ve{y}\right)$ \\\hline
 \multirow{2}{*}{II} & \multirow{2}{*}{$ \frac{1}{2}\cdot (W + \|\ve{x}\|_q^2)$} & $(1/2)\cdot(\|\ve{x}\|_q^2-\|\ve{y}\|_q^2)$ & \multirow{2}{*}{$\frac{\|\ve{x}\|_q}{W}$} & \multirow{2}{*}{$W \cdot \|\ve{x}\|_q$} & \multirow{2}{*}{$W \cdot |\ve{x}\|_q - W \cdot \sum_i \frac{x_i \cdot \mathrm{sign}(y_i) \cdot |y_i|^{q-1}}{\|\ve{y}\|_q^{q-1}}$}\\
 & & $-\sum_i \frac{(x_i - y_i) \cdot \mathrm{sign}(y_i) \cdot |y_i|^{q-1}}{\|\ve{y}\|_q^{q-2}}$ & & & \\ \hline
 {III} & $\frac{1}{2}\cdot(u^2 + \|\ve{x}^S\|_2^2)$ & $(1/2)\cdot
 \|\ve{x}^S-\ve{y}^S\|_2^2$ & $\frac{\|\ve{x}\|_2}{\sin \|\ve{x}\|_2}$ & $\|\ve{x}^S\|_2$ & $\frac{\|\ve{x}\|_2}{\sin \|\ve{x}\|_2} \cdot \left( 1 - \cos D_G(\ve{x}, \ve{y})\right)$ \\\hline  
 {IV} & $\frac{1}{2}\cdot(u^2 + \|\ve{x}^H\|_2^2)$ & $(1/2)\cdot
 \|\ve{x}^H-\ve{y}^H\|_2^2$ & $-\frac{\|\ve{x}\|_2}{\sinh
   \|\ve{x}\|_2}$ & $\|\ve{x}^H\|_2$ & $-\frac{\|\ve{x}\|_2}{\sinh
   \|\ve{x}\|_2} \cdot \left( \cosh D_G(\ve{x}, \ve{y}) - 1\right)$ \\\hline  
 \multirow{2}{*}{V} & \multirow{2}{*}{$\sum_i x_i \log x_i - x_i$} & $\sum_i x_i \log \frac{x_i}{y_i}$ &  \multirow{2}{*}{$\ve{1}^\top \ve{x}$} & $\sum_i x_i\log x_i - \ve{1}^\top \ve{x}$ & $\sum_i x_i\log \frac{x_i}{y_i} $\\
 & & $- \ve{1}^\top(\ve{x}-\ve{y})$ & & $-(\ve{1}^\top \ve{x}) \log (\ve{1}^\top \ve{x}) $  & $-d\cdot \expect[\X] \cdot \log \frac{\expect[\X]}{\expect[\Y]}$\\\hline
 \multirow{2}{*}{VI} & \multirow{2}{*}{$- d -\sum_i \log x_i$} & $\sum_i\frac{x_i}{y_i} $ &
 \multirow{2}{*}{$\prod_i x_i^{1/d}$} & \multirow{2}{*}{$-d \cdot \prod_i x_i^{1/d}$} & \multirow{2}{*}{$\sum_i \frac{x_i (\pi_{\ve{y}})^{1/d}}{y_i}  - d (\pi_{\ve{x}})^{1/d}$}\\
 & & $-\sum_i\log\frac{x_i}{y_i} - d$& & & \\\hline
  \multirow{2}{*}{VII} & \multirow{2}{*}{$\trace{\matrice{x}\log \matrice{x} - \matrice{x}}$} & $\trace{\matrice{x}\log \matrice{x} - \matrice{x}\log \matrice{y}}$ & \multirow{2}{*}{$\trace{\matrice{x}}$} & $\trace{\matrice{x}\log \matrice{x} - \matrice{x}}$ & $\trace{\matrice{x}\log \matrice{x} - \matrice{x}\log \matrice{y}}$\\
 & & $-\trace{\matrice{x}} + \trace{\matrice{y}}$ & & $-\trace{\matrice{x}}\log \trace{\matrice{x}}$ & $-\trace{\matrice{x}} \cdot \log\frac{\trace{\matrice{x}}}{\trace{\matrice{y}}}$\\\hline
\multirow{2}{*}{VIII} & \multirow{2}{*}{$-d-\log \det (\matrice{x})$} &
$\trace{\matrice{x}\matrice{y}^{-1}}$ & \multirow{2}{*}{$\det
  (\matrice{x}^{1/d})$} & \multirow{2}{*}{$-d \cdot \det (\matrice{x}^{1/d})$} & $\det (\matrice{y}^{1/d}) \trace{\matrice{x}\matrice{y}^{-1}} -d\cdot \det
 (\matrice{x}^{1/d})$\\
 & & $-\log\det (\matrice{x}\matrice{y}^{-1}) - d$ & & & \\ \hline \hline
\end{tabular}
}
\end{center}
\caption{Example of distortions (right columns) that can be ``reverse
  engineered'' as Bregman divergences involving a particular, non
  necessary linear $g$. Function $\ve{x}^S \defeq f(\ve{x}) : {\Real}^d
  \rightarrow  {\Real}^{d+1}$ is the (S)phere lifting map defined
  in (\ref{defSmap}), and $\ve{x}^H$ is the (H)yperboloid lifting map
  defined in (\ref{defHymap}). $D_G(.,.)$ is the geodesic distance
  between the exponential map of
  $\ve{x}$ and $\ve{y}$ on their respective manifold (sphere or hyperboloid).
Related proofs are in Section
  \ref{sec-app-der}. Expectation $\expect[\X]$ is a shorthand for
  $(1/d)\cdot \sum_i x_i$. $W \in \Real_{+*}$ and $u\in \Real$ are constants.\label{t-ex}}
\end{table}

\paragraph{Row I ---} for $\XCal = \Real^d$, consider $\varphi(\ve{x}) =
(1+\|\ve{x}\|_2^2)/2$ and $g(\ve{x}) = \|\ve{x}\|_2$ (we project on the
Euclidean sphere). It comes 
\begin{eqnarray}
\varphi^\dagger (\ve{x}) & = & \|\ve{x}\|_2\cdot
\left(\frac{1+\left\|\frac{1}{\|\ve{x}\|_2}\cdot
      \ve{x}\right\|_2^2}{2}\right) = \|\ve{x}\|_2\:\:.
\end{eqnarray}
$g$ is not linear (but it is homogeneous of degree 1), but we have
\begin{eqnarray}
\varphi(\ve{x}) = 1 = \ve{x}^\top \nabla\varphi(\ve{x}) \:\:,
\forall \ve{x} : \|\ve{x}\|_2 = 1\:\:,
\end{eqnarray}
so $\varphi$ is 1-homogeneous on the Euclidean sphere, and we can apply
Theorem \ref{th00}. We have
\begin{eqnarray}
g(\ve{x}) \cdot D_{\varphi}\left(\frac{1}{g(\ve{x})}\cdot
  \ve{x}\|\frac{1}{g(\ve{y})}\cdot \ve{y}\right) & = & \frac{\|\ve{x}\|_2}{2}
\cdot \left\| \frac{1}{\|\ve{x}\|_2}\cdot
      \ve{x}-\frac{1}{\|\ve{y}\|_2}\cdot
      \ve{y}\right\|_2^2\nonumber\\
 & = & \|\ve{x}\|_2 \cdot \left(1 - \frac{\ve{x}^\top
     \ve{y}}{\|\ve{x}\|_2 \|\ve{y}\|_2}\right) = \|\ve{x}\|_2 \cdot (1
 - \cos(\ve{x}, \ve{y}))\:\:,\label{peq1}
\end{eqnarray}
and we also have
\begin{eqnarray}
D_{\varphi^\dagger}\left(
  \ve{x}\|\ve{y}\right) & = & \|\ve{x}\|_2 - \|\ve{y}\|_2 -
\frac{1}{\|\ve{y}\|_2} \cdot (\ve{x} -
\ve{y})^\top \ve{y}\nonumber\\
 & = & \|\ve{x}\|_2 - \|\ve{y}\|_2 - \frac{\ve{x}^\top
     \ve{y}}{\|\ve{y}\|_2} + \|\ve{y}\|_2\\
 & =& \|\ve{x}\|_2 \cdot \left(1 - \frac{\ve{x}^\top
     \ve{y}}{\|\ve{x}\|_2 \|\ve{y}\|_2}\right) = \|\ve{x}\|_2 \cdot (1
 - \cos(\ve{x}, \ve{y}))\:\:,
\end{eqnarray}
which is equal to Equation (\ref{peq1}), so we check that Theorem
\ref{th00} applies in this case. $D_{\varphi^\dagger}$ has some
properties. One is a weak form of triangle inequality.
\begin{lemma}
$D_{\varphi^\dagger}\left(
  \ve{x}\|\ve{y}\right)+D_{\varphi^\dagger}\left(
  \ve{y}\|\ve{z}\right) \leq D_{\varphi^\dagger}\left(
  \ve{x}\|\ve{z}\right)$, $\forall \ve{x}, \ve{y}, \ve{z}$ such that $\|\ve{y}\|_2 \leq \|\ve{x}\|_2$.
\end{lemma}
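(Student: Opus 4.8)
The plan is to reduce the claimed sub-additivity to a single scalar inequality by exploiting the closed form of $\varphi^\dagger$ recorded in Row I of Table \ref{t-ex}. First I would note that $\varphi^\dagger(\ve{x}) = \|\ve{x}\|_2$ is positively homogeneous of degree one, so its gradient is $\nabla\varphi^\dagger(\ve{y}) = \ve{y}/\|\ve{y}\|_2$, and consequently $D_{\varphi^\dagger}(\ve{x}\|\ve{y}) = \|\ve{x}\|_2 - \ve{x}^\top\ve{y}/\|\ve{y}\|_2 = \|\ve{x}\|_2\,(1 - \cos\angle(\ve{x},\ve{y}))$, which is the form already derived just above the lemma.

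Rather than expanding all three distortions by hand, I would then invoke the Bregman triangle equality stated in the introduction, namely $D_{\varphi^\dagger}(\ve{x}\|\ve{y}) + D_{\varphi^\dagger}(\ve{y}\|\ve{z}) - D_{\varphi^\dagger}(\ve{x}\|\ve{z}) = (\nabla\varphi^\dagger(\ve{z}) - \nabla\varphi^\dagger(\ve{y}))^\top(\ve{x}-\ve{y})$. Substituting the two normalised gradients collapses the target inequality to the compact claim
\[
  \left( \frac{\ve{z}}{\|\ve{z}\|_2} - \frac{\ve{y}}{\|\ve{y}\|_2} \right)^\top ( \ve{x} - \ve{y} ) \;\le\; 0 .
\]
The appeal of routing through the triangle equality is that all of the $\varphi^\dagger(\ve{x})$-type terms and the $\ve{y}$-crossings cancel automatically, leaving this one inner product to control; the same expression can of course be reached by directly expanding the three divergences and cancelling.

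Finally I would attempt to establish the displayed inequality. Decomposing $\ve{x} = \|\ve{x}\|_2\,\ve{x}/\|\ve{x}\|_2$ and $\ve{y} = \|\ve{y}\|_2\,\ve{y}/\|\ve{y}\|_2$, the left-hand side equals $\|\ve{x}\|_2\,(\cos\angle(\ve{x},\ve{z}) - \cos\angle(\ve{x},\ve{y})) + \|\ve{y}\|_2\,(1 - \cos\angle(\ve{y},\ve{z}))$, so the task is to show that the manifestly nonnegative second summand is dominated by the (possibly negative) angular term carrying the larger weight $\|\ve{x}\|_2$. This is precisely the step where the hypothesis $\|\ve{y}\|_2 \le \|\ve{x}\|_2$ must enter, letting one trade the small weight on the nonnegative $(1-\cos)$ term against the larger weight multiplying the cosine difference. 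I expect this last step to be the main obstacle: a blunt Cauchy--Schwarz bound on the angular terms is not tight enough on its own, so one seems to additionally need the geometric relationship among the three directions (for instance an ordering/betweenness of $\ve{y}/\|\ve{y}\|_2$ relative to the directions of $\ve{x}$ and $\ve{z}$, or the sign pattern of $\cos\angle(\ve{x},\ve{y}) - \cos\angle(\ve{x},\ve{z})$). Pinning down exactly which angular configuration the norm comparison is meant to cover, and verifying that the norm hypothesis then suffices to force the inner product nonpositive, is the delicate part of the argument.
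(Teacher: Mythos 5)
Your reduction is sound as far as it goes: with $\varphi^\dagger(\ve{x}) = \|\ve{x}\|_2$ and $\nabla\varphi^\dagger(\ve{y}) = \ve{y}/\|\ve{y}\|_2$, the Bregman triangle equality does collapse the claim to showing
\[
\left( \frac{\ve{z}}{\|\ve{z}\|_2} - \frac{\ve{y}}{\|\ve{y}\|_2} \right)^\top (\ve{x} - \ve{y}) \;\leq\; 0,
\]
and this is equivalent in content to the direct expansion the paper performs. But you stop exactly at the step that carries all the content, and that gap cannot be filled: the inequality you need is \emph{false} under the stated hypothesis. Take $\ve{x} = \ve{z} = \ve{e}_1$ and $\ve{y} = \ve{e}_2$, so that $\|\ve{y}\|_2 \leq \|\ve{x}\|_2$ holds (with equality). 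Then $D_{\varphi^\dagger}(\ve{x}\|\ve{y}) + D_{\varphi^\dagger}(\ve{y}\|\ve{z}) = 1 + 1 = 2$ while $D_{\varphi^\dagger}(\ve{x}\|\ve{z}) = 0$; equivalently, your inner product equals $(\ve{e}_1 - \ve{e}_2)^\top(\ve{e}_1 - \ve{e}_2) = 2 > 0$. So no argument from the norm comparison alone can rescue the last step; an additional angular hypothesis of precisely the betweenness/obtuseness type you gesture at is genuinely required.

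For comparison, the paper's own proof founders at this same point. It regroups the left-hand side as $\|\ve{x}\|_2\left[(1-\cos(\ve{x},\ve{y})) + (1-\cos(\ve{y},\ve{z}))\right] + (\|\ve{y}\|_2 - \|\ve{x}\|_2)(1-\cos(\ve{y},\ve{z}))$ and then bounds the bracket by $1 - \cos(\ve{x},\ve{z})$, justified by the remark that $1-\cos$ is half the \emph{squared} Euclidean distance between unit-normalized vectors. That justification proves nothing here: squared Euclidean distances do not satisfy the triangle inequality. One has $\|\hat{\ve{x}}-\hat{\ve{y}}\|_2^2 + \|\hat{\ve{y}}-\hat{\ve{z}}\|_2^2 \leq \|\hat{\ve{x}}-\hat{\ve{z}}\|_2^2$ (writing $\hat{\ve{x}}$ for $\ve{x}/\|\ve{x}\|_2$, etc.) if and only if $(\hat{\ve{x}}-\hat{\ve{y}})^\top(\hat{\ve{z}}-\hat{\ve{y}}) \leq 0$, i.e.\ the angle at $\hat{\ve{y}}$ is obtuse --- an unstated extra condition that is violated in the counterexample above. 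So your diagnosis of where the difficulty lies is exactly right: the ``delicate part'' you isolate is the very step at which the paper's argument is invalid, and the lemma as stated (norm condition only) does not hold.
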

\begin{proof}
\begin{eqnarray}
\lefteqn{D_{\varphi^\dagger}\left(
  \ve{x}\|\ve{y}\right)+D_{\varphi^\dagger}\left(
  \ve{y}\|\ve{z}\right)}\nonumber\\
 & = & \|\ve{x}\|_2 \cdot (1
 - \cos(\ve{x}, \ve{y})) + \|\ve{y}\|_2 \cdot (1
 - \cos(\ve{y}, \ve{z}))\nonumber\\ 
& = & \|\ve{x}\|_2 \cdot\left( (1
 - \cos(\ve{x}, \ve{y})) + (1
 - \cos(\ve{y}, \ve{z}))\right) + (\|\ve{y}\|_2 - \|\ve{x}\|_2)\cdot (1
 - \cos(\ve{y}, \ve{z})) \nonumber\\ 
& \leq & \|\ve{x}\|_2 \cdot(1
 - \cos(\ve{x}, \ve{z})) + (\|\ve{y}\|_2 - \|\ve{x}\|_2)\cdot (1
 - \cos(\ve{y}, \ve{z})) \nonumber\\ 
& \leq & D_{\varphi^\dagger}\left(
  \ve{x}\|\ve{z}\right) + (\|\ve{y}\|_2 - \|\ve{x}\|_2)\cdot (1
 - \cos(\ve{y}, \ve{z})) \nonumber\\ 
& \leq & D_{\varphi^\dagger}\left(
  \ve{x}\|\ve{z}\right) \:\:,
\end{eqnarray}
since $\|\ve{y}\|_2 \leq \|\ve{x}\|_2$. We have used the fact that $(1
 - \cos(\ve{x}, \ve{y}))$ is half the Euclidean distance between unit-normalized vectors.
\end{proof}
It turns out that $D_{\varphi^\dagger}(\ve{x}\|\ve{\mu})$ can be
related to
the log-likelihood of a von Mises-Fisher distribution with expected
direction $\ve{\mu}$, which is useful
in text analysis \citep{rwsmST}.

\paragraph{Row II ---} Let $\varphi(\ve{x}) \defeq (1/2) \cdot
(u^2 + \|\ve{x}\|_q^2)$, for $q > 1$ \citep{kwhTP}. We have
\begin{eqnarray}
\varphi\left(\frac{1}{g(\ve{x})}\cdot
  \ve{x}\right) & = & \frac{u^2}{2} + \frac{1}{2}\cdot \left\|\frac{1}{g(\ve{x})}\cdot
  \ve{x}\right\|_q^2 = \frac{u^2}{2} + \frac{1}{2 g^2(\ve{x})} \cdot \|\ve{x}\|_q^2\:\:.
\end{eqnarray}
We also have
\begin{eqnarray}
\left(\frac{1}{g(\ve{x})}\cdot
  \ve{x}\right)^\top \nabla\varphi\left(\frac{1}{g(\ve{x})}\cdot
  \ve{x}\right) & = & \frac{1}{g(\ve{x})}\cdot \sum_i
\frac{x_i\cdot \mathrm{sign}\left(\frac{1}{g(\ve{x})}\cdot x_i\right)
  \left|\frac{1}{g(\ve{x})}\cdot x_i\right|^{q-1}}{\left\|\frac{1}{g(\ve{x})}\cdot
  \ve{x}\right\|_q^{q-2}}\nonumber\\
 & = & \sum_i
\frac{
  \left|\frac{1}{g(\ve{x})}\cdot x_i\right|^{q}}{\left\|\frac{1}{g(\ve{x})}\cdot
  \ve{x}\right\|_q^{q-2}}\nonumber\\
 & = & \frac{1}{g^2(\ve{x})} \cdot \|\ve{x}\|_q^2 \:\:.
\end{eqnarray}
To have the condition of Theorem \ref{th00} satisfied, we therefore
need
\begin{eqnarray}
\|\ve{x}\|_q & = & u g(\ve{x})\:\:,
\end{eqnarray}
So we use $g(\ve{x}) = \|\ve{x}\|_q/W$ and $u=W$, observing that $\varphi$ is 1-homogeneous on the $L_p$ sphere. We check that 
\begin{eqnarray}
\varphi^\dagger(\ve{x}) & = & W\cdot \|\ve{x}\|_q\:\:.
\end{eqnarray}
and we obtain
\begin{eqnarray}
D_{\varphi^\dagger}(\ve{w} \| \ve{w}') & = & W\cdot \|\ve{w}\|_q - W\cdot \sum_i
\frac{w_i \cdot \mathrm{sign}(w'_i) \cdot |w'_i|^{q-1}}{\|\ve{w}'\|_q^{q-1}}\:\:.
\end{eqnarray}

\paragraph{Row III ---} As in \cite{bfSA}, 
we assume $\|\ve{x}\|_2 \leq \pi$, or we renormalize or change the
radius of the ball) We first lift the data points using the
\textit{Sphere} lifting map $\Real^{d}\ni \ve{x} \mapsto
\ve{x}^S \in \Real^{d+1}$:
\begin{eqnarray}
\ve{x}^S & \defeq & [x_1 \quad x_2 \quad\cdots
\quad x_d \quad r_{\ve{x}} \cot r_{\ve{x}}]^\top\:\:,\label{defSmap}
\end{eqnarray}
where $r_{\ve{x}}\defeq \|\ve{x}\|_2$ is the Euclidean norm of
$\ve{x}$. Notice that the last
coordinate is a coordinate of the Hessian of the geodesic distance to the origin on
the sphere \citep{bfSA}. We then let $g(\ve{x}^S) \defeq r_{\ve{x}} /
\sin r_{\ve{x}}$ (notice that $g$ is computed uding the first $d$
coordinates). Finally, for $\XCal = \Real^{d+1}$ and $u>
1$, consider $\varphi(\ve{x}^S) =
(u^2+\|\ve{x}^S\|_2^2)/2$. The set of points for which
$\varphi(\ve{x}^S) = (\ve{x}^S)^\top \nabla\varphi (\ve{x}^S)$ is
equivalently the subset $\XCal_g \subseteq \Real^{d+1}$ such that
\begin{eqnarray}
\XCal_g  & \defeq & \{\ve{x}^S : g^2(\ve{x}^S) = u^2\}\:\:.
\end{eqnarray}
So $\varphi$ satisfies the restricted positive homogeneity of degree 1 on $\XCal_g$ and we can
apply Theorem \ref{th00}.
We first remark that:
\begin{eqnarray}
\|\ve{x}^S\|_2^2 & = & r^2_{\ve{x}} + r^2_{\ve{x}} \cot^2
r_{\ve{x}}\nonumber\\
 & = & \frac{r^2_{\ve{x}}}{\sin r^2_{\ve{x}}} = g^2(\ve{x}^S)\:\:,\label{ppx}
\end{eqnarray}
and
\begin{eqnarray}
\varphi^\dagger (\ve{x}^S) & = & \frac{r_{\ve{x}}}{\sin r_{\ve{x}}}
\cdot \varphi\left( \frac{\sin r_{\ve{x}}}{r_{\ve{x}}} \cdot
  \ve{x}^S\right)= \frac{r_{\ve{x}}}{\sin r_{\ve{x}}}
\cdot \left( \frac{\sin r_{\ve{x}}}{r_{\ve{x}}} \right)^2 \cdot
  \|\ve{x}^S\|_2^2 = \|\ve{x}^S\|_2\:\:,
\end{eqnarray}
and finally, because of the spherical law of cosines,
\begin{eqnarray}
\sin r_{\ve{x}} \sin r_{\ve{y}} \cos(\ve{x},\ve{y}) +
       \cos r_{\ve{x}} \cos r_{\ve{y}} & = & \cos D_G(\ve{x}, \ve{y})\:\:,\label{lawc}
\end{eqnarray}
where we recall from eq. (\ref{defdist}) that $D_G(\ve{x}, \ve{y})$ is
the geodesic distance between the image of the exponential maps of
$\ve{x}$ and $\ve{y}$ on the sphere.

We then derive
\begin{eqnarray}
\lefteqn{g(\ve{x}^S)\cdot D_{\varphi} \left(\frac{1}{g(\ve{x}^S)} \cdot \ve{x}^S\|
  \frac{1}{g(\ve{y}^S)} \cdot \ve{y}^S\right)}\nonumber\\
 & = &
\frac{r_{\ve{x}}}{2 \sin r_{\ve{x}}} 
\cdot \left\| \frac{\sin r_{\ve{x}}}{r_{\ve{x}}} \cdot 
      \ve{x}^S-\frac{\sin r_{\ve{y}}}{r_{\ve{y}}} \cdot 
      \ve{y}^S\right\|_2^2\nonumber\\
 & = & \frac{r_{\ve{x}}}{2 \sin r_{\ve{x}}} 
\cdot \left( \frac{\sin^2 r_{\ve{x}}}{\|\ve{x}\|^2_2} \cdot 
      \|\ve{x}^S\|_2^2 + \frac{\sin^2 r_{\ve{y}}}{\|\ve{y}\|^2_2} \cdot 
      \|\ve{y}^S\|_2^2 - 2\cdot \frac{\sin r_{\ve{x}}}{r_{\ve{x}}} \cdot \frac{\sin r_{\ve{y}}}{r_{\ve{y}}} \cdot 
      (\ve{x}^S)^\top \ve{y}^S\right)\nonumber\\
 & = & \frac{r_{\ve{x}}}{\sin r_{\ve{x}}} 
\cdot \left( 1 - \frac{\sin r_{\ve{x}}}{r_{\ve{x}}} \cdot \frac{\sin r_{\ve{y}}}{r_{\ve{y}}} \cdot 
      (\ve{x}^S)^\top \ve{y}^S\right)\label{pp1}\\
 & = & \frac{r_{\ve{x}}}{\sin r_{\ve{x}}} 
\cdot \left( 1 - \frac{\sin r_{\ve{x}}}{r_{\ve{x}}} \cdot \frac{\sin r_{\ve{y}}}{r_{\ve{y}}} \cdot 
     \left(\ve{x}^\top \ve{y} +r_{\ve{x}} r_{\ve{y}}\cot
       r_{\ve{x}}\cot r_{\ve{y}}\right)\right) \label{pp2}\\
 & = & \frac{r_{\ve{x}}}{\sin r_{\ve{x}}} 
\cdot \left( 1 - \sin r_{\ve{x}} \sin r_{\ve{y}} \cdot 
     \left(\cos(\ve{x},\ve{y}) + \cot r_{\ve{x}} \cot r_{\ve{y}}\right)\right) \nonumber\\
 & = & \frac{r_{\ve{x}}}{\sin r_{\ve{x}}} 
\cdot \left( 1 -
     \left(\sin r_{\ve{x}} \sin r_{\ve{y}} \cos(\ve{x},\ve{y}) +
       \cos r_{\ve{x}} \cos r_{\ve{y}}\right)\right) \nonumber\\
 & = & \frac{r_{\ve{x}}}{\sin r_{\ve{x}}} 
\cdot \left( 1 -
     \cos D_G(\ve{x}, \ve{y})\right)\:\:.\label{feqq1}
\end{eqnarray}
In Equation (\ref{pp1}), we use Equation (\ref{ppx}), and we use
Equation (\ref{lawc}) in Equation (\ref{feqq1}). We also check
\begin{eqnarray}
D_{\varphi^\dagger} (\ve{x}^S\| \ve{y}^S) & = & \|\ve{x}^S\|_2 -
\|\ve{y}^S\|_2 - \frac{1}{\|\ve{y}^S\|_2}\cdot
(\ve{x}^S-\ve{y}^S)^\top \ve{y}^S\nonumber\\
 & = & \|\ve{x}^S\|_2 - \frac{1}{\|\ve{y}^S\|_2}\cdot
(\ve{x}^S)^\top \ve{y}^S\nonumber\\
 & = & \|\ve{x}^S\|_2 \cdot \left( 1 - \frac{(\ve{x}^S)^\top
     \ve{y}^S}{\|\ve{x}^S\|_2\|\ve{y}^S\|_2}\right)\\
 & = & \frac{r_{\ve{x}}}{\sin r_{\ve{x}}} 
\cdot \left( 1 -
     \cos D_G(\ve{x}, \ve{y})\right)\:\:. \label{ll12}
\end{eqnarray}
To obtain (\ref{ll12}), we use the fact that
\begin{eqnarray}
\frac{(\ve{x}^S)^\top \ve{y}^S}{\|\ve{x}^S\|_2\|\ve{y}^S\|_2} & = & \frac{\sin r_{\ve{x}}}{r_{\ve{x}}} \cdot \frac{\sin r_{\ve{y}}}{r_{\ve{y}}} \cdot 
     \left(\ve{x}^\top \ve{y} +r_{\ve{x}} r_{\ve{y}}\cot
       r_{\ve{x}}\cot r_{\ve{y}}\right)\:\:,
\end{eqnarray}
and then plug it into Equation (\ref{ll12}), which yields the identity
between Equation (\ref{pp2}) (and thus (\ref{feqq1})) and
(\ref{ll12}). So Theorem \ref{th00} holds in this case as well.

We remark that $(1/g(\ve{x}^S)) \cdot \ve{x}^S =
\exp_{\ve{0}}(\ve{x})$ is the
exponential map for the sphere \citep{bfSA}.

\paragraph{Row IV ---} In the same way as we did for row IV, we first
create a lifting map, but this time \textit{complex} valued, the Hyperboloid
lifting map $H$: $\Real^{d}\ni \ve{x} \mapsto
\ve{x}^H \in \Real^{d} \times \mathbb{C}$. With an abuse of notation,
it is given by
\begin{eqnarray}
\ve{x}^H & \defeq & [x_1 \quad x_2 \quad\cdots
\quad x_d \quad i r_{\ve{x}} \coth r_{\ve{x}}]^\top\:\:,\label{defHymap}
\end{eqnarray}
and we let $g(\ve{x}^H) \defeq - r_{\ve{x}} / \sinh
r_{\ve{x}}$, with $\coth$ and $\sinh$ defining respectively the
hyperbolic cotangent and hyperbolic sine. We let $0\coth 0 = 0 / \sinh 0 =
1$. Notice that the complex number is pure imaginary and so
$H$ defines a $d$ dimensional manifold that lives in
$\Real^{d+1}$ assuming that the last coordinate is the imaginary
axis. Let $\exp_{\ve{q}}(\ve{x}) \defeq (1/g(\ve{x}^H)\cdot \ve{x}^H$.
Notice that 
\begin{eqnarray}
\left\|\exp_{\ve{q}}(\ve{x})\right\|_2^2 & = & \frac{\sinh^2
r_{\ve{x}}}{r^2_{\ve{x}}} \cdot \left(r^2_{\ve{x}} + i^2 r^2_{\ve{x}}\coth^2
r_{\ve{x}} \right)\nonumber\\
 & = & \sinh^2
r_{\ve{x}} + i^2 \cosh^2
r_{\ve{x}} \nonumber\\
 & = & \sinh^2
r_{\ve{x}} - \cosh^2
r_{\ve{x}} = -1\:\:,\label{equnit}
\end{eqnarray}
so $\exp_{\ve{q}}(\ve{x})$ defines a lifting map from ${\mathbb{R}}^d$ to the
hyperboloid model $\mathbb{H}^d$ of hyperbolic geomety \cite{gAC}. In fact, it
defines the exponential map for the plane $T_{\ve{q}}\mathbb{H}^d$
tangent to $\mathbb{H}^d$ in point $\ve{q}\defeq [0\quad 0\quad
\cdots \quad 0 \quad i]
= \ve{0}^H$. To see this, remark that we can express the geodesic
distance $D_G$ with the hyperbolic
metric between $\ve{x}^H$ and $\ve{y}^H$ as
\begin{eqnarray}
D_G(\ve{x}^H, \ve{y}^H) & \defeq & \cosh^{-1}(-(\ve{x}^H)^\top \ve{y}^H)\:\:,\label{defgd}
\end{eqnarray}
where $\cosh^{-1}$ is the inverse hyperbolic cosine. 
So,
for any $\ve{x}\in T_{\ve{q}}\mathbb{H}^d$, since $r_{\ve{x}} =
\|\ve{x} - \ve{0}\|_2$, we have
\begin{eqnarray}
D_G(\exp_{\ve{q}}(\ve{x}), \ve{q}) & = &
\cosh^{-1}(-(\ve{x}^H)^\top \ve{0}^H)\nonumber\\
 & = & \cosh^{-1}(-i^2 \cosh r_{\ve{x}})\nonumber\\
 & = & r_{\ve{x}} = \|\ve{x} - \ve{0}\|_2\:\:,
\end{eqnarray}
and $\exp_{\ve{q}}(\ve{x})$ is indeed the exponential map for
$T_{\ve{q}}\mathbb{H}^d$.  Now, remark that 
\begin{eqnarray}
\exp_{\ve{q}}(\ve{x})^\top \exp_{\ve{q}}(\ve{y}) & = &
\frac{\sinh r_{\ve{x}}}{r_{\ve{x}}} \cdot \frac{\sinh
  r_{\ve{y}}}{r_{\ve{y}}}  \cdot (\ve{x}^H)^\top \ve{y}^H \nonumber\\
& = & \frac{\sinh r_{\ve{x}}}{r_{\ve{x}}} \cdot \frac{\sinh
  r_{\ve{y}}}{r_{\ve{y}}} \cdot \left(\ve{x}^\top \ve{y} + i^2 r_{\ve{x}} r_{\ve{y}}\coth
       r_{\ve{x}}\coth r_{\ve{y}}\right)\nonumber\\
 & = & \sinh r_{\ve{x}} \sinh r_{\ve{y}} \cdot
 \left(\cos(\ve{x},\ve{y}) - \coth
       r_{\ve{x}}\coth r_{\ve{y}}\right)\nonumber\\
 & = & \sinh r_{\ve{x}} \sinh r_{\ve{y}} \cos(\ve{x},\ve{y}) - \cosh
       r_{\ve{x}}\cosh r_{\ve{y}}\nonumber\\
 & = & - \cosh D_G(\ve{x}^H, \ve{y}^H)\:\:.\label{eqhlc}
\end{eqnarray}
Eq. (\ref{eqhlc}) holds by the hyperbolic law of cosines.
Now, we let $\varphi(\ve{x}^H) =
(u^2+\|\ve{x}^H\|_2^2)/2$ and
\begin{eqnarray}
\XCal_g  & \defeq & \{\ve{x}^H : \|\ve{x}^H\|_2^2 = u^2\}\:\:.
\end{eqnarray}
We check that $\varphi(\ve{x}^H) = u^2 = (\ve{x}^H)^\top \nabla\varphi
(\ve{x}^H)$ for any $\ve{x}^H \in \XCal_g$, so we can
apply Theorem \ref{th00}. We then use eqs. (\ref{equnit}) and (\ref{eqhlc}) and derive
\begin{eqnarray}
\lefteqn{g(\ve{x}^H)\cdot D_{\varphi} \left(\frac{1}{g(\ve{x}^H)} \cdot \ve{x}^H\|
  \frac{1}{g(\ve{y}^H)} \cdot \ve{y}^H\right)}\nonumber\\
 & = &
-\frac{r_{\ve{x}}}{2 \sinh r_{\ve{x}}} 
\cdot \left\| \exp_{\ve{q}}(\ve{x})-\exp_{\ve{q}}(\ve{y})\right\|_2^2\nonumber\\
 & = & -\frac{r_{\ve{x}}}{2 \sinh r_{\ve{x}}} 
\cdot \left(\left\|\exp_{\ve{q}}(\ve{x})\right\|_2^2 +\left\|\exp_{\ve{q}}(\ve{y})\right\|_2^2  - 2\exp_{\ve{q}}(\ve{x})^\top \exp_{\ve{q}}(\ve{y})\right)\nonumber\\
 & = & - \frac{r_{\ve{x}}}{\sinh r_{\ve{x}}} 
\cdot \left( \cosh D_G(\ve{x}^H, \ve{y}^H) - 1\right) \:\:.\label{pp3}
\end{eqnarray}
Note that eq. (\ref{pp3}) is a negative-valued and concave distortion.

\paragraph{Row V ---} for $\XCal = \Real_{+*}^d$,
consider $\varphi(\ve{x}) = \sum_i x_i \log x_i - x_i$ and $g(\ve{x}) =
\ve{1}^\top \ve{x}$
(we normalize on the simplex). 
Since $g$ is linear, we do not need to check for the homogeneity of
$\varphi$, and we directly obtain:
\begin{eqnarray}
g(\ve{x}) \cdot D_{\varphi}\left(\frac{1}{g(\ve{x})}\cdot
  \ve{x}\|\frac{1}{g(\ve{y})}\cdot \ve{y}\right) & = & \sum_i x_i\log x_i -
(\ve{1}^\top \ve{x}) \log (\ve{1}^\top \ve{x}) - \ve{1}^\top
\ve{x}\nonumber\\ 
& & - \frac{\ve{1}^\top \ve{x}}{\ve{1}^\top \ve{y}}\cdot \sum_i
y_i\log y_i -
(\ve{1}^\top \ve{x}) \log (\ve{1}^\top \ve{y}) + \ve{1}^\top
\ve{x}\nonumber\\
 & & - (\ve{1}^\top \ve{x})\cdot \sum_i \left(\frac{x_i}{\ve{1}^\top
     \ve{x}} - \frac{y_i}{\ve{1}^\top \ve{y}}\right) \cdot
 \log\frac{y_i}{\ve{1}^\top \ve{y}}\nonumber\\
 & = & \sum_i x_i\log \frac{x_i}{y_i} -
(\ve{1}^\top \ve{x}) \cdot \log \frac{\ve{1}^\top \ve{x}}{\ve{1}^\top \ve{y}}\:\:.\label{kleq1}
\end{eqnarray}
Furthermore,
\begin{eqnarray}
\varphi^\dagger (\ve{x}) & = & \ve{1}^\top \ve{x}\cdot
\left(\sum_i \frac{x_i}{\ve{1}^\top \ve{x}} \cdot \log
  \frac{x_i}{\ve{1}^\top \ve{x}} - 1\right) = \sum_i x_i\log x_i -
(\ve{1}^\top \ve{x}) \log (\ve{1}^\top \ve{x}) - \ve{1}^\top \ve{x}\:\:.
\end{eqnarray}
Noting that $\varphi^\dagger(\ve{x})$ is the sum of
three terms, one of which is linear and can be removed for the
divergence, so the divergence is just the sum of the two divergences
with the two generators,
which is found to be Equation (\ref{kleq1}) as well. Remark that while the
KL divergence is convex in its both arguments, $D_{\varphi^\dagger}(\ve{x}\|\ve{y})$
may not be (jointly) convex. Indeed, its Hessian in $\ve{y}$ equals:
\begin{eqnarray}
\matrice{H}_{\ve{y}}(D_{\varphi^\dagger}) & = & 
\mathrm{Diag}(\{x_i/y_i^2\}_i) - \frac{\ve{1}^\top \ve{x}}{(\ve{1}^\top
\ve{y})^2}\cdot \ve{1}\ve{1}^\top\:\:,
\end{eqnarray}
which may be indefinite.

\paragraph{Row VI ---} for $\XCal = \Real_{+*}^d$,
consider $\varphi(\ve{x}) = - d -\sum_i \log x_i$ and $g(\ve{x}) =
(\pi_{\ve{x}})^{1/d}$, where we let $\pi_{\ve{x}} \defeq\prod_i x_i$
(we normalize with the geometric average). It comes 
\begin{eqnarray}
\varphi^\dagger (\ve{x}) & = & (\pi_{\ve{x}})^{1/d}\cdot
\left( - d - \sum_i \log \frac{x_i}{(\pi_{\ve{x}})^{1/d}}\right) = -d
\cdot (\pi_{\ve{x}})^{1/d} \:\:.
\end{eqnarray}
$g$ is not linear (but it is homogeneous of degree $1$), and we have
\begin{eqnarray}
\varphi(\ve{x}) = -d = \ve{x}^\top \nabla\varphi(\ve{x}) \:\:,
\forall \ve{x} : \prod_i x_i = 1\:\:,
\end{eqnarray}
so $\varphi$ is 1-homogeneous on $\XCal_g$, and we can apply
Theorem \ref{th00}. We have
\begin{eqnarray}
\lefteqn{g(\ve{x}) \cdot D_{\varphi}\left(\frac{1}{g(\ve{x})}\cdot
  \ve{x}\|\frac{1}{g(\ve{y})}\cdot \ve{y}\right)}\nonumber\\
 & = & (\pi_{\ve{x}})^{1/d}  \cdot \sum_i \left( \frac{x_i (\pi_{\ve{y}})^{1/d}}{y_i (\pi_{\ve{x}})^{1/d}} -
  \log\frac{x_i (\pi_{\ve{y}})^{1/d}}{y_i (\pi_{\ve{x}})^{1/d}}\right) - d (\pi_{\ve{x}})^{1/d} \nonumber\\
 & = & \sum_i \frac{x_i (\pi_{\ve{y}})^{1/d}}{y_i} - d(\pi_{\ve{x}})^{1/d}\log
 (\pi_{\ve{x}})^{1/d} - (\pi_{\ve{x}})^{1/d}\log
 \pi_{\ve{y}} + (\pi_{\ve{x}})^{1/d}\log
 \pi_{\ve{y}} \nonumber\\
 & & + d (\pi_{\ve{x}})^{1/d}\log
 (\pi_{\ve{x}})^{1/d} - d (\pi_{\ve{x}})^{1/d} \nonumber\\
 & = & \sum_i \frac{x_i (\pi_{\ve{y}})^{1/d}}{y_i}  - d (\pi_{\ve{x}})^{1/d} \:\:.\label{peq2}
\end{eqnarray}
We also have
\begin{eqnarray}
\frac{\partial}{\partial x_i} \varphi^{\dagger}(\ve{x}) & = & - (1/x_i)\cdot (\pi_{\ve{x}})^{1/d}\:\:,
\end{eqnarray}
and so
\begin{eqnarray}
D_{\varphi^\dagger}\left(
  \ve{x}\|\ve{y}\right) & = & -d(\pi_{\ve{x}})^{1/d} + d(\pi_{\ve{y}})^{1/d} + \cdot \sum_i (x_i - y_i) \cdot \frac{(\pi_{\ve{y}})^{1/d}}{y_i}\nonumber\\
 & = &  -d (\pi_{\ve{x}})^{1/d} + d (\pi_{\ve{y}})^{1/d} + \sum_i \frac{x_i
  (\pi_{\ve{y}})^{1/d}}{y_i} - d (\pi_{\ve{y}})^{1/d}\nonumber\\
 & =&\sum_i \frac{x_i
  (\pi_{\ve{y}})^{1/d}}{y_i}  -d (\pi_{\ve{x}})^{1/d} \:\:,
\end{eqnarray}
which is equal to Equation (\ref{peq2}), so we check that Theorem \ref{th00} applies in this case.

\paragraph{Row VII ---} We use the following fact \cite{ksdLR}. Let $\matrice{x} = \matrice{u} \matrice{l}
\matrice{u}^\top$ and $\matrice{y} = \matrice{v} \matrice{t}
\matrice{v}^\top$ be the eigendecomposition of symmetric
positive definite matrices $\matrice{x}$ and $\matrice{y}$, with
$\matrice{l} \defeq \mathrm{Diag}(\ve{l})$, $\matrice{t} \defeq
\mathrm{Diag}(\ve{t})$, and $\matrice{u} \defeq [\ve{u}_1 | \ve{u}_2 |
\cdots | \ve{u}_d], \matrice{v}\defeq [\ve{v}_1 | \ve{v}_2 |
\cdots | \ve{v}_d]$ orthonormal;
let $\varphi = \trace{\matrice{x}\log
    \matrice{x} - \matrice{x}}$. Then we have
\begin{eqnarray}
D_{\varphi}\left(
  \matrice{x}\|\matrice{y}\right) & = & \sum_{i,j} (\ve{u}_i^\top \ve{v}_j)^2
\cdot D_{\varphi_2} (l_i \| t_j) \:\:,
\end{eqnarray}
with $\varphi_2(x) = x \log x - x$. 
We pick $g(\matrice{x}) =
\trace{\matrice{x}} = \sum_i l_i$, which brings from Equation (\ref{kleq1})
\begin{eqnarray}
\lefteqn{g(\matrice{x}) \cdot D_{\varphi}\left(\frac{1}{g(\matrice{x})}\cdot
  \matrice{x}\|\frac{1}{g(\matrice{y})}\cdot
  \matrice{y}\right)}\nonumber\\
 & = &
\sum_{i,j} (\ve{u}_i^\top \ve{v}_j)^2
\cdot \trace{\matrice{x}} \cdot  D_{\varphi_2} \left(\frac{l_i}{\trace{\matrice{x}}} \|
  \frac{t_j}{\trace{\matrice{y}}}\right) \nonumber\\
 & = & \sum_{i,j} (\ve{u}_i^\top \ve{v}_j)^2 \cdot \trace{\matrice{x}} \cdot  
 \left(\frac{l_i}{\trace{\matrice{x}}} \cdot \log \frac{l_i \cdot
     \trace{\matrice{y}}}{t_j \cdot \trace{\matrice{x}}} -
   \frac{l_i}{\trace{\matrice{x}}} +
   \frac{t_j}{\trace{\matrice{y}}}\right)\nonumber\\
 & = & \sum_{i,j} (\ve{u}_i^\top \ve{v}_j)^2 \cdot\left( l_i 
   \log \frac{l_i}{t_j} - l_i + t_j\right) + \log
 \left(\frac{\trace{\matrice{y}}}{\trace{\matrice{x}}}\right)\cdot \sum_{i,j} (\ve{u}_i^\top
 \ve{v}_j)^2 \cdot l_i \nonumber \\
 & & + \frac{\trace{\matrice{x}}}{\trace{\matrice{y}}} \cdot \sum_{i,j} (\ve{u}_i^\top
 \ve{v}_j)^2 \cdot t_j - \sum_{i,j} (\ve{u}_i^\top
 \ve{v}_j)^2 \cdot t_j\label{inp1}\:\:.
\end{eqnarray}
Because $\matrice{u}, \matrice{v}$ are orthonormal, we also get $\sum_{i,j} (\ve{u}_i^\top
 \ve{v}_j)^2 \cdot l_i = \sum_i l_i \sum_j \cos^2(\ve{u}_i,
 \ve{v}_j) = \sum_i l_i = \trace{\matrice{x}}$ and $\sum_{i,j} (\ve{u}_i^\top
 \ve{v}_j)^2 \cdot t_j = \trace{y}$, and so Equation (\ref{inp1}) becomes
 \begin{eqnarray}
\lefteqn{g(\matrice{x}) \cdot D_{\varphi}\left(\frac{1}{g(\matrice{x})}\cdot
  \matrice{x}\|\frac{1}{g(\matrice{y})}\cdot
  \matrice{y}\right)}\nonumber\\
 & = & \trace{\matrice{x}\log \matrice{x}
- \matrice{x}\log \matrice{y}} - \trace{\matrice{x}} +
\trace{\matrice{y}} + \trace{\matrice{x}}\cdot \log
 \left(\frac{\trace{\matrice{y}}}{\trace{\matrice{x}}}\right) +
 \trace{\matrice{x}} - \trace{\matrice{y}}\nonumber\\
 & = & \trace{\matrice{x}\log \matrice{x}
- \matrice{x}\log \matrice{y}} - \trace{\matrice{x}}\cdot \log
 \left(\frac{\trace{\matrice{x}}}{\trace{\matrice{y}}}\right) \:\:.\label{eqgrad1}
\end{eqnarray}
We also check that
\begin{eqnarray}
\varphi^\dagger(\matrice{x}) & = & \trace{\matrice{x}} \cdot
\trace{\frac{1}{\trace{\matrice{x}}} \cdot \matrice{x}\log
    \left(\frac{1}{\trace{\matrice{x}}} \cdot \matrice{x}\right) -
    \frac{1}{\trace{\matrice{x}}} \cdot \matrice{x}}\nonumber\\
 & = & \trace{\matrice{x}\log
    \left(\frac{1}{\trace{\matrice{x}}} \cdot \matrice{x}\right)} - \trace{\matrice{x}}\:\:,
\end{eqnarray}
and
\begin{eqnarray}
\matrice{x}\log
    \left(\frac{1}{\trace{\matrice{x}}} \cdot \matrice{x}\right) & = & \matrice{u} \matrice{l}
\matrice{u}^\top \matrice{u} \log\left(\frac{1}{\ve{1}^\top \ve{l}}\cdot \matrice{l}\right)
\matrice{u}^\top\nonumber\\
 & = & \matrice{u} \matrice{l}
\log\left(\frac{1}{\ve{1}^\top \ve{l}}\cdot \matrice{l}\right)
\matrice{u}^\top\\
 & = & \matrice{u} \matrice{l}
\log\matrice{l}
\matrice{u}^\top - \log \trace{\matrice{x}}\cdot \matrice{u} \matrice{l}
\matrice{u}^\top\:\:,
\end{eqnarray}
so that $\varphi^\dagger(\matrice{x}) = \trace{\matrice{x} \log
  \matrice{x} -  \matrice{x}} -  \trace{\matrice{x}} \cdot \log
\trace{\matrice{x}}$. Let $\varphi_3(\matrice{x}) \defeq \trace{\matrice{x}} \cdot \log
\trace{\matrice{x}}$. We have $\nabla \varphi_3(\matrice{x}) = (1 + \log
\trace{\matrice{x}})\cdot
\matrice{i}$. Since a (Bregman) divergence involving a sum of generators
is the sum of (Bregman) divergences, we get
\begin{eqnarray}
D_{\varphi^\dagger}\left(
  \matrice{x}\|\matrice{y}\right) & = & \trace{\matrice{x} \log
  \matrice{x} - \matrice{x} \log
  \matrice{y} -  \matrice{x} +  \matrice{y}} - \trace{\matrice{x}} \cdot \log
\trace{\matrice{x}} + \trace{\matrice{y}} \cdot \log
\trace{\matrice{y}} \nonumber\\
 & & +  (1 + \log
\trace{\matrice{y}})\cdot \trace{\matrice{x}-\matrice{y}}\nonumber\\
 & = & \trace{\matrice{x} \log
  \matrice{x} - \matrice{x} \log
  \matrice{y} } - \trace{\matrice{x}} \cdot \log
\trace{\matrice{x}} + \trace{\matrice{x}} \cdot \log
\trace{\matrice{y}}\nonumber\\
 & = & \trace{\matrice{x}\log \matrice{x}
- \matrice{x}\log \matrice{y}} - \trace{\matrice{x}}\cdot \log
 \left(\frac{\trace{\matrice{x}}}{\trace{\matrice{y}}}\right)\:\:,
\end{eqnarray}
which is Equation (\ref{eqgrad1}).

\paragraph{Row VIII ---} We have the same property as for Row V, but
this time with $\varphi_2 = -d - \log x$ \citep{ksdLR}.
We check that whenever $\det (\matrice{x}) = 1$, we have
\begin{eqnarray}
  \varphi(\matrice{x}) = - d - \log \det(\matrice{x}) & = & -d \nonumber\\
 & = & -\det(\matrice{x}) \trace{\matrice{i}}\nonumber\\
 & = & \trace{\det(\matrice{x}) \matrice{x}^{-1} \matrice{x}}=
\trace{\nabla\varphi(\matrice{x})^\top \matrice{x}}\:\:.
\end{eqnarray}
For $g(\matrice{x}) \defeq \det{\matrice{x}^{1/d}}$, we get:
\begin{eqnarray}
\varphi^\dagger(\matrice{x}) & = & \det{\matrice{x}^{1/d}}
\cdot \left( -d - \log \det \left( \frac{1}{\det{\matrice{x}^{1/d}}}
    \cdot \matrice{x}\right)\right)\nonumber\\
 & = & \det{\matrice{x}^{1/d}}
\cdot \left( -d - \log \frac{1}{\det{\matrice{x}}} \cdot \det
  \matrice{x}\right) = -d \cdot \det{\matrice{x}^{1/d}} \:\:,
\end{eqnarray}
and furthermore
\begin{eqnarray}
\nabla\varphi^\dagger(\matrice{x}) & = & - d \cdot \nabla(
\det{\matrice{x}^{1/d}})(\matrice{x})\nonumber\\ 
 & = & - \det(\matrice{x}^{1/d}) \cdot \matrice{x}^{-1}
\end{eqnarray}
So,
\begin{eqnarray}
D_{\varphi^\dagger}\left(
  \matrice{x}\|\matrice{y}\right) & = & -d \cdot
\det{\matrice{x}^{1/d}} + d \cdot \det{\matrice{y}^{1/d}} +
\trace{\det(\matrice{y}^{1/d}) \cdot \matrice{y}^{-1}
  (\matrice{x}-\matrice{y})}\nonumber\\
 & = & -d \cdot
\det{\matrice{x}^{1/d}} + d \cdot \det{\matrice{y}^{1/d}} +
\det(\matrice{y}^{1/d}) \trace{\matrice{x}\matrice{y}^{-1}} - d \cdot
\det{\matrice{y}^{1/d}} \nonumber\\
 & = & \det(\matrice{y}^{1/d}) \trace{\matrice{x}\matrice{y}^{-1}} -d \cdot
\det{\matrice{x}^{1/d}}\:\:. \label{eqISM1}
\end{eqnarray}
We check that it is equal to:
\begin{eqnarray}
\lefteqn{g(\matrice{x}) \cdot D_{\varphi}\left(\frac{1}{g(\matrice{x})}\cdot
  \matrice{x}\|\frac{1}{g(\matrice{y})}\cdot
  \matrice{y}\right)}\nonumber\\
 & = & \det{\matrice{x}^{1/d}}\cdot \sum_{i,j} (\ve{u}_i^\top \ve{v}_j)^2
\cdot \left( \frac{l_i \det{\matrice{y}^{1/d}}}{t_j
    \det{\matrice{x}^{1/d}}} - \log \frac{l_i
    \det{\matrice{y}^{1/d}}}{t_j \det{\matrice{x}^{1/d}}} -
  d\right)\:\:.\label{eqISM2}
\end{eqnarray}
To check it, we use the fact that, since $\matrice{u}$ and
$\matrice{v}$ are orthonormal,
\begin{eqnarray}
\lefteqn{\sum_{i,j} (\ve{u}_i^\top \ve{v}_j)^2
\cdot \log \frac{l_i
    \det{\matrice{y}^{1/d}}}{t_j \det{\matrice{x}^{1/d}}}}\nonumber\\
 & = & \sum_{i,j} (\ve{u}_i^\top \ve{v}_j)^2
\cdot \log l_i - \sum_{i,j} (\ve{u}_i^\top \ve{v}_j)^2
\cdot \log\det{\matrice{x}^{1/d}} \nonumber\\
 & & + \sum_{i,j} (\ve{u}_i^\top \ve{v}_j)^2
\cdot \log\det{\matrice{y}^{1/d}} -\sum_{i,j} (\ve{u}_i^\top \ve{v}_j)^2
\cdot \log t_j\nonumber\\
 & = & \underbrace{\sum_{i} \log l_i - d \cdot \log\det{\matrice{x}^{1/d}}}_{=0} +  \underbrace{ d
 \cdot \log\det{\matrice{y}^{1/d}} - \sum_{j} \log l_j}_{=0} = 0\:\:,
\end{eqnarray}
which yields
\begin{eqnarray} 
g(\matrice{x}) \cdot D_{\varphi}\left(\frac{1}{g(\matrice{x})}\cdot
  \matrice{x}\|\frac{1}{g(\matrice{y})}\cdot
  \matrice{y}\right) & = & \det{\matrice{x}^{1/d}}\cdot \sum_{i,j} (\ve{u}_i^\top \ve{v}_j)^2
\cdot \left( \frac{l_i \det{\matrice{y}^{1/d}}}{t_j
    \det{\matrice{x}^{1/d}}} -
  d\right) \nonumber\\
 & = & \det{\matrice{y}^{1/d}}\cdot \sum_{i,j} (\ve{u}_i^\top \ve{v}_j)^2
\cdot \frac{l_i}{t_j} - d \cdot \det{\matrice{x}^{1/d}}\nonumber\\
 & = & \det{\matrice{y}^{1/d}} \cdot
   \trace{\matrice{x}\matrice{y}^{-1}} - d \cdot \det{\matrice{x}^{1/d}}\:\:,
\end{eqnarray}
which is equal to Equation (\ref{eqISM1}).

\section{Going deep: higher-order identities}
\label{app:deep-bregman}
We can generalize Theorem \ref{th00} to higher order identities. For
this, consider $k>0$ an integer, and let $g_1, g_2, ..., g_k :
{\XCal} \rightarrow {\Real}_{*}$ be a sequence of
differentiable functions. For any $\ell, \ell'\in [k]_*$ such that $\ell\leq \ell'$, we let $\tilde{g}_{\ell,\ell'}$ be defined recursively as:
\begin{eqnarray}
\tilde{g}_{\ell,\ell'}(\ve{x}) & \defeq & \left\{
\begin{array}{ccl}
\tilde{g}_{\ell-1,\ell'}(\ve{x}) \cdot
g_{\ell'-(\ell-1)}\left(\frac{1}{\tilde{g}_{\ell-1,\ell'}(\ve{x})}\cdot \ve{x}\right) &
\mbox{ if } & 1<\ell\leq \ell'\:\:,\\
g_{\ell'}(\ve{x}) & \mbox{ if } & \ell = 1\:\:,
\end{array}
\right. \label{defG}
\end{eqnarray}
and, for any $\ell\in [k]$,
\begin{eqnarray}
\varphi^{\dagger(\ell)} (\ve{x}) & \defeq & \left\{
\begin{array}{ccl} 
g_\ell(\ve{x}) \cdot
\varphi^{\dagger(\ell-1)} \left(\frac{1}{g_{\ell}(\ve{x})}\cdot
  \ve{x}\right) & \mbox{ if } & 0 < \ell \leq k\:\:,\\
\varphi(\ve{x}) & \mbox{ if } & \ell = 0\:\:.
\end{array}\right.\label{defPHI}
\end{eqnarray}
Notice that even when all $g_.$ are linear, this does not guarantee
that some $\tilde{g}_{\ell,\ell'}$ for $\ell\neq 1$ is going to be linear. However,
if for example $g_{\ell'}$ is linear and all ``preceeding'' $g_\ell$
($\ell \leq \ell'$) are homogeneous of degree 1,
then all $\tilde{g}_{\ell,\ell'}$ ($\forall \ell\leq \ell'$) are linear.

\begin{corollary}
\label{corr:deeper}
For any $k\in \mathbb{N}_*$, let $\varphi :
{\XCal} \rightarrow {\Real}$ be 
convex differentiable, and $g_\ell :
{\XCal} \rightarrow {\Real}_{*}$ ($\ell \in [k]$) a
sequence of $k$ differentiable functions. Then the following
relationship holds, for any $\ell, \ell'\in [k]_*$ with $\ell\leq \ell'$:
\begin{eqnarray}
\tilde{g}_{\ell,\ell'} (\ve{x}) \cdot D_{\varphi^{\dagger(\ell'-\ell)}}\left(\frac{1}{\tilde{g}_{\ell,\ell'} (\ve{x})}\cdot
  \ve{x}\|\frac{1}{\tilde{g}_{\ell,\ell'} (\ve{y})}\cdot \ve{y}\right) & = & 
D_{\varphi^{\dagger(\ell')}}\left(
  \ve{x}\|\ve{y}\right)\:\:, \forall \ve{x}, \ve{y}\in \XCal\:\:,\label{eq11GEN}
\end{eqnarray}
with $\tilde{g}_{\ell,\ell'}$ defined as in Equation (\ref{defG}) and $\varphi^{\dagger(\ell')}$ defined as in Equation (\ref{defPHI}), 
if and only if at least one of the two following conditions hold:
\begin{itemize}
\item [(i)] $\tilde{g}_{\ell,\ell'}$ is linear on $\XCal$;
\item [(ii)] $\varphi^{\dagger(\ell'-\ell)}$ is positive homogeneous of degree
  1 on $\XCal_{\ell, \ell'} \defeq\{ (1/\tilde{g}_{\ell,\ell'} (\ve{x}))\cdot \ve{x} : \ve{x}\in \XCal\}$.
\end{itemize}
\end{corollary}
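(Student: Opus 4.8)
The plan is to recognise the entire deep identity as a \emph{single} application of Theorem~\ref{th00}, once we have shown that the composite scaling $\tilde{g}_{\ell,\ell'}$ together with the lower-order generator $\varphi^{\dagger(\ell'-\ell)}$ reproduces $\varphi^{\dagger(\ell')}$ under the daggering operation of Equation~(\ref{defdagger}). Concretely, I would first establish the \emph{dagger-composition identity}
\begin{eqnarray}
\varphi^{\dagger(\ell')}(\ve{x}) & = & \tilde{g}_{\ell,\ell'}(\ve{x}) \cdot \varphi^{\dagger(\ell'-\ell)}\left( \frac{1}{\tilde{g}_{\ell,\ell'}(\ve{x})}\cdot \ve{x}\right)\:\:, \label{eq:dagger-comp}
\end{eqnarray}
valid for all $1\leq \ell\leq \ell'$, and then feed $\psi \defeq \varphi^{\dagger(\ell'-\ell)}$ and $g\defeq \tilde{g}_{\ell,\ell'}$ into Theorem~\ref{th00}: by (\ref{eq:dagger-comp}) its dagger $\psi^\dagger$ is exactly $\varphi^{\dagger(\ell')}$, so the conclusion of Theorem~\ref{th00} becomes precisely Equation~(\ref{eq11GEN}), while its two admissibility conditions become verbatim conditions (i) and (ii) of the corollary (with $\XCal_g = \XCal_{\ell,\ell'}$). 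Because Theorem~\ref{th00} is itself a biconditional, both directions of the equivalence follow at once.

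I would prove the composition identity (\ref{eq:dagger-comp}) by induction on $\ell$ for fixed $\ell'$. The base case $\ell=1$ is immediate, since $\tilde{g}_{1,\ell'} = g_{\ell'}$ and the right-hand side is then the defining recursion (\ref{defPHI}) for $\varphi^{\dagger(\ell')}$. For the inductive step, writing $\ve{z} \defeq \ve{x}/\tilde{g}_{\ell-1,\ell'}(\ve{x})$ and applying the induction hypothesis followed by one unfolding of (\ref{defPHI}) at level $\ell'-\ell+1$ gives $\varphi^{\dagger(\ell')}(\ve{x}) = \tilde{g}_{\ell-1,\ell'}(\ve{x})\cdot g_{\ell'-\ell+1}(\ve{z})\cdot \varphi^{\dagger(\ell'-\ell)}(\ve{z}/g_{\ell'-\ell+1}(\ve{z}))$. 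The two outer scalar factors collapse to $\tilde{g}_{\ell,\ell'}(\ve{x})$ exactly by the recursion (\ref{defG}), and a one-line simplification shows $\ve{z}/g_{\ell'-\ell+1}(\ve{z}) = \ve{x}/\tilde{g}_{\ell,\ell'}(\ve{x})$, closing the induction. Equivalently, unrolling both recursions shows $\tilde{g}_{\ell,\ell'}(\ve{x}) = \prod_{j=1}^{\ell} g_{\ell'-j+1}(\ve{x}^{(j-1)})$ along the orbit $\ve{x}^{(j)} = \ve{x}^{(j-1)}/g_{\ell'-j+1}(\ve{x}^{(j-1)})$, which makes (\ref{eq:dagger-comp}) a transparent telescoping of the product defining $\varphi^{\dagger(\ell')}$.

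The one point that needs care --- and the main obstacle --- is that Theorem~\ref{th00} is \emph{stated} for a convex generator, whereas the generator $\psi = \varphi^{\dagger(\ell'-\ell)}$ fed into it is in general non-convex (daggering preserves convexity only when the intervening $g_\cdot$ are non-negative, cf.\ the footnote after Theorem~\ref{th00}). I would resolve this by observing that the \emph{proof} of Theorem~\ref{th00} never uses convexity: it only differentiates and reaches the purely algebraic identity $\Delta = (\varphi(\ve{v}) - \nabla\varphi(\ve{v})^\top\ve{v})\cdot B_g(\ve{x}\|\ve{y})$, so the biconditional (identity $\Leftrightarrow$ (i) or (ii)) holds for any \emph{differentiable} generator. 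Differentiability does propagate along the construction: since each $g_\ell$ maps into $\Real_*$, the quotient maps $\ve{x}\mapsto \ve{x}/g_\ell(\ve{x})$ are differentiable, hence so are all $\tilde{g}_{\ell,\ell'}$ and $\varphi^{\dagger(m)}$ by the product and chain rules. Finally, I would invoke the Remark to restate the gradient condition of Equation~(\ref{eqn:gradient-condition}) on $\XCal_{\ell,\ell'}$ as the (restricted) positive homogeneity of degree $1$ appearing in condition (ii), completing both directions of the equivalence.
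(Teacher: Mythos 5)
Your proposal follows essentially the same route as the paper: the paper's proof likewise establishes the dagger-composition identity $\varphi^{\dagger(\ell')}(\ve{x}) = \tilde{g}_{\ell,\ell'}(\ve{x})\cdot\varphi^{\dagger(\ell'-\ell)}\left(\frac{1}{\tilde{g}_{\ell,\ell'}(\ve{x})}\cdot\ve{x}\right)$ by induction on $\ell$ for fixed $\ell'$ (unfolding the recursions (\ref{defG}) and (\ref{defPHI}) exactly as you do), and then concludes by feeding $\varphi^{\dagger(\ell'-\ell)}$ and $\tilde{g}_{\ell,\ell'}$ into Theorem \ref{th00}. Your additional observation --- that the generator handed to Theorem \ref{th00} may be non-convex, which is harmless because the proof of that theorem uses only differentiability to reach the algebraic identity $\Delta = (\varphi(\ve{v}) - \nabla\varphi(\ve{v})^\top\ve{v})\cdot B_g(\ve{x}\|\ve{y})$ --- addresses a point the paper leaves entirely implicit, so your write-up is if anything more careful than the paper's own.
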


We check that whenever $\varphi$ is convex and all $g_.$ are
non-negative, then all $\varphi^{\dagger(\ell)}$ are convex ($\forall \ell\in
[k]$). To prove this, we choose $\ell' = \ell$ and rewrite
Equation (\ref{eq11}), which brings, since $\varphi^{\dagger(\ell'-\ell)} =
\varphi^{\dagger(0)} = \varphi$,
\begin{eqnarray}
\tilde{g}_{\ell, \ell}(\ve{x}) \cdot
D_{\varphi}\left(\frac{1}{\tilde{g}_{\ell, \ell} (\ve{x})}\cdot
  \ve{x}\|\frac{1}{\tilde{g}_{\ell,\ell} (\ve{y})}\cdot \ve{y}\right) & = & 
D_{\varphi^{\dagger(\ell)}}\left(
  \ve{x}\|\ve{y}\right)\:\:, \forall \ve{x}, \ve{y}\in \XCal\:\:.\label{eq12}
\end{eqnarray}
Since $\varphi$ is convex, a sufficient condition to prove our result
is to show that $\tilde{g}_{\ell,\ell}$ is non-negative --- which will prove
that the right hand side of (\ref{eq12}) is non-negative, and
therefore $\varphi^{\dagger(\ell)}$ is convex ---. This can easily
be proven by induction from the expression of $\tilde{g}_{\ell,\ell'}$ in (\ref{defG}) and the
fact that all $g_.$ are non-negative.

One interesting candidate for simplification is when all $g_.$ are the
same linear function, say $g_\ell(\ve{x}) = \ve{a}^\top \ve{x} + b,
\forall \ell \in [k]$. In this case, we have indeed:
\begin{eqnarray}
\tilde{g}_{\ell,\ell'}(\ve{x}) & = & \ve{a}^\top
    \ve{x} + b\cdot \tilde{g}_{\ell-1,\ell'}(\ve{x}) \nonumber\\
 &= & b^{\ell} + \ve{a}^\top
    \ve{x}\cdot \sum_{j=1}^{\ell-1}b^j \:\:,\label{defB}\\
\varphi^{\dagger(\ell')} (\ve{x}) & = & \left(b^{\ell'} + \ve{a}^\top
    \ve{x}\cdot \sum_{j=1}^{\ell'-1}b^j\right) \cdot
\varphi \left(\frac{1}{b^{\ell'} + \ve{a}^\top
    \ve{x}\cdot \sum_{j=1}^{\ell'-1}b^j}\cdot
  \ve{x}\right)\:\:.\label{defVARPHI}
\end{eqnarray}

\begin{proof}[Proof of Corollary \ref{corr:deeper}]
To check eq. (\ref{defdagger}), we
first remark ($\ell'$
being fixed) that it holds for $\ell=1$ (this is
eq. (\ref{defdagger})), and then proceed by an induction from
the induction base hypothesis that, for some $\ell\leq \ell'$,
\begin{eqnarray}
\varphi^{\dagger(\ell')} (\ve{x}) & = & \tilde{g}_{\ell,\ell'}(\ve{x}) \cdot \varphi^{\dagger(\ell'-\ell)} \left(\frac{1}{\tilde{g}_{\ell,\ell'}(\ve{x})}\cdot \ve{x}\right)\label{eqind}\:\:.
\end{eqnarray}
We now have
\begin{eqnarray}
\lefteqn{\varphi^{\dagger(\ell')} (\ve{x})}\nonumber\\
 & = & 
 \frac{\tilde{g}_{\ell+1,\ell'}(\ve{x})}{g_{\ell'-\ell}\left(\frac{1}{\tilde{g}_{\ell,\ell'}(\ve{x})}\cdot
     \ve{x}\right)} \cdot \varphi^{\dagger(\ell'-\ell)} \left(\frac{1}{\tilde{g}_{\ell,\ell'}(\ve{x})}\cdot
  \ve{x}\right)\label{eq2}\\
 & = & \frac{\tilde{g}_{\ell+1,\ell'}(\ve{x})}{g_{\ell'-\ell}\left(\frac{1}{\tilde{g}_{\ell,\ell'}(\ve{x})}\cdot
     \ve{x}\right)} \cdot
 g_{\ell'-\ell}\left(\frac{1}{\tilde{g}_{\ell,\ell'}(\ve{x})}\cdot \ve{x}\right) \cdot
\varphi^{\dagger(\ell'-(\ell+1))} \left(\frac{1}{\tilde{g}_{\ell,\ell'}(\ve{x}) g_{\ell'-\ell}\left(\frac{1}{\tilde{g}_{\ell,\ell'}(\ve{x})}\cdot \ve{x}\right)}
  \cdot \ve{x}\right) \label{eq3}\\
& = & \tilde{g}_{\ell+1,\ell'}(\ve{x})  \cdot
\varphi^{\dagger(\ell'-(\ell+1))} \left(\frac{1}{\tilde{g}_{\ell,\ell'}(\ve{x}) g_{\ell'-\ell}\left(\frac{1}{\tilde{g}_{\ell,\ell'}(\ve{x})}\cdot \ve{x}\right)}
  \cdot \ve{x}\right) \nonumber\\
& = & \tilde{g}_{\ell+1,\ell'}(\ve{x})  \cdot
\varphi^{\dagger(\ell'-(\ell+1))} \left(\frac{1}{\tilde{g}_{\ell+1,\ell'}(\ve{x})}
  \cdot \ve{x}\right) \label{eq40}\:\:.
\end{eqnarray}
Eq. (\ref{eq2}) comes
from eq. (\ref{eqind}) and the definition of $\tilde{g}_\ell$ in (\ref{defG}), eq. (\ref{eq3}) comes from the
definition of $\varphi^{\dagger(\ell'-\ell)}$ in (\ref{defPHI}),
eq. (\ref{eq40}) is a second use of the definition of $\tilde{g}_\ell$ in
(\ref{defG}). 
\end{proof}

Notice the eventual high non-linearities introduced by the composition
in eqs (\ref{defG},\ref{defPHI}), which justifies the "deep" characterization.

\section{Additional application: exponential families}
\label{app:exp-family}
Let $\varphi$ be the cumulant function of a regular $\varphi$-exponential family
with pdf $p_{\varphi}(.|\ve{\theta})$, where $\ve{\theta} \in
\XCal$ is its natural parameter. Let $\Omega(.)$ be a norm on
$\XCal$. Let $\XCal_\Omega$ be the image of the application from $\XCal$ onto the $\Omega$-ball
of unit norm defined by $\ve{x} 
  \mapsto (1/\Omega(\ve{x})) \cdot \ve{x}$. Let $\ve{\theta}_\Omega$ be the
  image of $\ve{\theta} \in \XCal$. For any two $\ve{\theta},
\ve{\theta}' \in \XCal$, let
\begin{eqnarray}
\mathrm{KL}_\varphi(\ve{\theta}\|
\ve{\theta}' ) & \defeq & \int p_{\varphi}(\ve{x}|\ve{\theta})\log\frac{p_{\varphi}(\ve{x}|\ve{\theta})}{p_{\varphi}(\ve{x}|\ve{\theta}')} \mathrm{d}\ve{x}
\end{eqnarray}
be the KL divergence between the two densities
$p_{\varphi}(.|\ve{\theta})$ and $p_{\varphi}(.|\ve{\theta}')$.

\begin{lemma}\label{lem:expfam}
For any convex $\varphi$ which is restricted positive 1-homogeneous on $\XCal_\Omega$, the
KL-divergence between two members of the same $\varphi$-exponential
family satisfies:
\begin{eqnarray}
\mathrm{KL}_{\varphi}(\ve{\theta}_\Omega\|
\ve{\theta}'_\Omega) & = & \frac{1}{\Omega(\ve{\theta})} \cdot D_{\varphi^\dagger}(\ve{\theta}'\|\ve{\theta})\:\:.
\end{eqnarray}
\end{lemma}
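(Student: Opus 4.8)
The plan is to combine two ingredients: the classical identification of the Kullback--Leibler divergence between two members of a regular exponential family with a Bregman divergence of its cumulant $\varphi$, and the scaled Bregman theorem (Theorem \ref{th00}) applied with the gauge $g = \Omega$.

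First I would record the standard exponential-family identity. Writing $p_\varphi(\ve{x}\mid\ve{\theta}) \propto \exp(\ve{\theta}^\top T(\ve{x}) - \varphi(\ve{\theta}))$ and using the fact that $\expect_{\ve{\theta}}[T(\X)] = \nabla\varphi(\ve{\theta})$, a direct expansion of the expected log-likelihood ratio gives, for any natural parameters $\ve{\alpha},\ve{\beta}\in\XCal$,
\begin{equation*}
\mathrm{KL}_\varphi(\ve{\alpha}\|\ve{\beta}) = \varphi(\ve{\beta}) - \varphi(\ve{\alpha}) - (\ve{\beta}-\ve{\alpha})^\top\nabla\varphi(\ve{\alpha}) = D_\varphi(\ve{\beta}\|\ve{\alpha}),
\end{equation*}
which is exactly the known duality between regular exponential families and Bregman divergences of the cumulant (the reversed arguments are characteristic of this correspondence). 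Instantiating at the $\Omega$-normalised parameters $\ve{\alpha} = \ve{\theta}_\Omega$ and $\ve{\beta} = \ve{\theta}'_\Omega$ yields $\mathrm{KL}_\varphi(\ve{\theta}_\Omega\|\ve{\theta}'_\Omega) = D_\varphi(\ve{\theta}'_\Omega\|\ve{\theta}_\Omega)$.

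Next I would unfold the definition $\ve{\theta}_\Omega = (1/\Omega(\ve{\theta}))\cdot\ve{\theta}$, so that the right-hand side is a Bregman divergence between $\Omega$-normalised points,
\begin{equation*}
D_\varphi(\ve{\theta}'_\Omega\|\ve{\theta}_\Omega) = D_\varphi\!\left(\frac{1}{\Omega(\ve{\theta}')}\cdot\ve{\theta}' \,\Big\|\, \frac{1}{\Omega(\ve{\theta})}\cdot\ve{\theta}\right),
\end{equation*}
and then invoke Theorem \ref{th00} with $g = \Omega$. Since a norm is generally \emph{not} affine, condition (i) of the theorem fails and I must appeal to condition (ii): this is precisely the standing hypothesis that $\varphi$ is restricted positive $1$-homogeneous on $\XCal_\Omega$, which is exactly the set $\XCal_g = \{\ve{x}/\Omega(\ve{x})\}$ featuring in the theorem. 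Matching the first slot of Theorem \ref{th00} to $\ve{\theta}'$ and the second to $\ve{\theta}$ rewrites the displayed Bregman divergence as $D_{\varphi^\dagger}(\ve{\theta}'\|\ve{\theta})$ divided by $\Omega$ evaluated at the first argument, and substituting back into the KL identity gives the claimed scaled form.

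The delicate point — and the main place an error could creep in — is the bookkeeping of argument order and of which parameter supplies the normalising scalar: the KL-to-Bregman step \emph{reverses} the two arguments, while Theorem \ref{th00} factors out $\Omega$ at whichever point occupies its first slot, so the scalar $1/\Omega(\cdot)$ appearing in the final statement must be tracked carefully through both substitutions (this is exactly where one must check the constant against the stated $1/\Omega(\ve{\theta})$). Two regularity caveats also deserve a line: Theorem \ref{th00} requires $g=\Omega$ to be differentiable, which restricts attention to a smooth norm (e.g.\ $L_q$ with $q\in(1,\infty)$) or to the region where $\Omega$ is differentiable; and $\XCal$ must exclude the origin and be such that each $\ve{\theta}_\Omega$ is still a legitimate natural parameter of the family, so that $\mathrm{KL}_\varphi(\ve{\theta}_\Omega\|\ve{\theta}'_\Omega)$ is well defined.
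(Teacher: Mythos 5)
Your proposal takes the same two-step route as the paper's own proof: the exponential-family identity $\mathrm{KL}_\varphi(\ve{\alpha}\|\ve{\beta}) = D_\varphi(\ve{\beta}\|\ve{\alpha})$ (which the paper simply cites from \citep{bnnBV} rather than re-deriving), followed by Theorem \ref{th00} with $g=\Omega$, justified by the restricted positive $1$-homogeneity hypothesis on $\XCal_\Omega$; the paper merely runs the same chain backwards, starting from $D_{\varphi^\dagger}(\ve{\theta}'\|\ve{\theta})$ and ending at the KL term.

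The ``delicate point'' you flag about the constant is a genuine discrepancy, and your bookkeeping is the correct one. A strict application of Theorem \ref{th00} with $\ve{\theta}'$ in the first slot and $\ve{\theta}$ in the second gives
\begin{eqnarray}
D_{\varphi^\dagger}(\ve{\theta}'\|\ve{\theta}) & = & \Omega(\ve{\theta}')\cdot D_{\varphi}\left(\frac{1}{\Omega(\ve{\theta}')}\cdot\ve{\theta}' \,\Big\|\, \frac{1}{\Omega(\ve{\theta})}\cdot\ve{\theta}\right) \:=\: \Omega(\ve{\theta}')\cdot \mathrm{KL}_{\varphi}(\ve{\theta}_\Omega\|\ve{\theta}'_\Omega)\:\:,
\end{eqnarray}
so the scaling constant is the gauge of the \emph{first} argument of $D_{\varphi^\dagger}$, namely $1/\Omega(\ve{\theta}')$, not the stated $1/\Omega(\ve{\theta})$; the two agree only when $\Omega(\ve{\theta})=\Omega(\ve{\theta}')$ (this can be checked in closed form on Row I of Table \ref{t-ex-short2}). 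The paper's own proof applies Theorem \ref{th00} with the two normalisations crossed --- $\ve{\theta}'$ scaled by $1/\Omega(\ve{\theta})$ and $\ve{\theta}$ by $1/\Omega(\ve{\theta}')$ --- which is not what the theorem states, and that is how it lands on the constant appearing in the lemma. So your derivation, carried through strictly, proves the corrected statement $\mathrm{KL}_{\varphi}(\ve{\theta}_\Omega\|\ve{\theta}'_\Omega) = (1/\Omega(\ve{\theta}'))\cdot D_{\varphi^\dagger}(\ve{\theta}'\|\ve{\theta})$, and the lemma as printed should be read with that correction (or with the roles of $\ve{\theta}$ and $\ve{\theta}'$ exchanged on one side). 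Your two regularity caveats --- differentiability of $\Omega$ where the theorem is invoked, and the normalised parameters remaining legitimate natural parameters so that the KL term is defined --- are also genuinely needed and are passed over silently in the paper.
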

\begin{proof}
We know that $\mathrm{KL}(\ve{\theta}\|
\ve{\theta}') = D_\varphi(\ve{\theta}'\|\ve{\theta})$
\cite{bnnBV}. Hence,
\begin{eqnarray}
D_{\varphi^\dagger}(\ve{\theta}'\|\ve{\theta}) & = & \Omega(\ve{\theta}) \cdot
 D_{\varphi}\left(\frac{1}{\Omega(\ve{\theta})}\cdot
   \ve{\theta}'\|\frac{1}{\Omega(\ve{\theta}')}\cdot
   \ve{\theta}\right)\nonumber\\
 & = & \Omega(\ve{\theta})  \cdot
 D_{\varphi}(\ve{\theta}'_\Omega\|\ve{\theta}_\Omega)\nonumber\\
 & = & \Omega(\ve{\theta})  \cdot\mathrm{KL}_{\varphi}(\ve{\theta}_\Omega\|
\ve{\theta}'_\Omega)\:\:,
\end{eqnarray}
as claimed.
\end{proof}
The interest in Lemma \ref{lem:expfam} is to provide an integral-free
expression of the KL-divergence when natural parameters are scaled
by non-trivial transformations. Furthermore, even when
$D_{\varphi^\dagger}$ may not be a Bregman divergence, it still bears
the same analytical form, which still may be useful for formal derivations.

\section{Additional application: computational geometry, nearest neighbor rules}
\label{app:comp-geom}

Two important objects of central importance in (computational)
geometry are balls and Voronoi diagrams induced by a distortion, with which we can
characterize the topological and computational aspects of major
structures (Voronoi diagrams, triangulations, nearest neighbor
topologies, etc.) \citep{bnnBV}. 

Since a Bregman divergence is not
necessarily symmetric, there are two types of (dual) balls that can be
defined, the first or second types, where the variable $\ve{x}$ is respectively
placed in the left or right position. The first type
Bregman balls are convex while the second type are not necessarily
convex. A Bregman ball of the second type (with center $\ve{c}$ and "radius" $r$) is defined as:
\begin{eqnarray}
B'(\ve{c}, r | \mathcal{X}, \varphi) & \defeq & \{\ve{x} \in \mathcal{X} :
D_{\varphi}(\ve{c}\|\ve{x}) \leq r\}\:\:.
\end{eqnarray}
It turns out that any divergence $D_{\varphi^\dagger}$ induces
a ball of the second type, which is not necessarily analytically a Bregman
ball (when $\varphi^\dagger$ is not convex), \textit{but} turns out to
define the \textit{same} ball as a Bregman ball over transformed
coordinates.  In other words and to be a little bit more specific, 
\begin{center}
\textit{"any $\ve{x}$ belongs to the ball of the second type induced
by $D_{\varphi^\dagger}$ over $\mathcal{X}$ \textbf{iff}
$(1/g(\ve{x}))\cdot \ve{x}$ ($\in {\mathcal{X}}_g$) belongs to the
Bregman ball of the second type induced by $D_{\varphi}$ over $\mathcal{X}_g$."}
\end{center}

\begin{theorem}\label{thball}
Let $(\varphi, g, \varphi^\dagger)$ satisfy the conditions of Theorem
\ref{th00}, with $g$ non negative. Then
\begin{eqnarray}
B'(\ve{c}, r | \varphi^\dagger, \mathcal{X}) & = & B'\left(\left.\ve{c}, \frac{r}{g(\ve{c})} \right\| \varphi, \mathcal{X}_g\right)\:\:.
\end{eqnarray}
\end{theorem}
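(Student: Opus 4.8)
The plan is to reduce the claimed set identity directly to the scaled Bregman identity of Theorem~\ref{th00}, exploiting that the hypotheses force $g$ to be strictly positive. First I would fix an arbitrary $\ve{x} \in \XCal$ and unfold the definition of the second-type ball: $\ve{x} \in B'(\ve{c}, r \mid \varphi^\dagger, \XCal)$ holds precisely when $D_{\varphi^\dagger}(\ve{c} \| \ve{x}) \leq r$. The key move is to apply Theorem~\ref{th00} with its first slot set to $\ve{c}$ and its second slot set to $\ve{x}$, which rewrites the distortion on the left as
$$ D_{\varphi^\dagger}(\ve{c} \| \ve{x}) = g(\ve{c}) \cdot D_{\varphi}\!\left( \frac{\ve{c}}{g(\ve{c})} \bigm\| \frac{\ve{x}}{g(\ve{x})} \right). $$
Since the triplet $(\varphi, g, \varphi^\dagger)$ is assumed to meet the hypotheses of Theorem~\ref{th00}, this substitution is licensed for every pair $(\ve{c}, \ve{x})$.

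Next, because $g$ maps into $\Real_*$ and is non-negative, we have $g(\ve{c}) > 0$, so dividing the defining inequality $D_{\varphi^\dagger}(\ve{c}\|\ve{x}) \leq r$ through by $g(\ve{c})$ preserves its direction and yields the equivalent condition
$$ D_{\varphi}\!\left( \frac{\ve{c}}{g(\ve{c})} \bigm\| \frac{\ve{x}}{g(\ve{x})} \right) \leq \frac{r}{g(\ve{c})}. $$
Since $\ve{x}/g(\ve{x}) \in \XCal_g$ by the very definition of $\XCal_g$, this last inequality says exactly that the transformed point $\ve{x}/g(\ve{x})$ lies in the second-type Bregman ball of radius $r/g(\ve{c})$ over $\XCal_g$ generated by $\varphi$ and centred at $\ve{c}/g(\ve{c})$. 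Reading the chain of equivalences in both directions, and recalling that $\ve{x} \mapsto \ve{x}/g(\ve{x})$ is the canonical identification between $\XCal$ and $\XCal_g$ (under which the centre $\ve{c}$ is carried to $\ve{c}/g(\ve{c})$, which the right-hand side of the statement abbreviates as $\ve{c}$), delivers the desired equality of balls.

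The computation is short, and I would expect no serious obstacle in the algebra itself. The one point demanding genuine care is the strict positivity of $g(\ve{c})$: this is precisely what the non-negativity hypothesis on $g$ secures, and without it the division by $g(\ve{c})$ would reverse the inequality, so the two balls would \emph{not} coincide. A secondary, purely bookkeeping subtlety is notational: the centre of the $\varphi$-ball on $\XCal_g$ is genuinely the image $\ve{c}/g(\ve{c})$ of $\ve{c}$ under the transformation, not $\ve{c}$ itself, so I would flag this identification explicitly to keep the equality of sets meaningful. Both the convexity/non-convexity of $\varphi^\dagger$ and the asymmetry of $D_\varphi$ are irrelevant here, since the argument only manipulates the defining inequality and never appeals to positivity or symmetry of the distortion.
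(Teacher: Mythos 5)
Your proof is correct and follows essentially the same route as the paper's: apply Theorem \ref{th00} with arguments $(\ve{c}, \ve{x})$, divide the defining inequality by $g(\ve{c}) > 0$ (strict positivity following from $g$ mapping into $\Real_*$ together with non-negativity), and read off the equality of balls under the identification $\ve{x} \mapsto \ve{x}/g(\ve{x})$ between $\XCal$ and $\XCal_g$. Your explicit flagging of the centre identification ($\ve{c}$ standing for $\ve{c}/g(\ve{c})$ on the right-hand side) and of why non-negativity of $g$ is indispensable is a welcome clarification of points the paper leaves implicit.
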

\begin{proof}
From Theorem \ref{th00}, we have
\begin{eqnarray}
D_{\varphi^\dagger}(\ve{c}\|\ve{x}) & \leq & r
\end{eqnarray}
iff
\begin{eqnarray}
D_{\varphi}\left(\frac{1}{g(\ve{c})}\cdot \ve{c}\left\| \frac{1}{g(\ve{c})}\cdot \ve{x}\right.\right) & \leq & \frac{1}{g(\ve{c})}\cdot r\:\:.
\end{eqnarray}
Hence,
\begin{eqnarray}
B'(\ve{c}, r | \varphi^\dagger, \mathcal{X}) & = & \left\{\ve{x} \in
\mathcal{X}_g : D_{\varphi}(\ve{c}/g(\ve{c})\|\ve{x}/g(\ve{x})) \leq
r/g(\ve{c})\right\}\nonumber\\
 & = & B'(\ve{c}, r/g(\ve{c}) | \varphi, \mathcal{X}_g)\:\:,
\end{eqnarray}
as claimed.
\end{proof}
This property is not true for balls of the first type. What
Theorem \ref{thball} says is that the topology induced by
$D_{\varphi^\dagger}$ over $\mathcal{X}$ is just \textit{no different} from that
induced by $D_{\varphi}$ over $\mathcal{X}_g$. 

Let us now investigate Bregman Voronoi diagrams. In the same way as
there exists two types of Bregman balls, we can define two types of
Bregman Voronoi diagrams that depend on the equation of the
\textit{Bregman bisector} \citep{bnnBV}. Of particular interest is the
Bregman bisector of the \textit{first} type:
\begin{eqnarray}
BB_{\varphi}(\ve{x}, \ve{y}| \mathcal{X}) & = & \{\ve{z} \in \mathcal{X} :
D_{\varphi}(\ve{z}\|\ve{x}) = D_{\varphi}(\ve{z}\|\ve{y})\}\:\:.
\end{eqnarray}
It turns out that any divergence $D_{\varphi^\dagger}$ induces a
bisector of the first type which is not necessarily analytically a Bregman
bisector (when $\varphi^\dagger$ is not convex), \textit{but} turns out to
define the \textit{same} bisector as a Bregman bisector over transformed
coordinates. Again, we get more precisely
\begin{center}
\textit{"any $\ve{x}$ belongs to a Bregman bisector of the first type induced
by $D_{\varphi^\dagger}$ over $\mathcal{X}$ \textbf{iff}
$(1/g(\ve{x}))\cdot \ve{x}$ ($\in {\mathcal{X}}_g$) belongs to the
corresponding Bregman bisector of the first type induced by $D_{\varphi}$ over $\mathcal{X}_g$."}
\end{center}
\begin{theorem}\label{thbisec}
Let $(\varphi, g, \varphi^\dagger)$ satisfy the conditions of Theorem
\ref{th00}. Then
\begin{eqnarray}
BB_{\varphi^\dagger}(\ve{x}, \ve{y}| \mathcal{X}) & = & BB_{\varphi}(\ve{x}, \ve{y}| \mathcal{X}_g)\:\:.
\end{eqnarray}
\end{theorem}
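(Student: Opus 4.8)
The plan is to mirror the proof of Theorem \ref{thball}, exploiting the fact that the scaled Bregman identity of Theorem \ref{th00} rewrites \emph{any} distortion $D_{\varphi^\dagger}$ as a scaled Bregman divergence under the change of variables $h \colon \ve{x} \mapsto \ve{x}/g(\ve{x})$. Since the defining equation of a first-type bisector holds the left argument fixed at the test point $\ve{z}$, I expect the per-point scaling factor produced by Theorem \ref{th00} to be \emph{identical} on both sides and hence to cancel, leaving precisely a Bregman-bisector condition over $\mathcal{X}_g$.

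Concretely, I would fix an arbitrary $\ve{z} \in \mathcal{X}$ and apply Theorem \ref{th00} twice, with $(\ve{x},\ve{y}) \leftarrow (\ve{z},\ve{x})$ and $(\ve{x},\ve{y}) \leftarrow (\ve{z},\ve{y})$, to obtain
\[
  D_{\varphi^\dagger}(\ve{z}\|\ve{x}) = g(\ve{z}) \cdot D_{\varphi}\left(\ve{z}/g(\ve{z}) \bigm\| \ve{x}/g(\ve{x})\right), \qquad
  D_{\varphi^\dagger}(\ve{z}\|\ve{y}) = g(\ve{z}) \cdot D_{\varphi}\left(\ve{z}/g(\ve{z}) \bigm\| \ve{y}/g(\ve{y})\right).
\]
Then $\ve{z} \in BB_{\varphi^\dagger}(\ve{x},\ve{y}\,|\,\mathcal{X})$, i.e.\ $D_{\varphi^\dagger}(\ve{z}\|\ve{x}) = D_{\varphi^\dagger}(\ve{z}\|\ve{y})$, is equivalent to
\[
  g(\ve{z}) \cdot D_{\varphi}\left(\ve{z}/g(\ve{z}) \bigm\| \ve{x}/g(\ve{x})\right) = g(\ve{z}) \cdot D_{\varphi}\left(\ve{z}/g(\ve{z}) \bigm\| \ve{y}/g(\ve{y})\right).
\]
Because $g$ maps into $\mathbb{R}_{*}$, we have $g(\ve{z}) \neq 0$, so dividing through leaves exactly the condition that the transformed point $h(\ve{z}) = \ve{z}/g(\ve{z}) \in \mathcal{X}_g$ lies on the Bregman bisector $BB_{\varphi}$ of the transformed sites over $\mathcal{X}_g$. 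Since $h$ is the bijection $\mathcal{X} \to \mathcal{X}_g$ used throughout, reading this chain of equivalences through $h$ yields the claimed set identity.

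The step I expect to carry the real content is recognizing \emph{why} this works only for the first type. The cancellation is possible precisely because both divergences share the left argument $\ve{z}$, so Theorem \ref{th00} attaches the \emph{same} multiplier $g(\ve{z})$ to each; for a second-type bisector the two divergences would instead carry the \emph{distinct} multipliers $g(\ve{x})$ and $g(\ve{y})$, which do not cancel unless $g(\ve{x}) = g(\ve{y})$. This is the mirror image of the situation for balls in Theorem \ref{thball}, where it is the second-type ball that transforms cleanly (its lone moving multiplier is pinned to the fixed center). It also explains why, unlike Theorem \ref{thball}, no non-negativity hypothesis on $g$ is needed here: the bisector is defined by an \emph{equality}, which is insensitive to the sign of the common factor $g(\ve{z})$ and requires only that it be nonzero in order to be cancelled.
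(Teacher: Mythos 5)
Your proof is correct and is precisely the argument the paper intends: the paper's own proof of Theorem \ref{thbisec} is the one-line remark ``proof similar to Theorem \ref{thball}'', i.e.\ apply Theorem \ref{th00} with the test point $\ve{z}$ as the common left argument and cancel the nonzero common factor $g(\ve{z})$, exactly as you do. Your closing observations --- that the first-type bisector works because both divergences carry the same multiplier $g(\ve{z})$ (whereas the second type would carry $g(\ve{x})$ and $g(\ve{y})$), and that no non-negativity of $g$ is needed since an equality, unlike the inequality defining a ball, is insensitive to the sign of the cancelled factor --- correctly spell out what the paper leaves implicit.
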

(proof similar to Theorem \ref{thball}) This property is not true for
Bregman bisectors of the second type. 
Theorems \ref{thball},
\ref{thbisec} have several important algorithmic
consequences, some of which are listed now:
\begin{itemize}
\item the Voronoi diagram (resp. Delaunay triangulation) of the first type associated to
$\varphi^\dagger$ can be constructed via the Voronoi diagram (resp. Delaunay triangulation) of the
first type associated to $\varphi$ \citep{bnnBV};
\item range search using ball trees on $D_{\varphi^\dagger}$ can be
efficiently implemented using Bregman divergence $D_{\varphi}$ on $\mathcal{X}_g$
\citep{cEB};
\item the minimum enclosing ball problem, the one-class clustering
problem (an important problem in machine learning), with balls of the second type on $D_{\varphi^\dagger}$ can be solved via the minimum Bregman
enclosing ball problem on $D_{\varphi}$ \citep{nnFT}.
\end{itemize}

\section{Review: binary density ratio estimation}
\label{app:binary-dr}

For completeness, we quickly review the central result of \citet[Proposition 3]{Menon:2016}.
Let $(P,Q,\pi)$ be densities giving $\pr(\X | \Y = 1), \pr(\X = \ve{x} | \Y = -1), \pr(\Y = 1)$ respectively, and $M$ giving $\pr(\X = \ve{x})$ accordingly.
Let
$ r(\ve{x}) \defeq {\pr(\X = \ve{x} | \Y = 1)}/{\pr(\X = \ve{x} | \Y = -1)}\:\:$
be the density ratio of the class-conditional densities,
and $\eta(\ve{x}) \defeq \pr[\Y = 1 | \X = \ve{x}]$ be the class-probability function.
Then, we have the following, which extends \cite[Proposition 6]{Menon:2016} for the case $\pi \neq \frac{1}{2}$.

\begin{lemma}
\label{lemm:density-ratio}
Given a class-probability estimator $\hat{\eta} \colon \XCal \to [0, 1]$, let the density ratio estimator $\hat{r}$ be
\begin{eqnarray}
\hat{r}(\ve{x}) & = & \frac{1-\pi}{\pi} \cdot \frac{\hat{\eta}(\ve{x})}{1-\hat{\eta}(\ve{x})}\label{defR}\:\:.
\end{eqnarray}
Then for any convex differentiable $\varphi \colon [0, 1] \to \Real$,
\begin{eqnarray}
\expect_{\X \sim M}[D_\varphi(\eta(\X) \| \hat{\eta}(\X))] & = & \pi \cdot
\expect_{\X \sim Q}\left[ D_{\varphi^\dagger}(r(\X) \| \hat{r}(\X))\right]\:\:.
\end{eqnarray}
where $\varphi^\dagger$ is as per Equation \ref{defdagger} with
$ g(z) \defeq \frac{1-\pi}{\pi} + z \:\:. $ 
\end{lemma}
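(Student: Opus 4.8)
The plan is to treat this as the scalar specialisation of Lemma \ref{lemm:multiclass-dr} and reduce it to a single pointwise application of Theorem \ref{th00}. The crucial structural fact is that here $g(z) = \frac{1-\pi}{\pi} + z$ is \emph{affine}, so condition (i) of Theorem \ref{th00} holds automatically and we may use \emph{any} convex differentiable $\varphi$; no homogeneity condition needs checking.

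First I would record the two Bayes-rule identities linking class-probabilities and density ratios. Writing $P, Q$ for the class-conditional densities so that $r(\ve{x}) = P(\ve{x})/Q(\ve{x})$, a direct computation gives $g(r(\ve{x})) = \frac{(1-\pi)Q(\ve{x}) + \pi P(\ve{x})}{\pi Q(\ve{x})}$, whence $\frac{1}{g(r(\ve{x}))}\cdot r(\ve{x}) = \frac{\pi P(\ve{x})}{\pi P(\ve{x}) + (1-\pi)Q(\ve{x})} = \eta(\ve{x})$ by Bayes' rule. The same elementary manipulation applied to the definition of $\hat r$ in Equation \ref{defR} gives $\frac{1}{g(\hat r(\ve{x}))}\cdot \hat r(\ve{x}) = \hat\eta(\ve{x})$, so both the true and the estimated pairs are related by the map $\ve{x}\mapsto \ve{x}/g(\ve{x})$ of Theorem \ref{th00}. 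Second I would record the change-of-measure factor: since $M(\ve{x}) = \pi P(\ve{x}) + (1-\pi)Q(\ve{x})$, the identity for $g(r)$ above rearranges to $M(\ve{x}) = \pi\cdot g(r(\ve{x}))\cdot Q(\ve{x})$, i.e.\ $\frac{\mathrm{d}M}{\mathrm{d}Q}(\ve{x}) = \pi\,g(r(\ve{x}))$.

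The proof then proceeds by the chain
\begin{align*}
\expect_{M}[D_\varphi(\eta\|\hat\eta)]
 &= \pi\cdot \expect_{Q}[\,g(r)\cdot D_\varphi(\eta\|\hat\eta)\,] \\
 &= \pi\cdot \expect_{Q}\Big[\,g(r)\cdot D_\varphi\big(\tfrac{r}{g(r)}\,\big\|\,\tfrac{\hat r}{g(\hat r)}\big)\,\Big] \\
 &= \pi\cdot \expect_{Q}[\,D_{\varphi^\dagger}(r\|\hat r)\,],
\end{align*}
where the first line is the change of measure, the second substitutes the two Bayes-rule identities, and the third is a pointwise application of Theorem \ref{th00} (legitimate because $g$ is affine, so eq.~(\ref{eq11}) holds for our $\varphi$). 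This is exactly the claimed identity.

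There is no genuine obstacle here: the entire content is the affine nature of $g$ (which removes the restricted-homogeneity requirement) together with the two Bayes-rule identities. The only place that demands care is the bookkeeping of $\pi$ versus $1-\pi$ under the binary convention in which the reference class $\Y=-1$ plays the role of $Q$, so that the leading scalar comes out as $\pi$ rather than $1-\pi$; tracking this through the change of measure is what pins down the constant.
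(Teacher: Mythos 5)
Your proposal is correct and follows essentially the same route as the paper's own proof: the same two Bayes-rule identities ($r/g(r) = \eta$ and $\hat r/g(\hat r) = \hat\eta$), the same change-of-measure factor $M = \pi\, g(r)\, Q$, and the same three-step chain of expectations concluded by a pointwise application of Theorem \ref{th00}, justified by the affineness of $g$. Nothing is missing; the bookkeeping of $\pi$ versus $1-\pi$ is handled exactly as in the paper.
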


\begin{proof}[Proof of Lemma \ref{lemm:density-ratio}]
Note that
\begin{eqnarray}
\frac{1}{g(r(\ve{x}))} \cdot r(\ve{x}) & = & \frac{\pi \pr(\X = \ve{x} | \Y = -1)}{\pr(\X = \ve{x})} \cdot \frac{\pr(\X = \ve{x} | \Y = 1)}{\pr(\X = \ve{x} | \Y = -1)}\nonumber\\
 & = &\frac{\pi \pr(\X = \ve{x} | \Y = 1)}{\pr(\X = \ve{x})} \nonumber\\
 & = & \eta(\ve{x})\:\:,\label{eqETA}
\end{eqnarray}
and furthermore
\begin{eqnarray}
\pr(\X = \ve{x}) & = & (1-\pi) \pr(\X = \ve{x} | \Y = -1) + \pi \pr(\X = \ve{x} | \Y = 1) \nonumber\\
& = & \pi\cdot \left(\frac{1-\pi}{\pi} + \frac{\pr(\X = \ve{x} | \Y = 1)}{\pr(\X = \ve{x} | \Y = -1)}\right) \cdot \pr(\X = \ve{x} | \Y = -1)\nonumber\\
& = & \pi \cdot g(r(\ve{x})) \cdot \pr(\X = \ve{x} | \Y = -1)\label{eq123}\:\:.
\end{eqnarray}
So,
\begin{eqnarray}
\expect_{M}[D_\varphi(\eta(\X) \| \hat{\eta}(\X))] & = & \pi \cdot \expect_{Q}\left[ g(r(\X))\cdot D_\varphi(\eta(\X) \| \hat{\eta}(\X))\right] \label{p1} \\
 & = &  \pi \cdot \expect_{Q}\left[ g(r(\X))\cdot D_\varphi\left(\left.\frac{1}{g(r(\X))} \cdot r(\X) \right\| \hat{\eta}(\X)\right)\right]\label{p3} \\
 & = &  \pi \cdot \expect_{Q}\left[ g(r(\X))\cdot D_\varphi\left(\left.\frac{1}{g(r(\X))} \cdot r(\X) \right\| \frac{1}{g(\hat{r}(\X))} \cdot \hat{r}(\X)\right)\right]\label{p4} \\
 & = & \pi \cdot \expect_{Q}\left[ D_{\varphi^\dagger}(r(\X) \| \hat{r}(\X))\right]\:\:,\label{p5}
\end{eqnarray}
as claimed.
Equation (\ref{p1}) comes from (\ref{eq123}), Equation (\ref{p3}) comes from (\ref{eqETA}), Equation (\ref{p4}) comes from (\ref{defR}) and the definition of $g$.
Equation (\ref{p5}) comes from Theorem \ref{th00}, noting that $g$ is linear.
\end{proof}

\section{Additional experiments}

\subsection{Multiclass density ratio experiments}
\label{app:multiclass-dr}

We consider a synthetic multiclass density ratio estimation problem.
We fix $\XCal = \Real^2$, and consider $C = 3$ classes.
We consider a distribution where the class-conditionals $\Pr( \X | \Y = c )$ are multivariate Gaussians with means $\ve{\mu}_c$ and covariance $\sigma_c^2 \cdot \text{Id}$.
As the class-conditionals have a closed form, we can explicitly compute $\ve{\eta}$, as well the density ratio $\ve{r}$ to the reference class $c^* = C$.

For fixed class prior $\ve{\pi} = \Pr(\Y = c)$, we draw $N_{\text{Tr}}$ samples from $\Pr( \X, \Y )$.
From this, we estimate the class-probability $\hat{\ve{\eta}}$ using multiclass logistic regression.
This can be seen as minimising $\expect_{\X \sim M}\left[ {D_{\varphi}( \ve{\eta}( \X ) \| \hat{\ve{\eta}}( \X ) )} \right]$ where $\varphi( z ) = \sum_{i} z_i \log z_i$ is the generator for the KL-divergence.

We then use Equation \ref{eqn:dr-from-cpe} to estimate the density ratios $\hat{\ve{r}}$ from $\hat{\ve{\eta}}$.
On a fresh sample of $N_{\text{te}}$ instances from $\Pr( \X, \Y )$, we estimate the right hand side of Lemma \ref{lemm:multiclass-dr}, \emph{viz.\ } $\expect_{\X \sim P_C}\left[ { D_{\varphi^\dagger}( r( \X ) \| \hat{r}( \X ) ) } \right]$,
where $\varphi^\dagger$ uses the $g$ as specified in Lemma \ref{lemm:multiclass-dr}.
From the result of Lemma \ref{lemm:multiclass-dr}, we expect this divergence to be small when $\hat{\ve{\eta}}$ is a good estimator of $\ve{\eta}$.

We perform the above for sample sizes $N \in \{ 4^4, 4^5, \ldots, 4^{10} \}$, with $N_{\text{Tr}} = 0.8 N$ and $N_{\text{Te}} = 0.2 N$.
For each sample size, we perform $T = 25$ trials, where in each trial we randomly draw $\ve{\pi}$ uniformly over $(1/C) \mathbf{1} + (1 - 1/C) \cdot [0, 1]^C$,
$\ve{\mu}_c$ from $0.1 \cdot \mathscr{N}( \mathbf{0}, 1 )$,
and $\sigma_c$ uniformly from $[0.5, 1]$.
Figure \ref{fig:multiclass-dr-results} summarises the mean divergence across the $T$ trials for each sample size.
We see that, as expected, with more training samples the divergence decreases in a monotone fashion.

\begin{figure}[!t]
	\centering
	\includegraphics[scale=0.125]{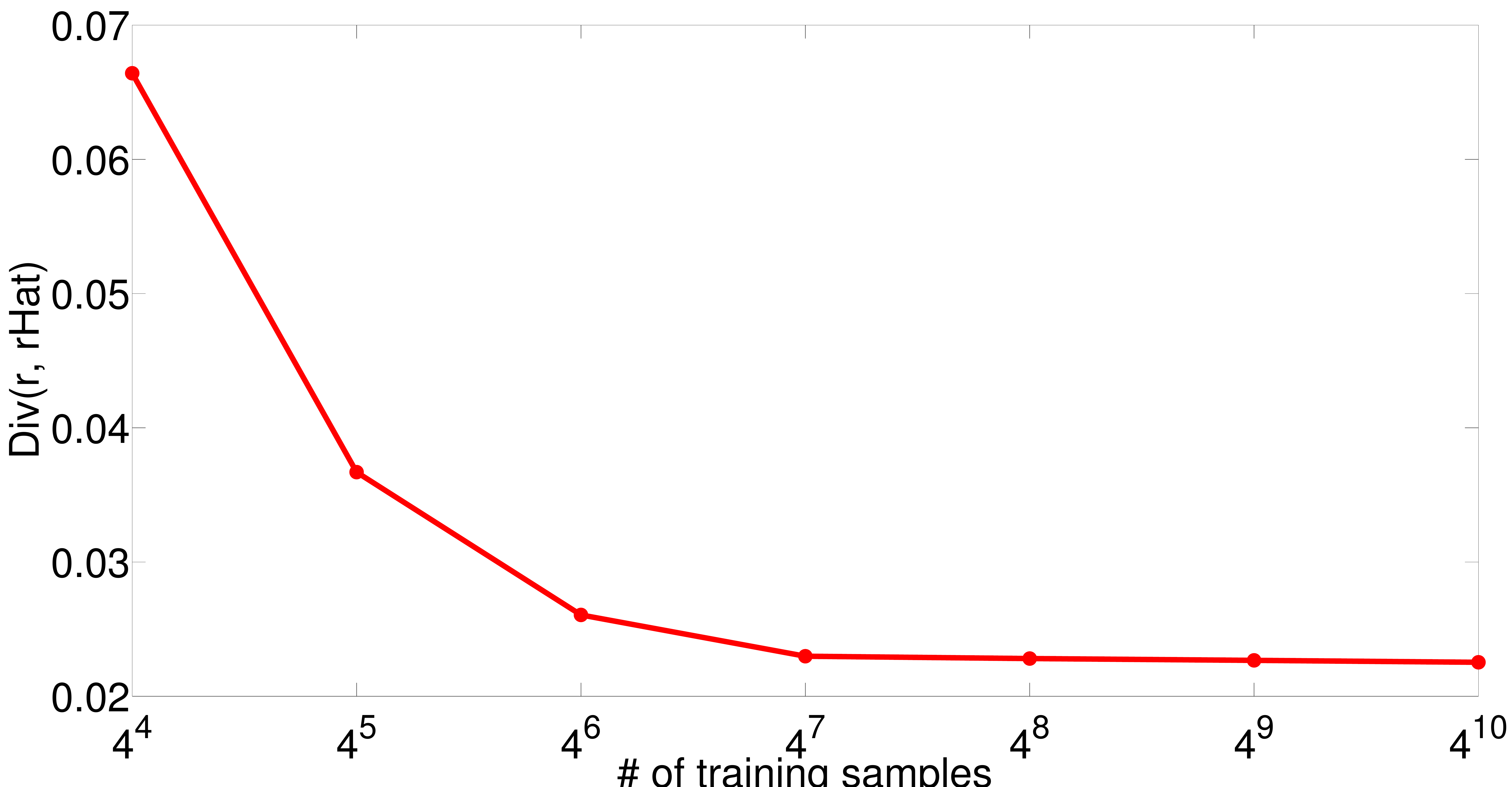}
	\caption{Density ratio estimate divergence $\expect_{\X \sim P_C}\left[{ D_{\varphi^\dagger}( r( \X ), \hat{r}( \X ) ) }\right]$ as a function of \# of training samples.}
	\label{fig:multiclass-dr-results}
\end{figure}

\subsection{Adaptive filtering experiments}
\label{app:exp-full}
Tables \ref{tc1_supp_rr1} -- \ref{tc1_supp_rr6} present \textit{in
  extenso} the experiments of $p$-LMS vs DN-$p$-LMS, as a function of
$(p,q)$, whether target $\ve{u}$ is sparse or not, and the
misestimation factor $\rho$ for $X_p$. We refer to
\cite{kwhTP} for the formal definitions used for sparse / dense
targets as well as for the experimental setting, which we have
reproduced with the sole difference that the signal changes
periodically each 1 000 iterations.

\begin{sidewaystable}[t]
\begin{center}
{\small
\begin{tabular}{cccccc}\hline \hline
\includegraphics[trim=10bp 30bp 30bp 10bp,clip,width=.15\linewidth]{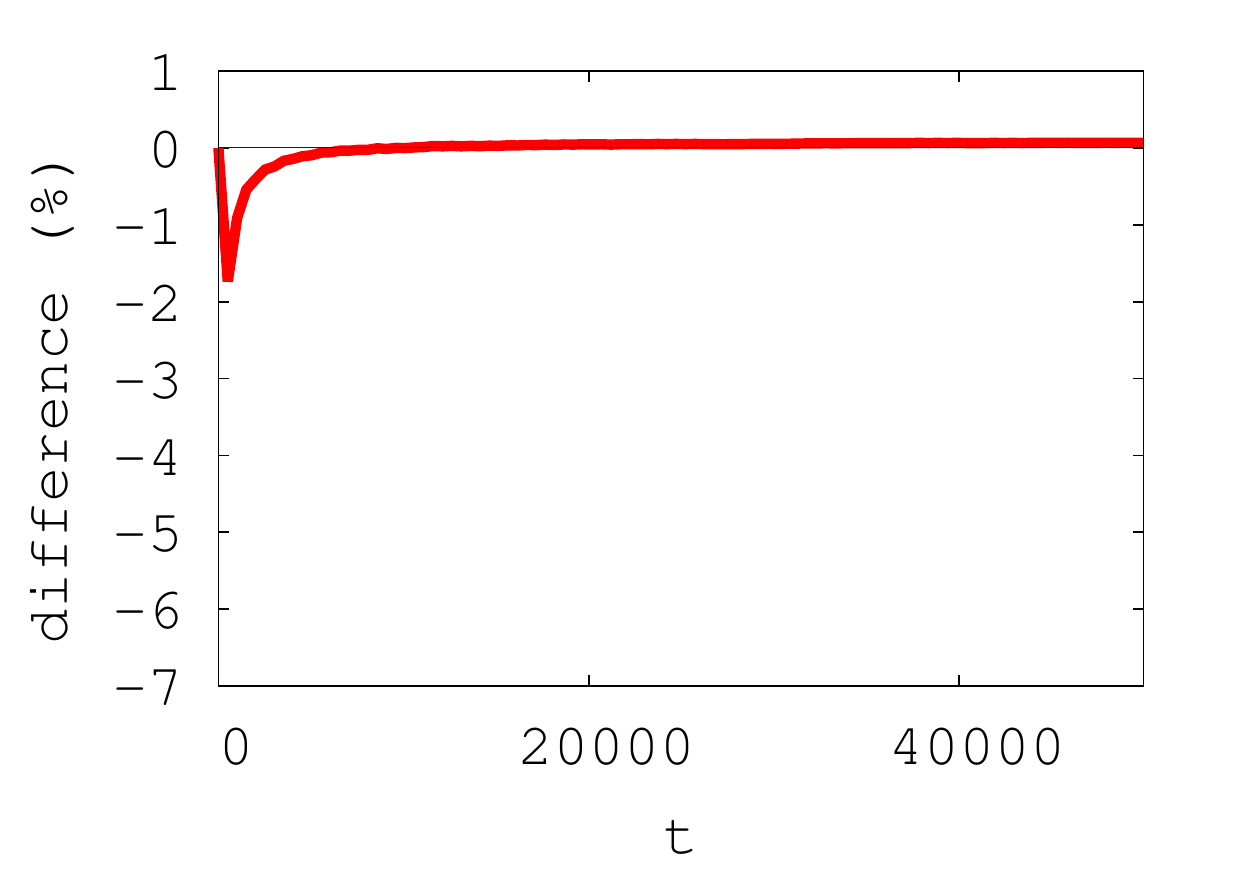}
&
\includegraphics[trim=10bp 25bp 30bp 10bp,clip,width=.15\linewidth]{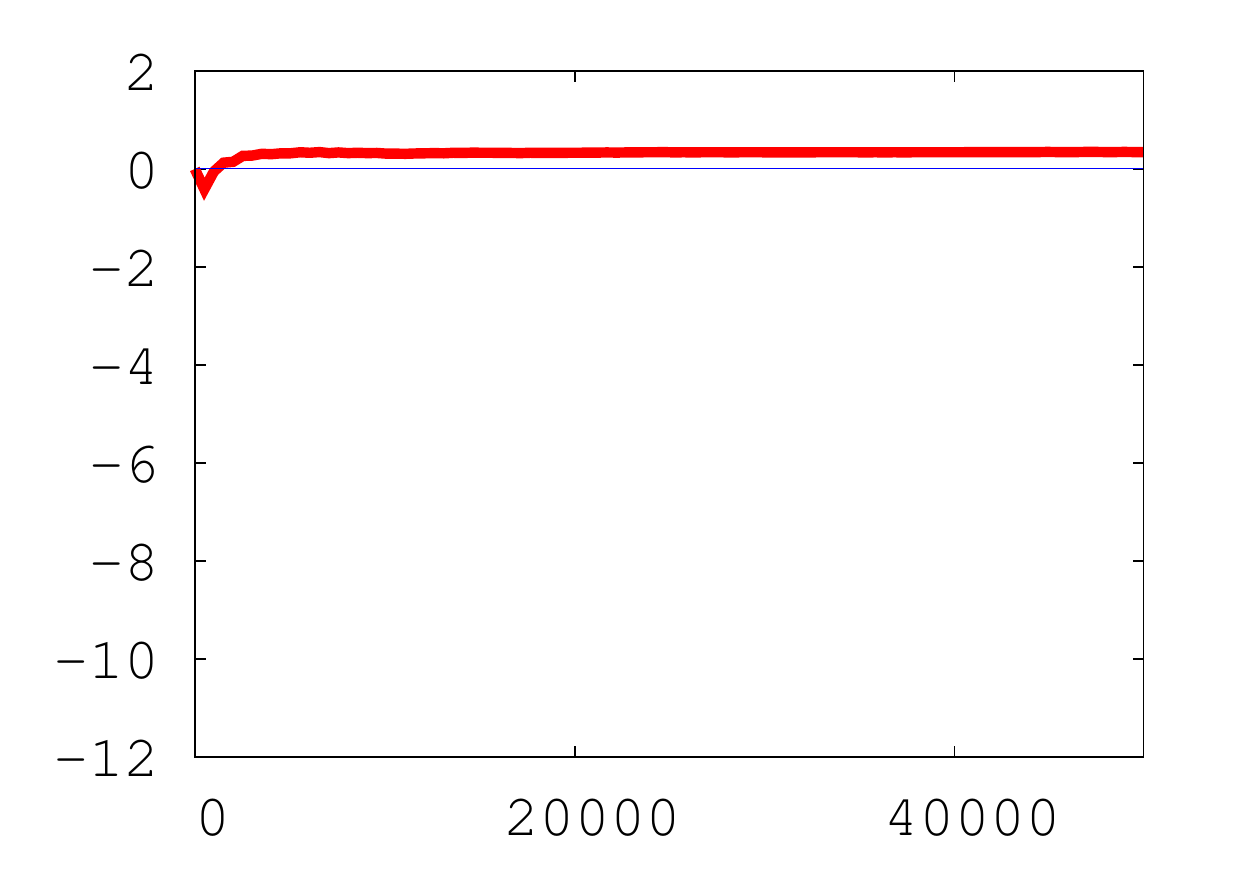}
&
\includegraphics[trim=10bp 25bp 30bp 10bp,clip,width=.15\linewidth]{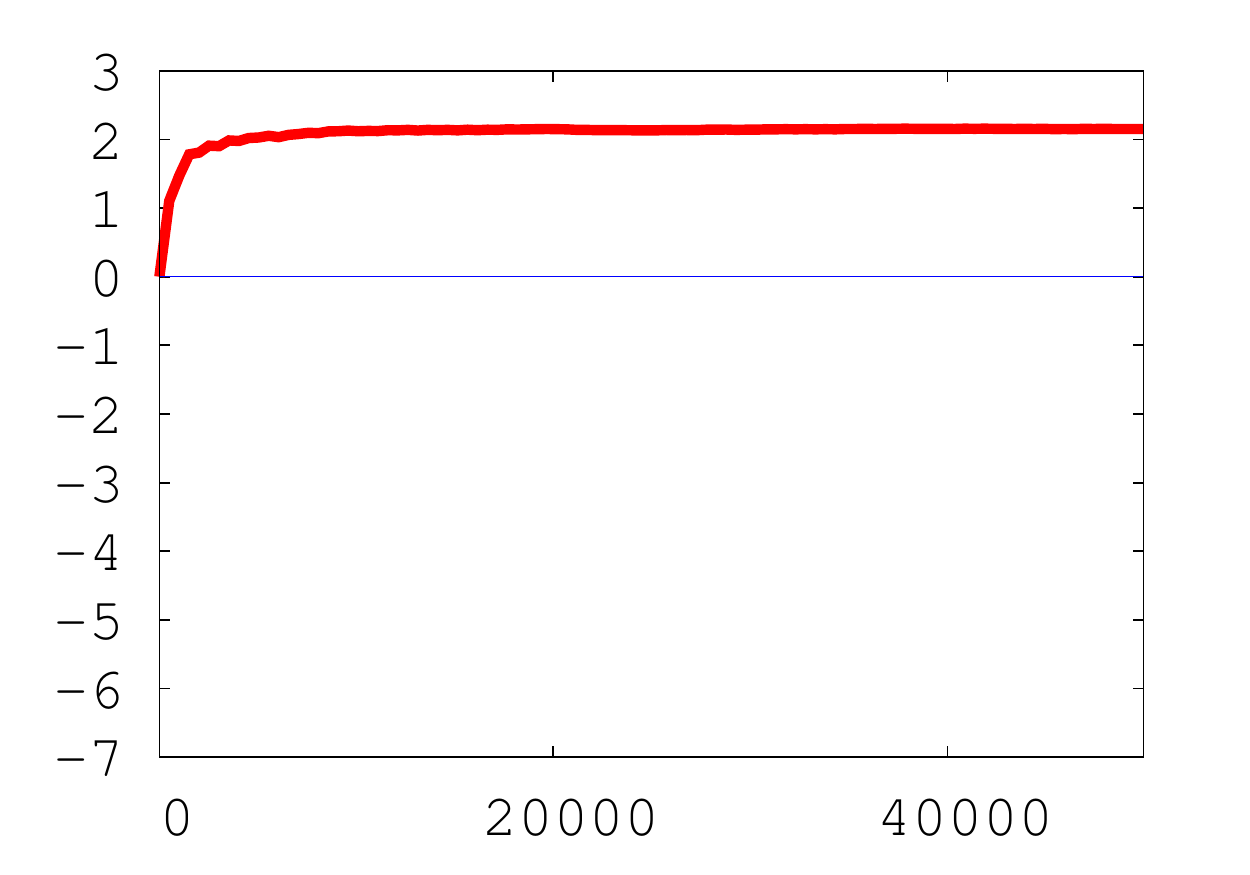}
&
\includegraphics[trim=10bp 25bp 30bp 10bp,clip,width=.15\linewidth]{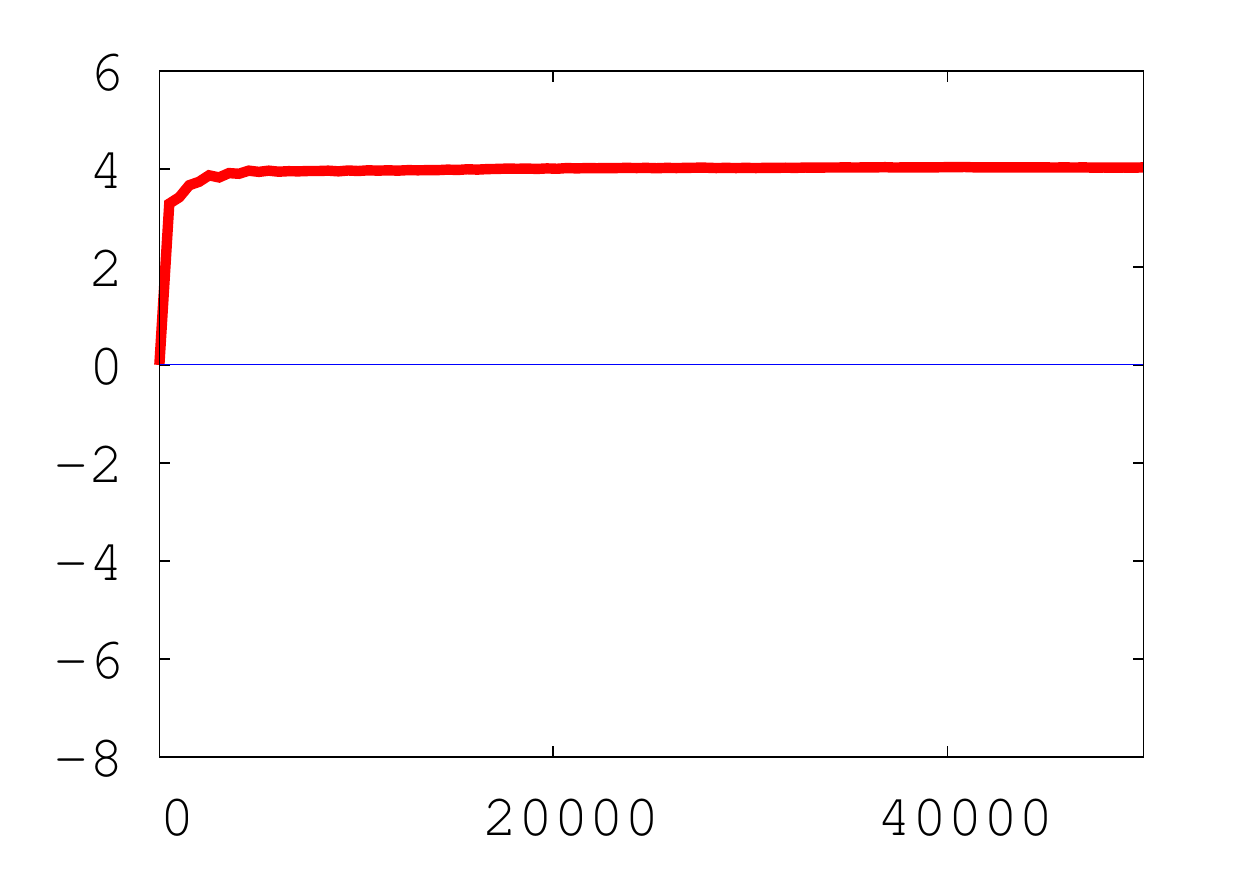}
&
\includegraphics[trim=10bp 25bp 30bp 10bp,clip,width=.15\linewidth]{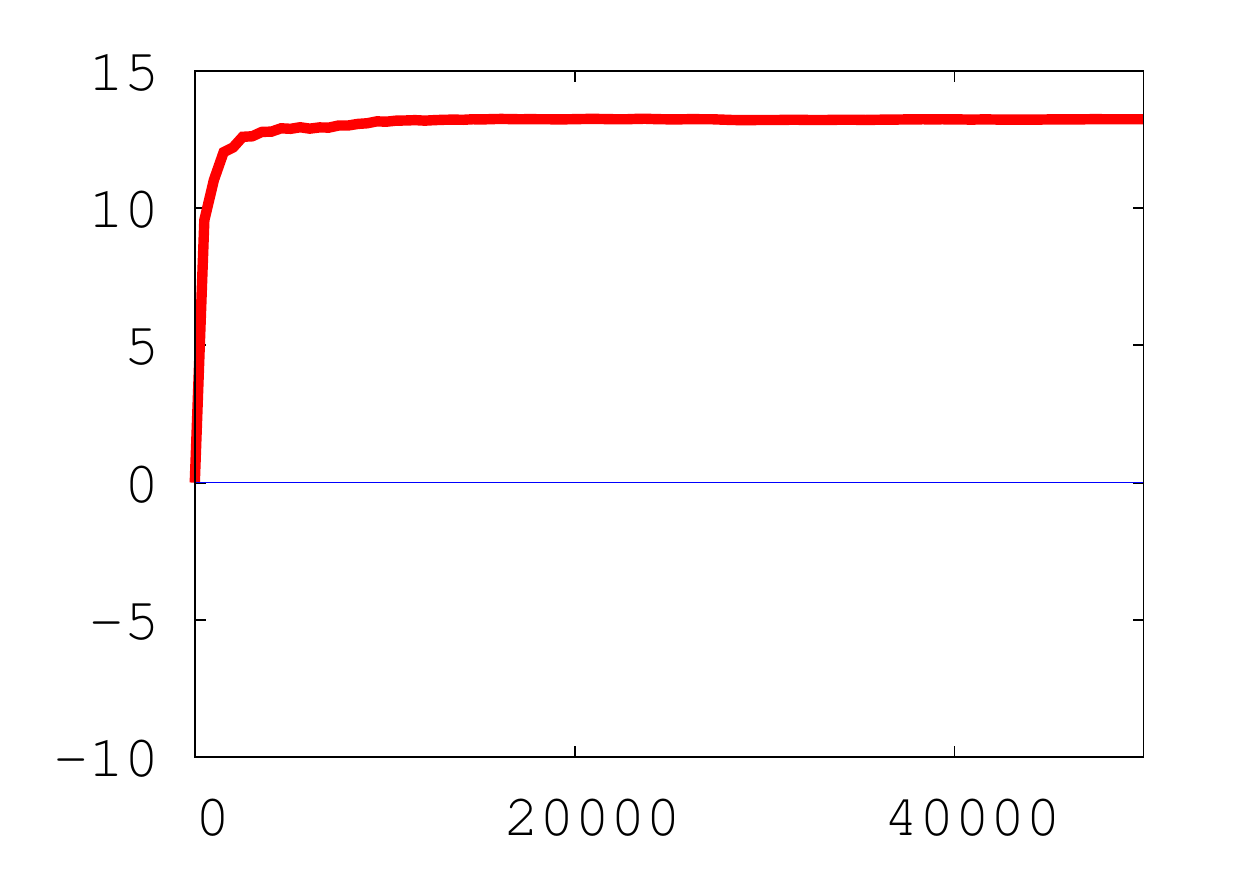}
&
\includegraphics[trim=10bp 25bp 30bp 10bp,clip,width=.15\linewidth]{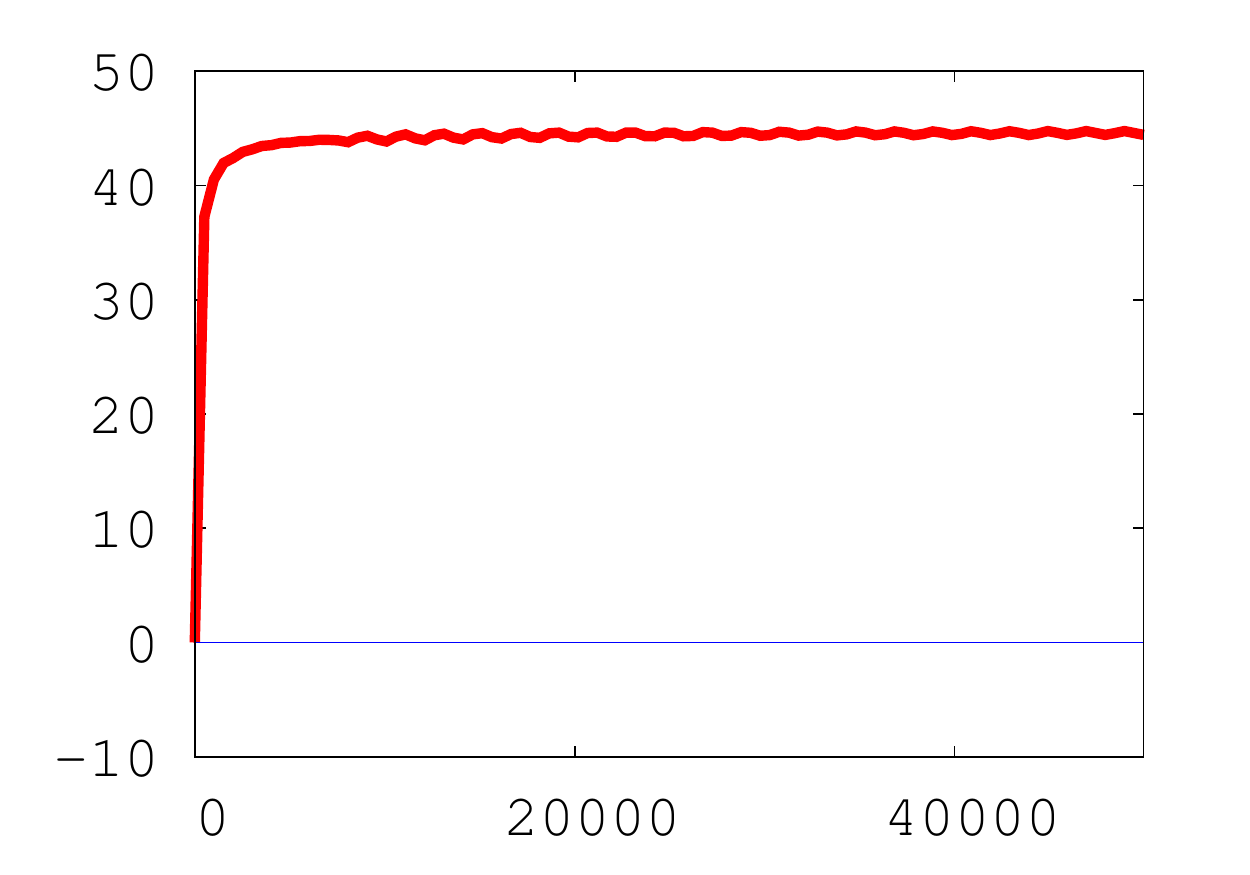}
\\
$\rho = 0.1$ & $\rho = 0.2$ & $\rho = 0.3$ & $\rho = 0.4$ & $\rho = 0.5$ & $\rho = 0.6$ \\\hline
\includegraphics[trim=10bp 25bp 30bp 10bp,clip,width=.15\linewidth]{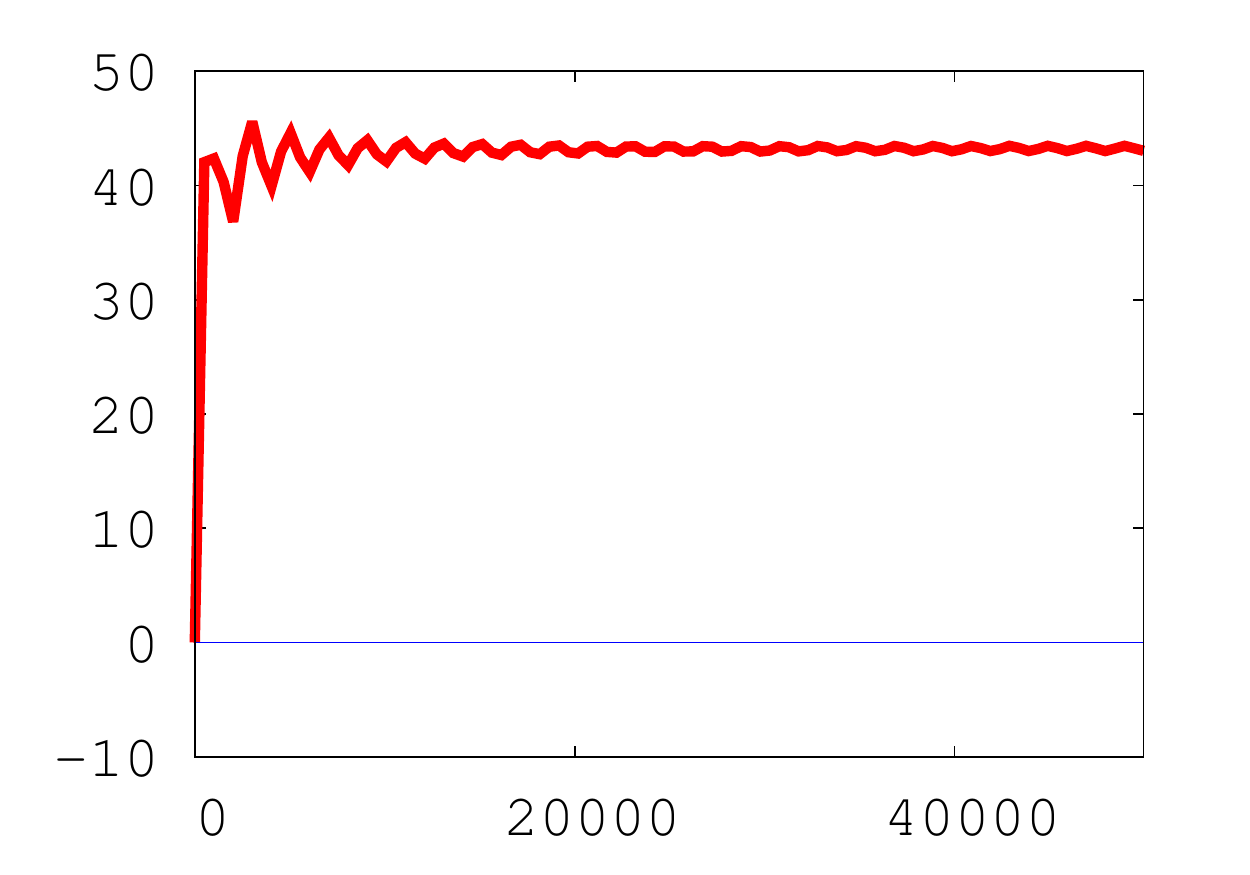}
&
\includegraphics[trim=10bp 25bp 30bp 10bp,clip,width=.15\linewidth]{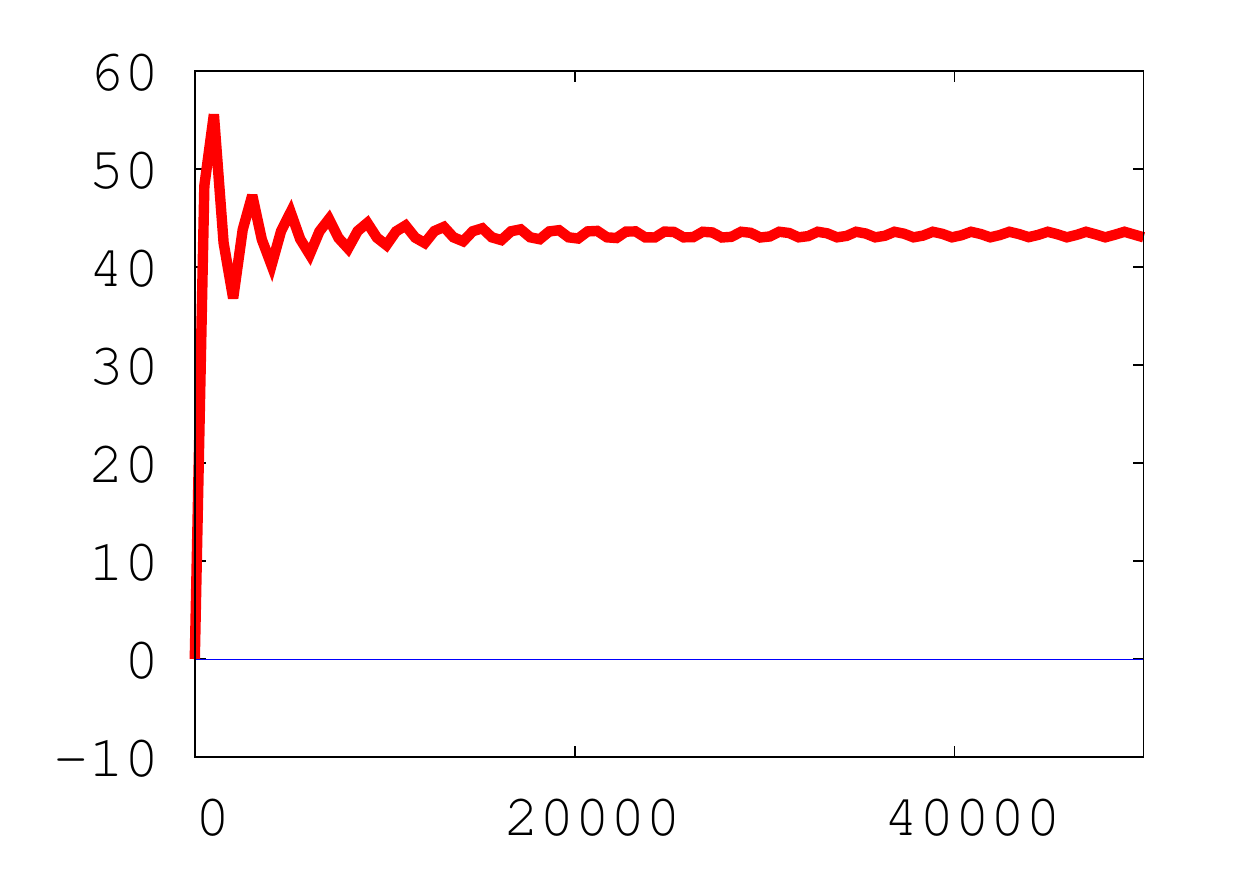}
&
\includegraphics[trim=10bp 25bp 30bp 10bp,clip,width=.15\linewidth]{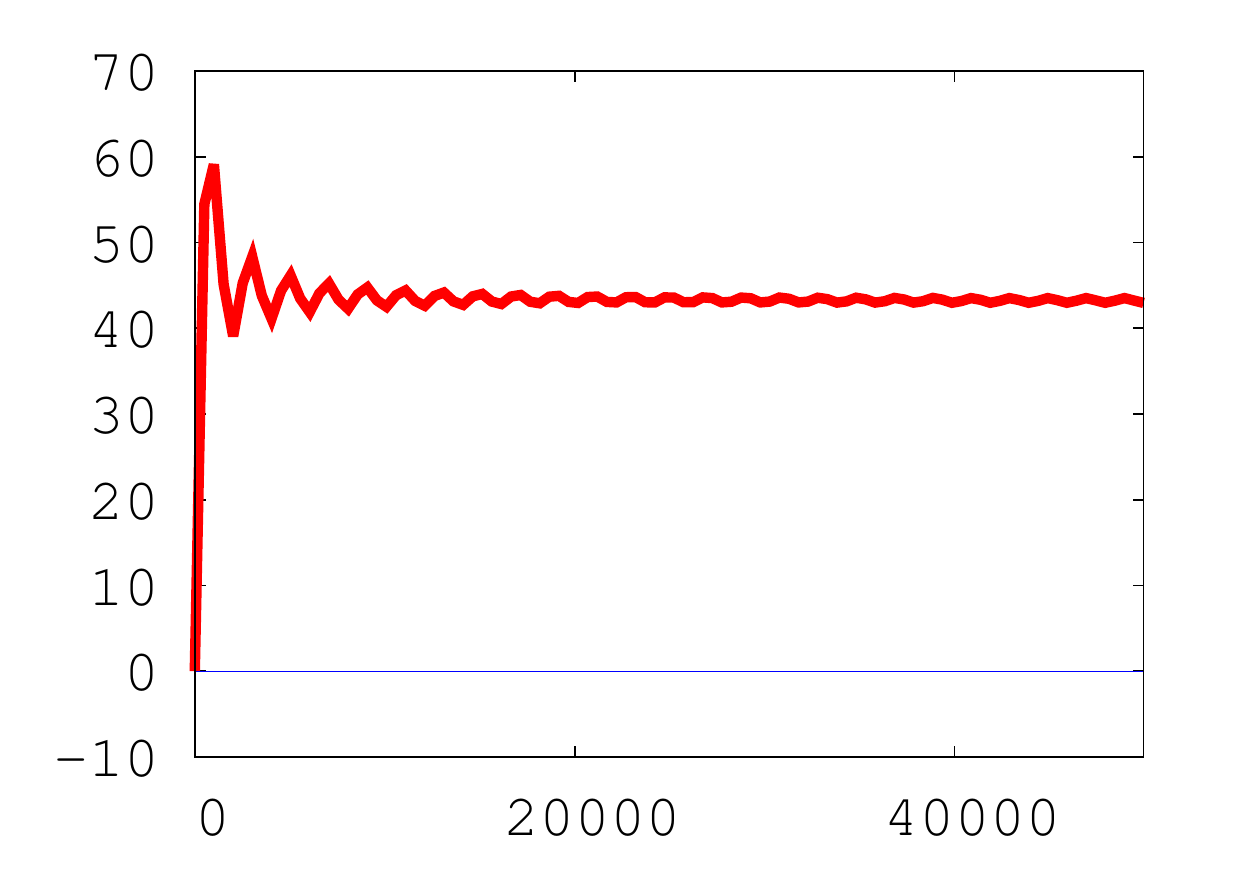}
&
\includegraphics[trim=10bp 25bp 30bp 10bp,clip,width=.15\linewidth]{results_P1_17_Q6_88_UCHOICE_GAUSS_UR_1_00_NOLABELS-eps-converted-to}
&
\includegraphics[trim=10bp 25bp 30bp 10bp,clip,width=.15\linewidth]{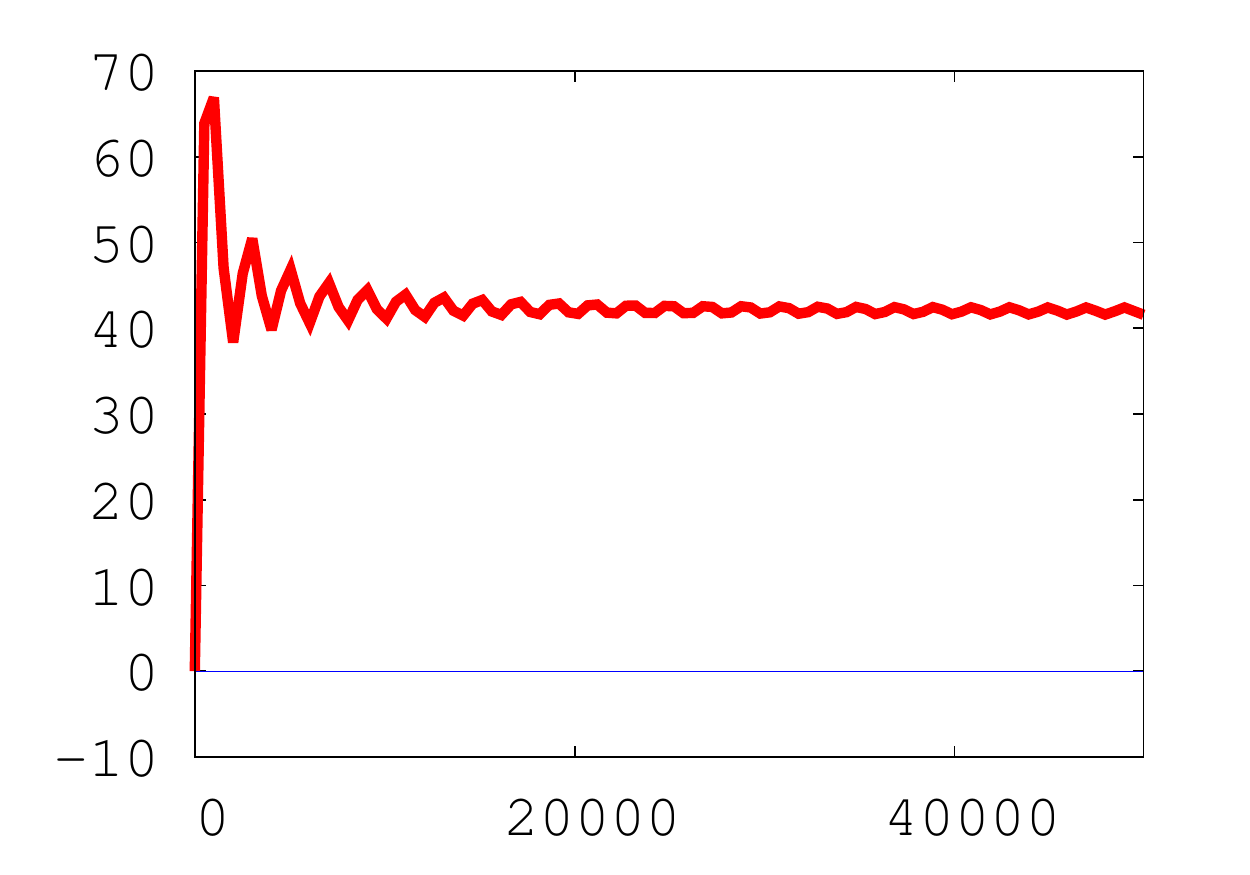}
&
\includegraphics[trim=10bp 25bp 30bp 10bp,clip,width=.15\linewidth]{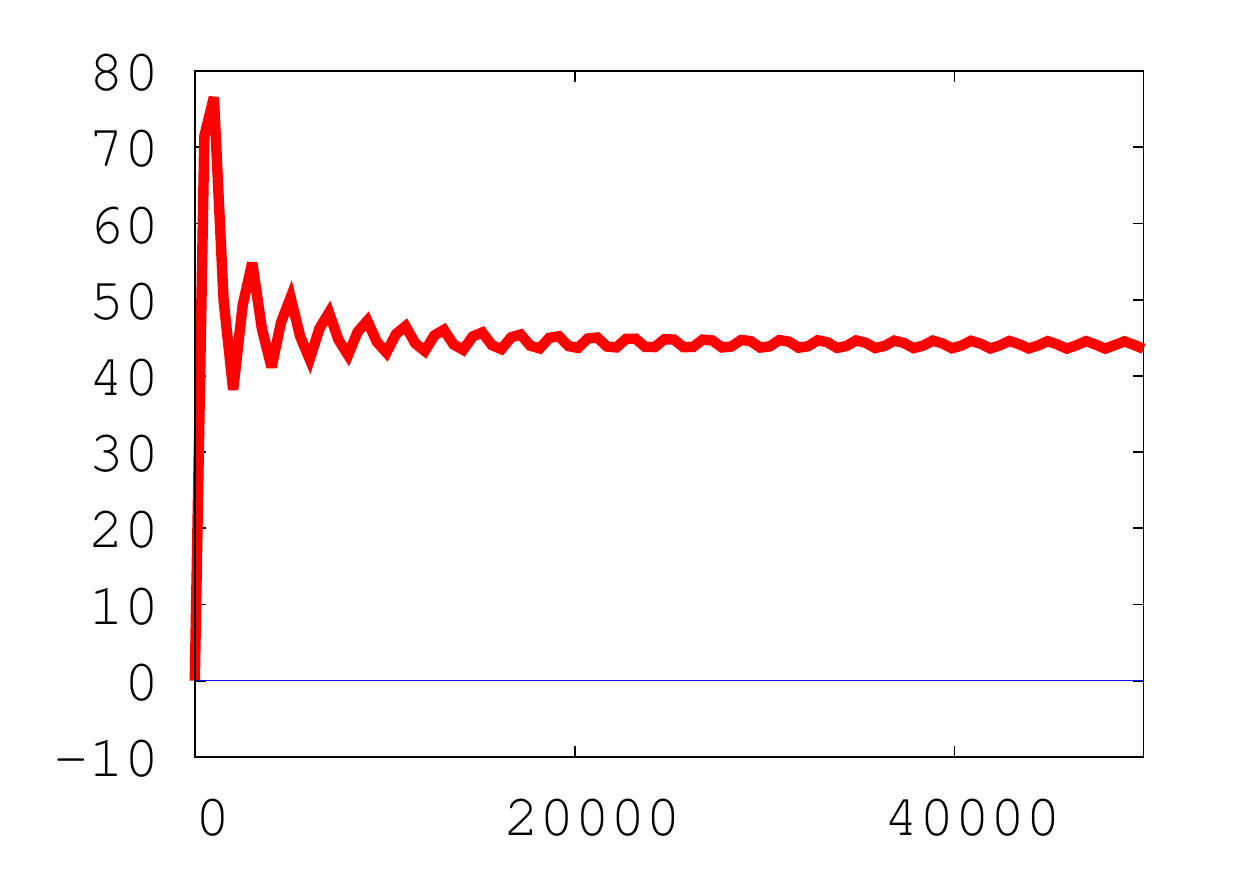}
\\
$\rho = 0.7$ & $\rho = 0.8$ & $\rho = 0.9$ & $\rho = \mbox{\textbf{1.0}}$ & $\rho = 1.1$ & $\rho = 1.2$ \\\hline
\includegraphics[trim=10bp 25bp 30bp 10bp,clip,width=.15\linewidth]{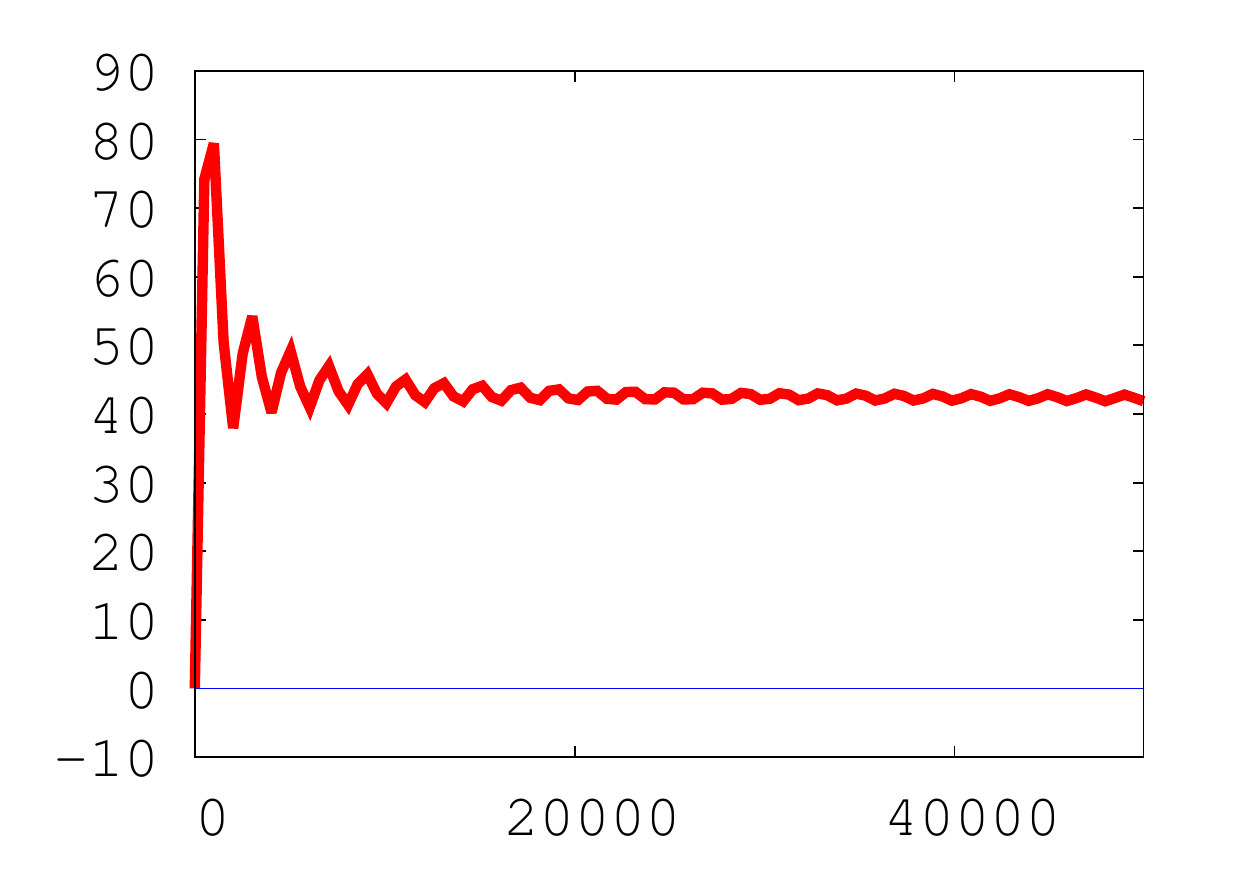}
&
\includegraphics[trim=10bp 25bp 30bp 10bp,clip,width=.15\linewidth]{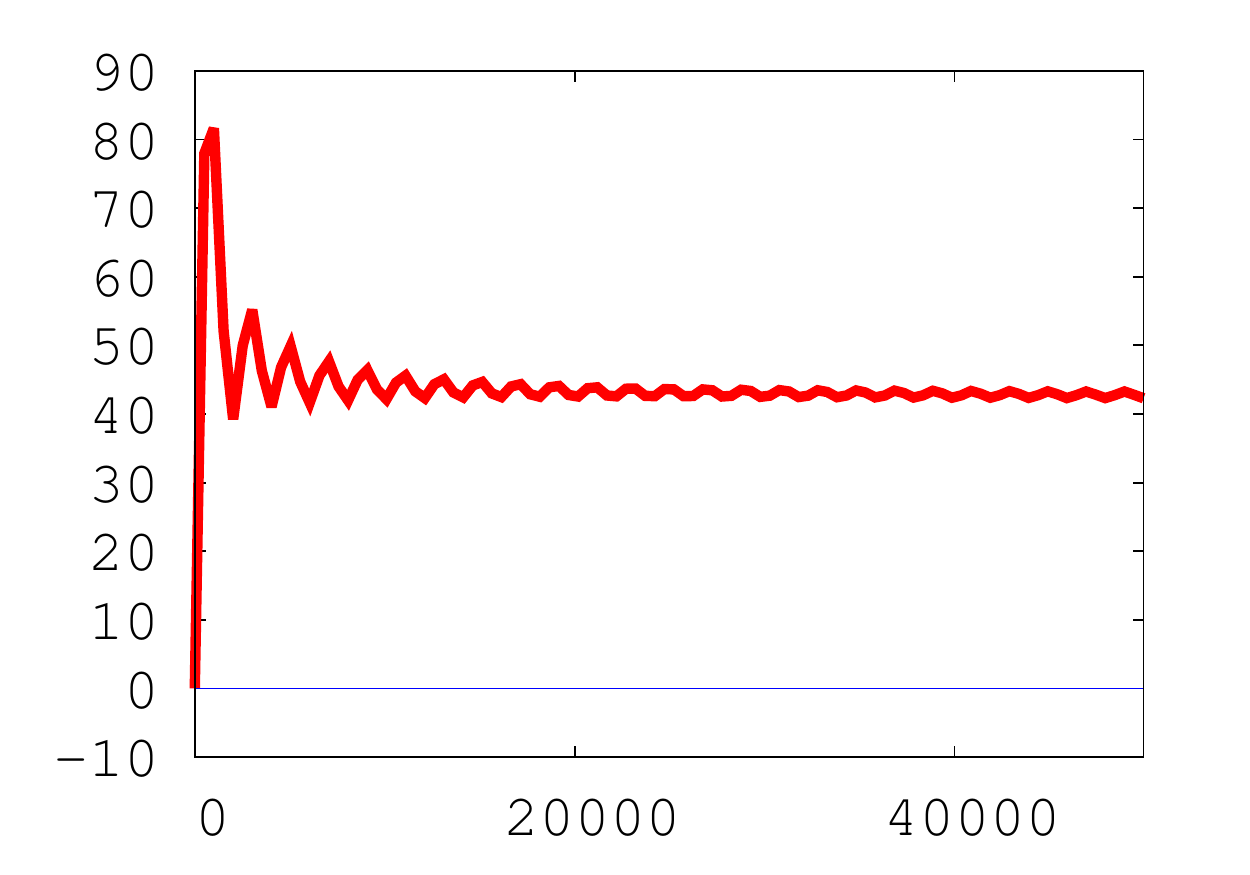}
&
\includegraphics[trim=10bp 25bp 30bp 10bp,clip,width=.15\linewidth]{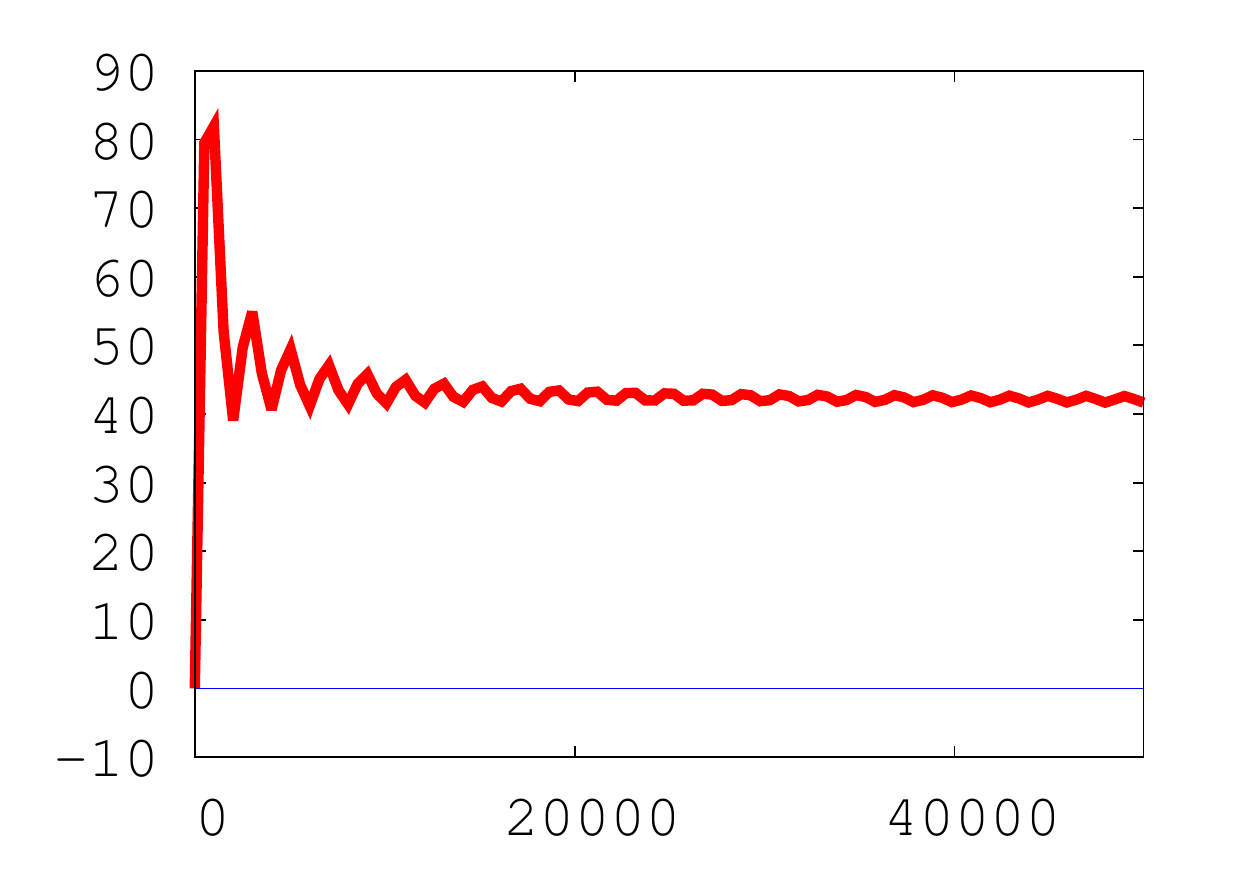}
&
\includegraphics[trim=10bp 25bp 30bp 10bp,clip,width=.15\linewidth]{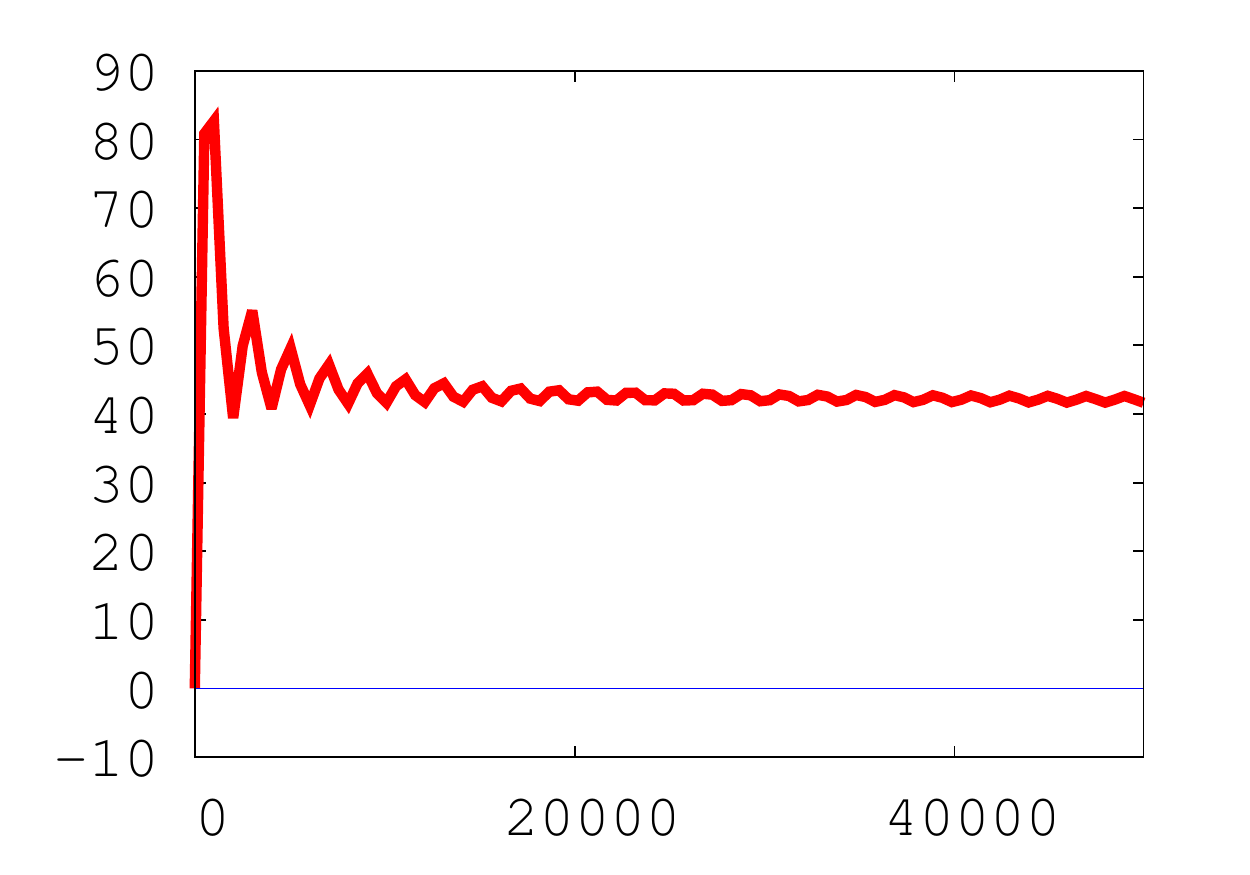}
&
\includegraphics[trim=10bp 25bp 30bp
10bp,clip,width=.15\linewidth]{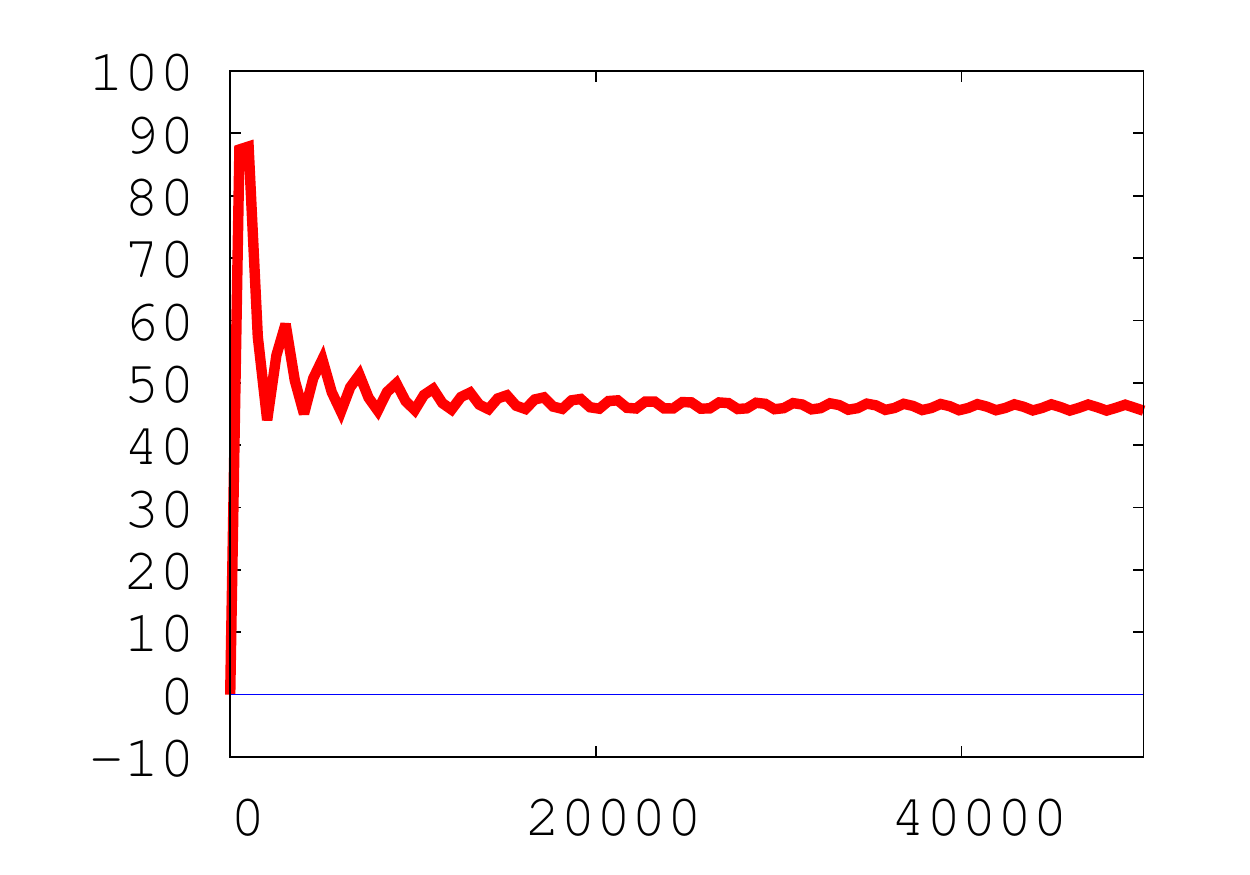}
&\\
$\rho =1.3$ & $\rho = 1.4$ & $\rho = 1.5$ & $\rho = 1.6$ & $\rho =
1.7$ &  \\\hline \hline
\end{tabular}
}
\end{center}
\caption{Error($p$-LMS) - Error(DN-$p$-LMS) as a function of $t$ ($\in \{1, 2, ..., 50 000\}$), $\bm{u}$ = dense, $(p,q) = (1.17, 6.9)$.}
  \label{tc1_supp_rr1}
\end{sidewaystable}

\begin{sidewaystable}[t]
\begin{center}
{\small
\begin{tabular}{cccccc}\hline \hline
\includegraphics[trim=10bp 30bp 30bp 10bp,clip,width=.15\linewidth]{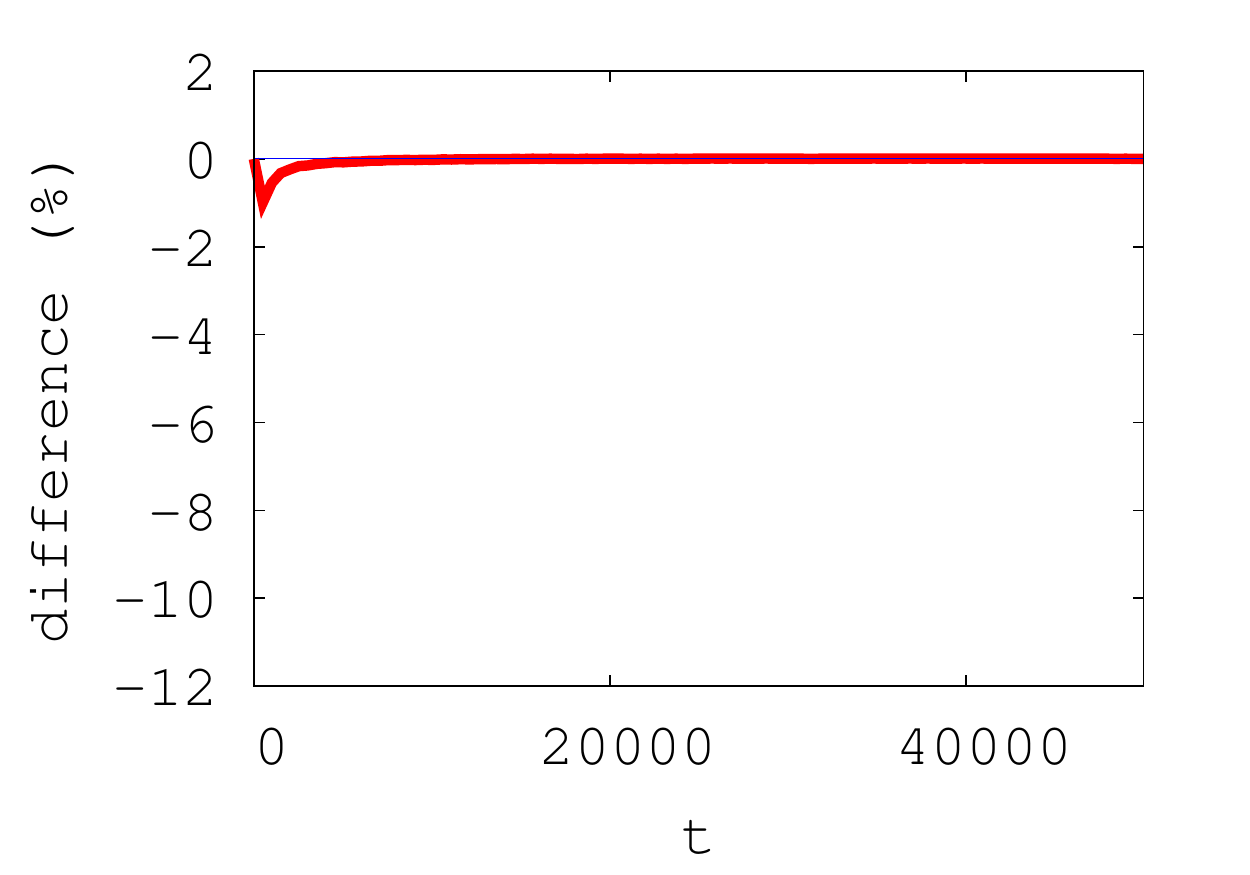}
&
\includegraphics[trim=10bp 25bp 30bp 10bp,clip,width=.15\linewidth]{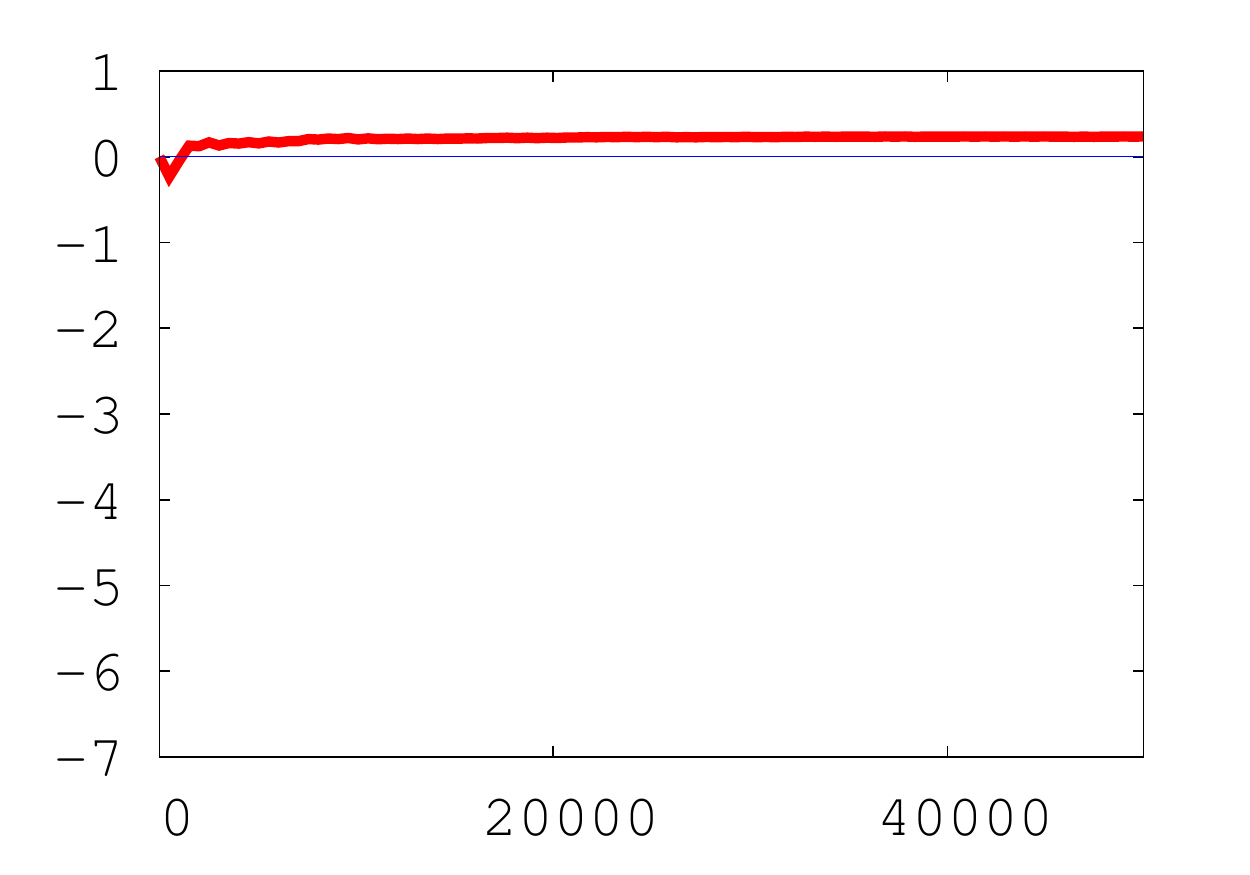}
&
\includegraphics[trim=10bp 25bp 30bp 10bp,clip,width=.15\linewidth]{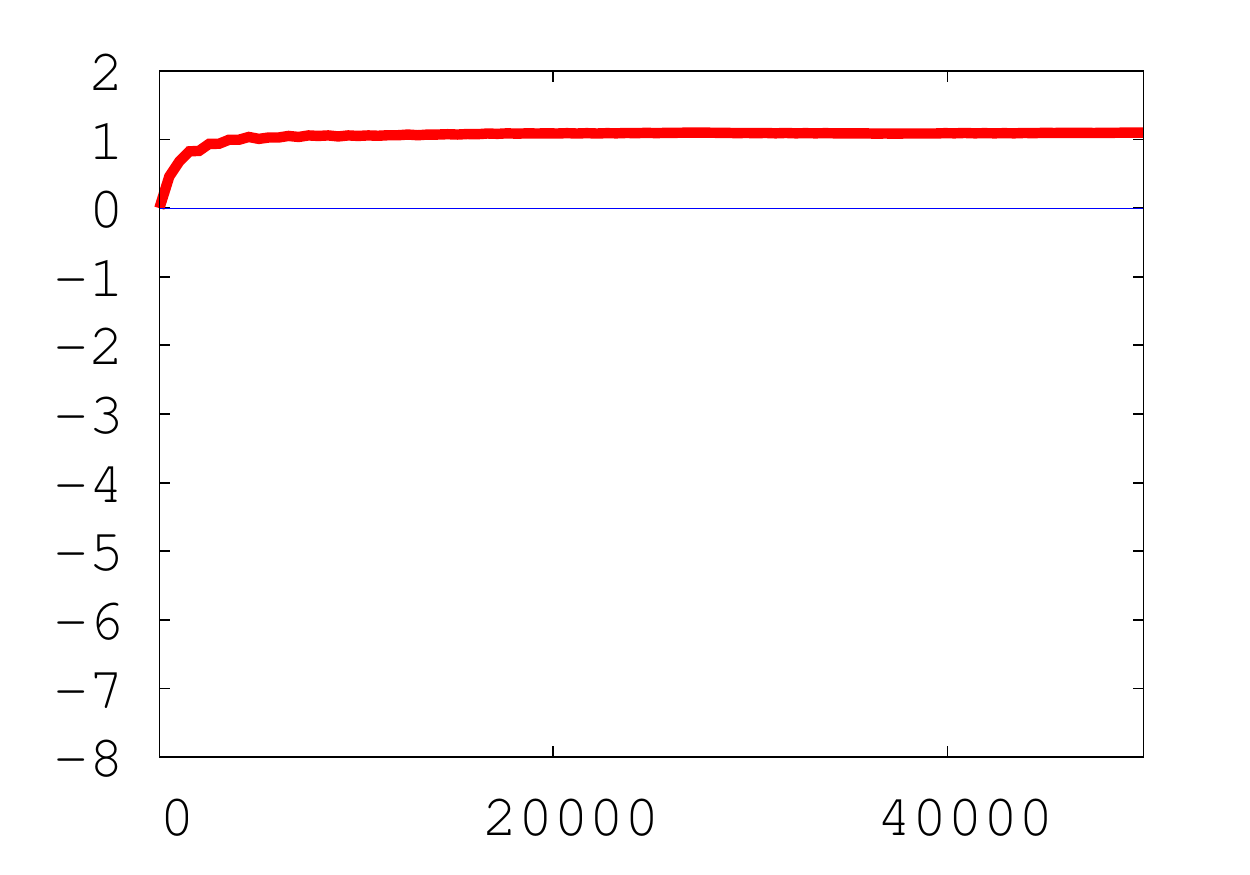}
&
\includegraphics[trim=10bp 25bp 30bp 10bp,clip,width=.15\linewidth]{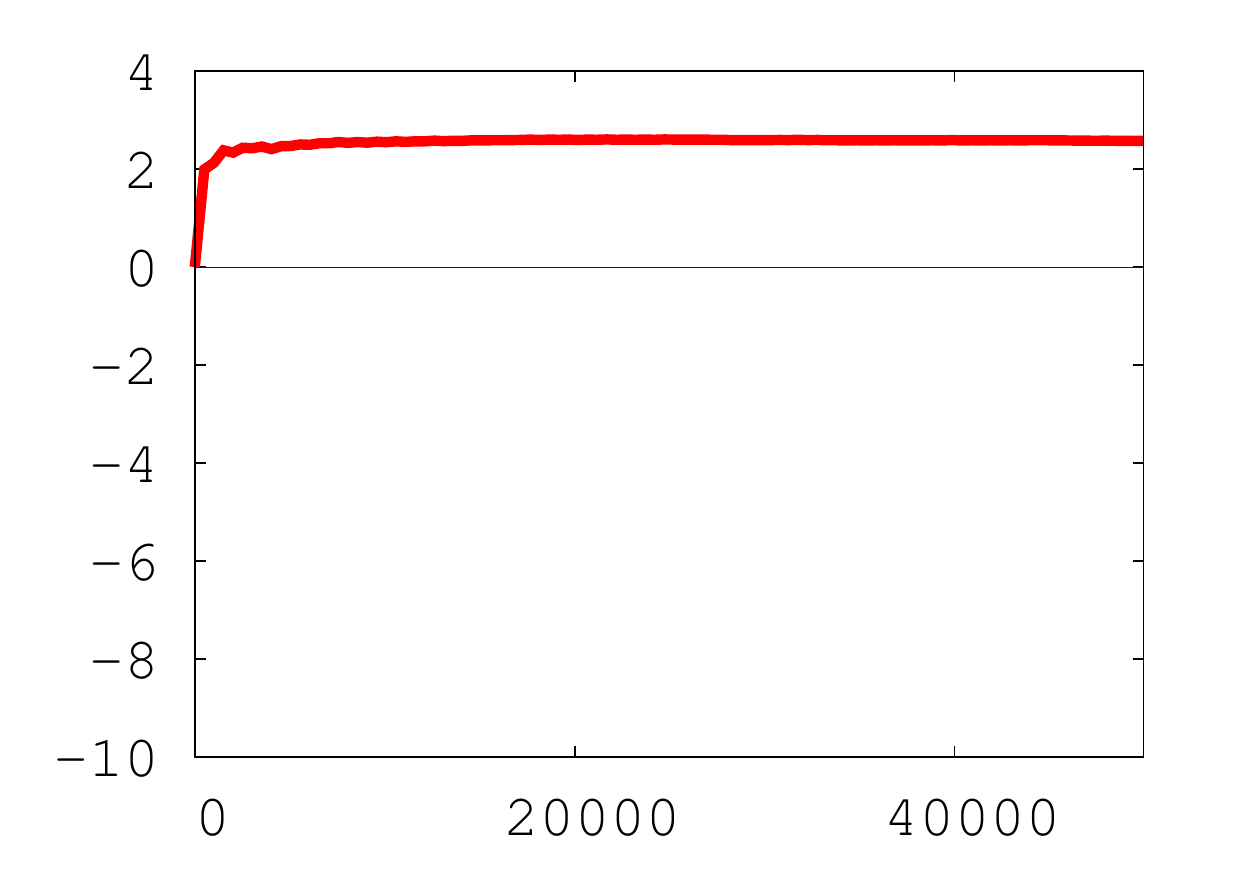}
&
\includegraphics[trim=10bp 25bp 30bp 10bp,clip,width=.15\linewidth]{results_P1_17_Q6_88_UCHOICE_S_EXP_R_UR_0_50_NOLABELS-eps-converted-to}
&
\includegraphics[trim=10bp 25bp 30bp 10bp,clip,width=.15\linewidth]{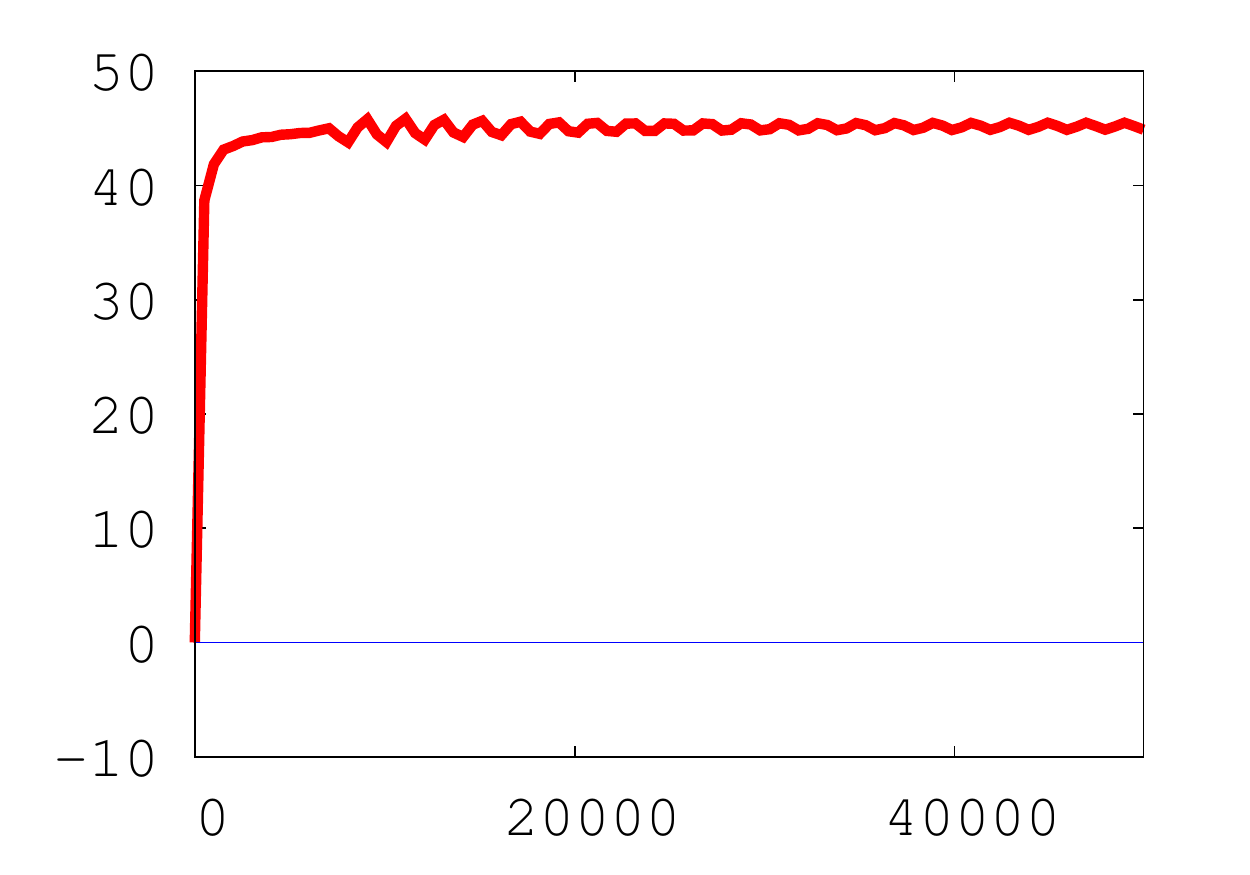}
\\
$\rho = 0.1$ & $\rho = 0.2$ & $\rho = 0.3$ & $\rho = 0.4$ & $\rho = 0.5$ & $\rho = 0.6$ \\\hline
\includegraphics[trim=10bp 25bp 30bp 10bp,clip,width=.15\linewidth]{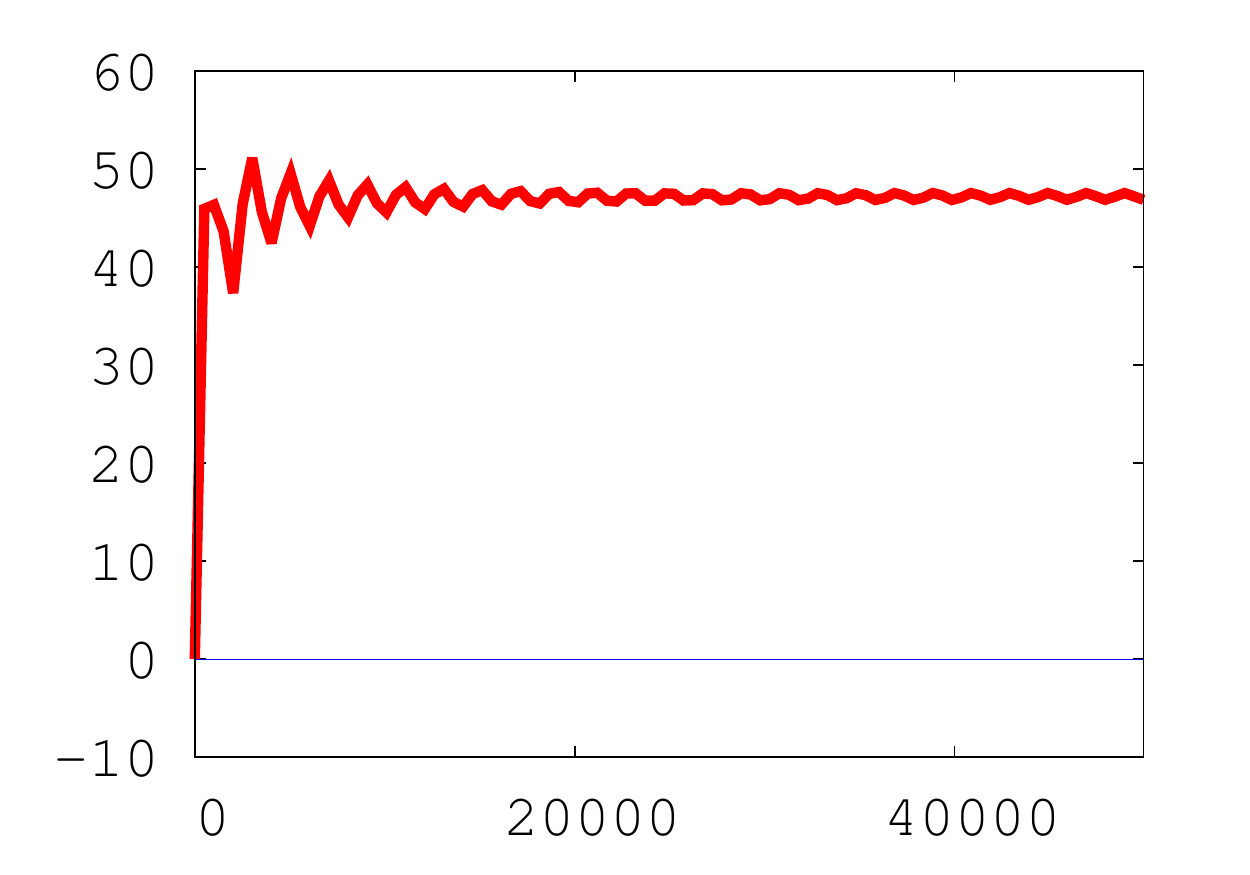}
&
\includegraphics[trim=10bp 25bp 30bp 10bp,clip,width=.15\linewidth]{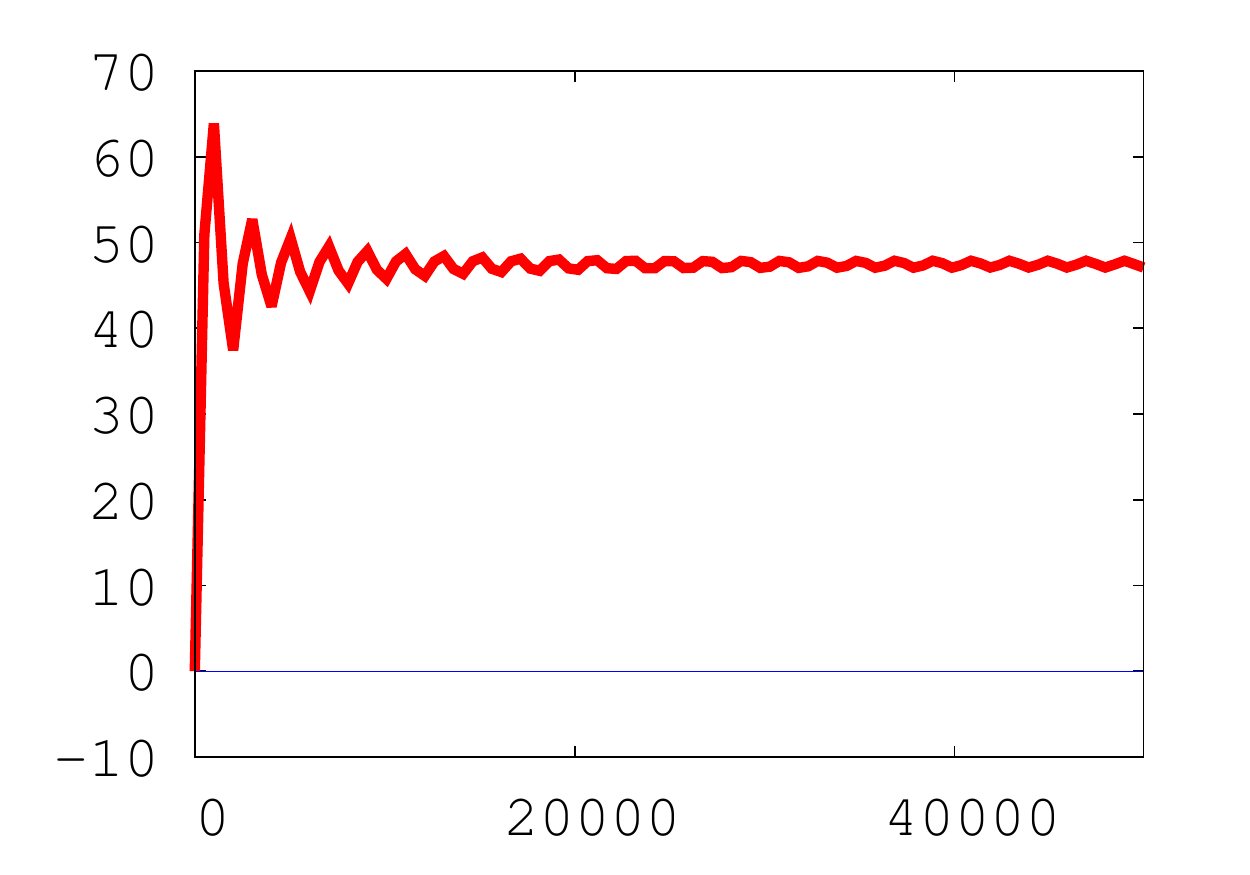}
&
\includegraphics[trim=10bp 25bp 30bp 10bp,clip,width=.15\linewidth]{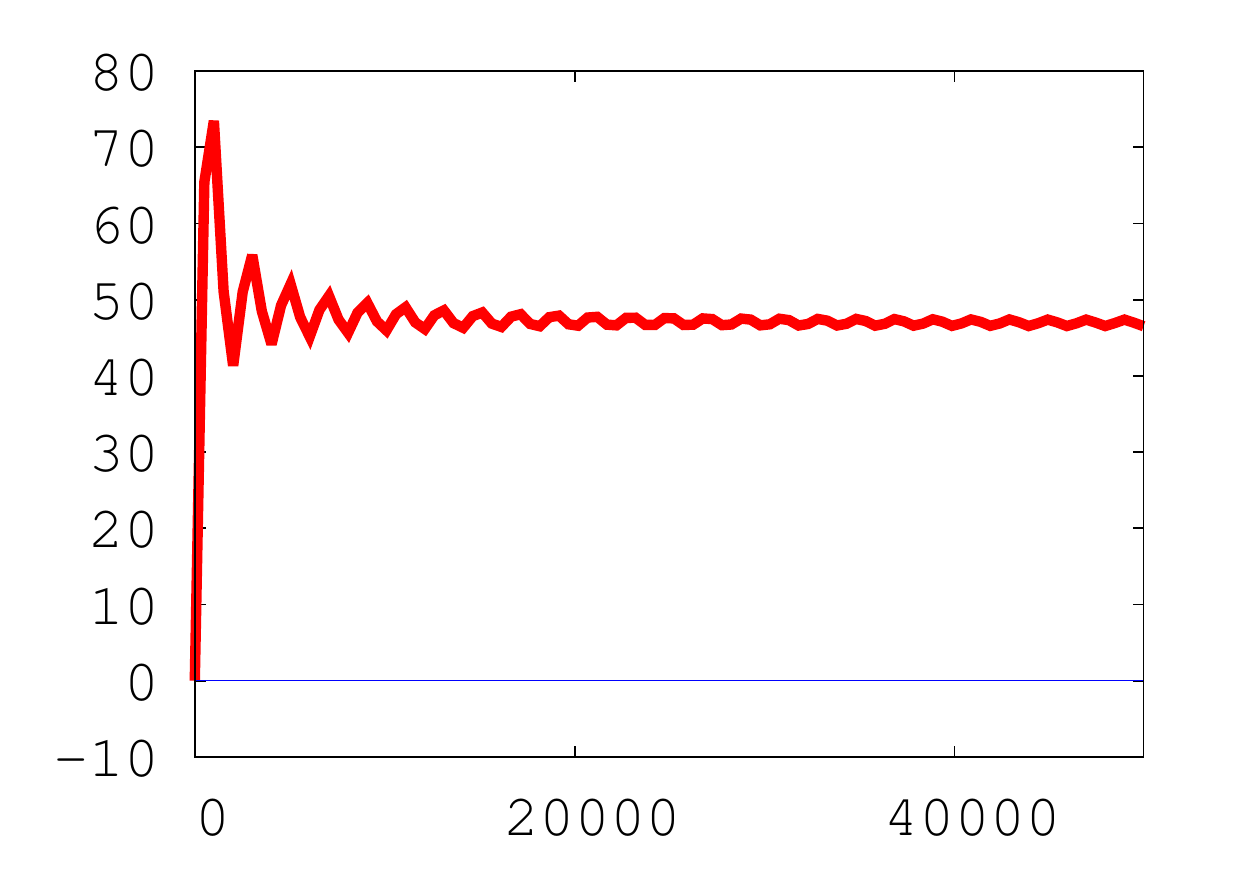}
&
\includegraphics[trim=10bp 25bp 30bp 10bp,clip,width=.15\linewidth]{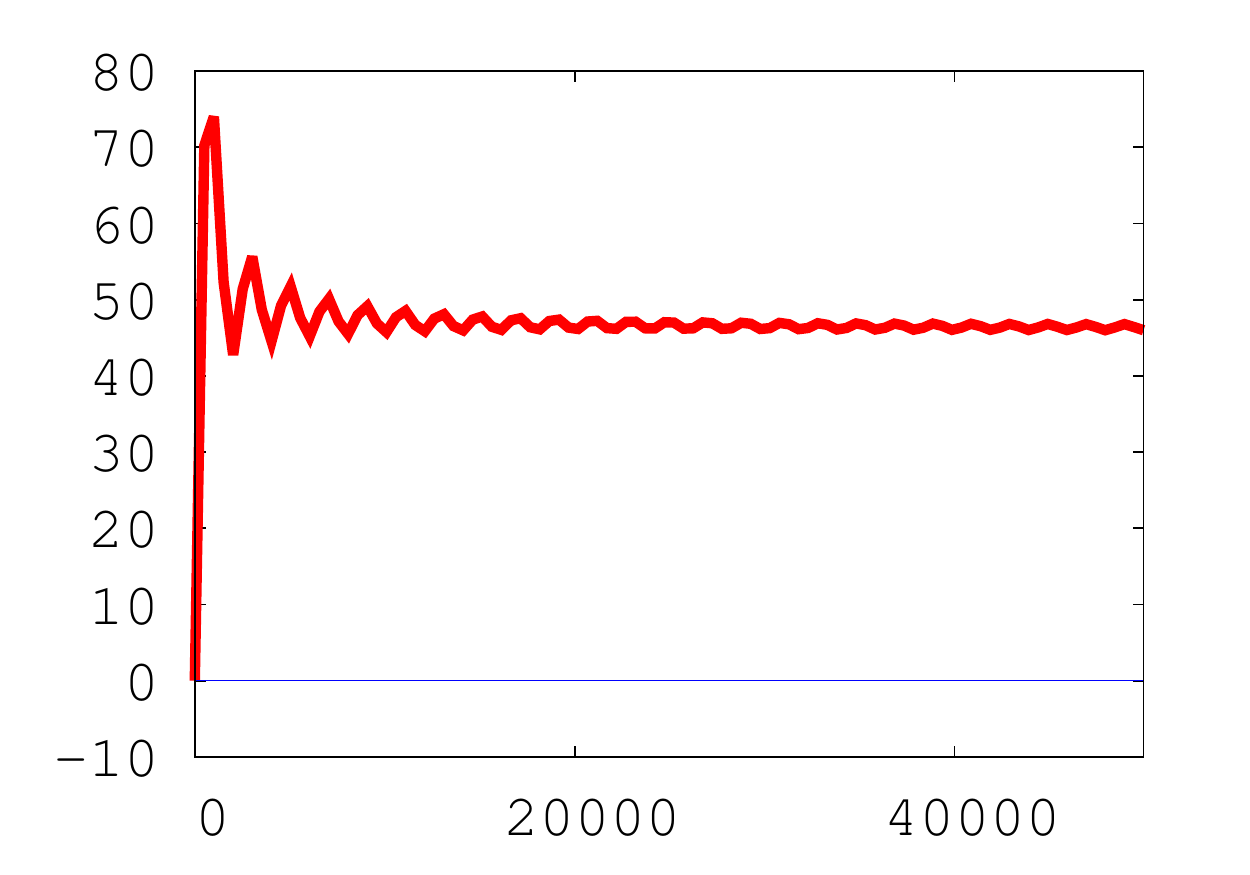}
&
\includegraphics[trim=10bp 25bp 30bp 10bp,clip,width=.15\linewidth]{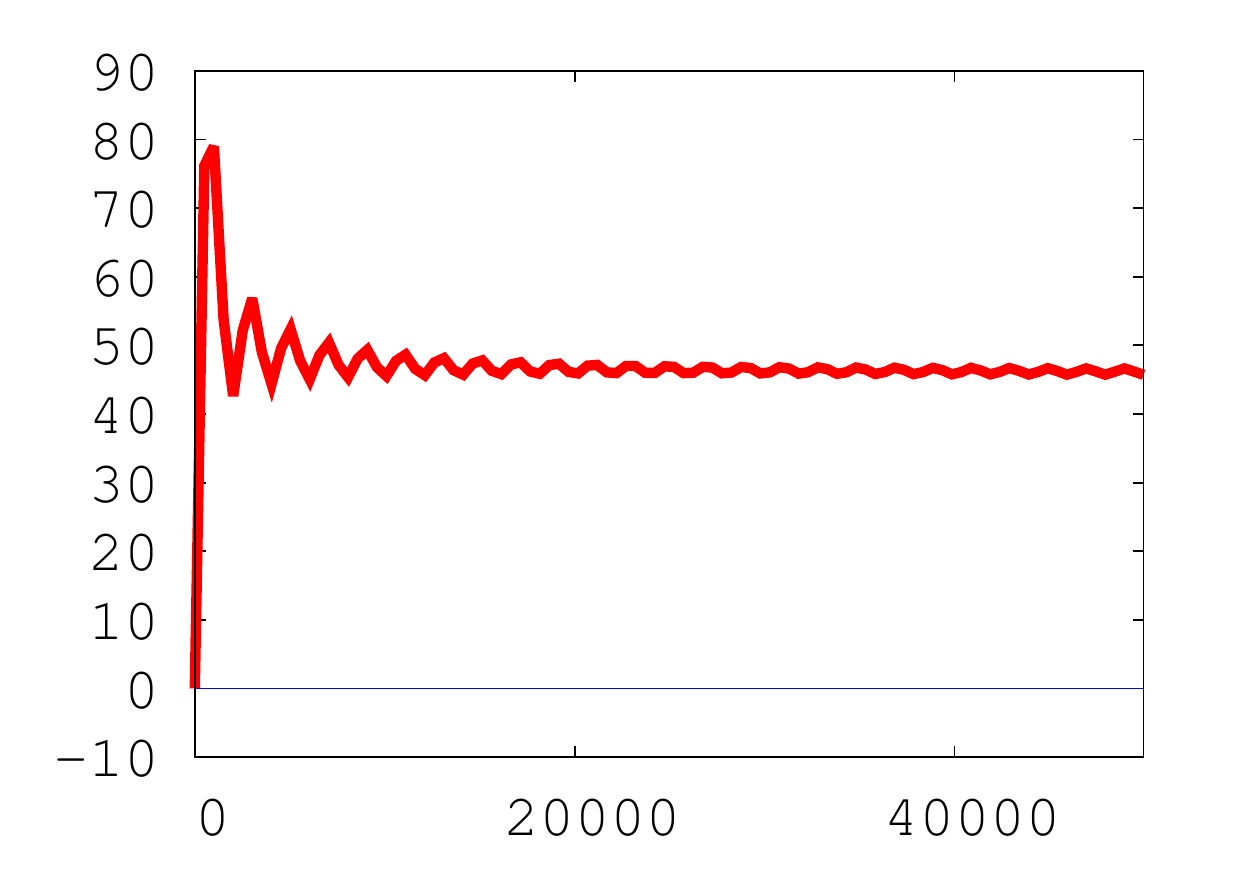}
&
\includegraphics[trim=10bp 25bp 30bp 10bp,clip,width=.15\linewidth]{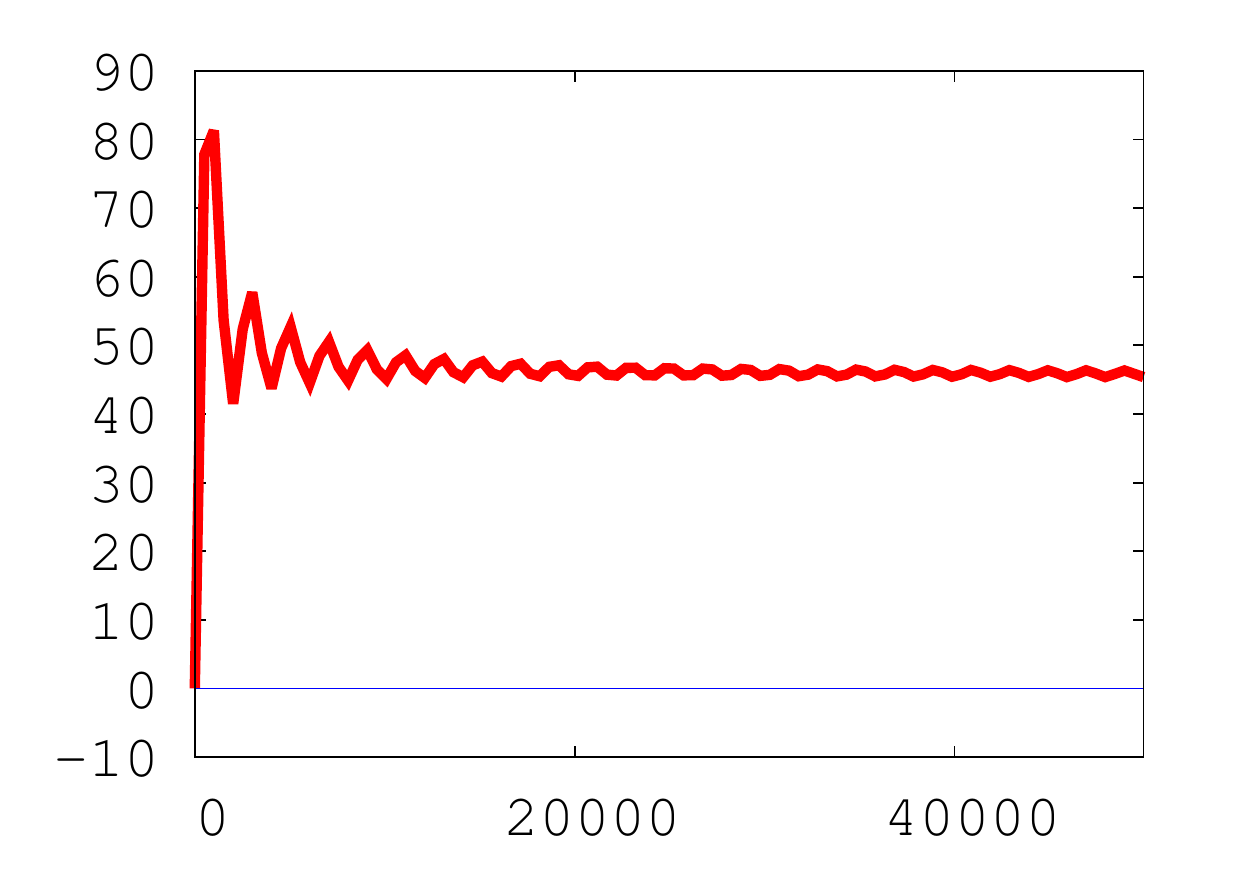}
\\
$\rho = 0.7$ & $\rho = 0.8$ & $\rho = 0.9$ & $\rho = \mbox{\textbf{1.0}}$ & $\rho = 1.1$ & $\rho = 1.2$ \\\hline
\includegraphics[trim=10bp 25bp 30bp 10bp,clip,width=.15\linewidth]{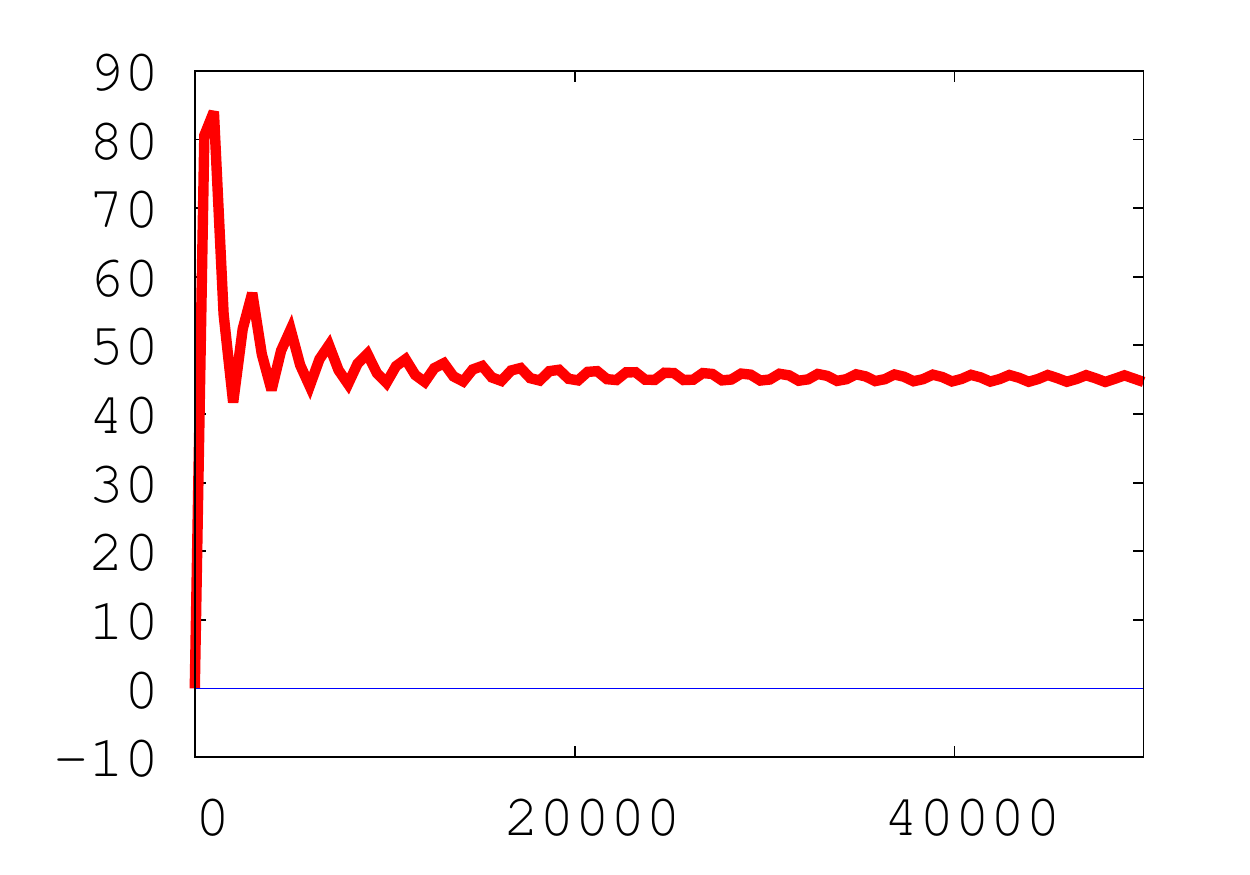}
&
\includegraphics[trim=10bp 25bp 30bp 10bp,clip,width=.15\linewidth]{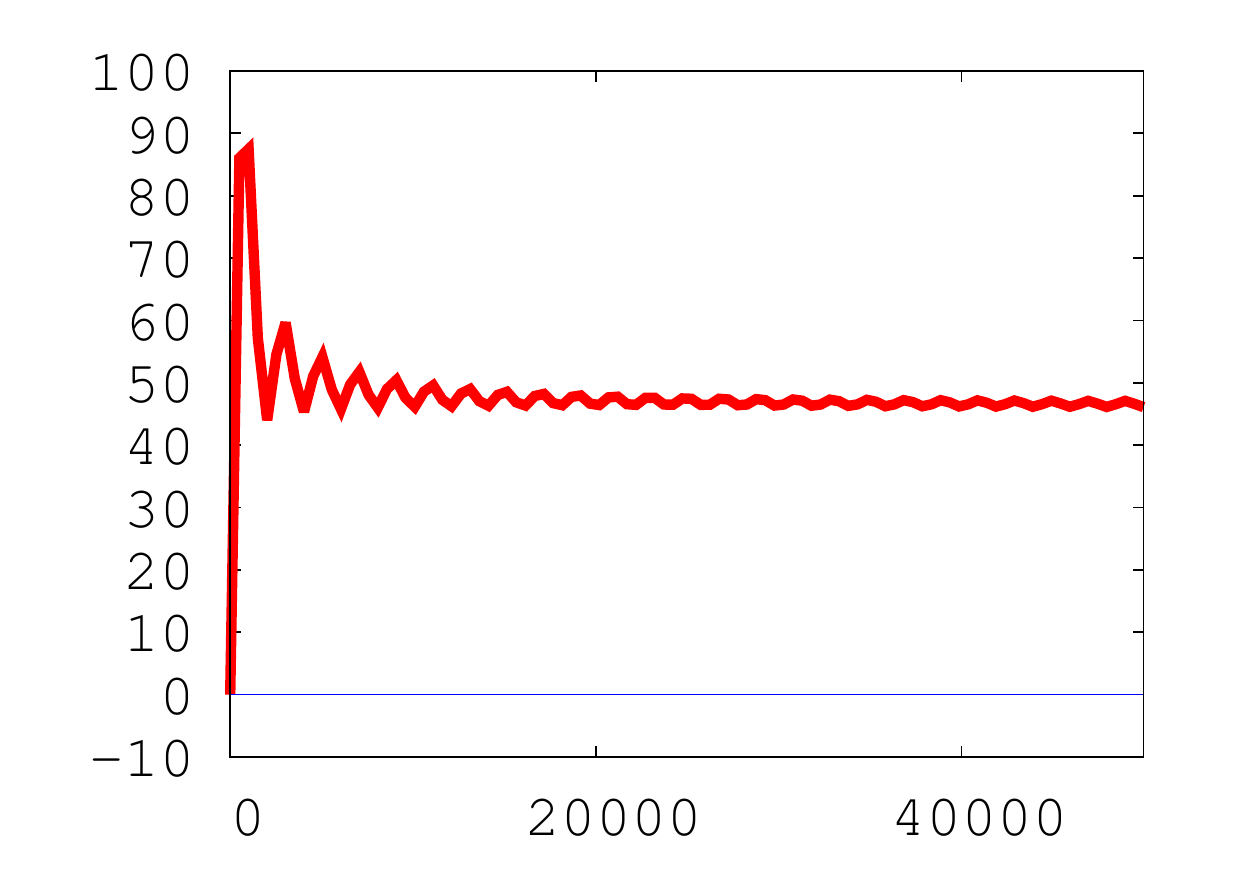}
&
\includegraphics[trim=10bp 25bp 30bp 10bp,clip,width=.15\linewidth]{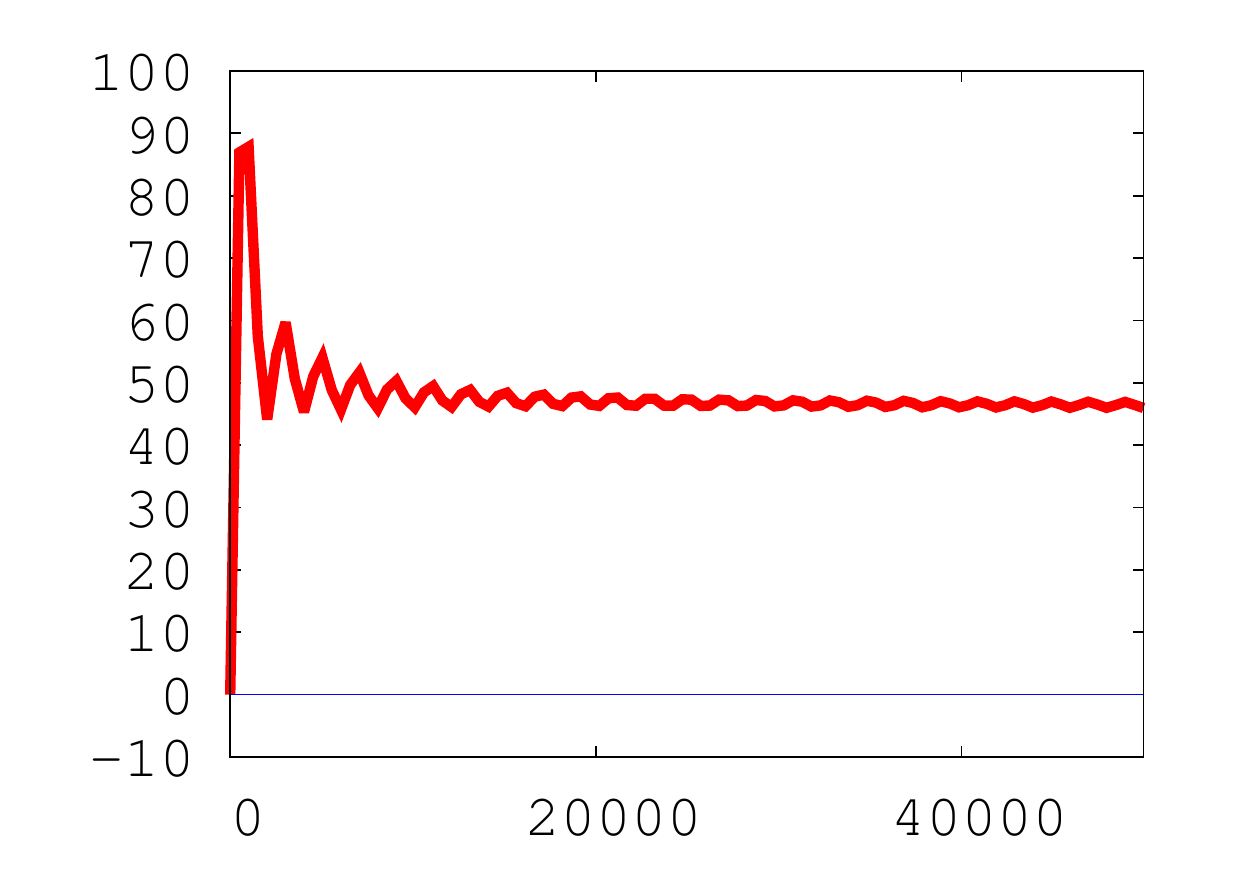}
&
\includegraphics[trim=10bp 25bp 30bp 10bp,clip,width=.15\linewidth]{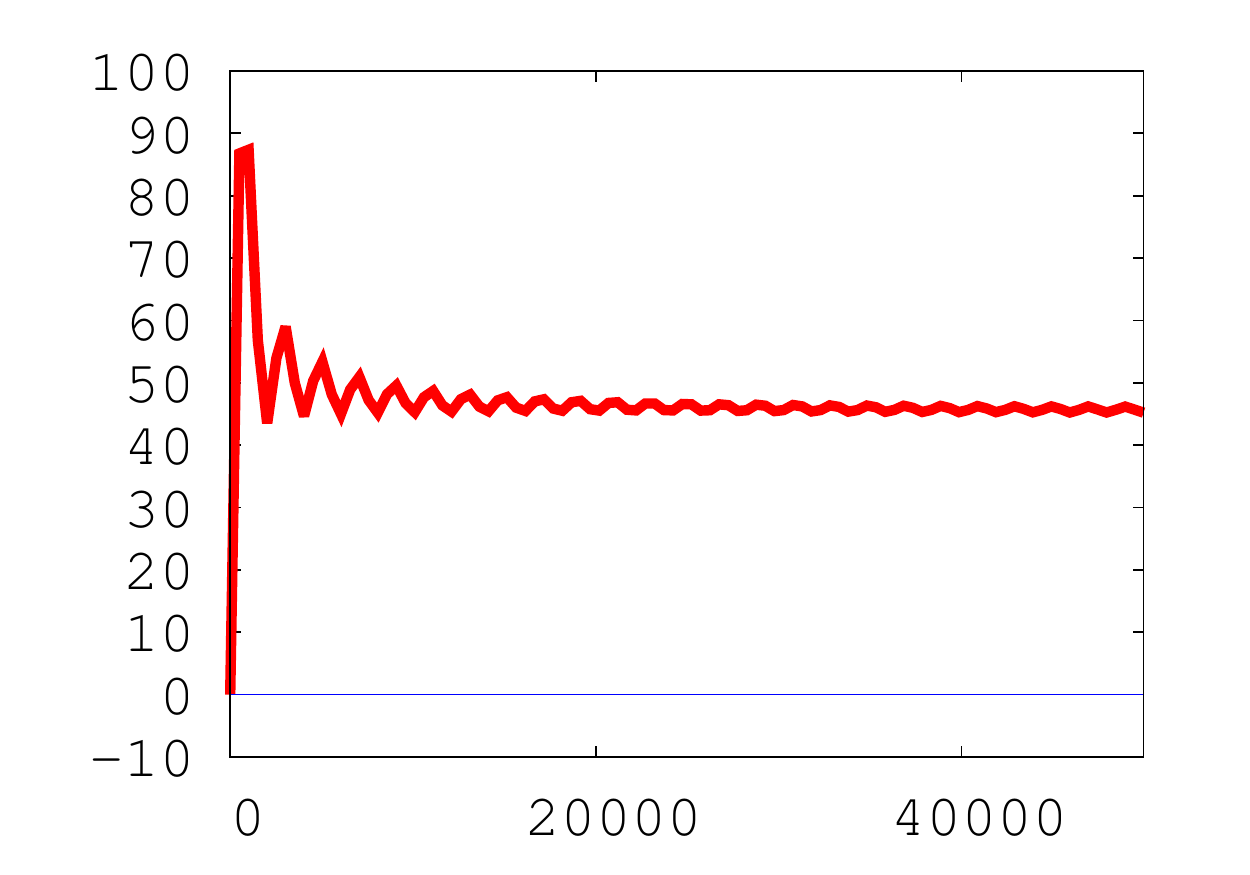}
&
\includegraphics[trim=10bp 25bp 30bp
10bp,clip,width=.15\linewidth]{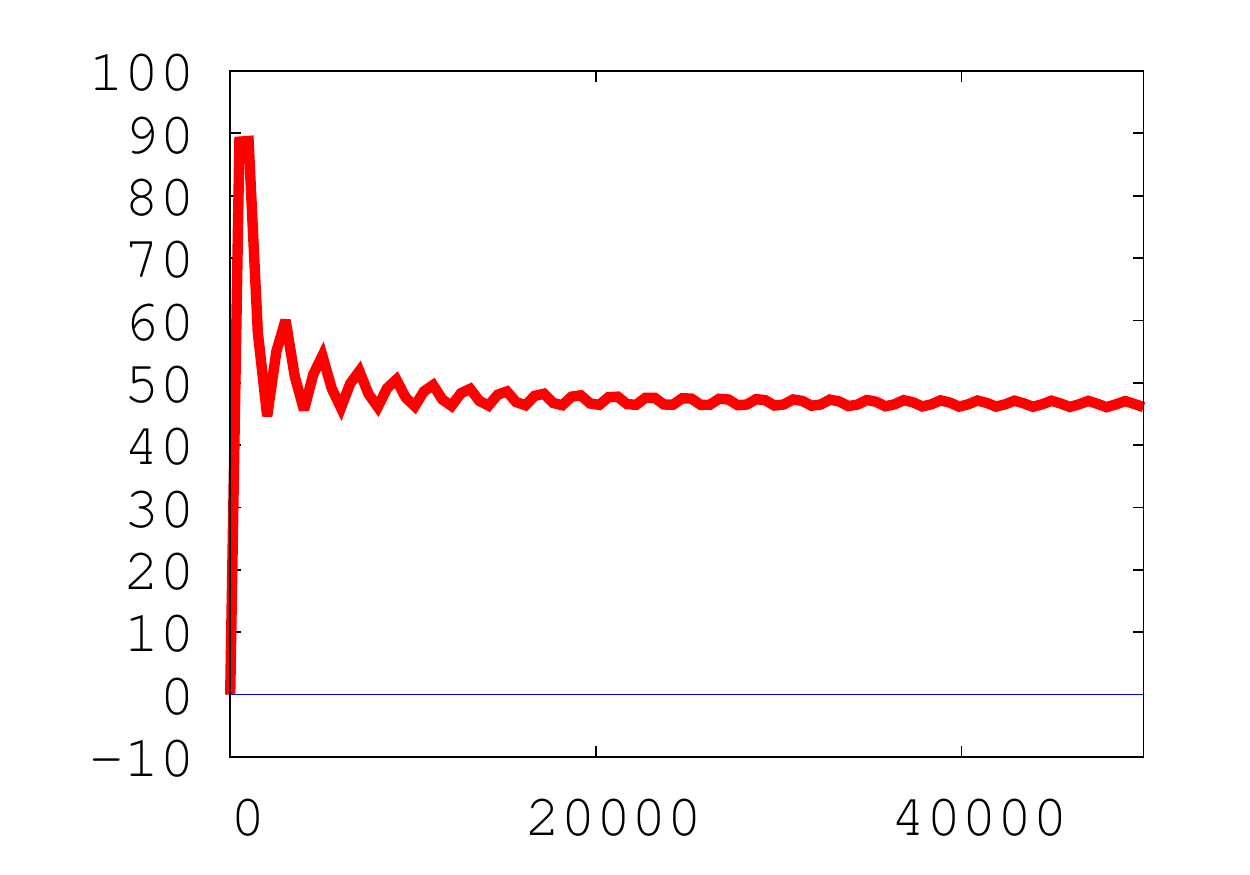}
&\\
$\rho =1.3$ & $\rho = 1.4$ & $\rho = 1.5$ & $\rho = 1.6$ & $\rho =
1.7$ &  \\\hline \hline
\end{tabular}
}
\end{center}
\caption{Error($p$-LMS) - Error(DN-$p$-LMS) as a function of $t$ ($\in \{1, 2, ..., 50 000\}$), $\bm{u}$ = sparse, $(p,q) = (1.17, 6.9)$.}
  \label{tc1_supp_rr2}
\end{sidewaystable}

\begin{sidewaystable}[t]
\begin{center}
{\small
\begin{tabular}{cccccc}\hline \hline
\includegraphics[trim=10bp 30bp 30bp 10bp,clip,width=.15\linewidth]{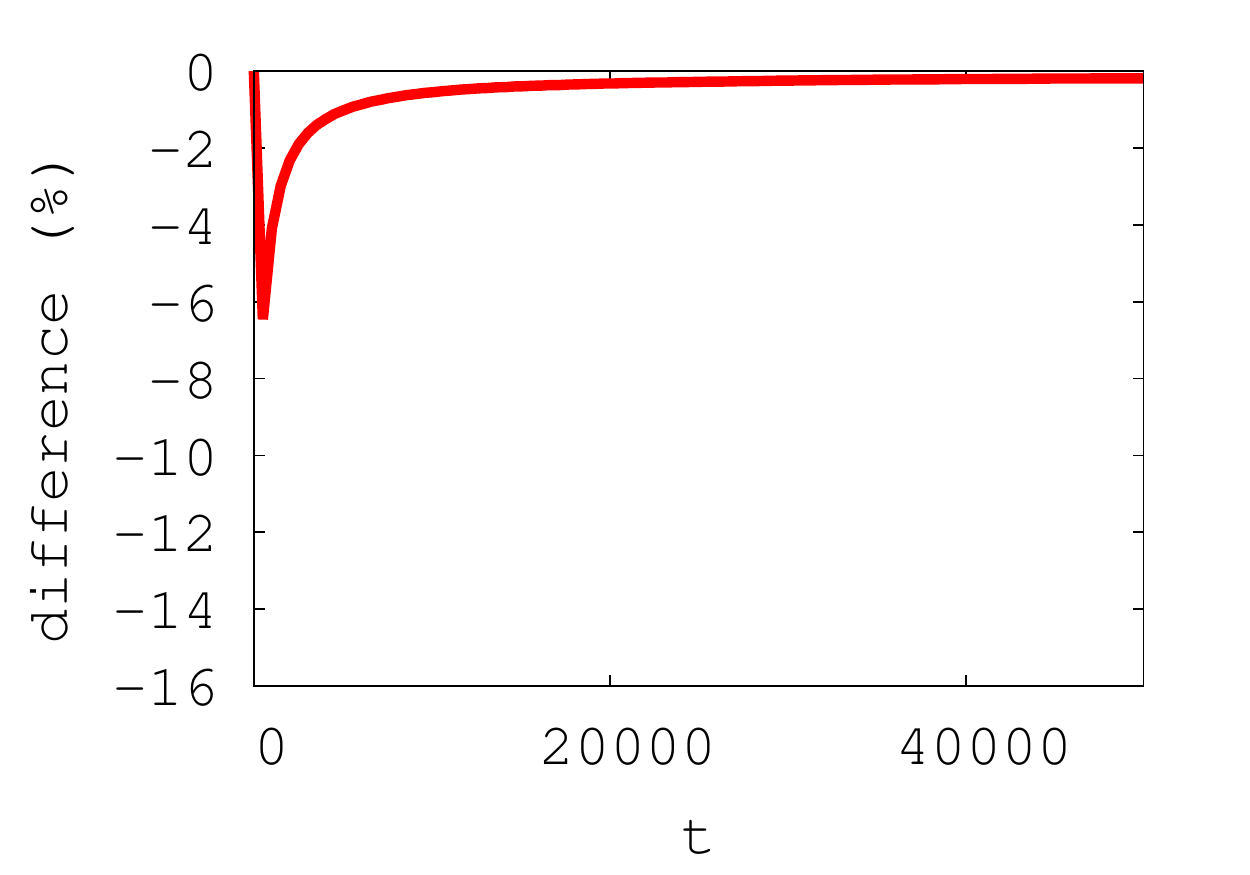}
&
\includegraphics[trim=10bp 25bp 30bp 10bp,clip,width=.15\linewidth]{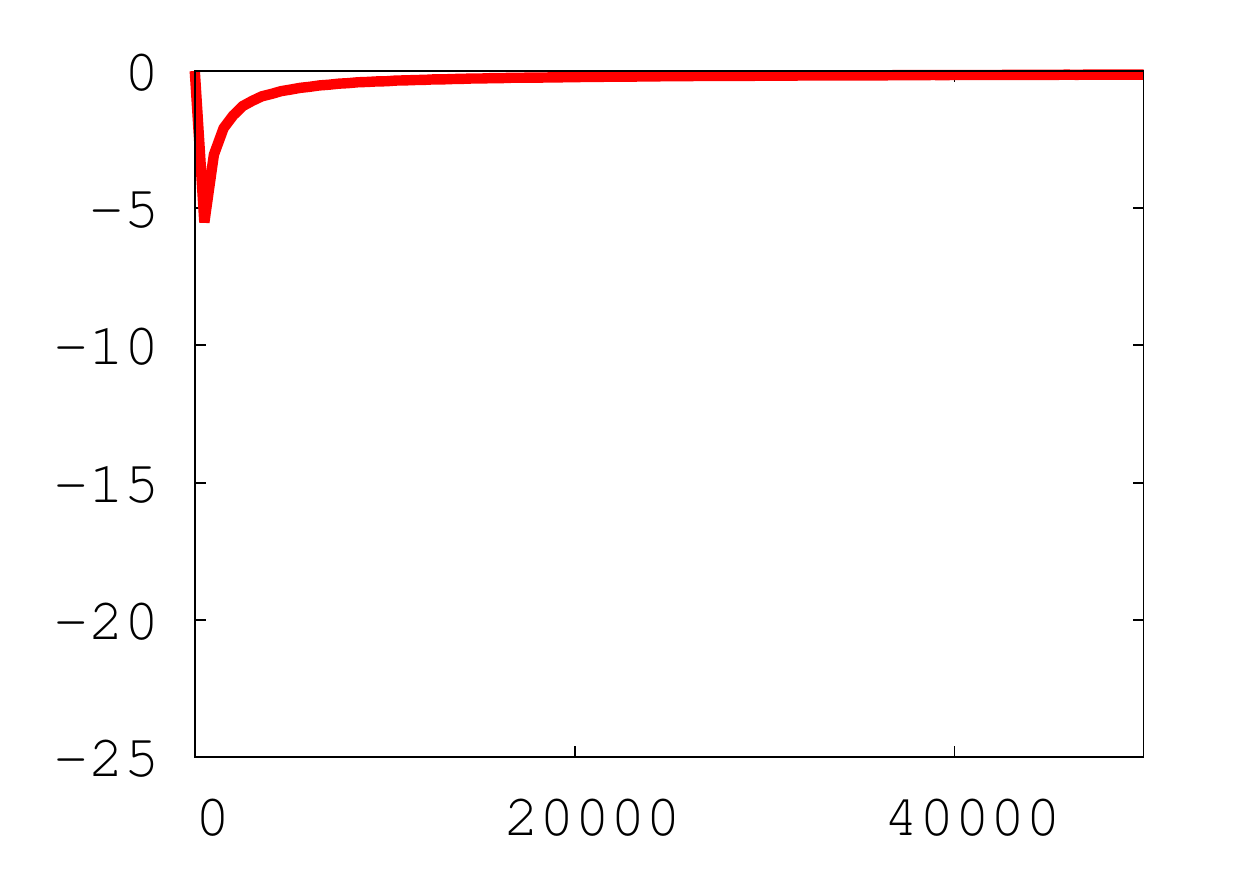}
&
\includegraphics[trim=10bp 25bp 30bp 10bp,clip,width=.15\linewidth]{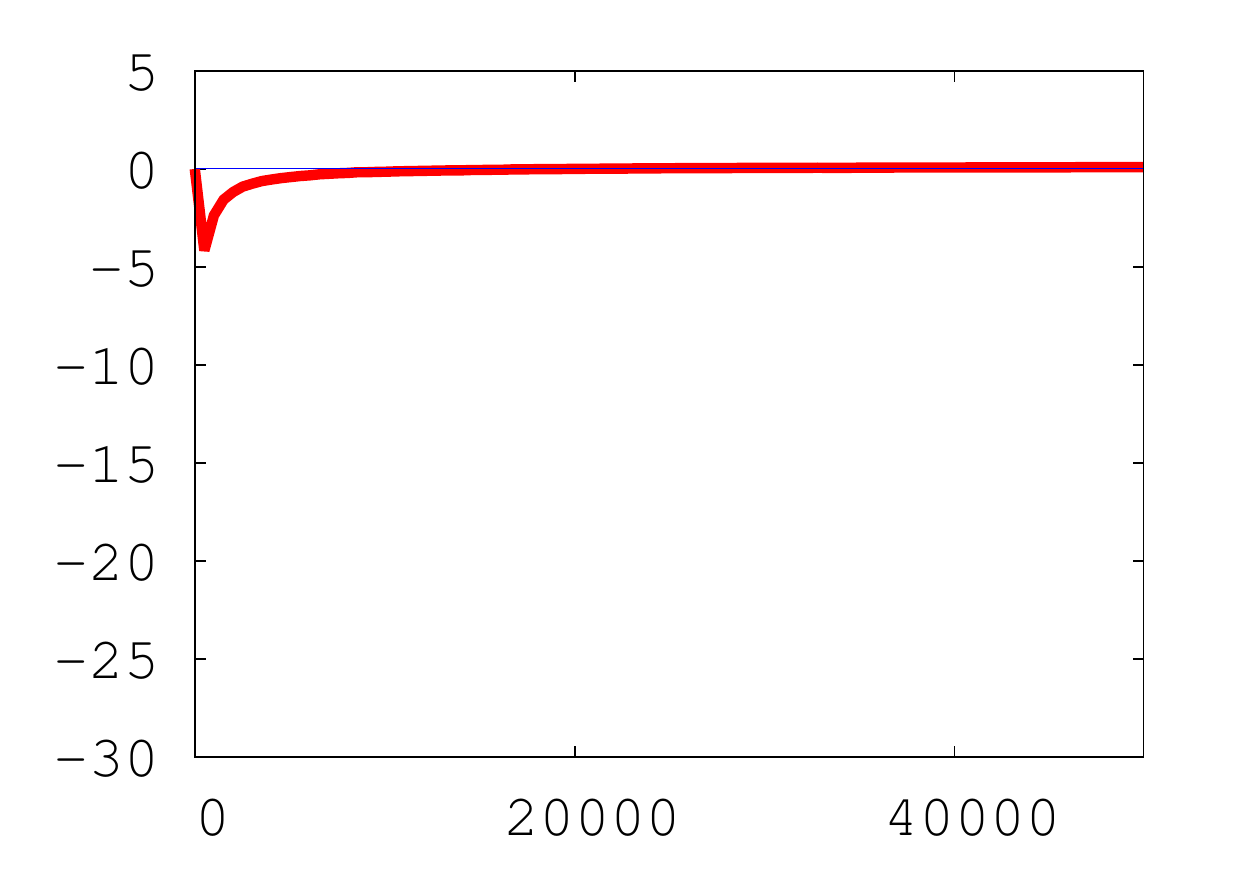}
&
\includegraphics[trim=10bp 25bp 30bp 10bp,clip,width=.15\linewidth]{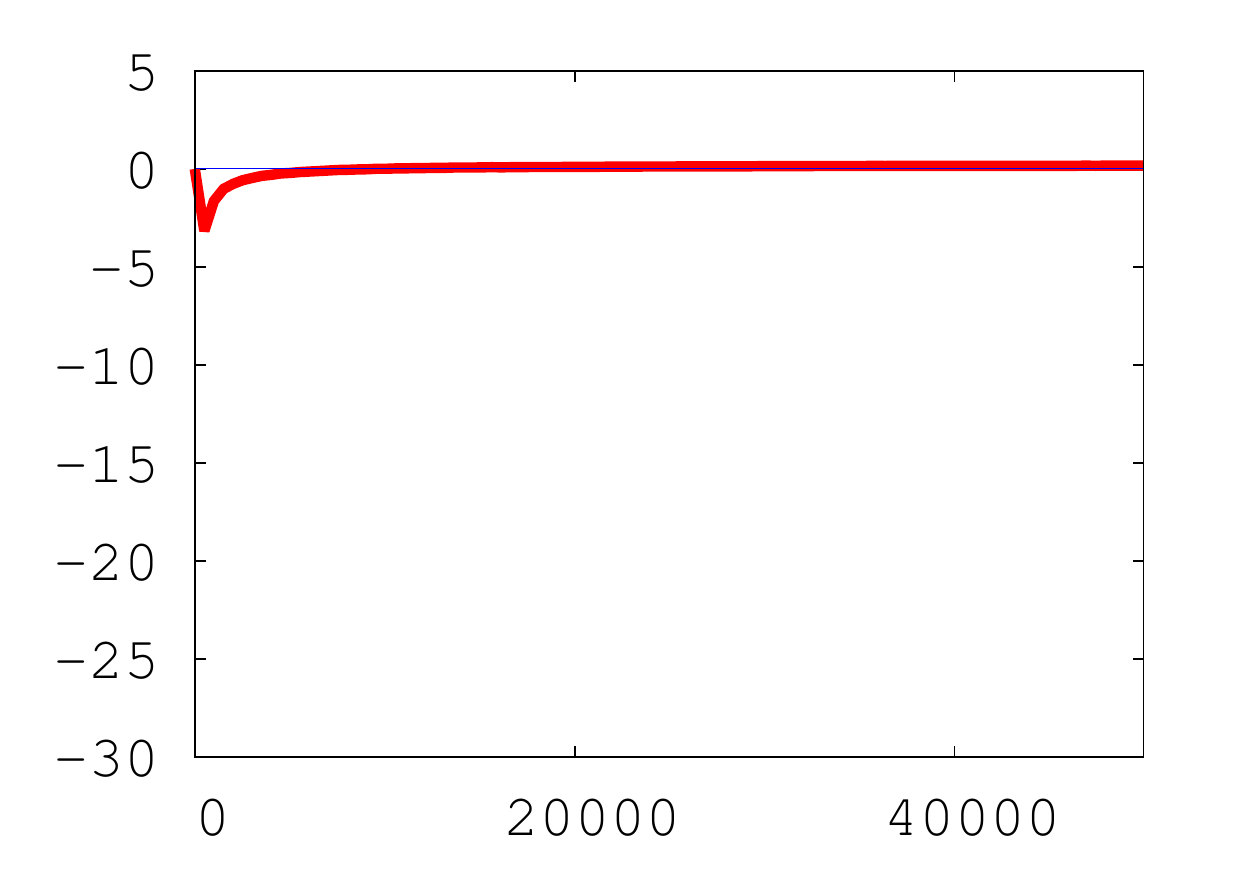}
&
\includegraphics[trim=10bp 25bp 30bp 10bp,clip,width=.15\linewidth]{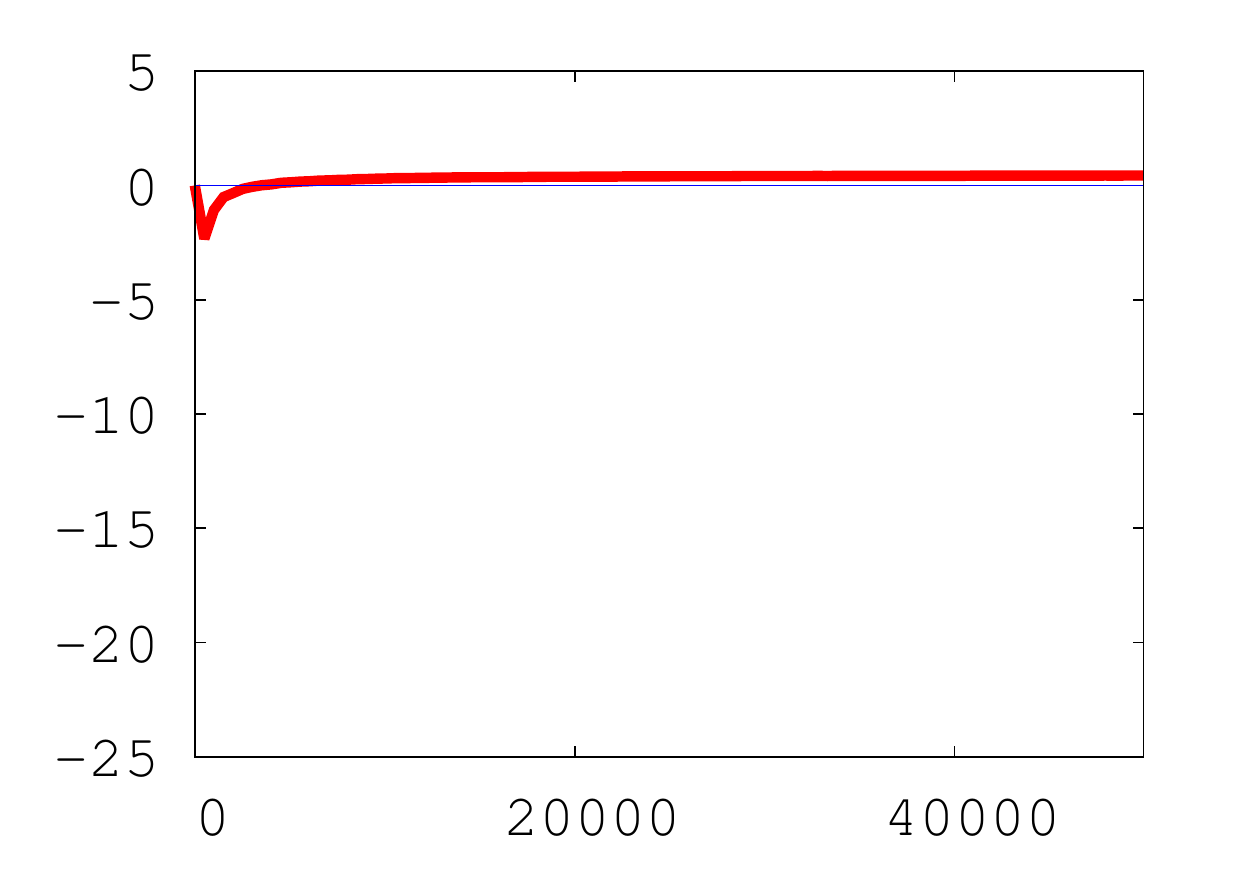}
&
\includegraphics[trim=10bp 25bp 30bp 10bp,clip,width=.15\linewidth]{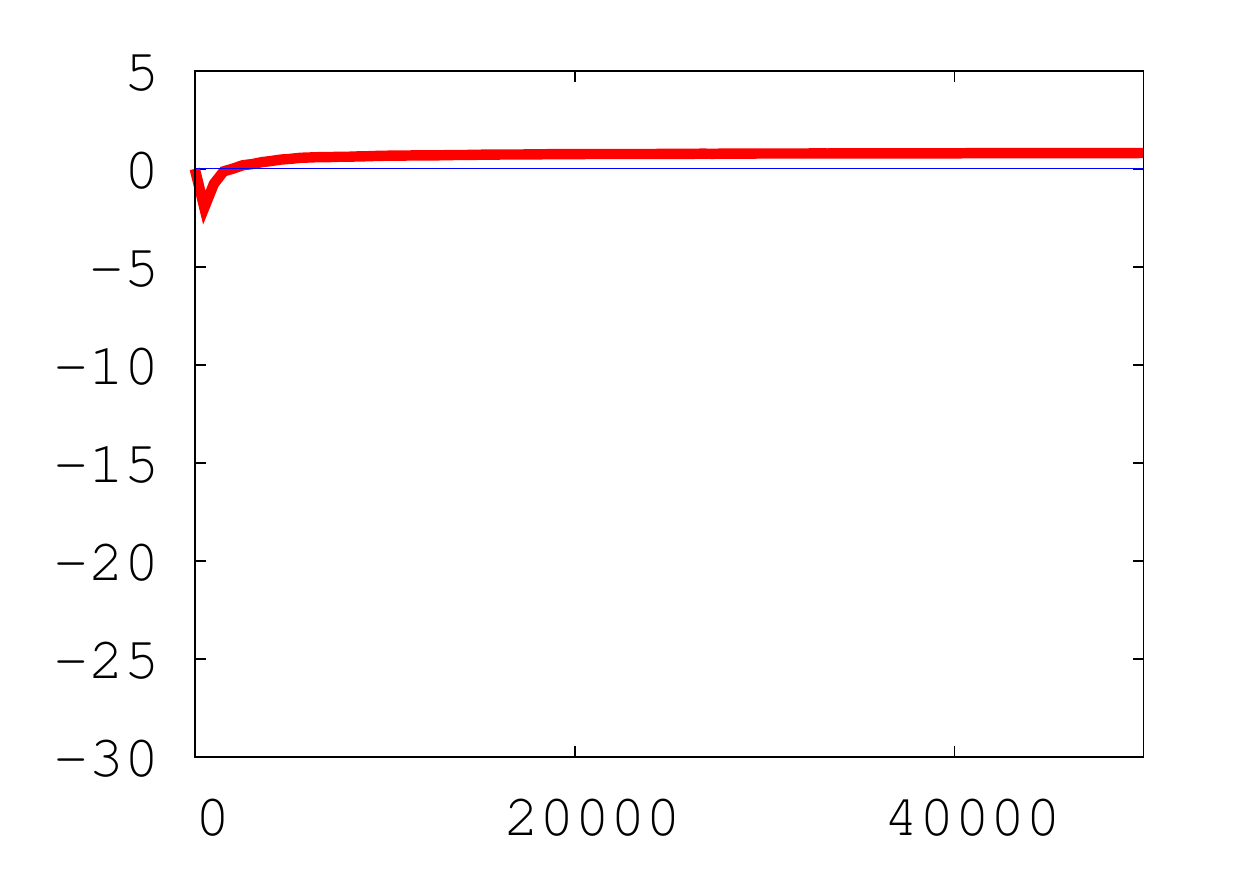}
\\
$\rho = 0.1$ & $\rho = 0.2$ & $\rho = 0.3$ & $\rho = 0.4$ & $\rho = 0.5$ & $\rho = 0.6$ \\\hline
\includegraphics[trim=10bp 25bp 30bp 10bp,clip,width=.15\linewidth]{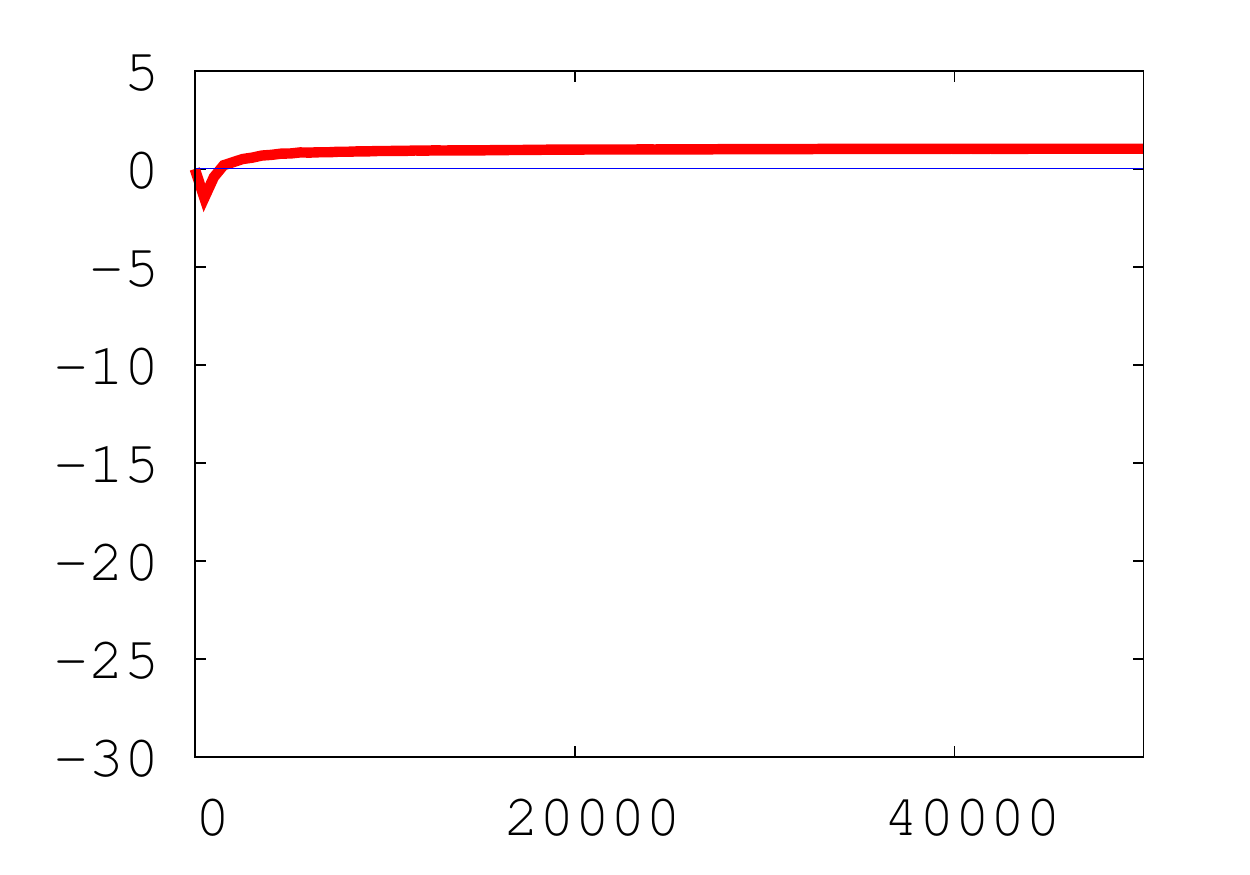}
&
\includegraphics[trim=10bp 25bp 30bp 10bp,clip,width=.15\linewidth]{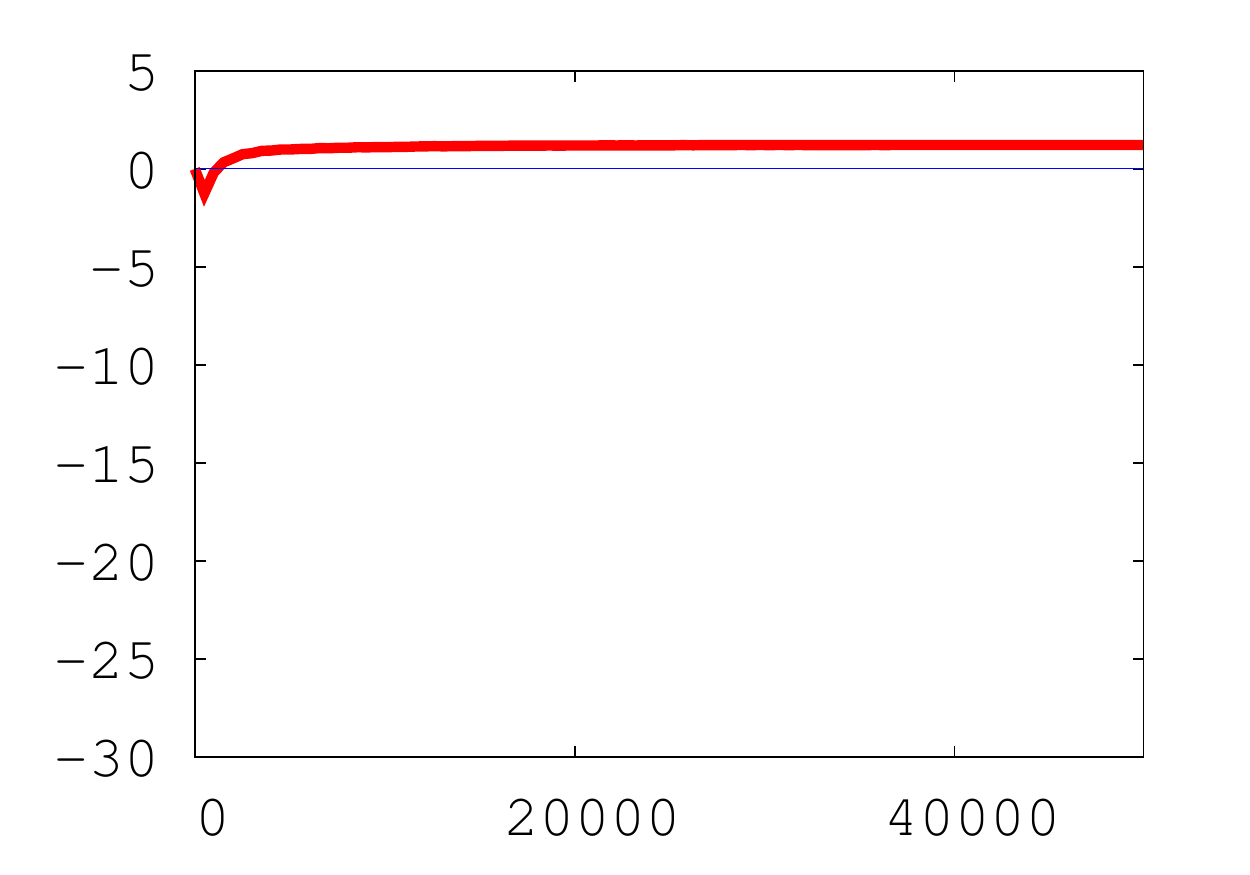}
&
\includegraphics[trim=10bp 25bp 30bp 10bp,clip,width=.15\linewidth]{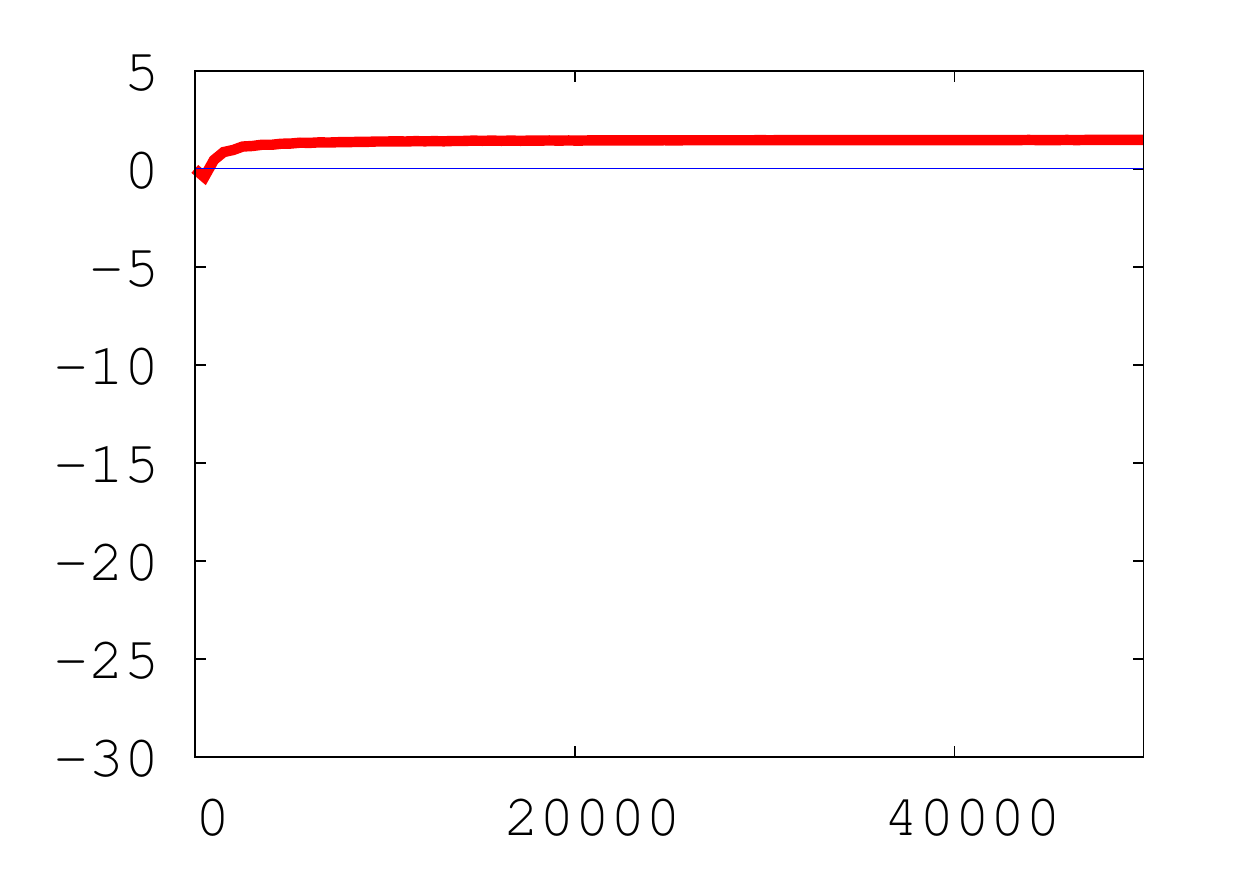}
&
\includegraphics[trim=10bp 25bp 30bp 10bp,clip,width=.15\linewidth]{results_P2_00_Q2_00_UCHOICE_GAUSS_UR_1_00_NOLABELS-eps-converted-to}
&
\includegraphics[trim=10bp 25bp 30bp 10bp,clip,width=.15\linewidth]{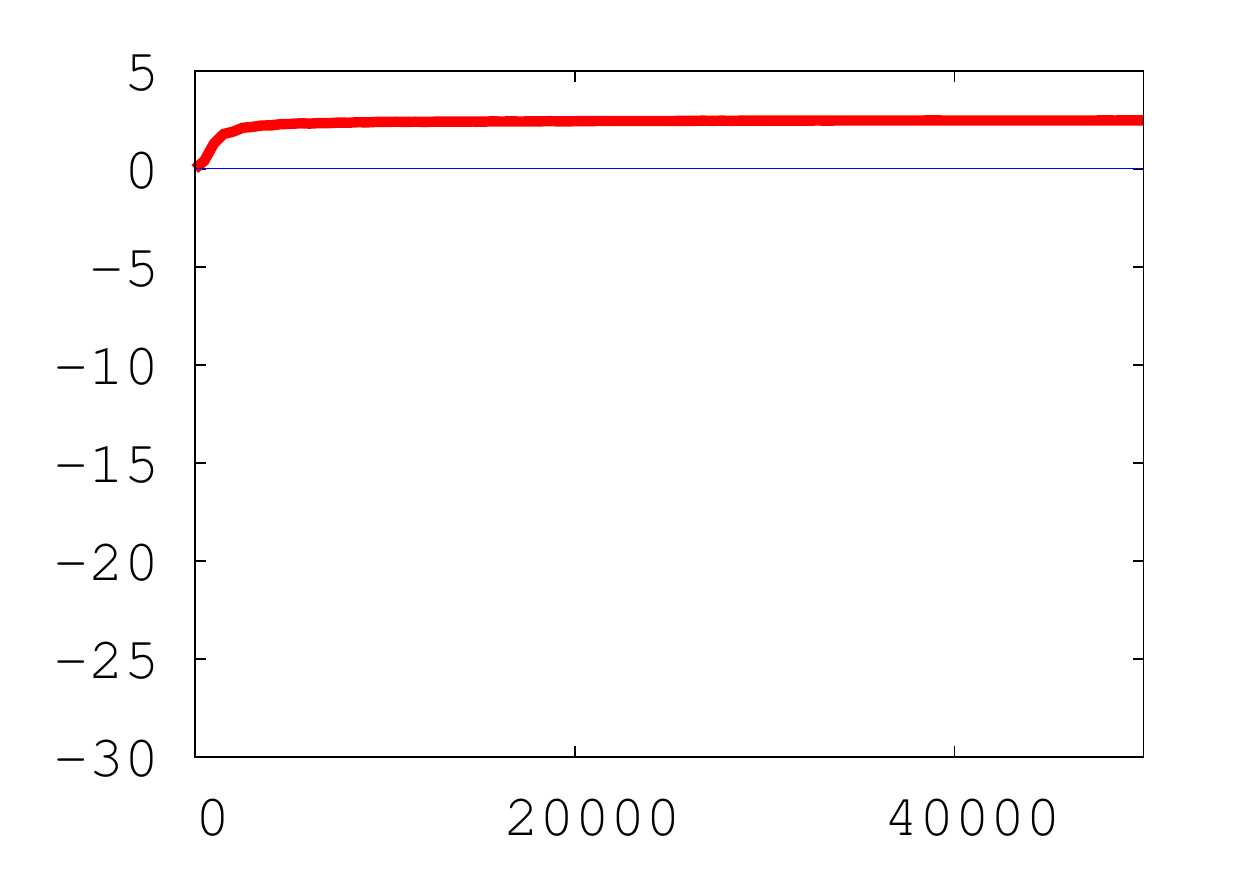}
&
\includegraphics[trim=10bp 25bp 30bp 10bp,clip,width=.15\linewidth]{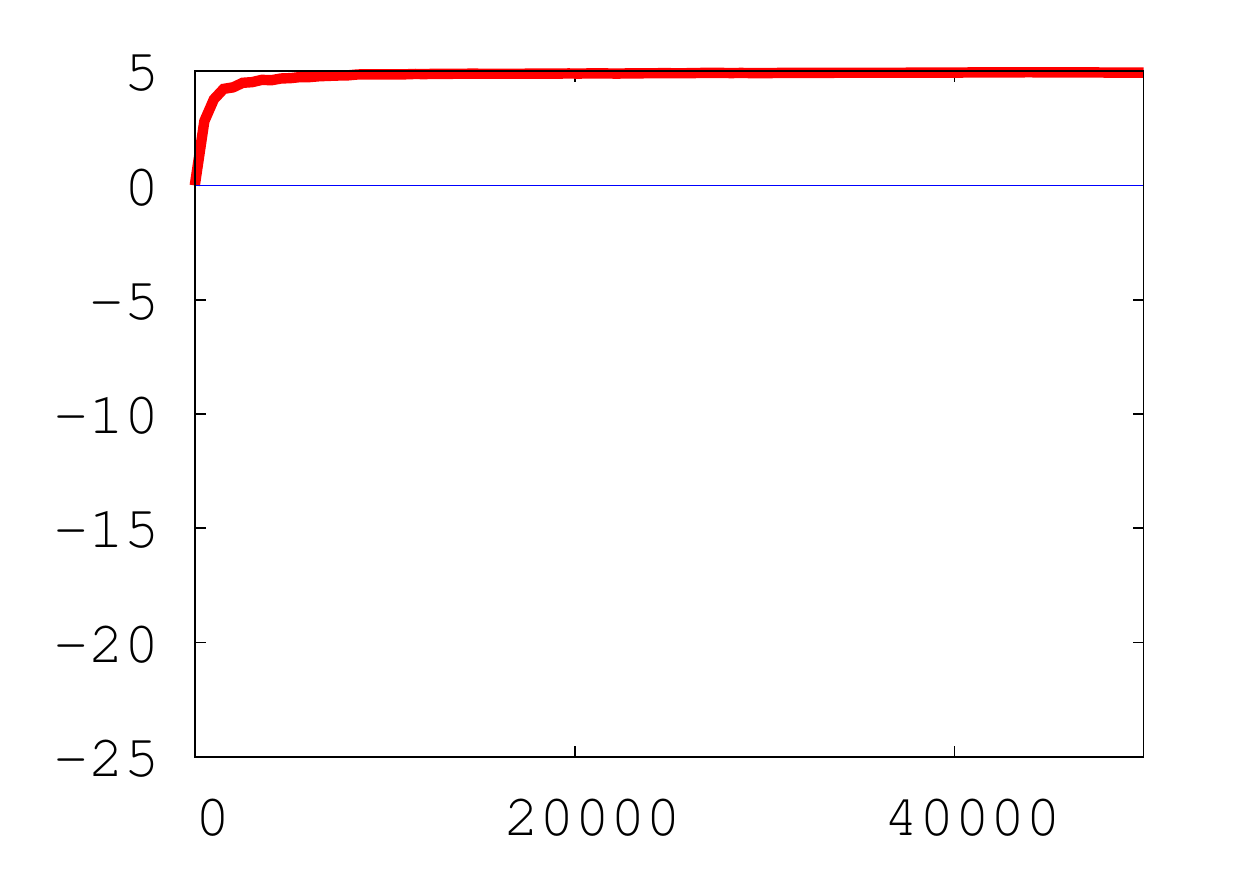}
\\
$\rho = 0.7$ & $\rho = 0.8$ & $\rho = 0.9$ & $\rho = \mbox{\textbf{1.0}}$ & $\rho = 1.1$ & $\rho = 1.2$ \\\hline
\includegraphics[trim=10bp 25bp 30bp 10bp,clip,width=.15\linewidth]{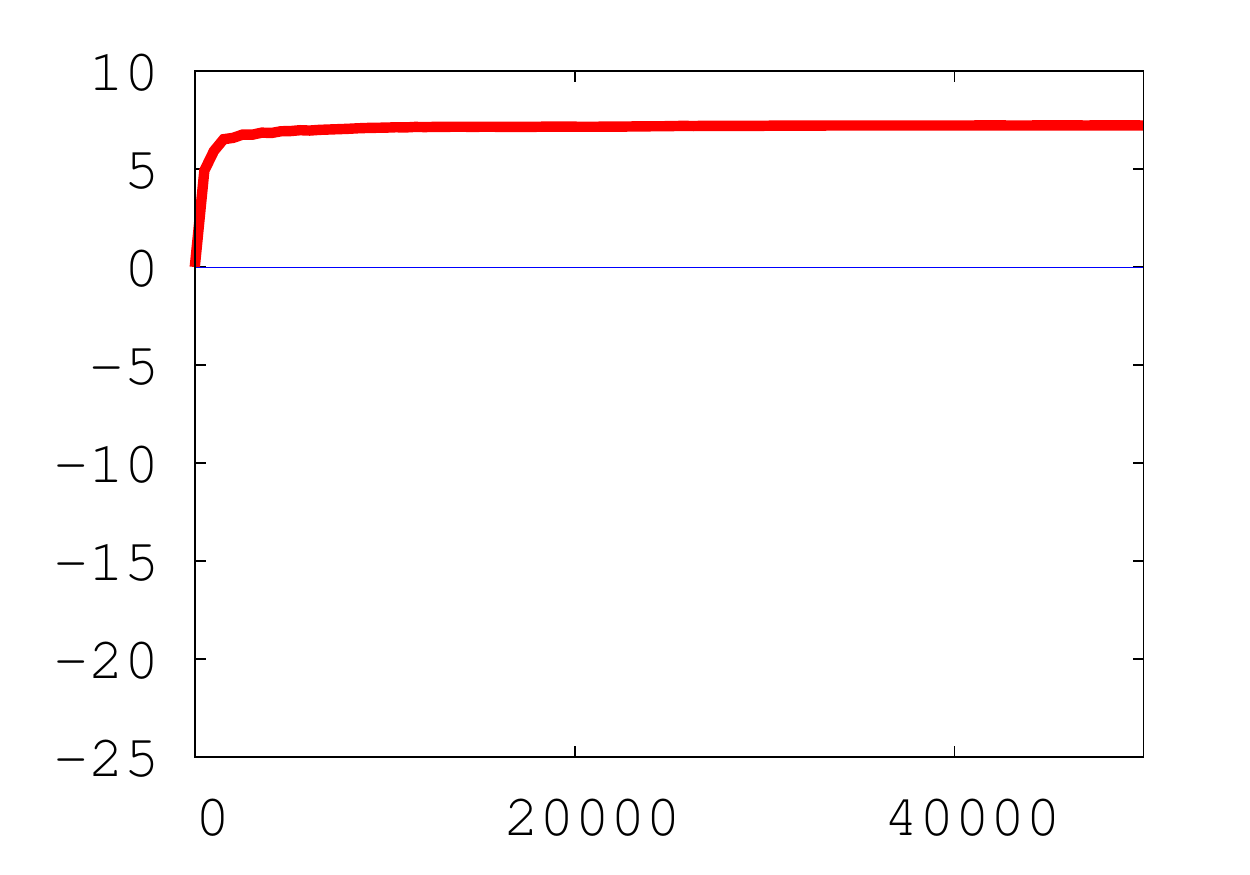}
&
\includegraphics[trim=10bp 25bp 30bp 10bp,clip,width=.15\linewidth]{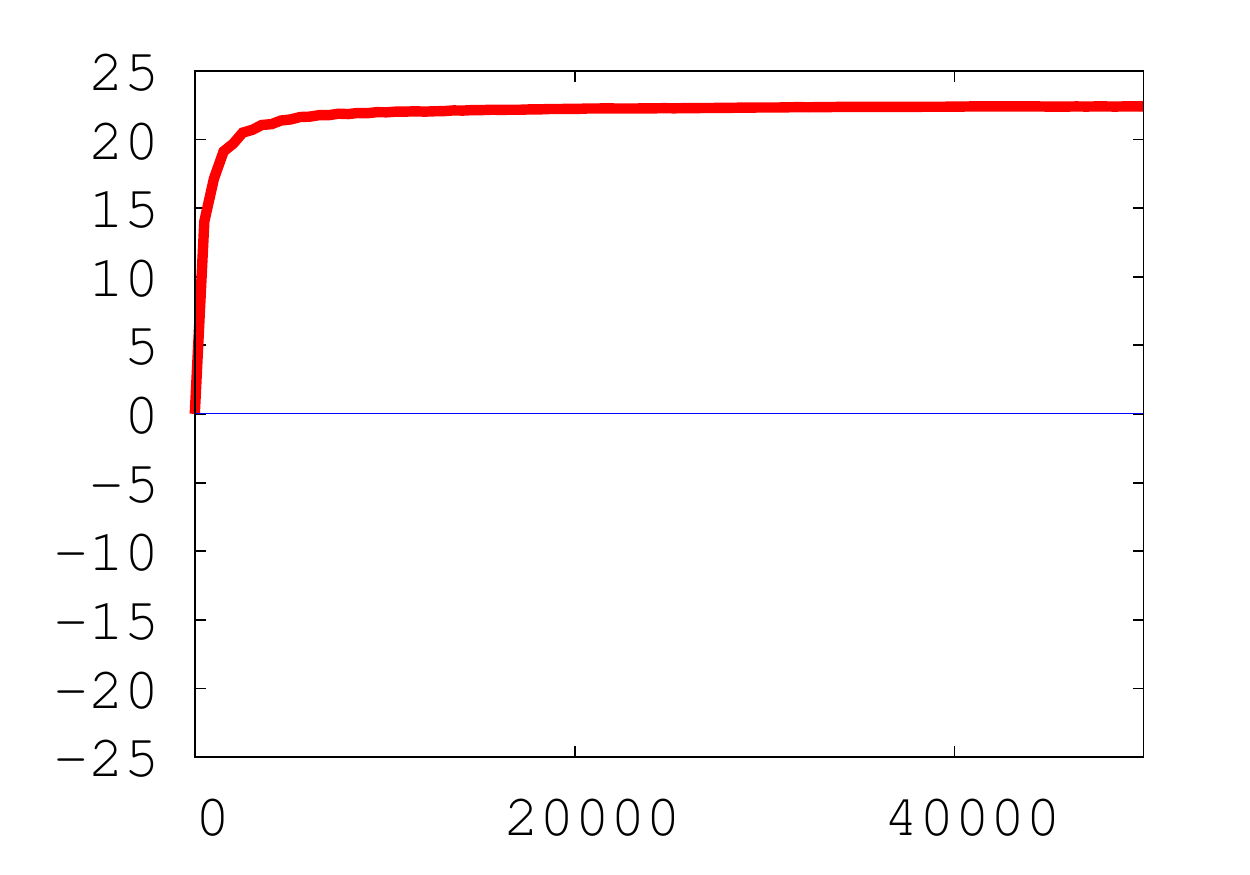}
&
\includegraphics[trim=10bp 25bp 30bp 10bp,clip,width=.15\linewidth]{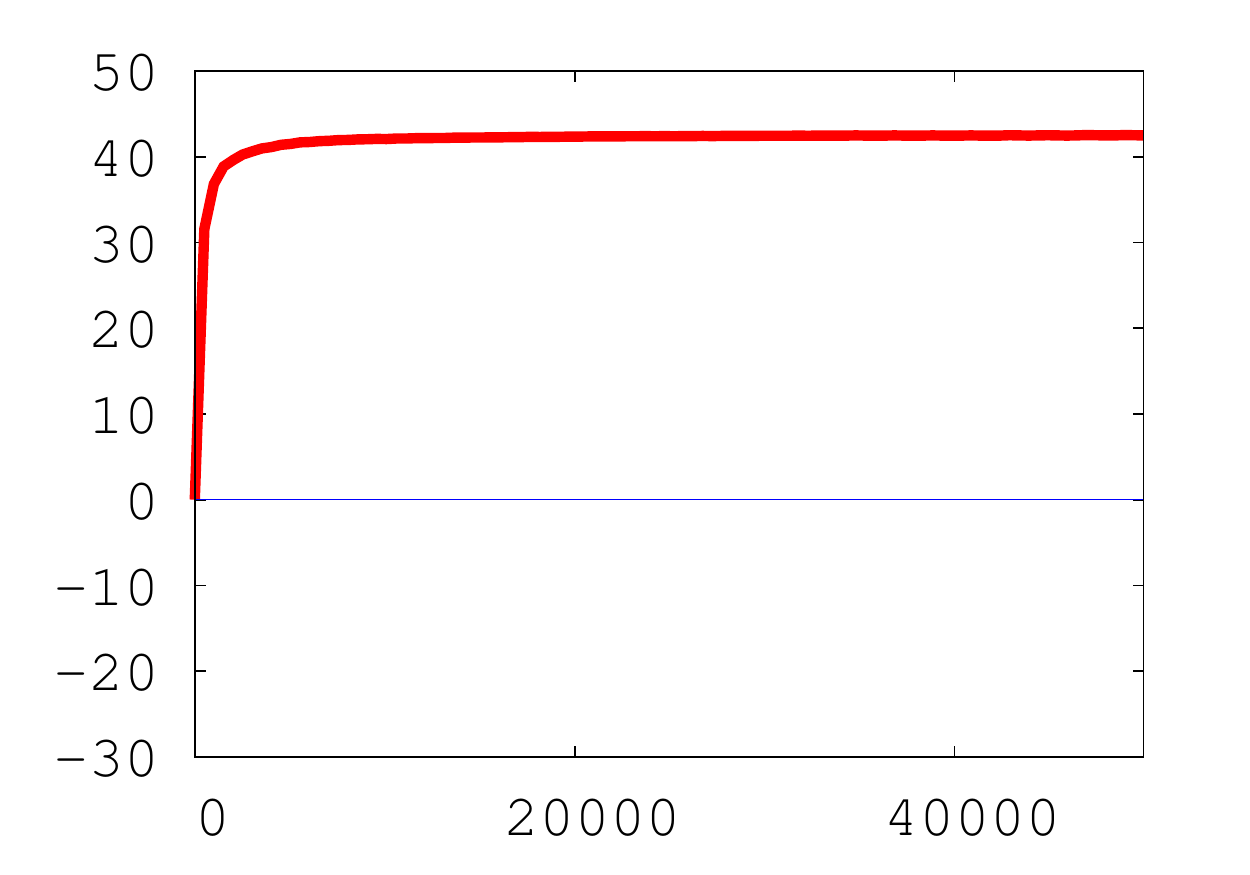}
&
\includegraphics[trim=10bp 25bp 30bp 10bp,clip,width=.15\linewidth]{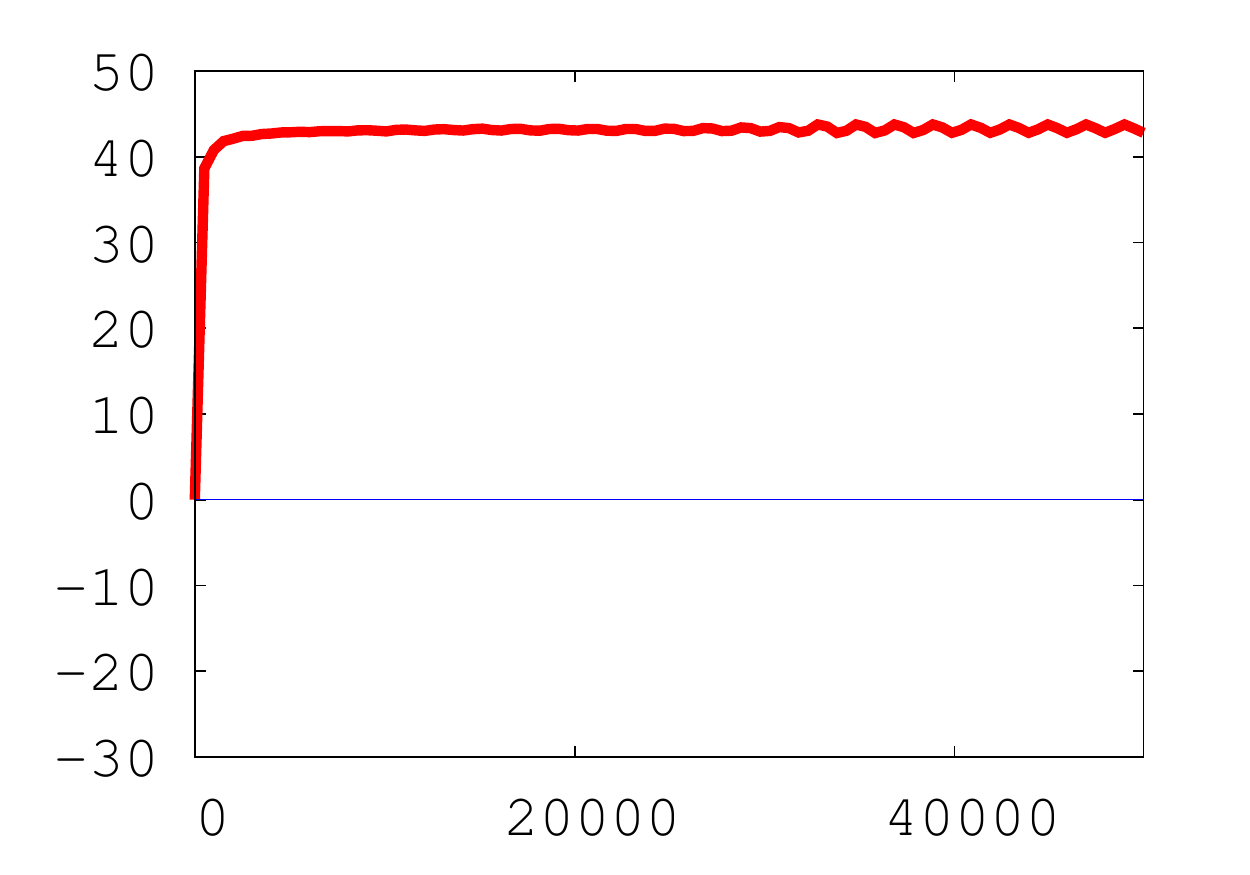}
&
\includegraphics[trim=10bp 25bp 30bp
10bp,clip,width=.15\linewidth]{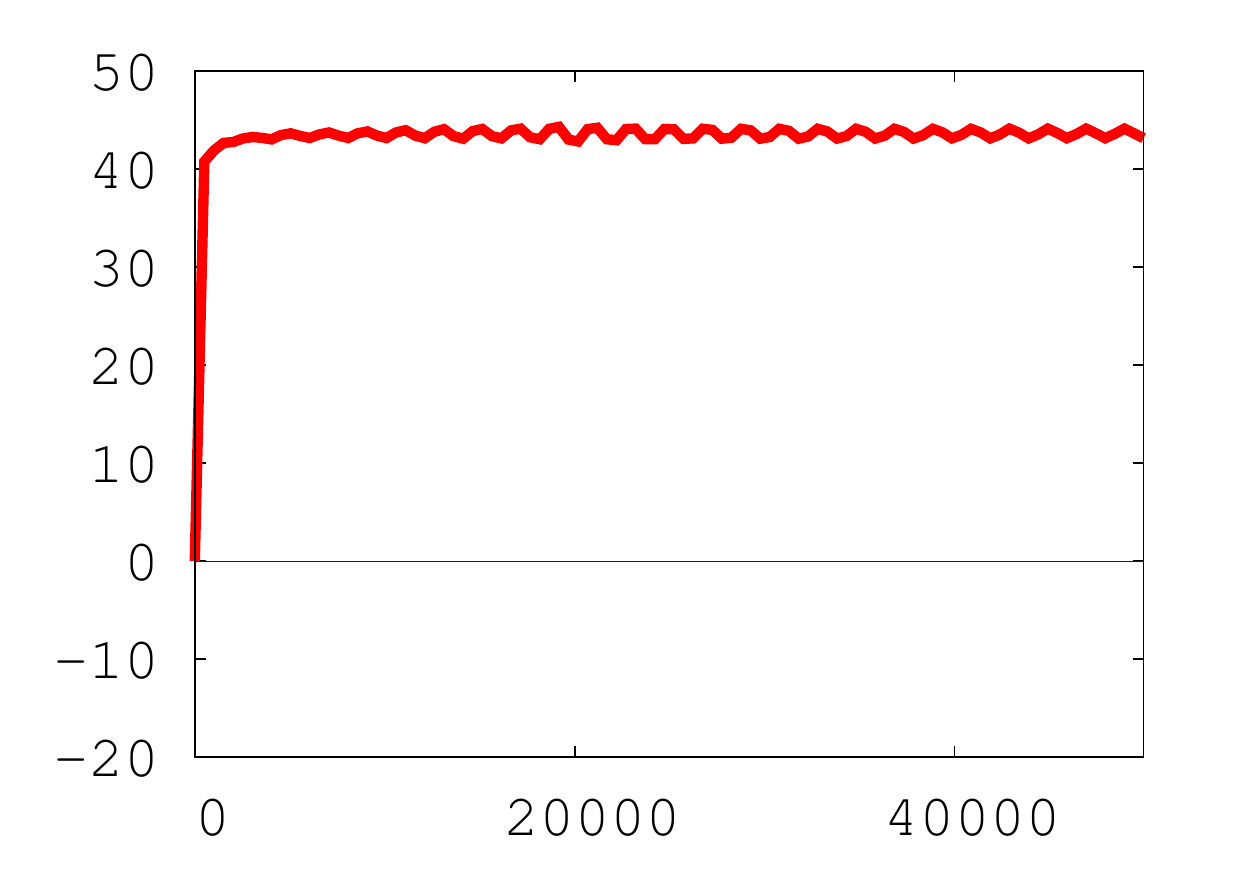}
&\\
$\rho =1.3$ & $\rho = 1.4$ & $\rho = 1.5$ & $\rho = 1.6$ & $\rho =
1.7$ &  \\\hline \hline
\end{tabular}
}
\end{center}
\caption{Error($p$-LMS) - Error(DN-$p$-LMS) as a function of $t$ ($\in \{1, 2, ..., 50 000\}$), $\bm{u}$ = dense, $(p,q) = (2.0, 2.0)$.}
  \label{tc1_supp_rr3}
\end{sidewaystable}

\begin{sidewaystable}[t]
\begin{center}
{\small
\begin{tabular}{cccccc}\hline \hline
\includegraphics[trim=10bp 30bp 30bp 10bp,clip,width=.15\linewidth]{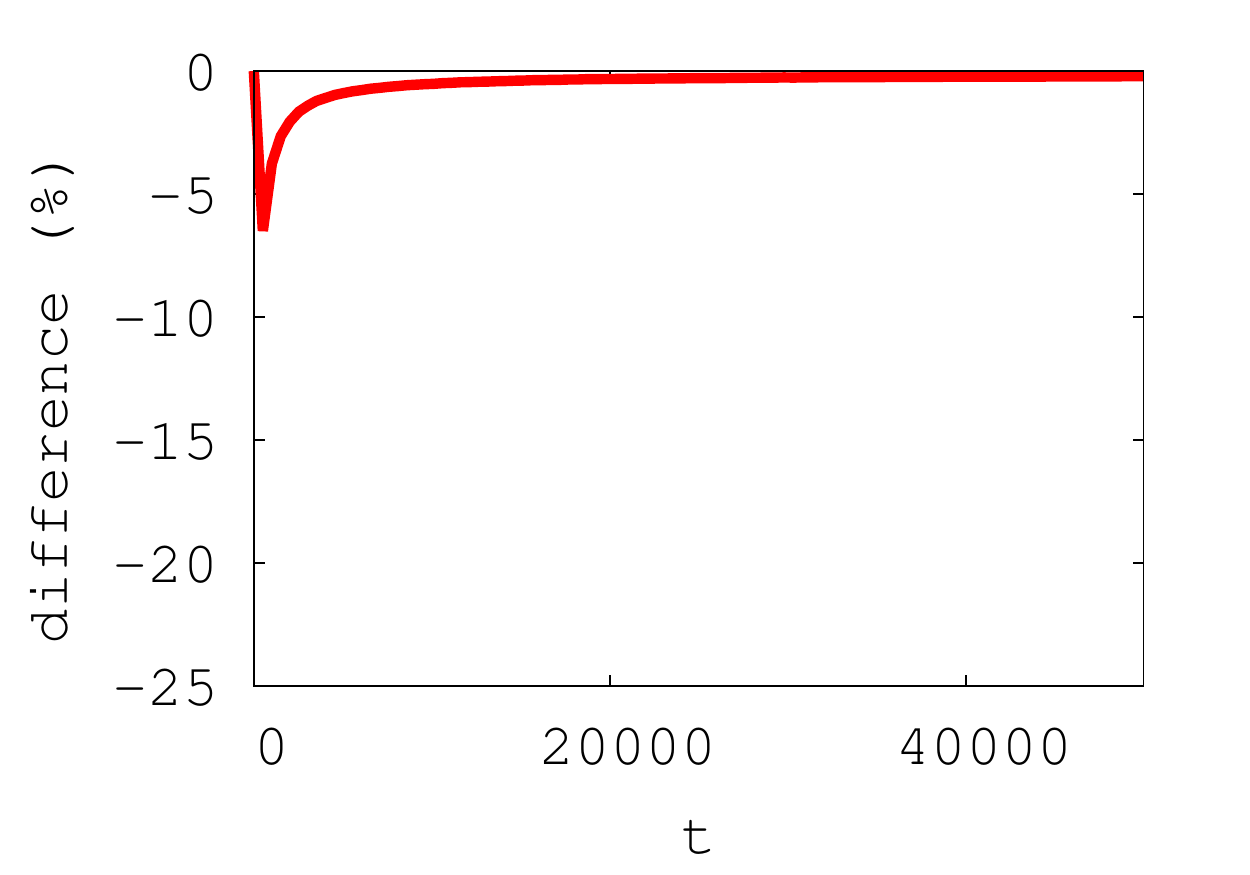}
&
\includegraphics[trim=10bp 25bp 30bp 10bp,clip,width=.15\linewidth]{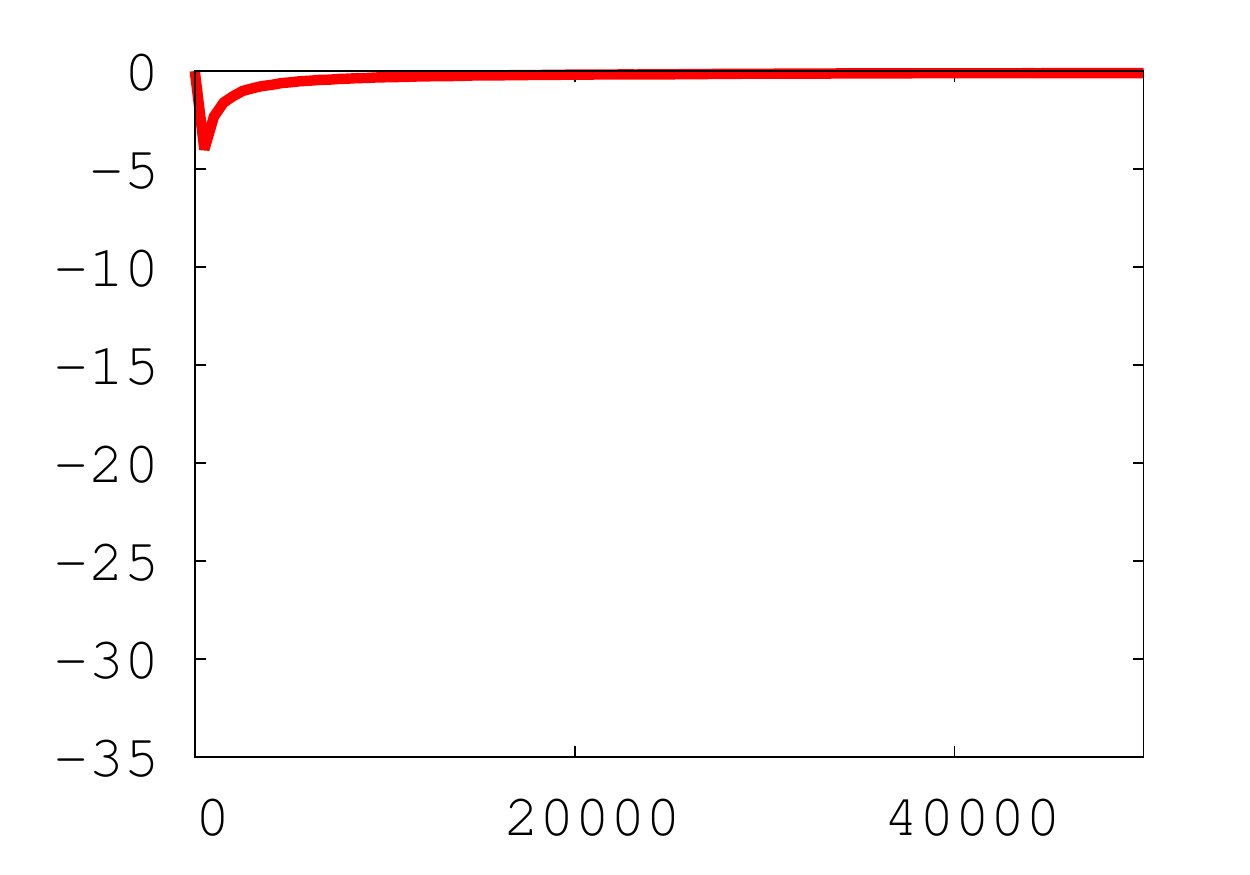}
&
\includegraphics[trim=10bp 25bp 30bp 10bp,clip,width=.15\linewidth]{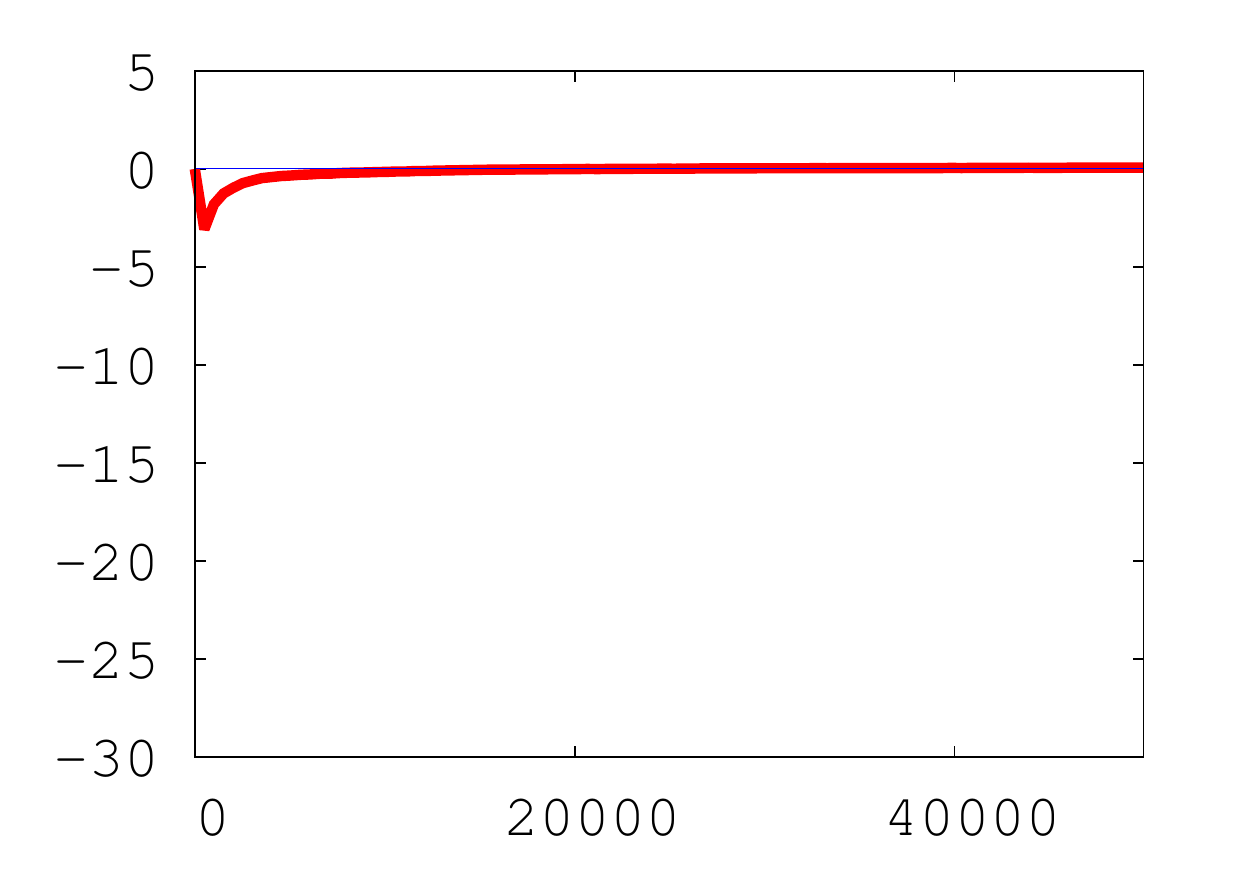}
&
\includegraphics[trim=10bp 25bp 30bp 10bp,clip,width=.15\linewidth]{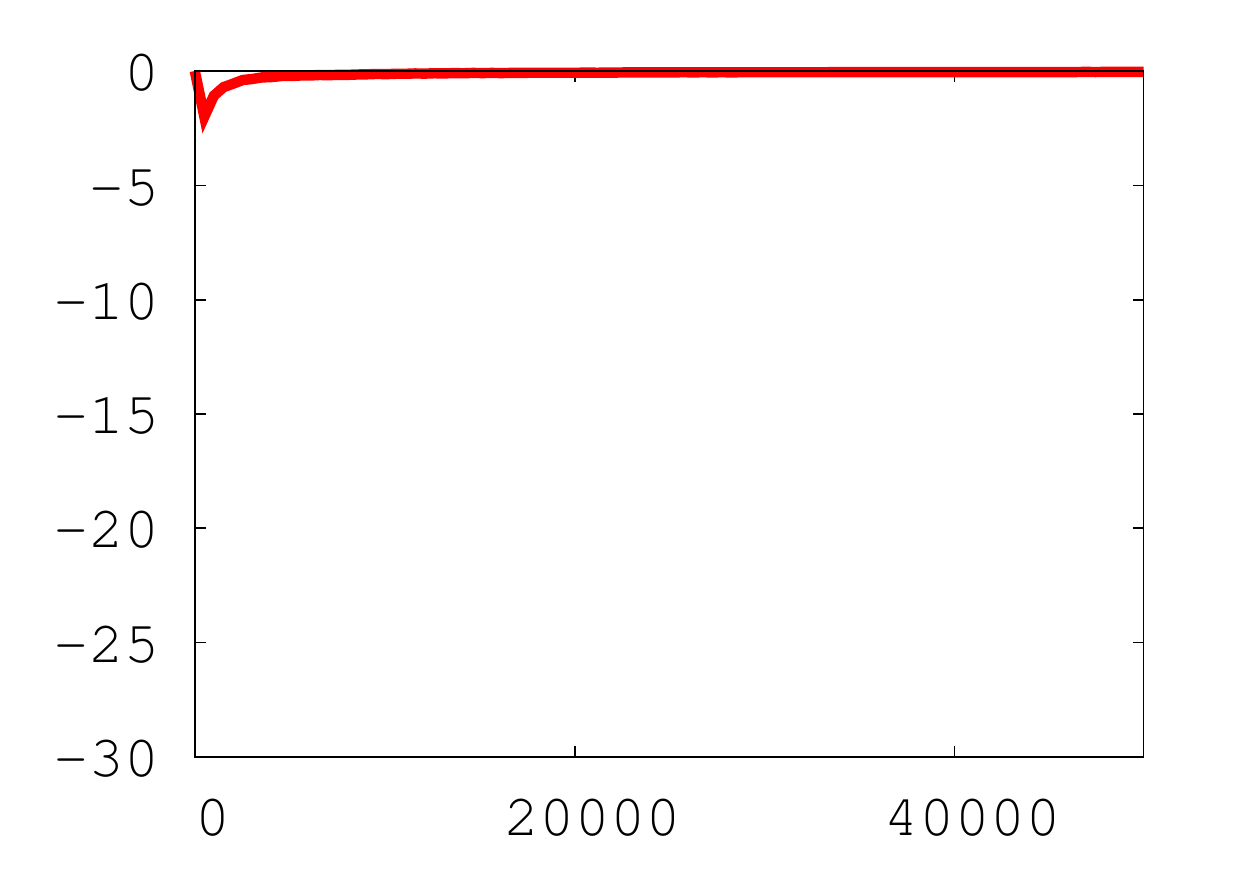}
&
\includegraphics[trim=10bp 25bp 30bp 10bp,clip,width=.15\linewidth]{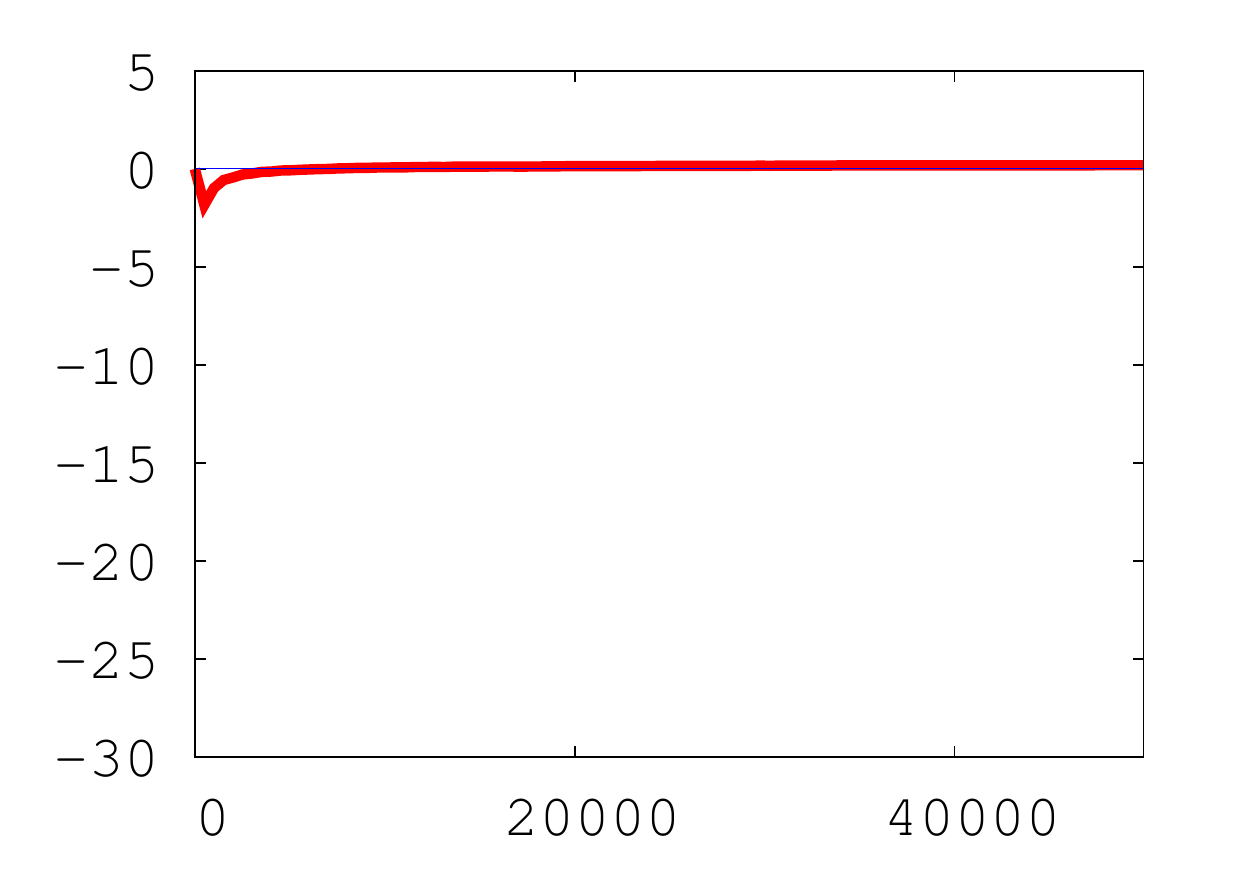}
&
\includegraphics[trim=10bp 25bp 30bp 10bp,clip,width=.15\linewidth]{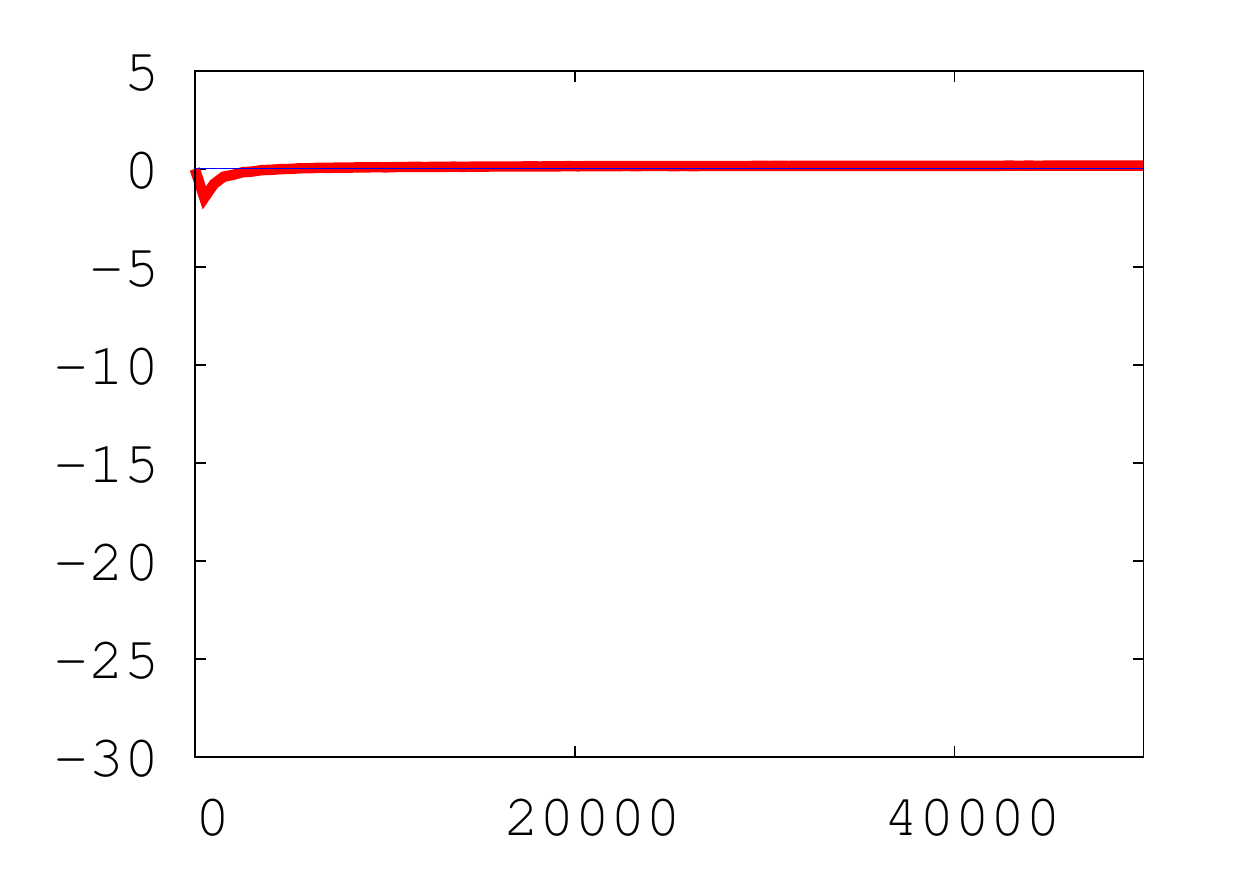}
\\
$\rho = 0.1$ & $\rho = 0.2$ & $\rho = 0.3$ & $\rho = 0.4$ & $\rho = 0.5$ & $\rho = 0.6$ \\\hline
\includegraphics[trim=10bp 25bp 30bp 10bp,clip,width=.15\linewidth]{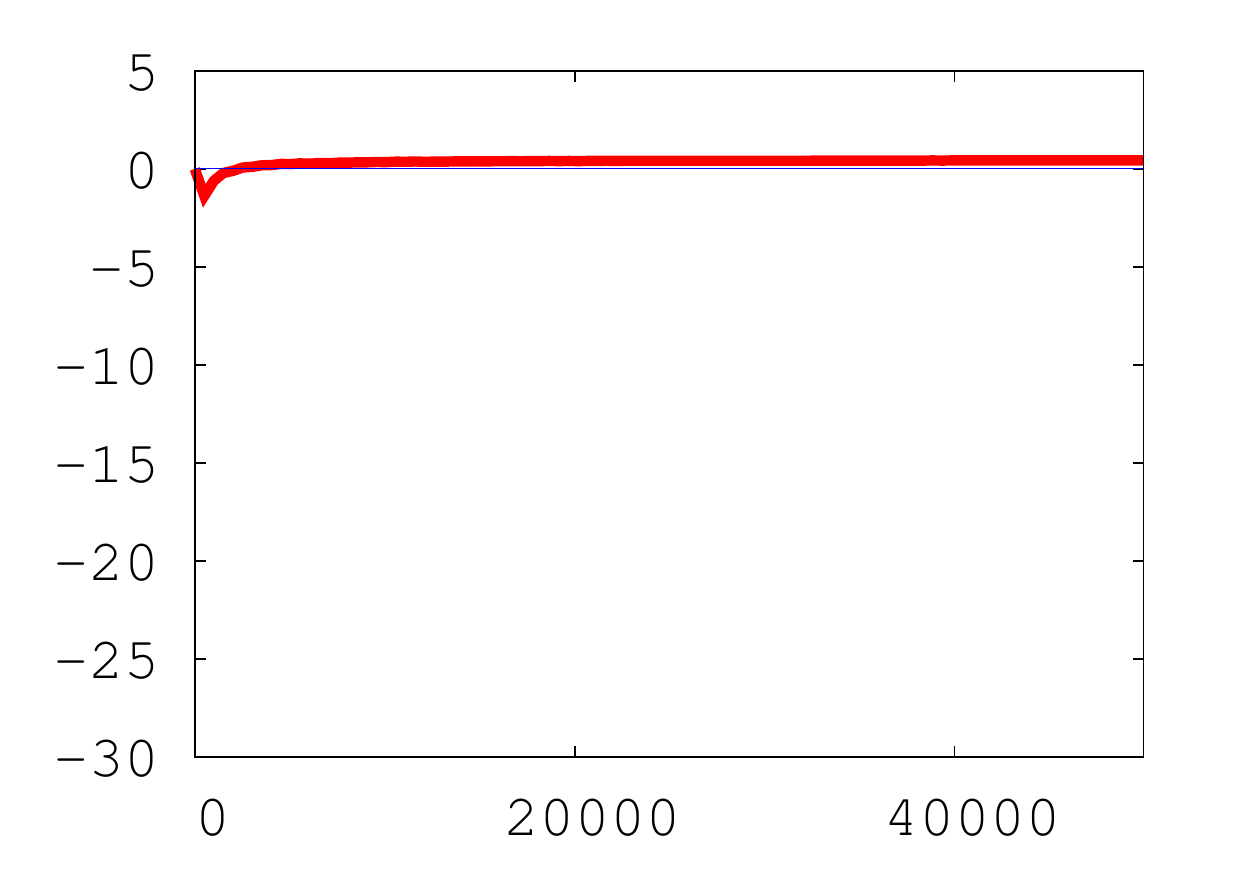}
&
\includegraphics[trim=10bp 25bp 30bp 10bp,clip,width=.15\linewidth]{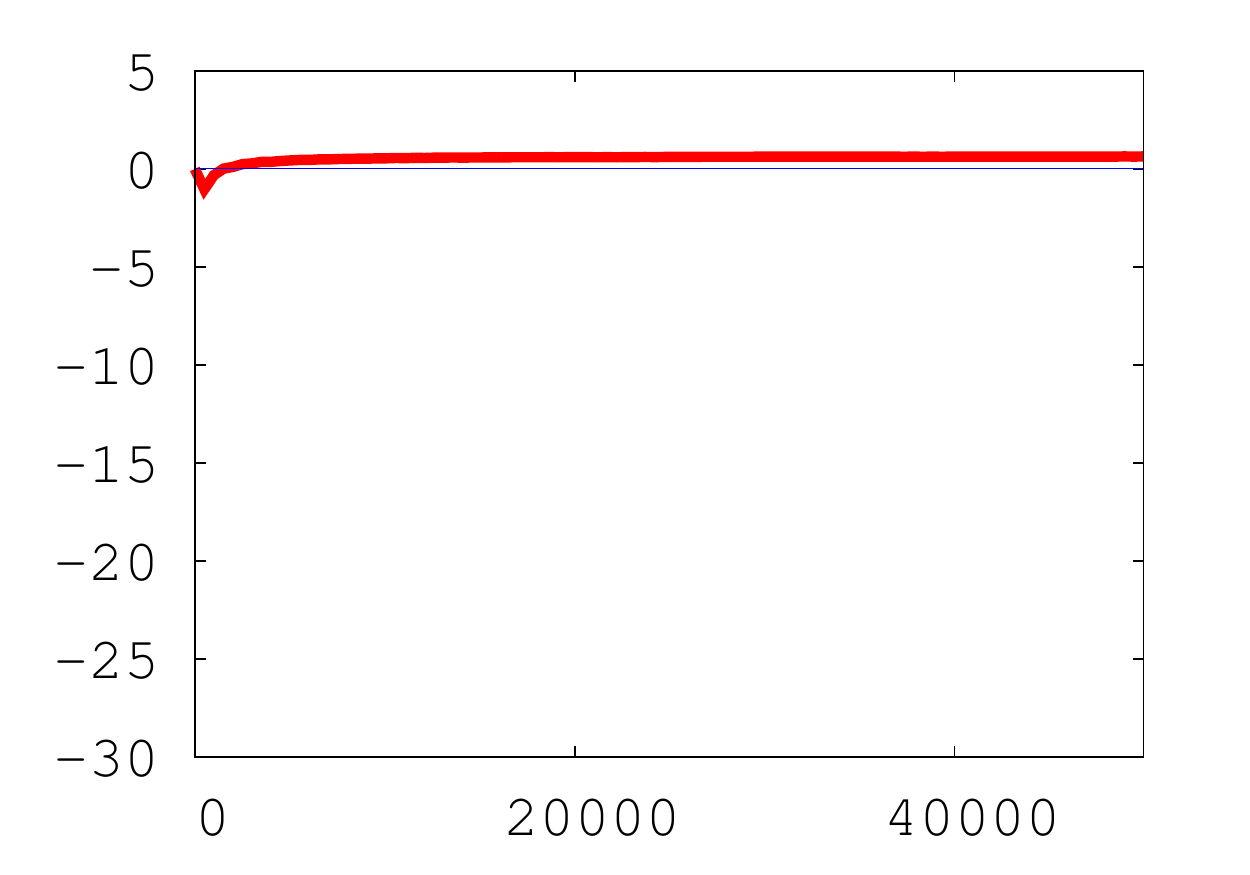}
&
\includegraphics[trim=10bp 25bp 30bp 10bp,clip,width=.15\linewidth]{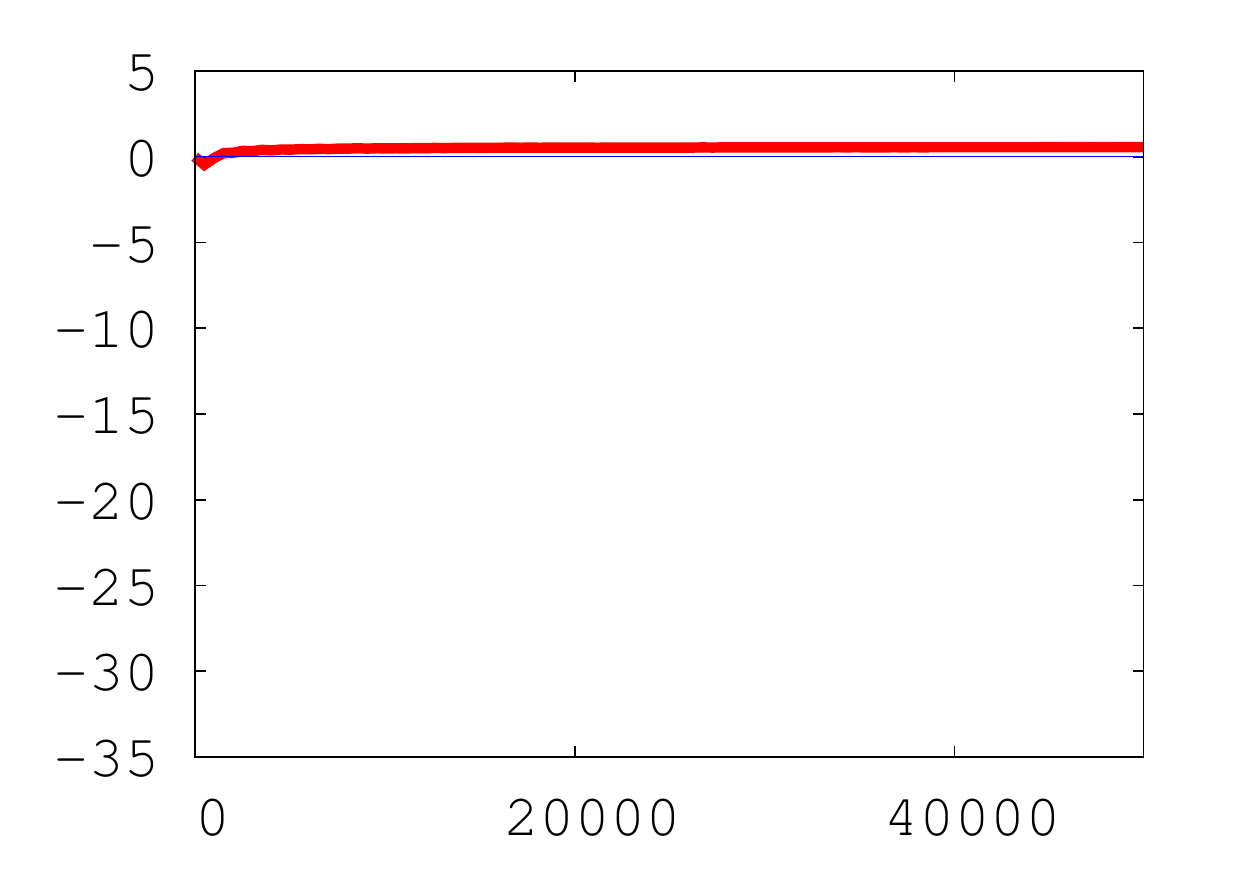}
&
\includegraphics[trim=10bp 25bp 30bp 10bp,clip,width=.15\linewidth]{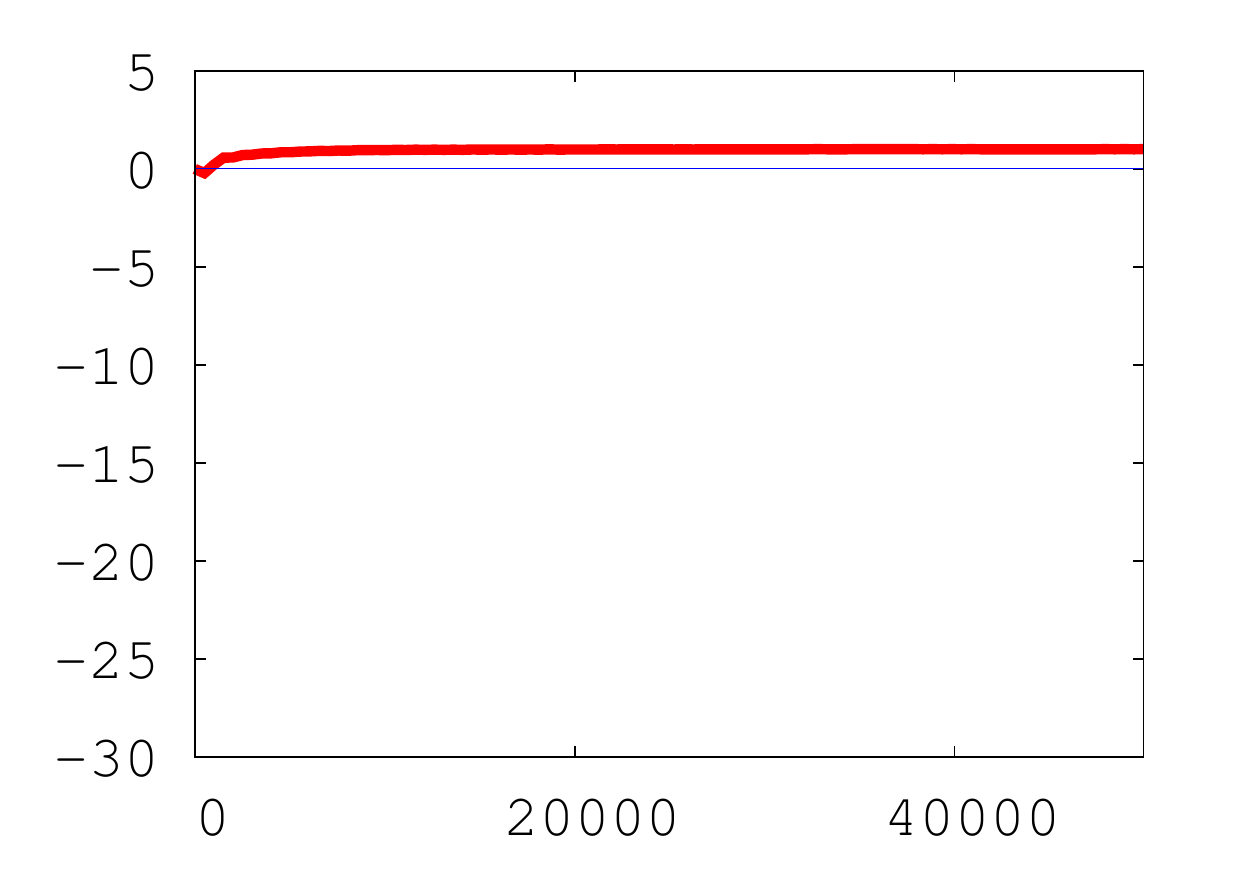}
&
\includegraphics[trim=10bp 25bp 30bp 10bp,clip,width=.15\linewidth]{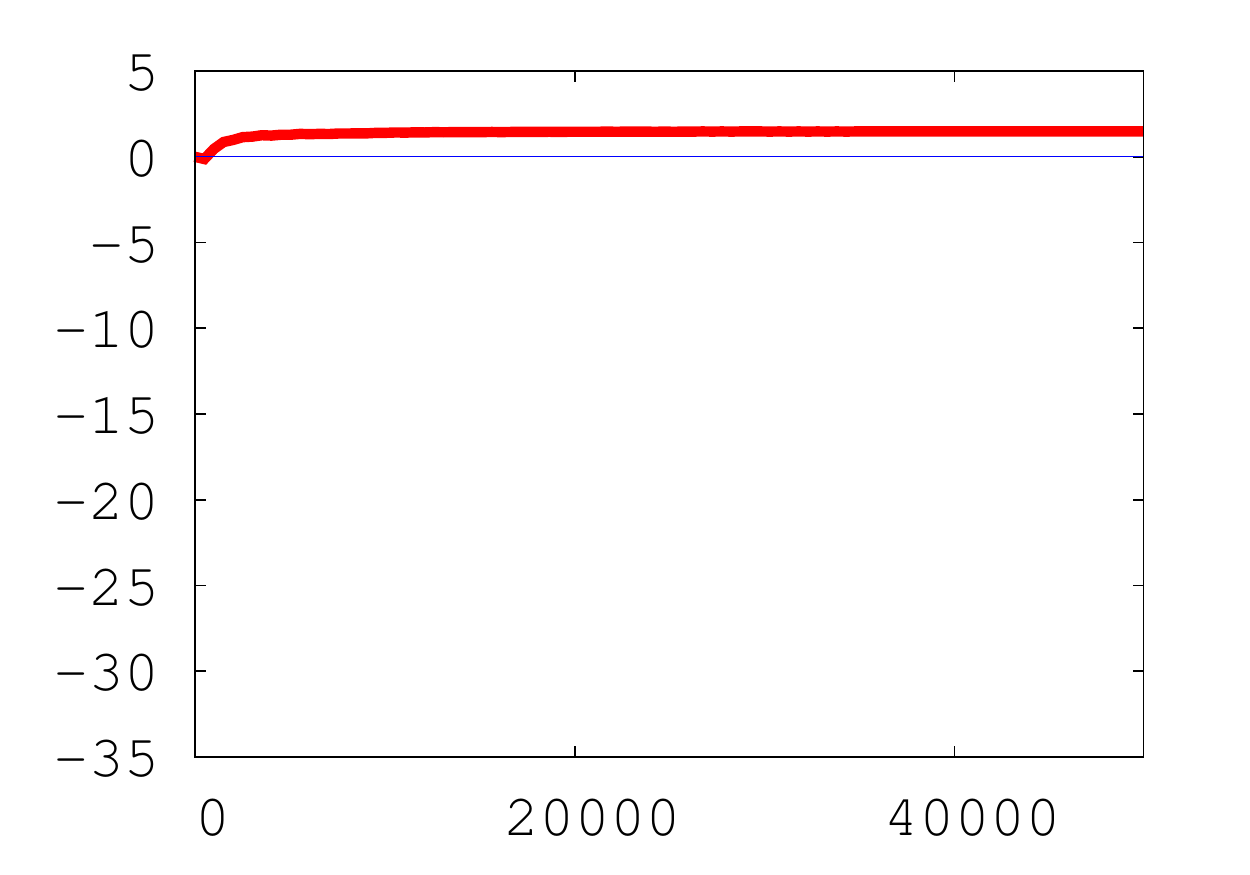}
&
\includegraphics[trim=10bp 25bp 30bp 10bp,clip,width=.15\linewidth]{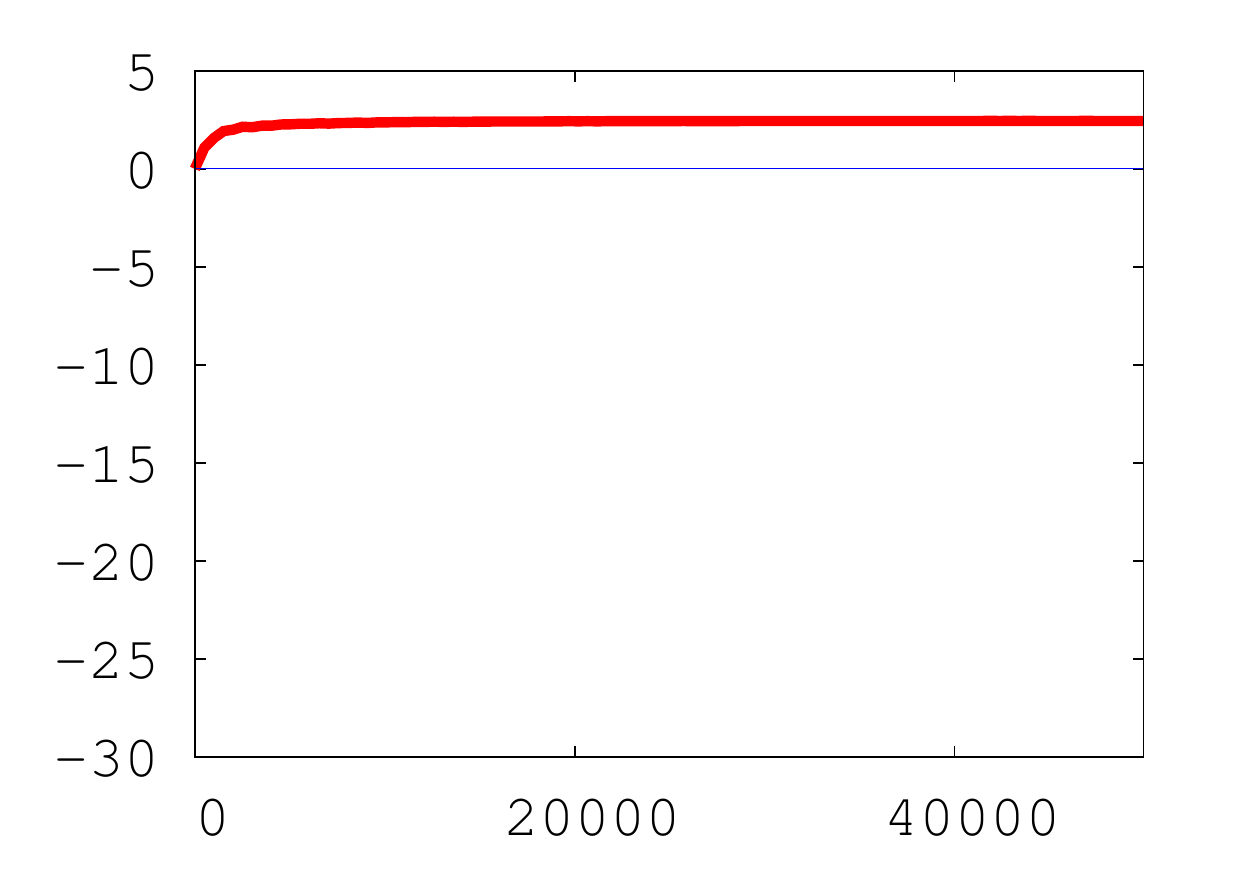}
\\
$\rho = 0.7$ & $\rho = 0.8$ & $\rho = 0.9$ & $\rho = \mbox{\textbf{1.0}}$ & $\rho = 1.1$ & $\rho = 1.2$ \\\hline
\includegraphics[trim=10bp 25bp 30bp 10bp,clip,width=.15\linewidth]{results_P2_00_Q2_00_UCHOICE_S_EXP_R_UR_1_30_NOLABELS-eps-converted-to}
&
\includegraphics[trim=10bp 25bp 30bp 10bp,clip,width=.15\linewidth]{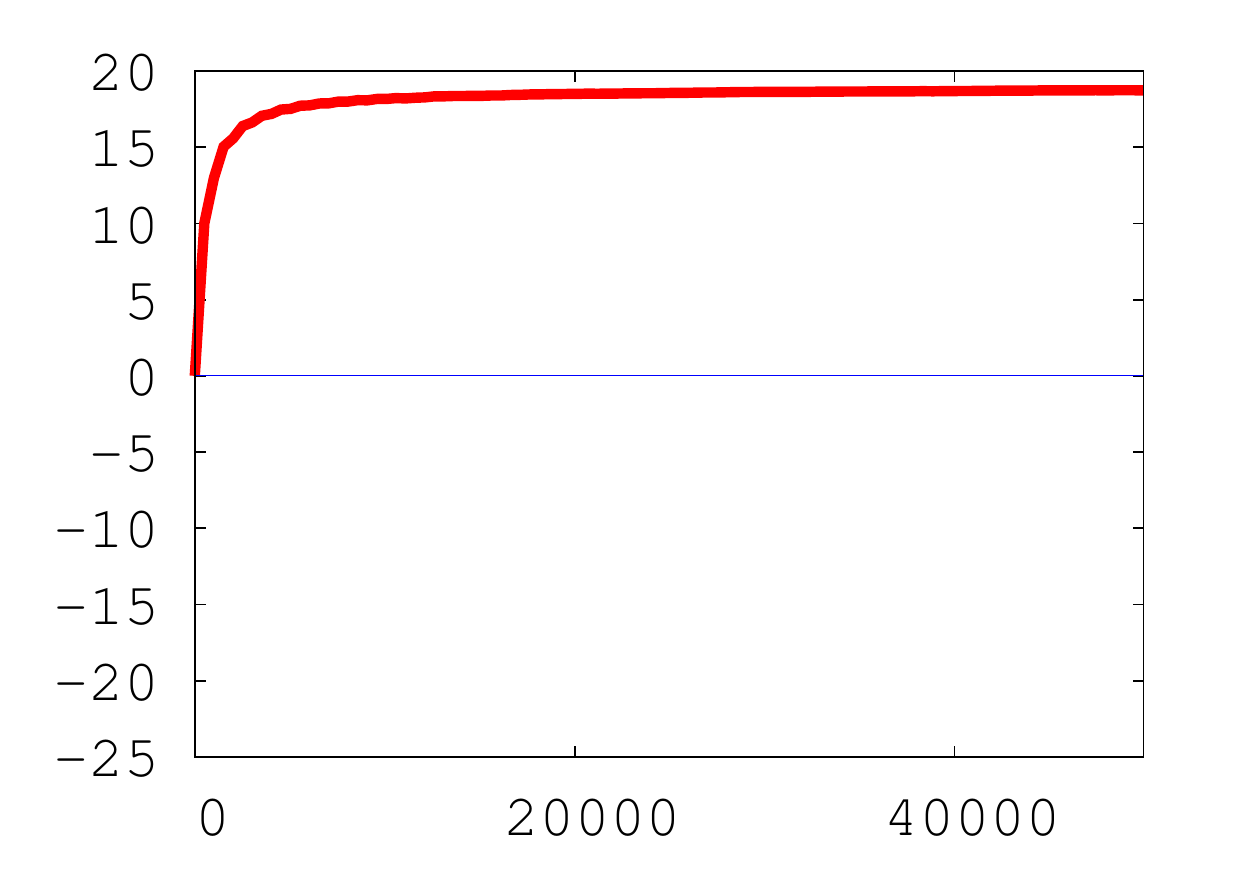}
&
\includegraphics[trim=10bp 25bp 30bp 10bp,clip,width=.15\linewidth]{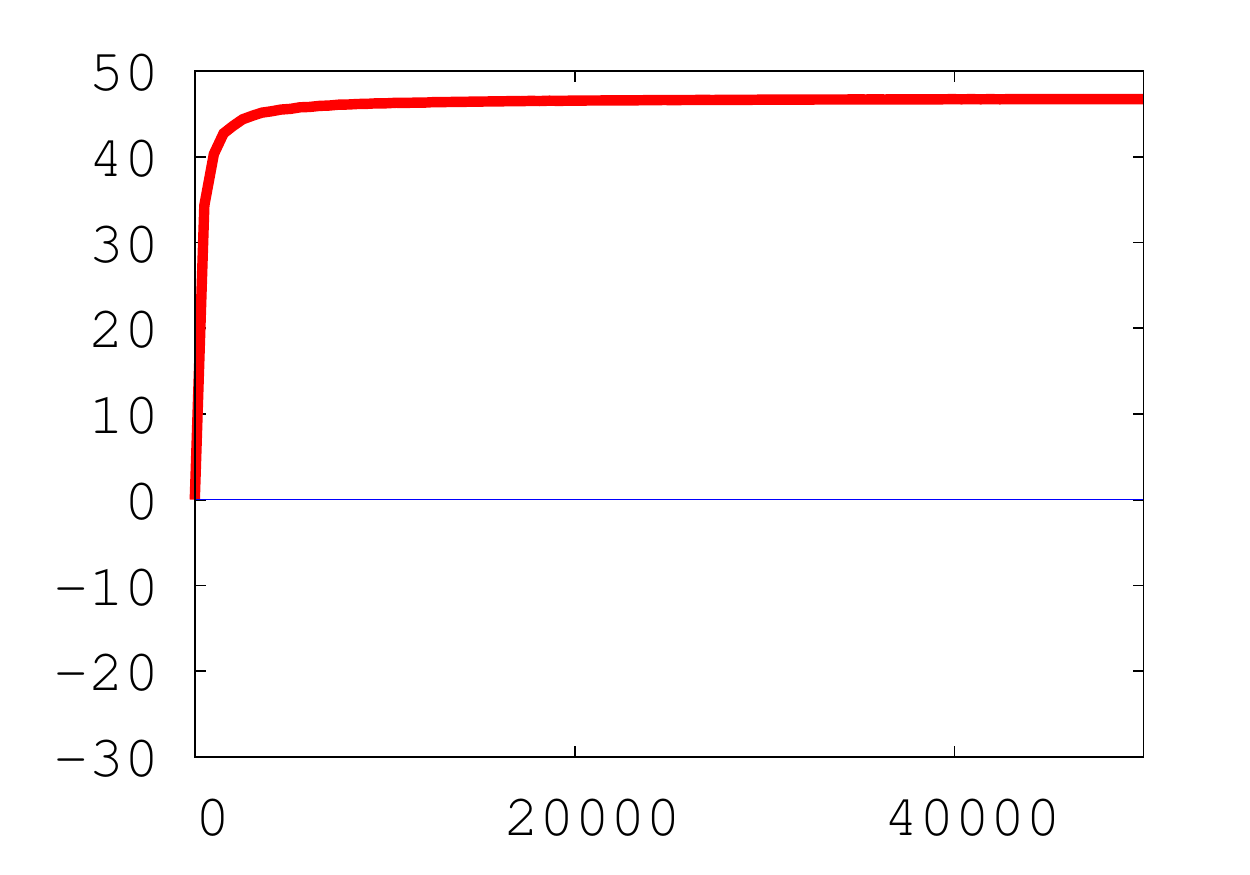}
&
\includegraphics[trim=10bp 25bp 30bp 10bp,clip,width=.15\linewidth]{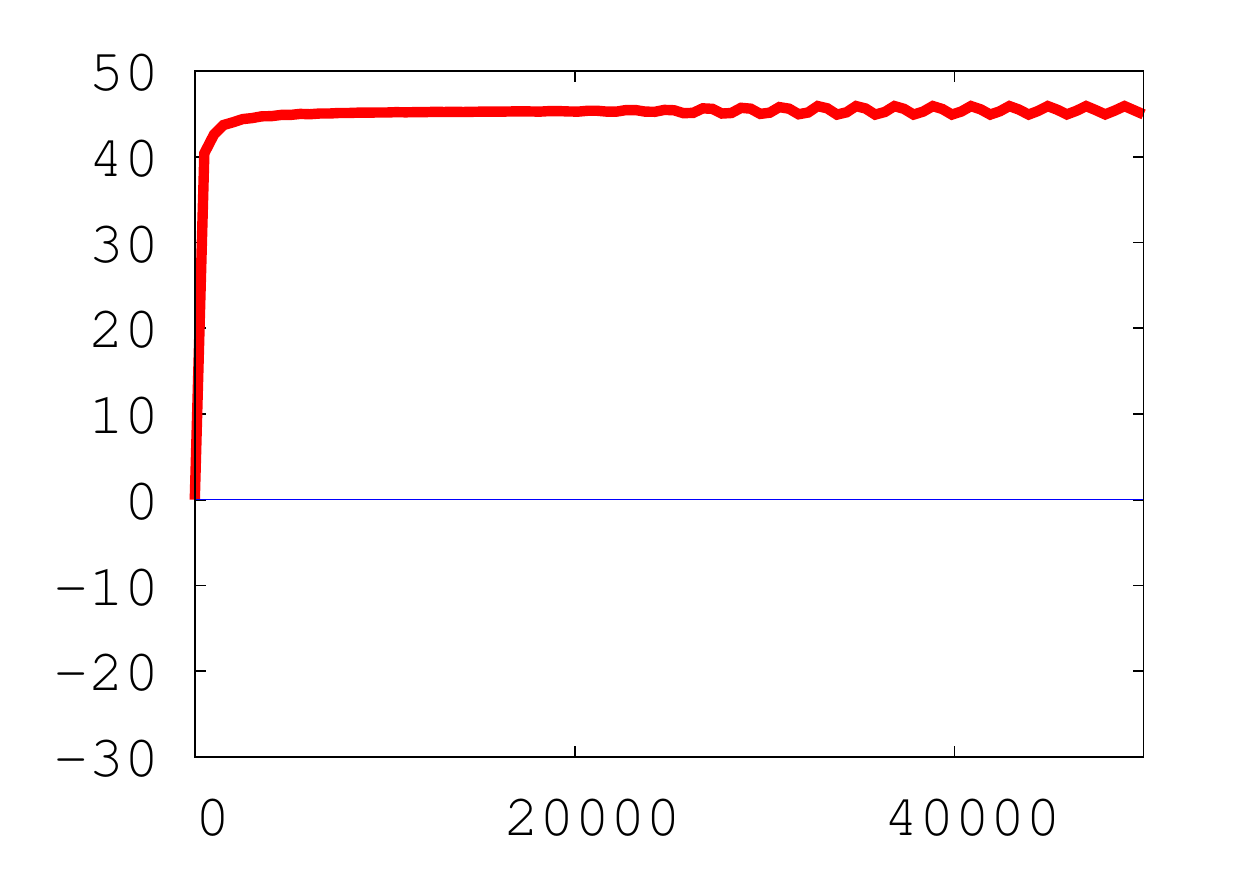}
&
\includegraphics[trim=10bp 25bp 30bp
10bp,clip,width=.15\linewidth]{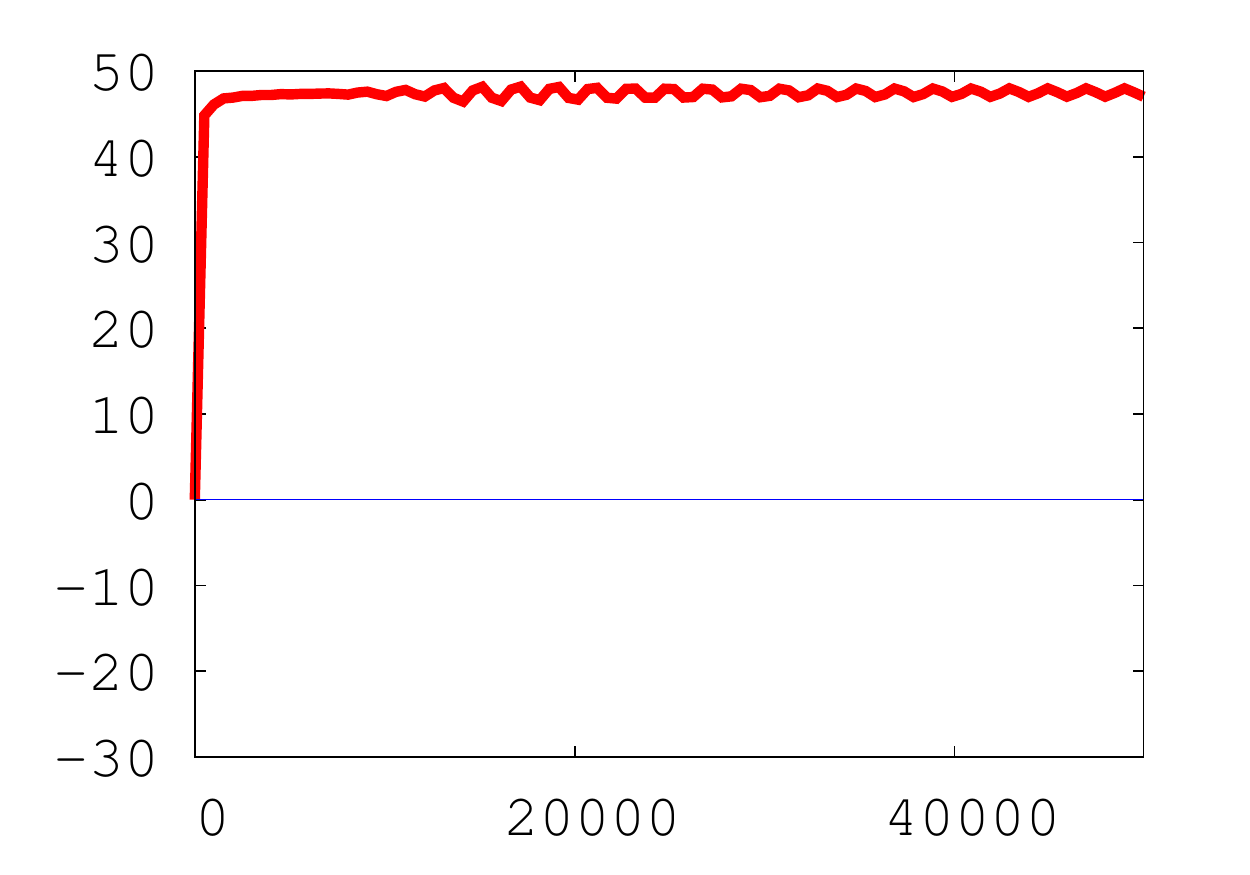}
&\\
$\rho =1.3$ & $\rho = 1.4$ & $\rho = 1.5$ & $\rho = 1.6$ & $\rho =
1.7$ &  \\\hline \hline
\end{tabular}
}
\end{center}
\caption{Error($p$-LMS) - Error(DN-$p$-LMS) as a function of $t$ ($\in \{1, 2, ..., 50 000\}$), $\bm{u}$ = sparse, $(p,q) = (2.0, 2.0)$.}
  \label{tc1_supp_rr4}
\end{sidewaystable}

\begin{sidewaystable}[t]
\begin{center}
{\small
\begin{tabular}{cccccc}\hline \hline
\includegraphics[trim=10bp 30bp 30bp 10bp,clip,width=.15\linewidth]{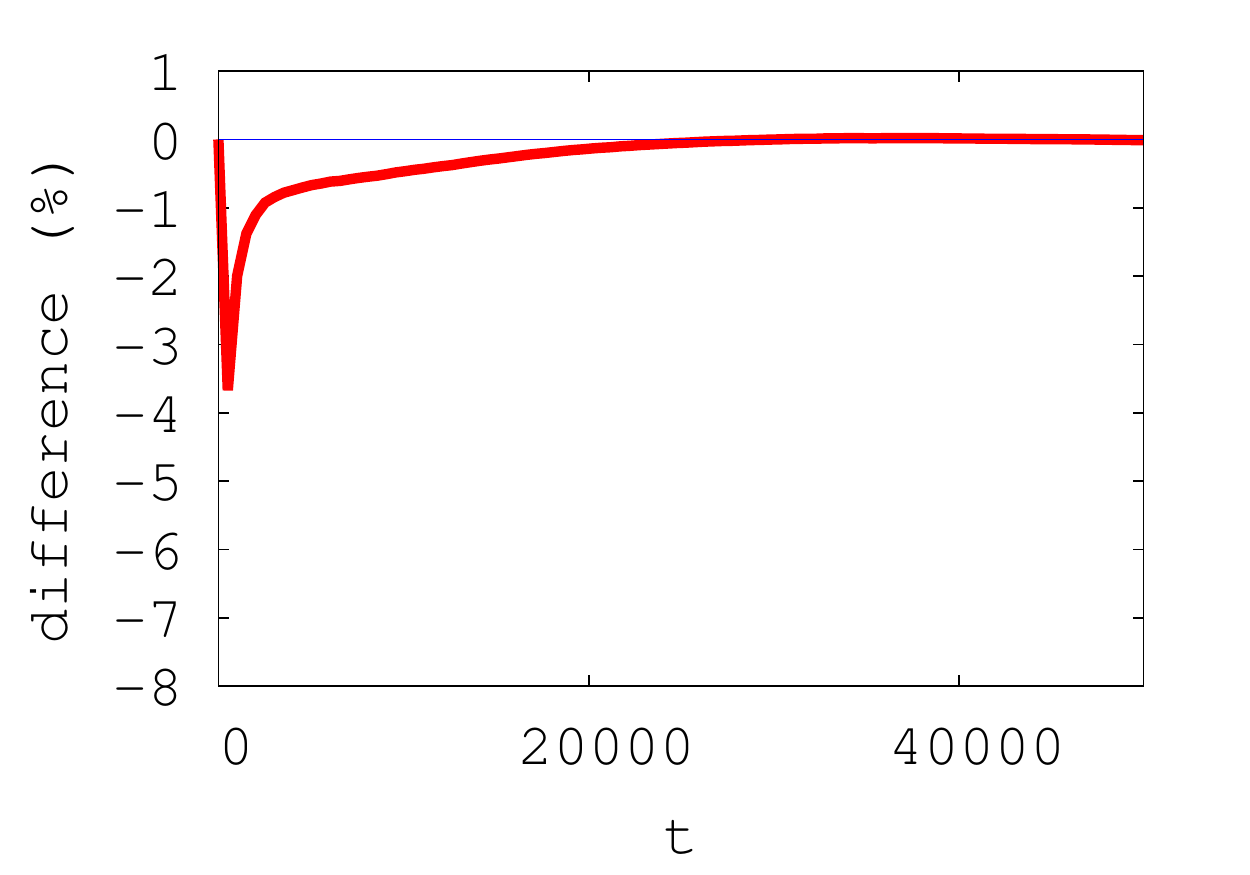}
&
\includegraphics[trim=10bp 25bp 30bp 10bp,clip,width=.15\linewidth]{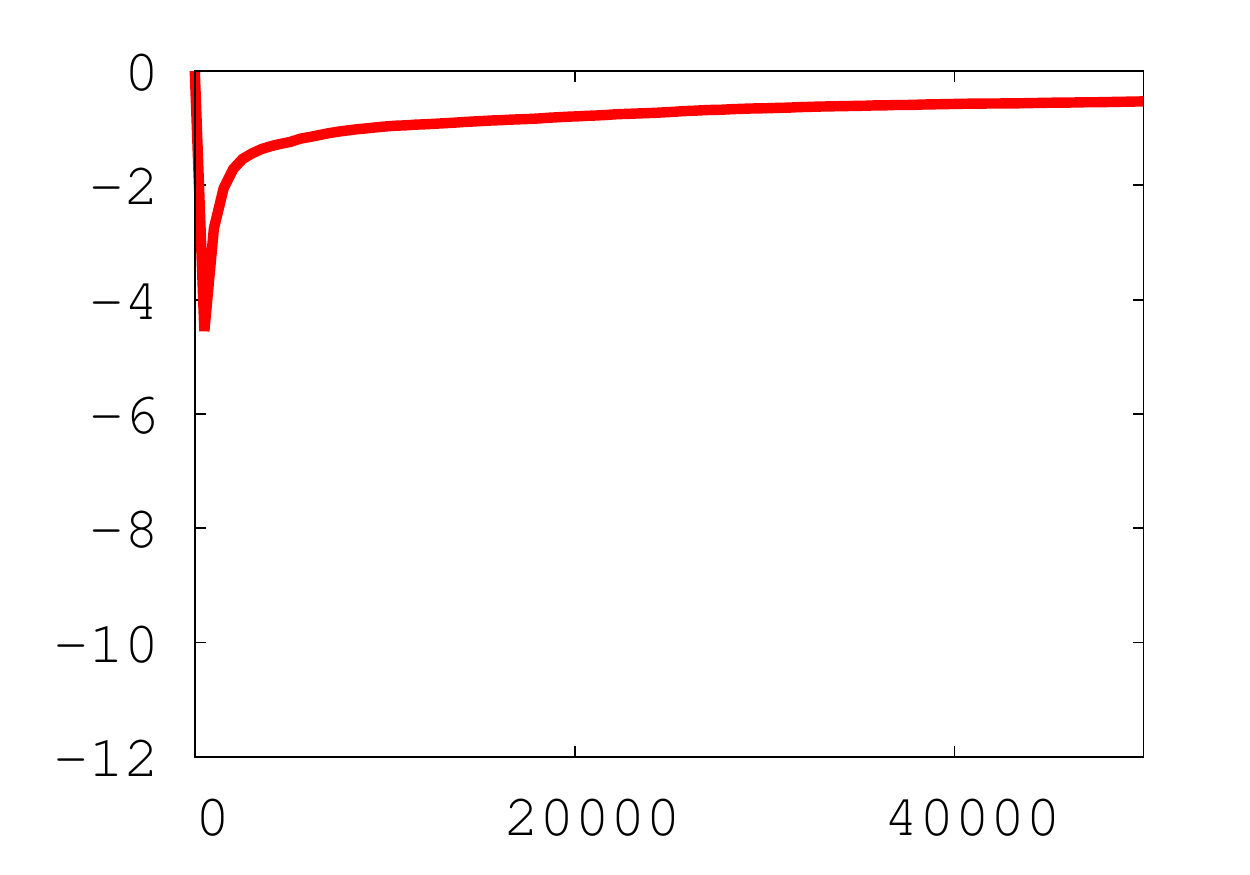}
&
\includegraphics[trim=10bp 25bp 30bp 10bp,clip,width=.15\linewidth]{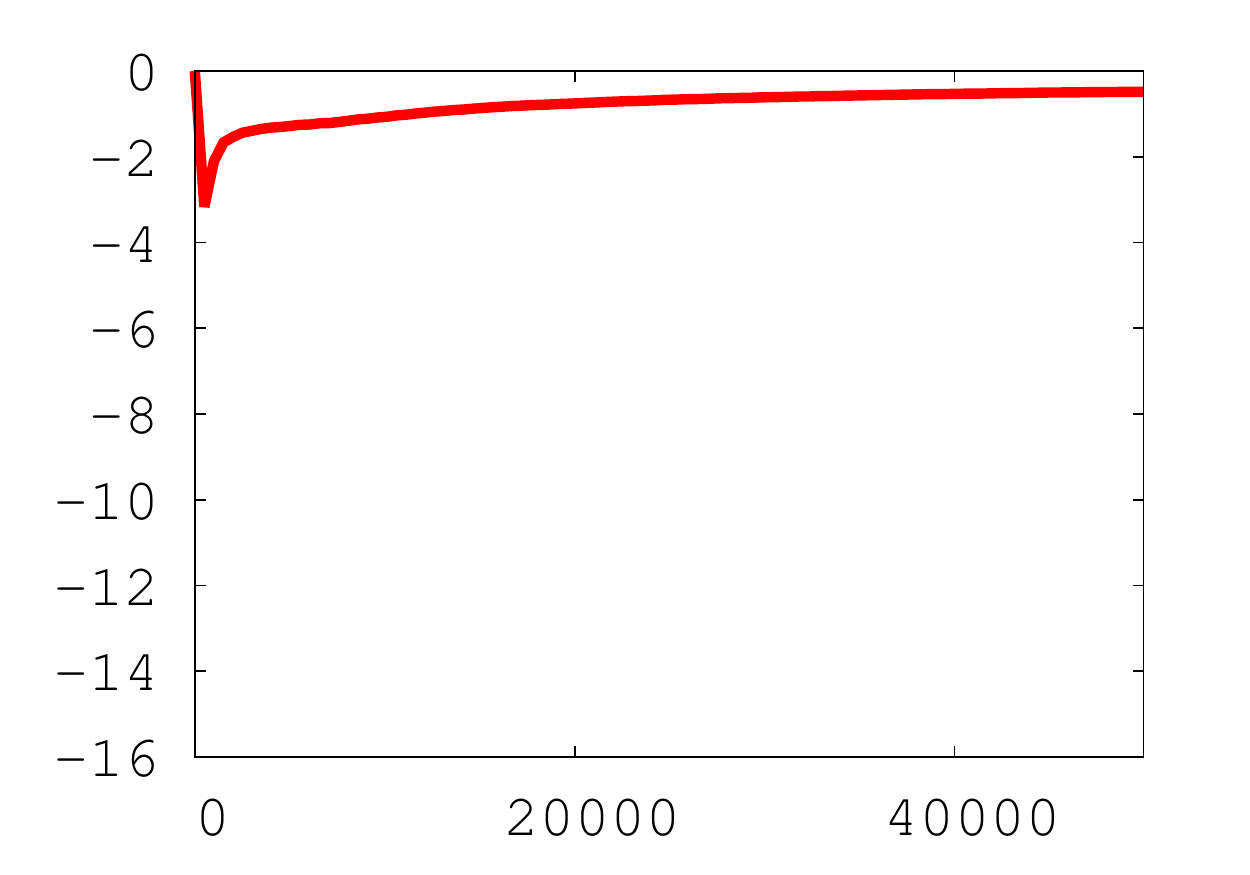}
&
\includegraphics[trim=10bp 25bp 30bp 10bp,clip,width=.15\linewidth]{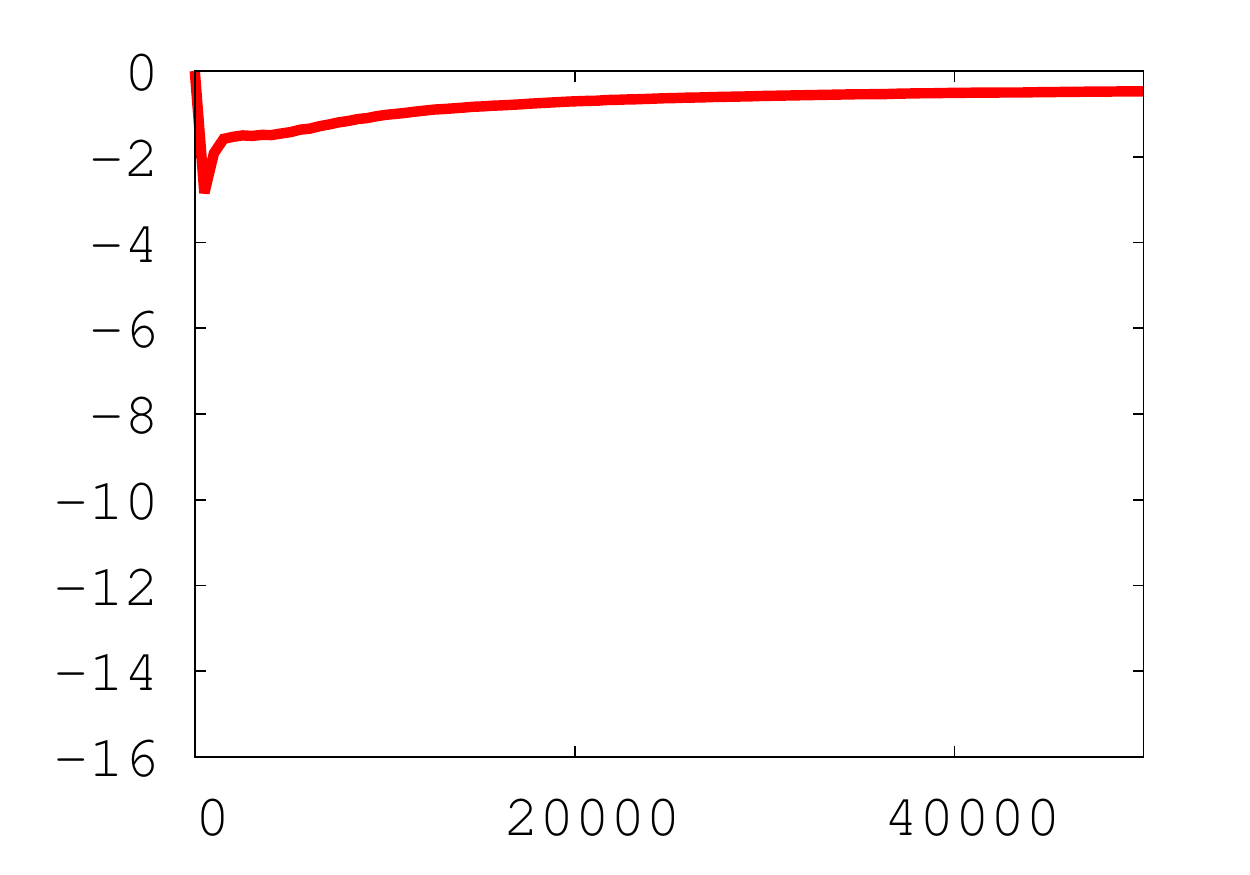}
&
\includegraphics[trim=10bp 25bp 30bp 10bp,clip,width=.15\linewidth]{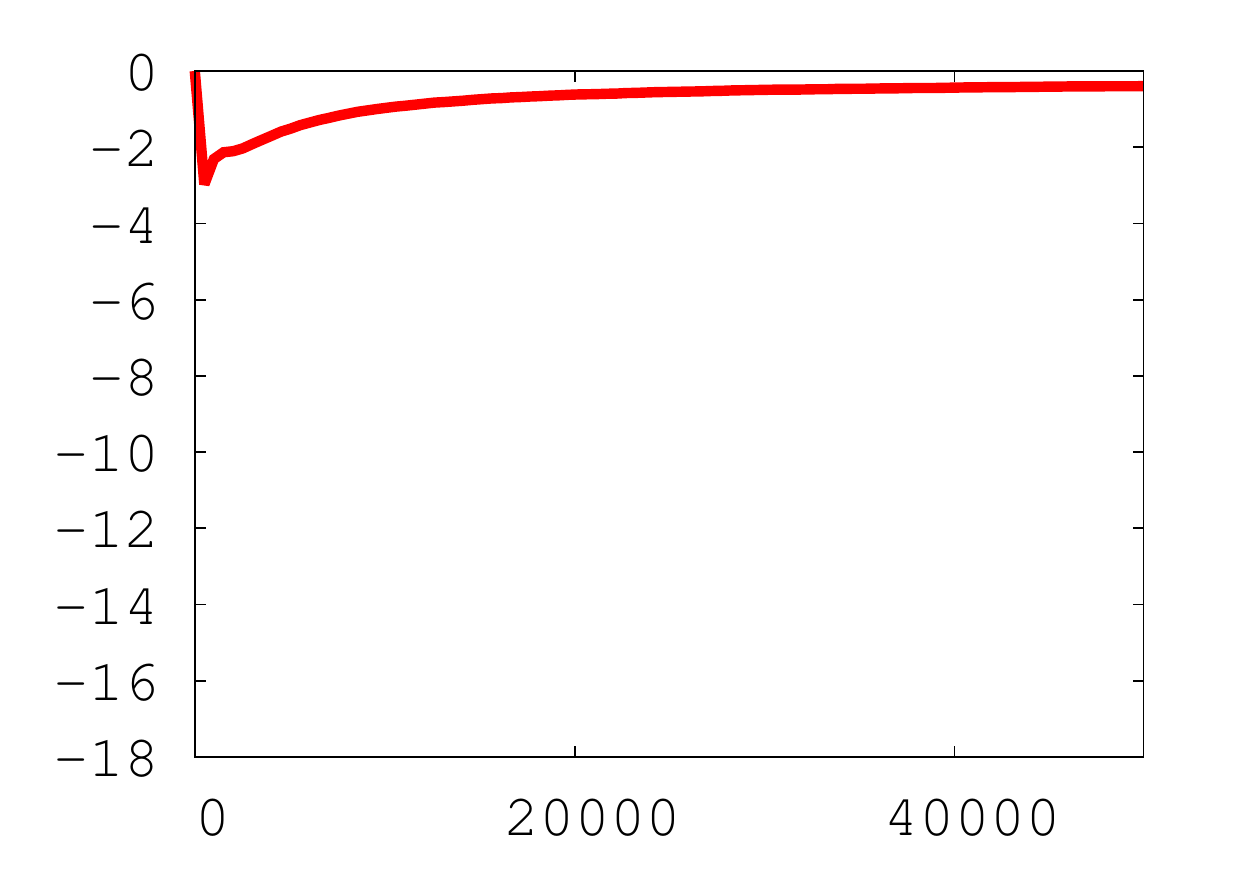}
&
\includegraphics[trim=10bp 25bp 30bp 10bp,clip,width=.15\linewidth]{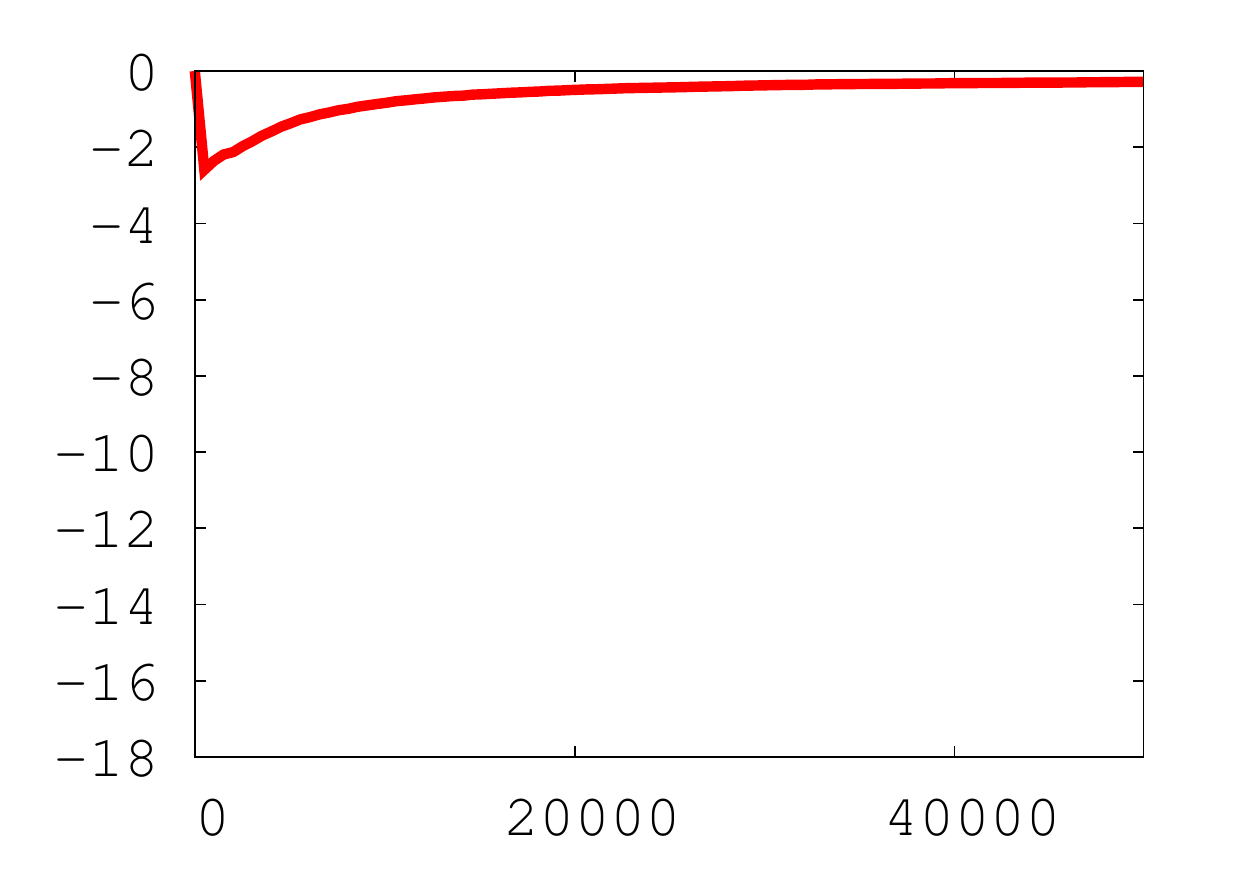}
\\
$\rho = 0.1$ & $\rho = 0.2$ & $\rho = 0.3$ & $\rho = 0.4$ & $\rho = 0.5$ & $\rho = 0.6$ \\\hline
\includegraphics[trim=10bp 25bp 30bp 10bp,clip,width=.15\linewidth]{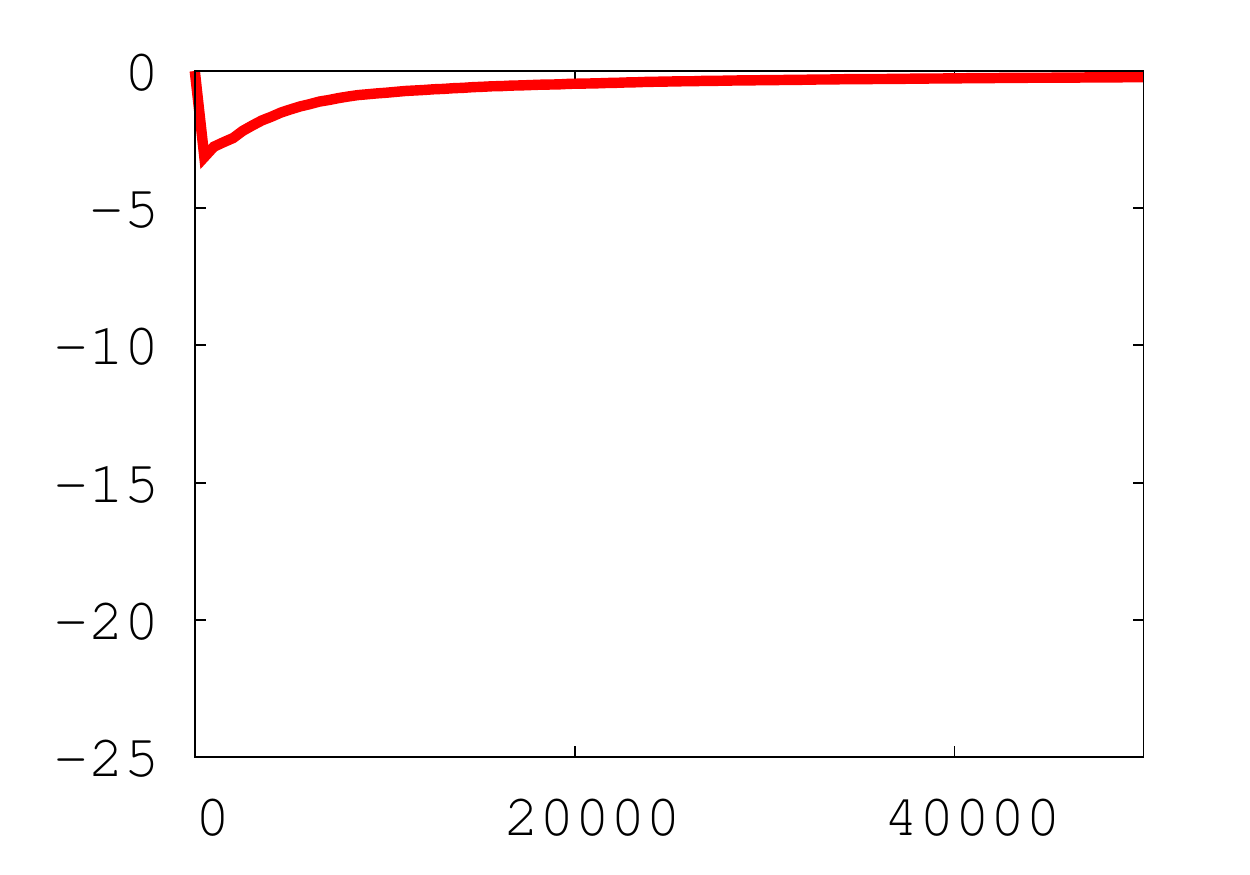}
&
\includegraphics[trim=10bp 25bp 30bp 10bp,clip,width=.15\linewidth]{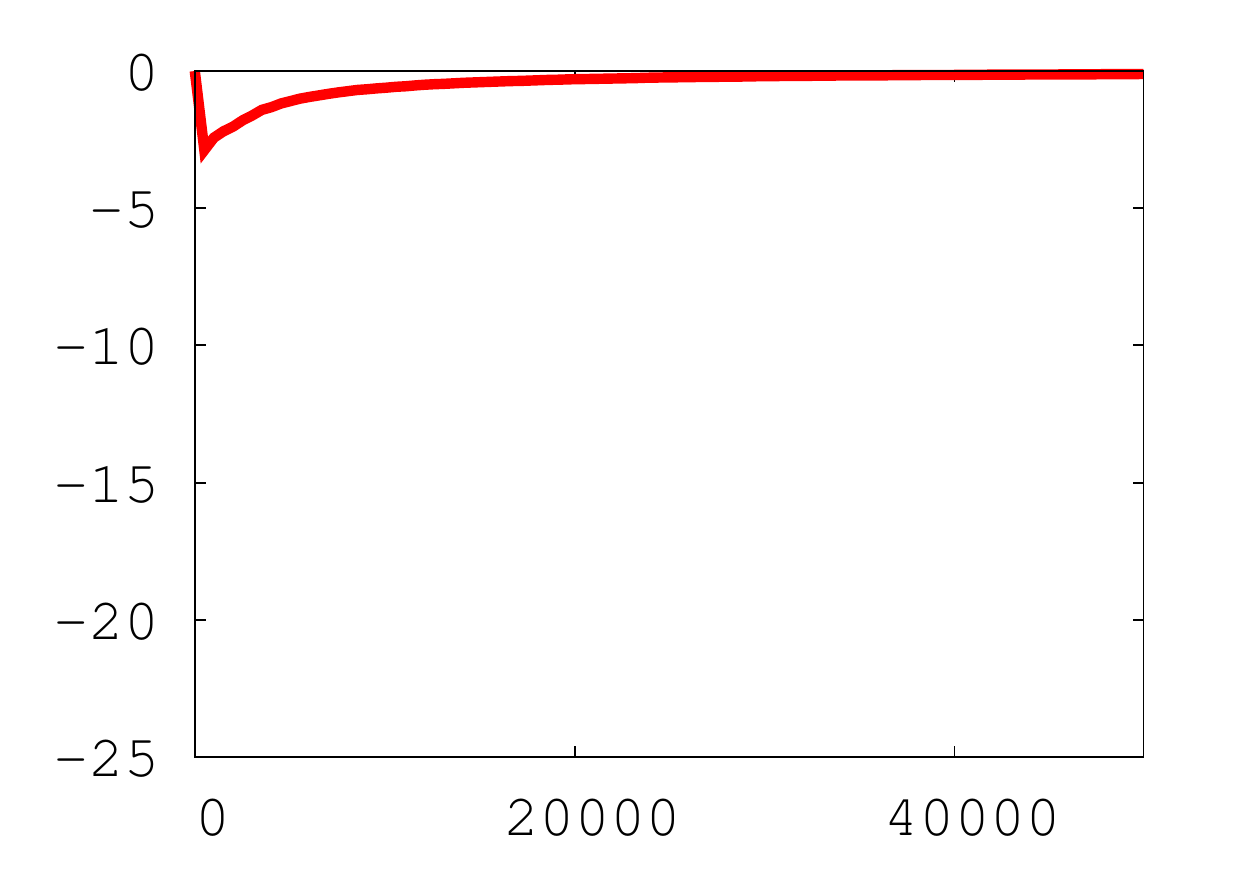}
&
\includegraphics[trim=10bp 25bp 30bp 10bp,clip,width=.15\linewidth]{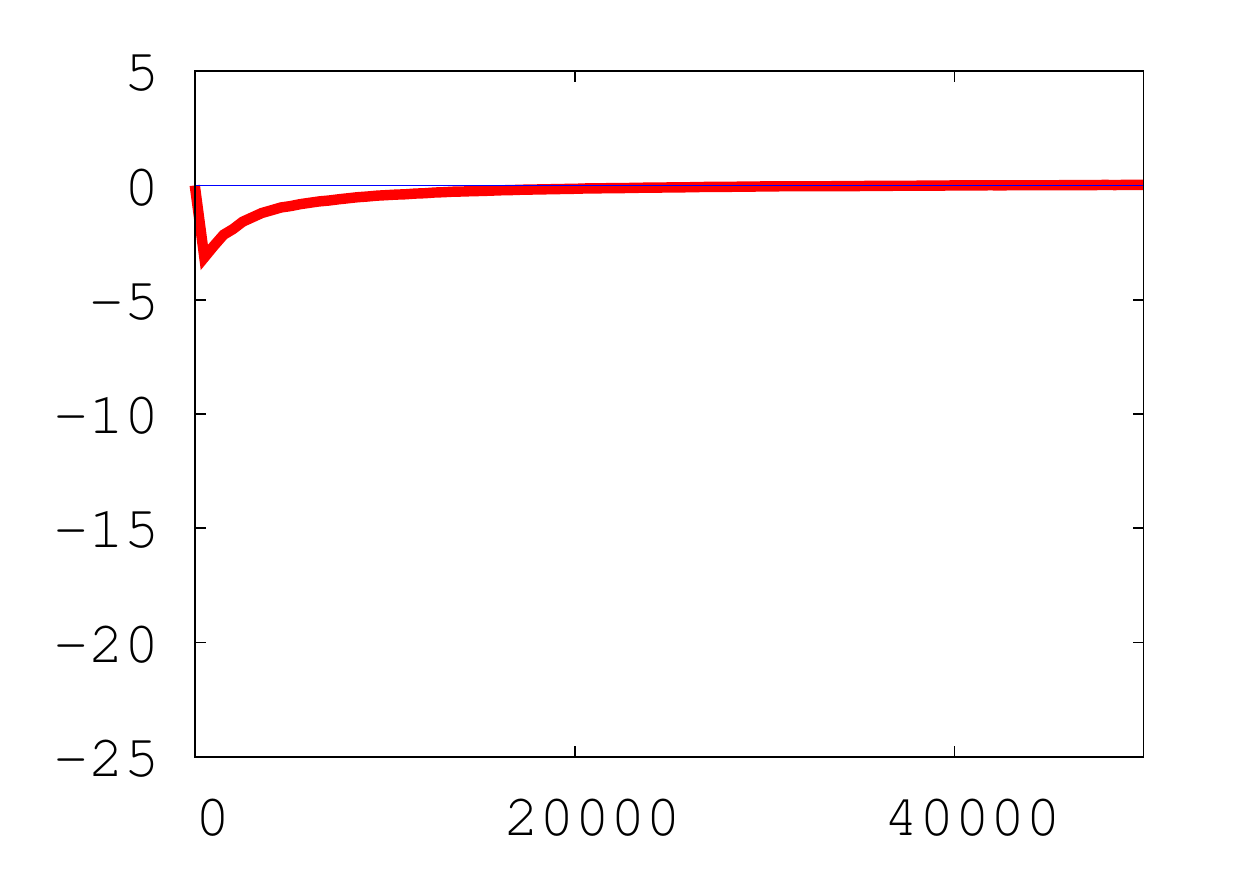}
&
\includegraphics[trim=10bp 25bp 30bp 10bp,clip,width=.15\linewidth]{results_P6_90_Q1_17_UCHOICE_GAUSS_UR_1_00_NOLABELS-eps-converted-to}
&
\includegraphics[trim=10bp 25bp 30bp 10bp,clip,width=.15\linewidth]{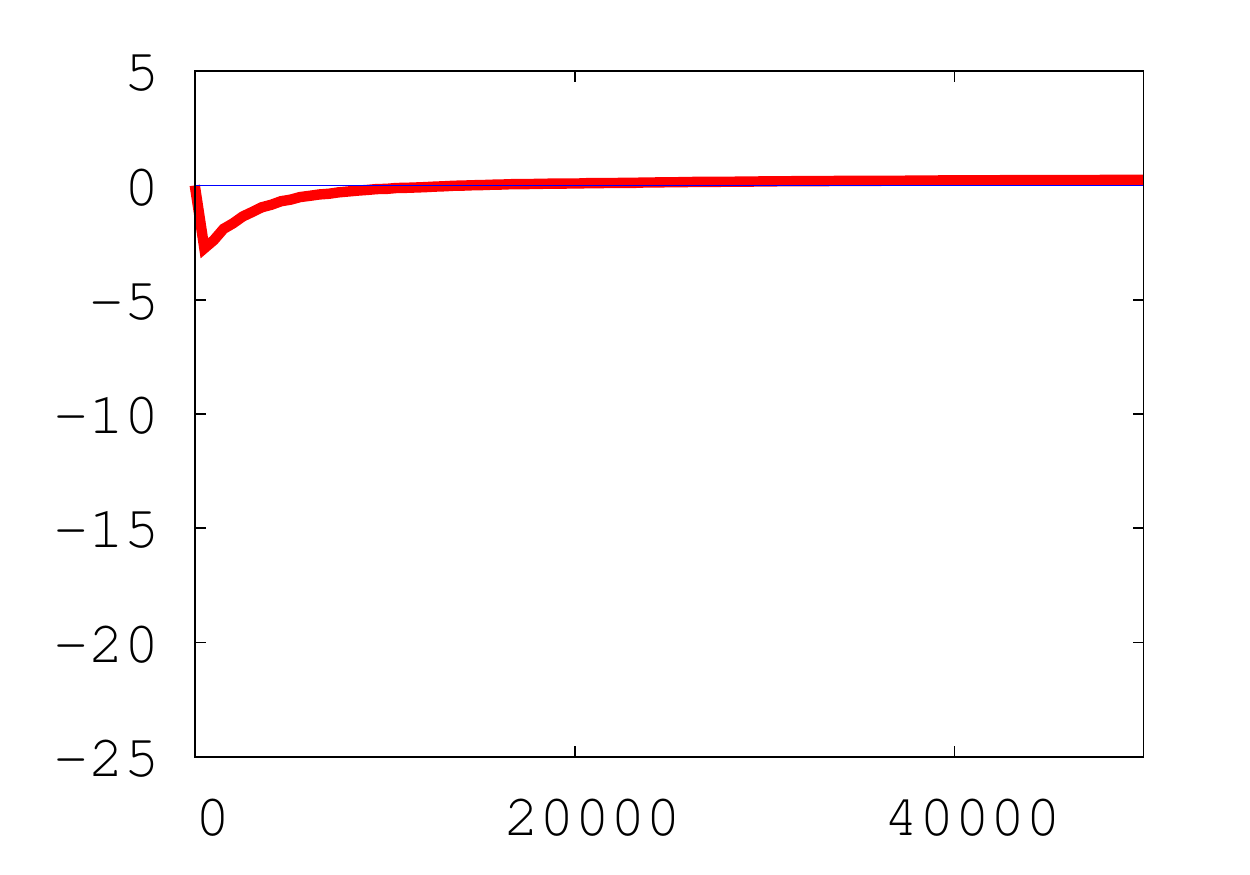}
&
\includegraphics[trim=10bp 25bp 30bp 10bp,clip,width=.15\linewidth]{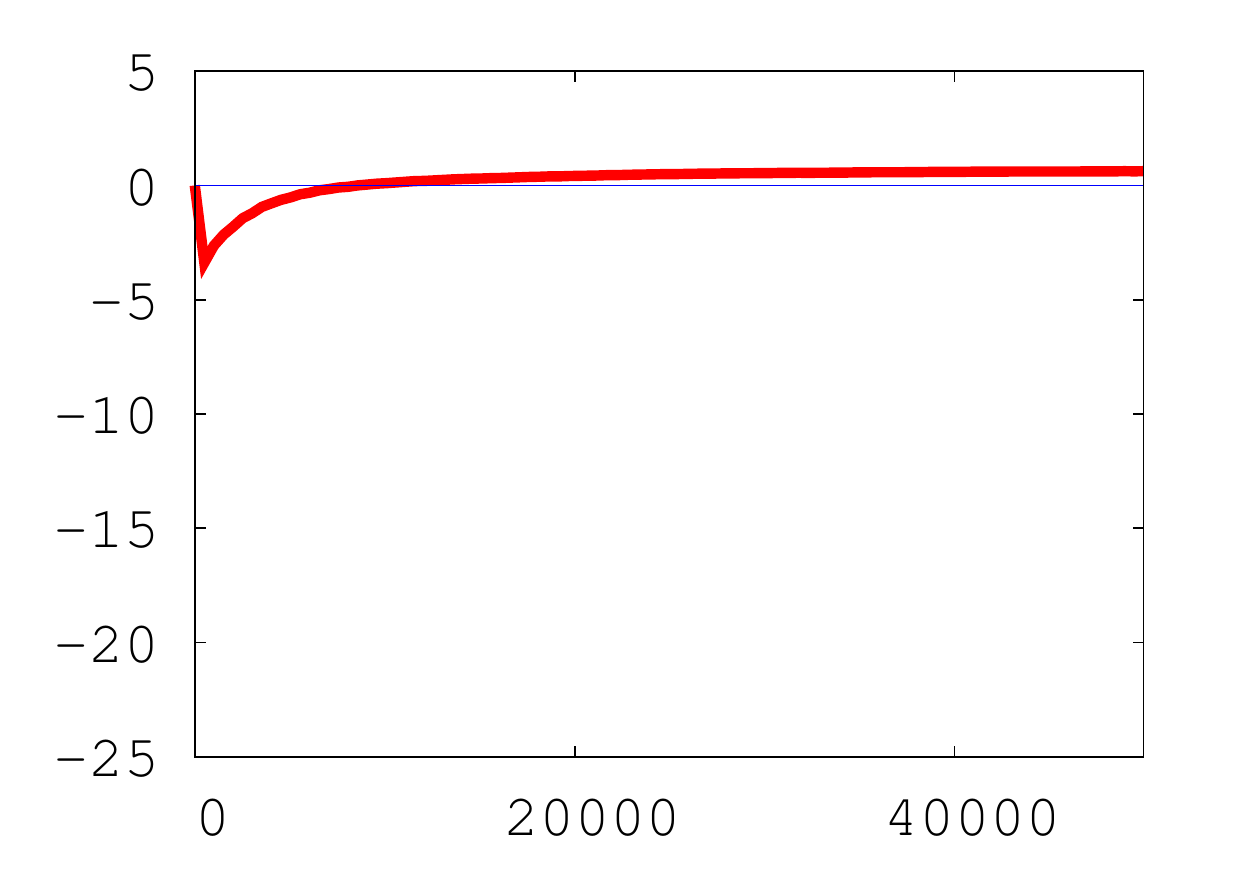}
\\
$\rho = 0.7$ & $\rho = 0.8$ & $\rho = 0.9$ & $\rho = \mbox{\textbf{1.0}}$ & $\rho = 1.1$ & $\rho = 1.2$ \\\hline
\includegraphics[trim=10bp 25bp 30bp 10bp,clip,width=.15\linewidth]{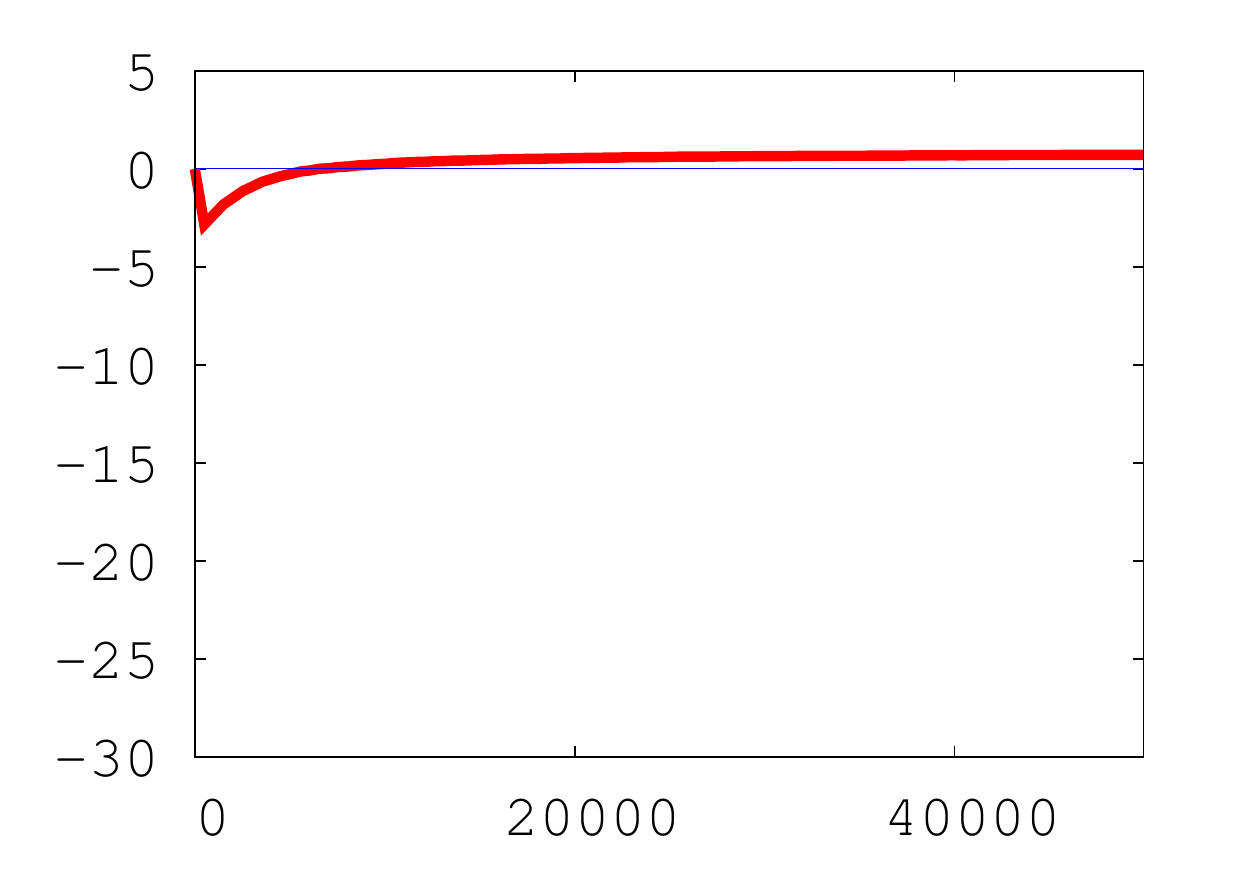}
&
\includegraphics[trim=10bp 25bp 30bp 10bp,clip,width=.15\linewidth]{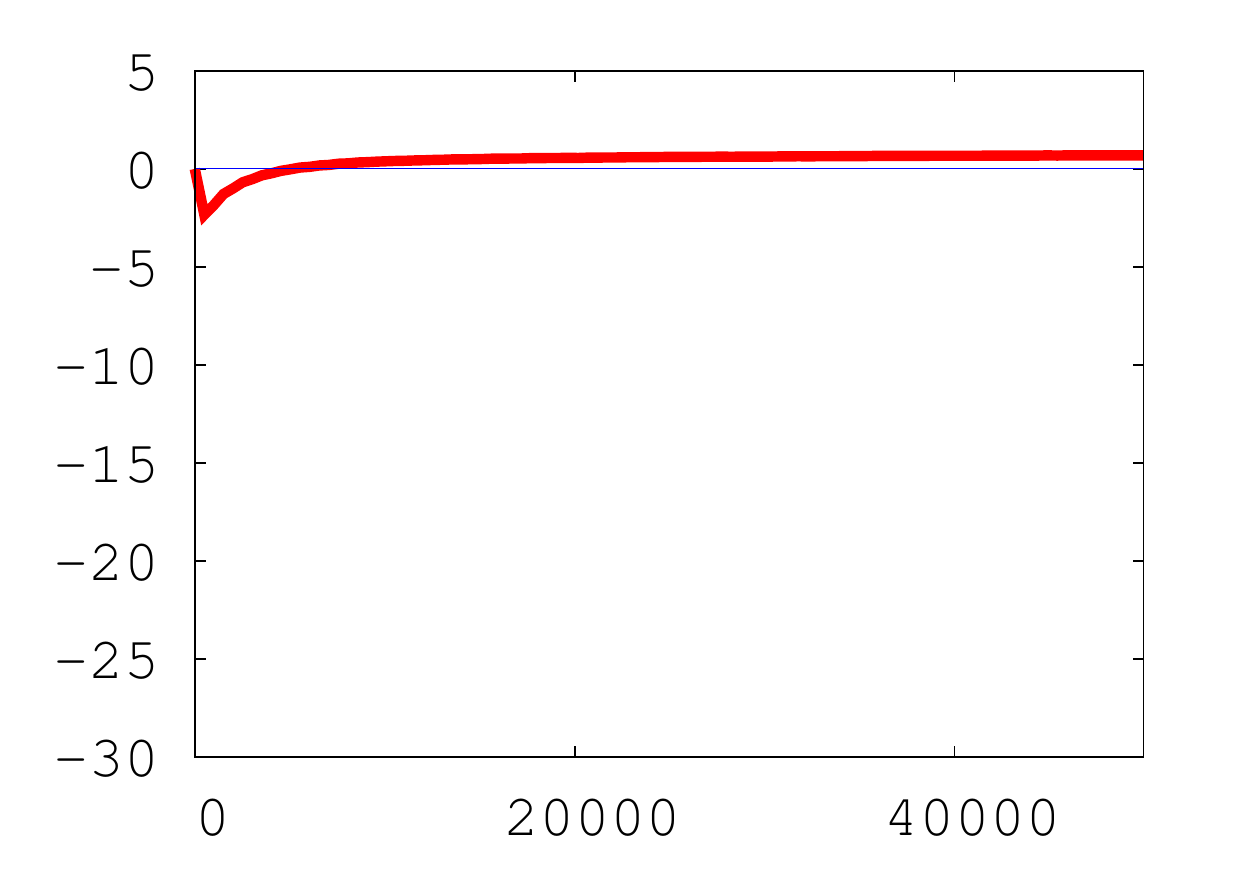}
&
\includegraphics[trim=10bp 25bp 30bp 10bp,clip,width=.15\linewidth]{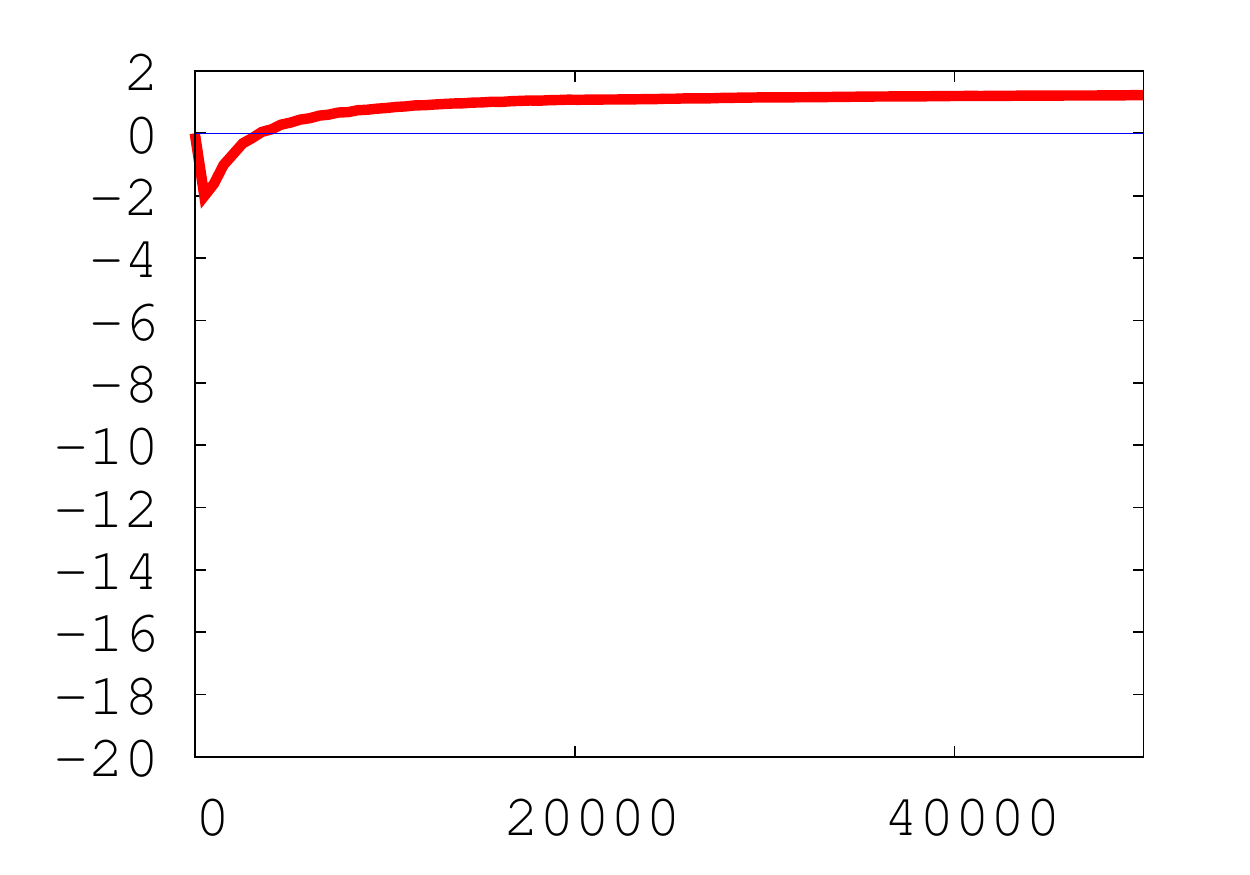}
&
\includegraphics[trim=10bp 25bp 30bp 10bp,clip,width=.15\linewidth]{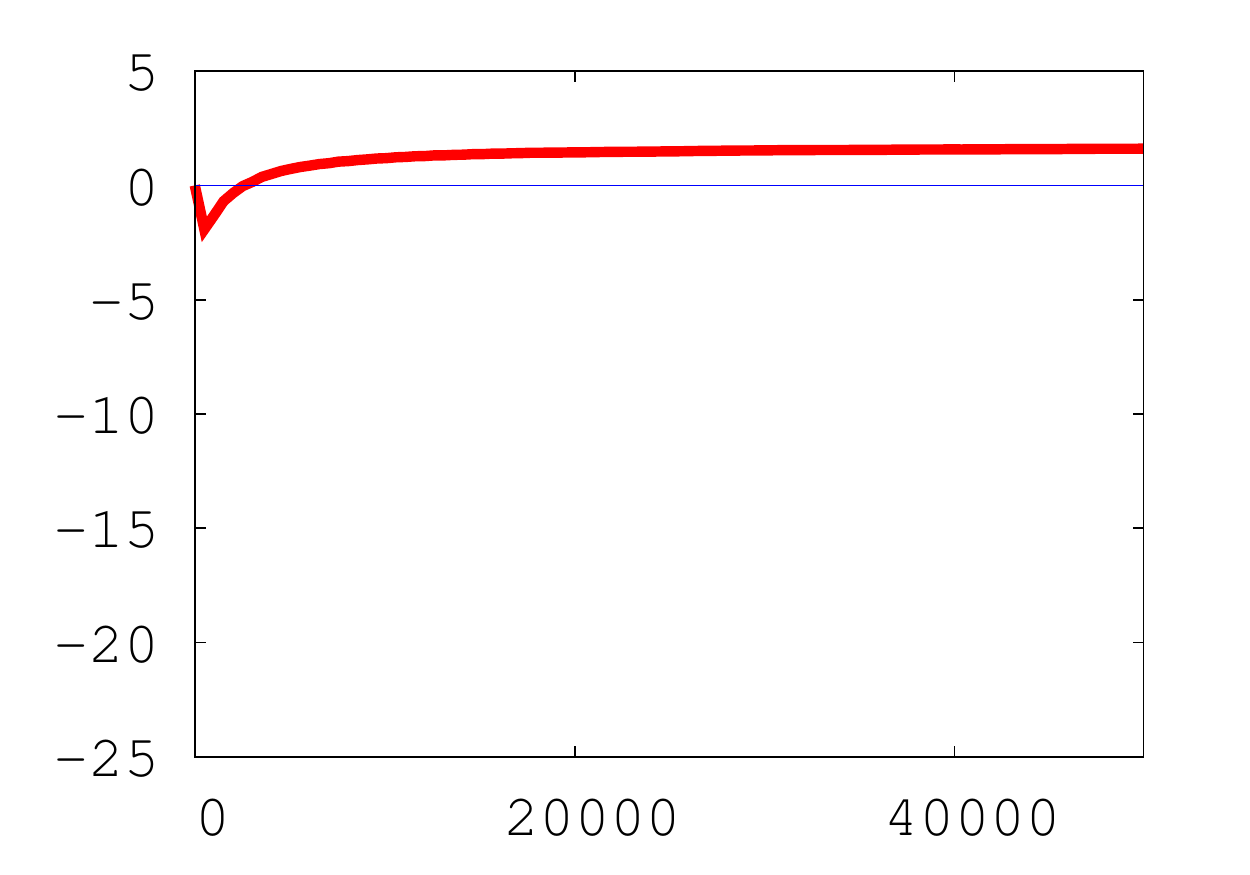}
&
\includegraphics[trim=10bp 25bp 30bp
10bp,clip,width=.15\linewidth]{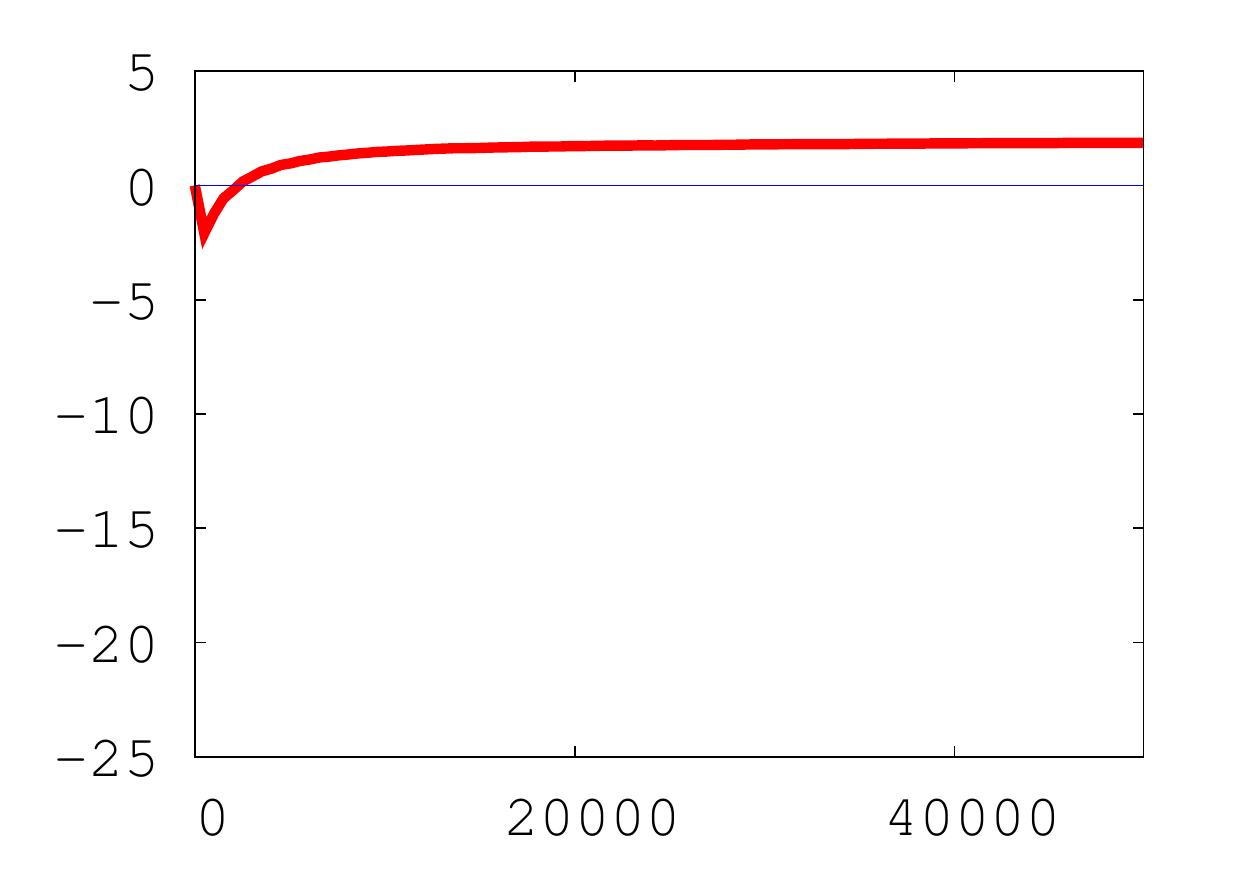}
&\\
$\rho =1.3$ & $\rho = 1.4$ & $\rho = 1.5$ & $\rho = 1.6$ & $\rho =
1.7$ &  \\\hline \hline
\end{tabular}
}
\end{center}
\caption{Error($p$-LMS) - Error(DN-$p$-LMS) as a function of $t$ ($\in \{1, 2, ..., 50 000\}$), $\bm{u}$ = dense, $(p,q) = (6.9, 1.17)$.}
  \label{tc1_supp_rr5}
\end{sidewaystable}

\begin{sidewaystable}[t]
\begin{center}
{\small
\begin{tabular}{cccccc}\hline \hline
\includegraphics[trim=10bp 30bp 30bp 10bp,clip,width=.15\linewidth]{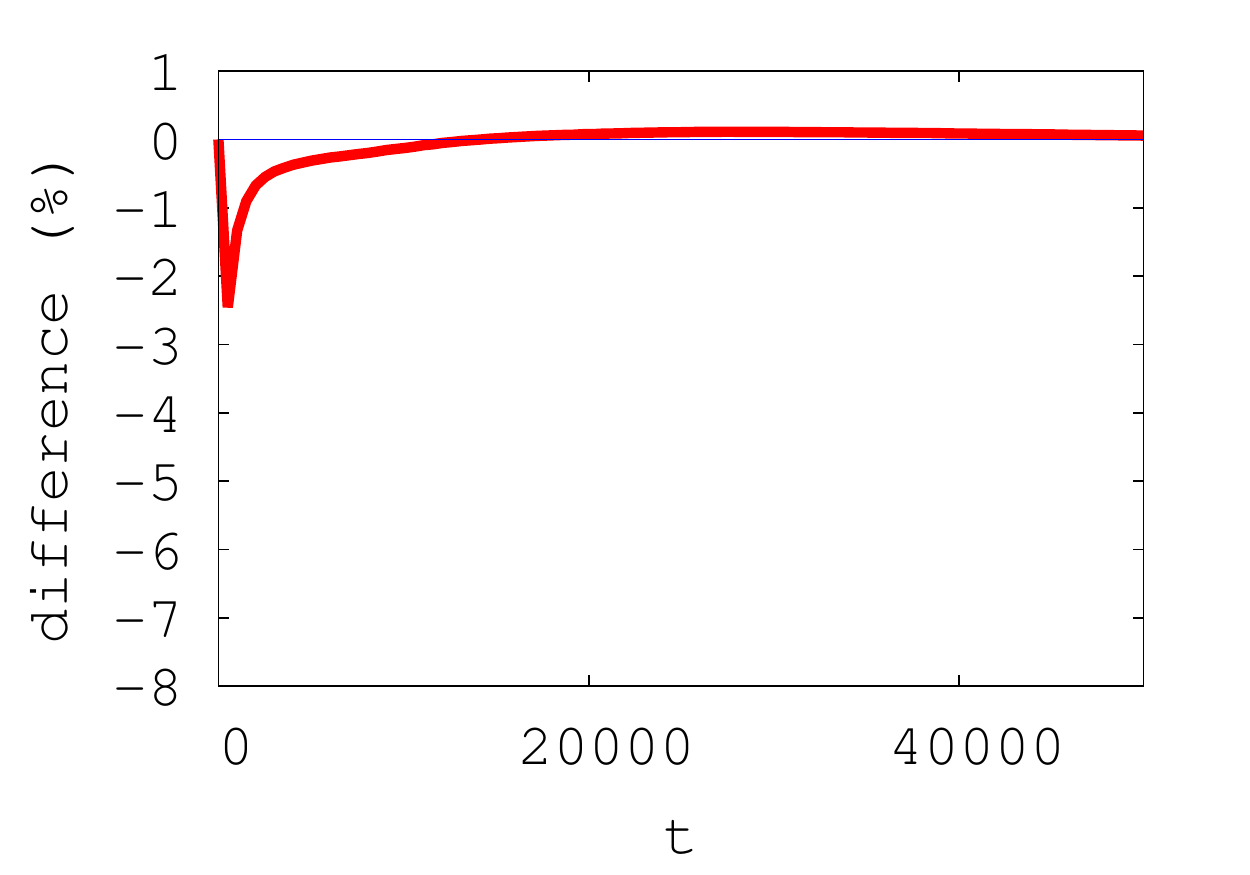}
&
\includegraphics[trim=10bp 25bp 30bp 10bp,clip,width=.15\linewidth]{results_P6_90_Q1_17_UCHOICE_S_EXP_R_UR_0_20_NOLABELS-eps-converted-to}
&
\includegraphics[trim=10bp 25bp 30bp 10bp,clip,width=.15\linewidth]{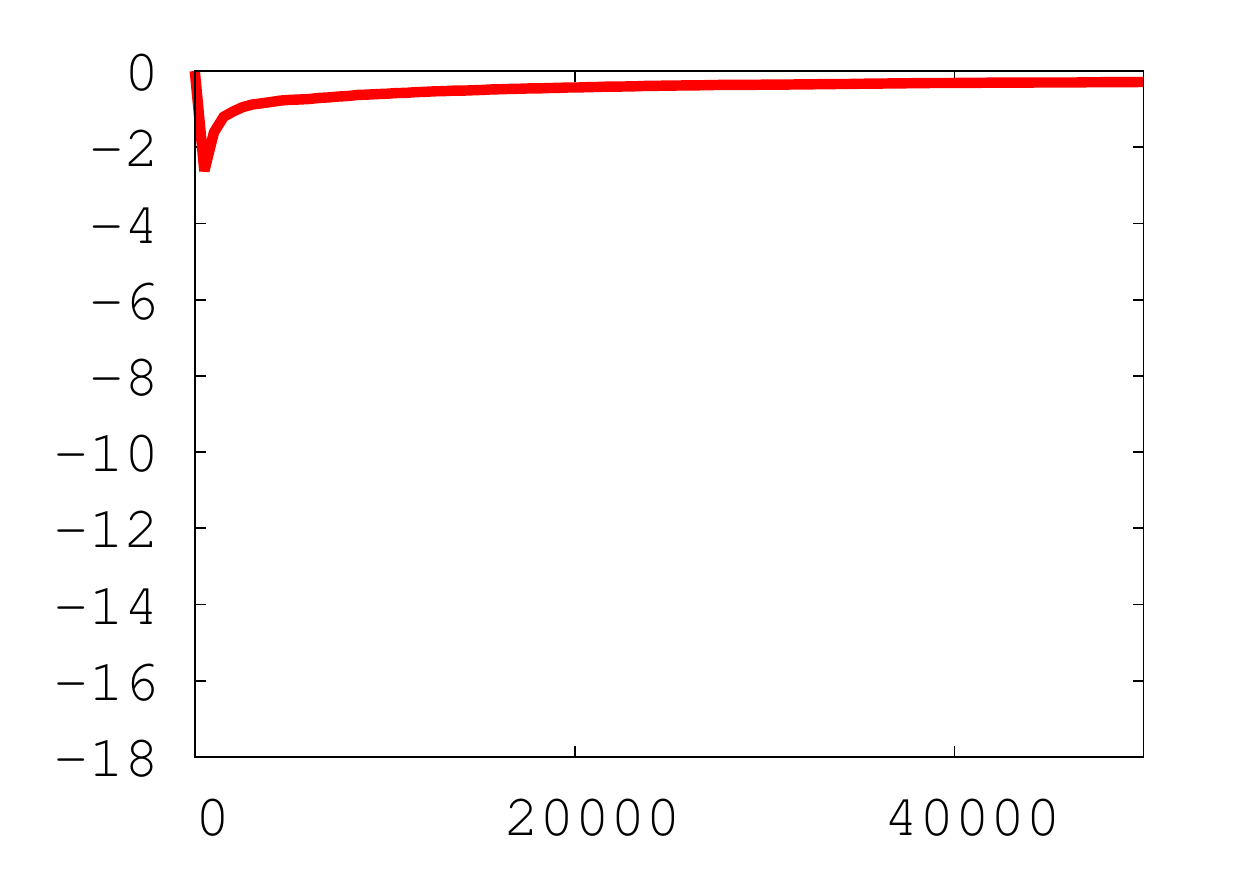}
&
\includegraphics[trim=10bp 25bp 30bp 10bp,clip,width=.15\linewidth]{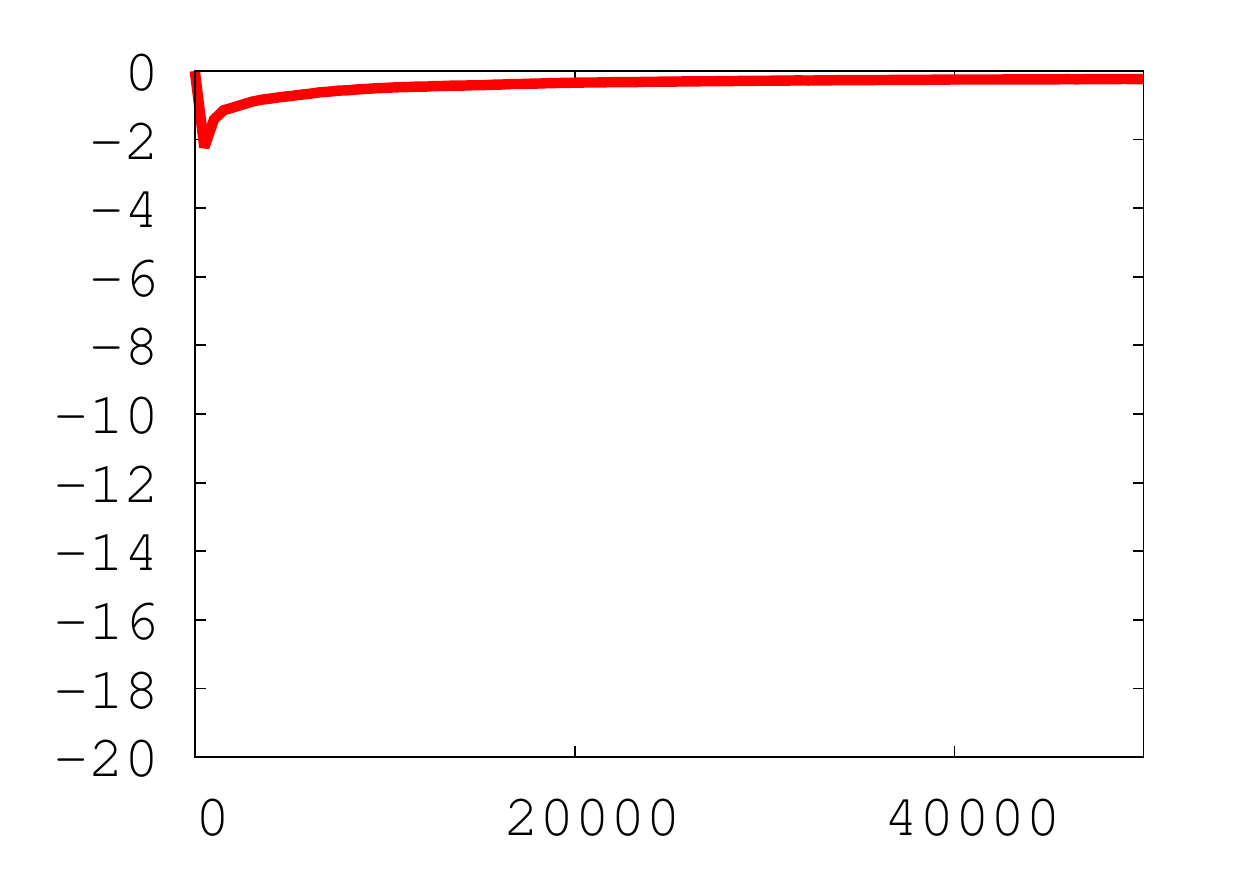}
&
\includegraphics[trim=10bp 25bp 30bp 10bp,clip,width=.15\linewidth]{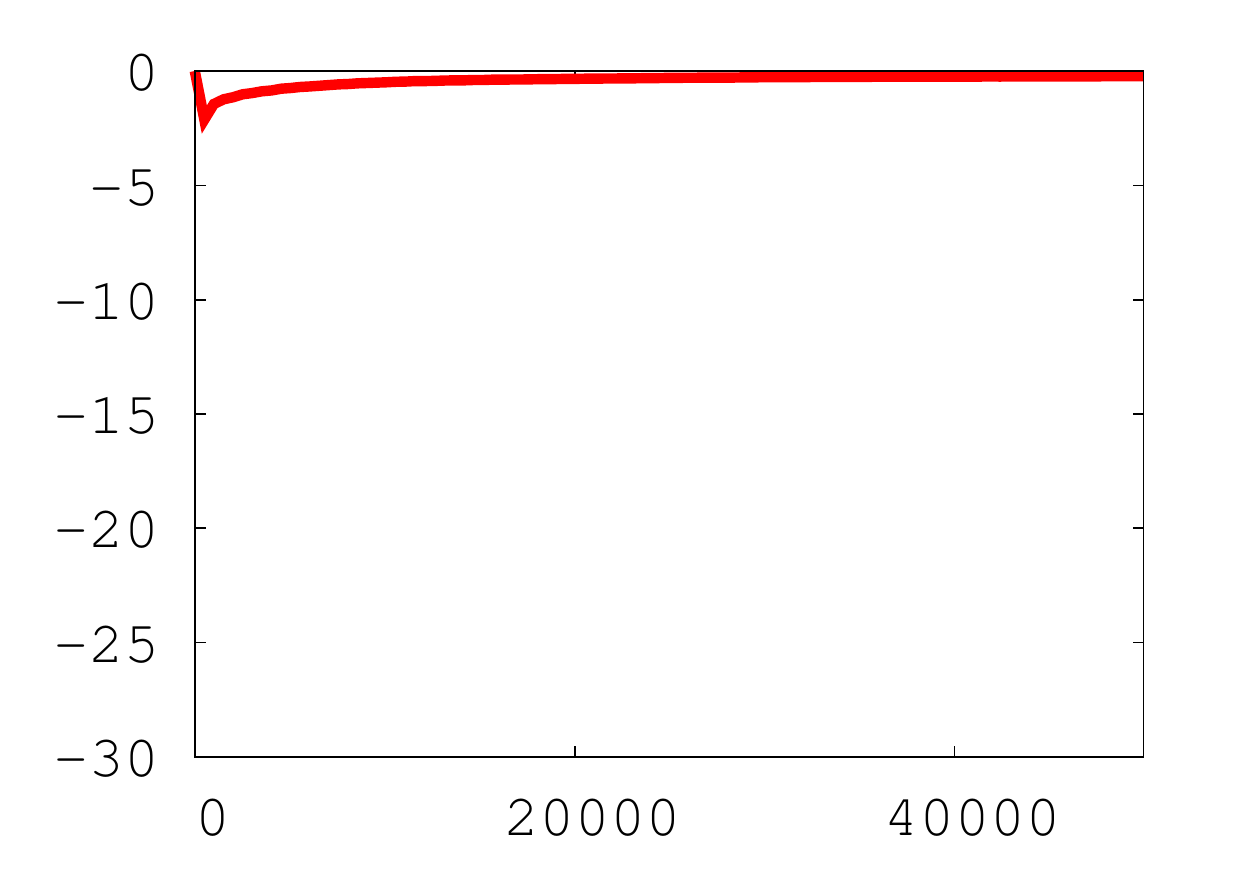}
&
\includegraphics[trim=10bp 25bp 30bp 10bp,clip,width=.15\linewidth]{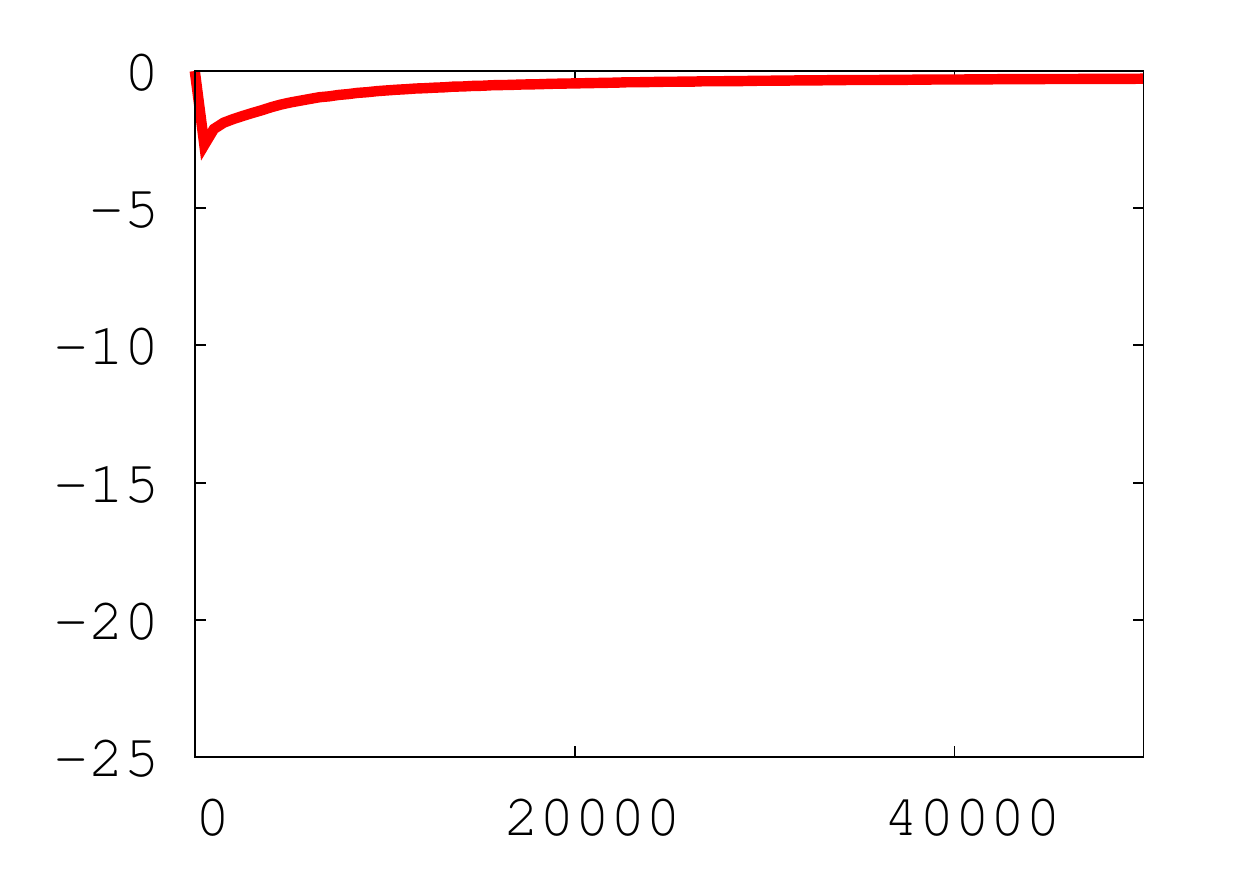}
\\
$\rho = 0.1$ & $\rho = 0.2$ & $\rho = 0.3$ & $\rho = 0.4$ & $\rho = 0.5$ & $\rho = 0.6$ \\\hline
\includegraphics[trim=10bp 25bp 30bp 10bp,clip,width=.15\linewidth]{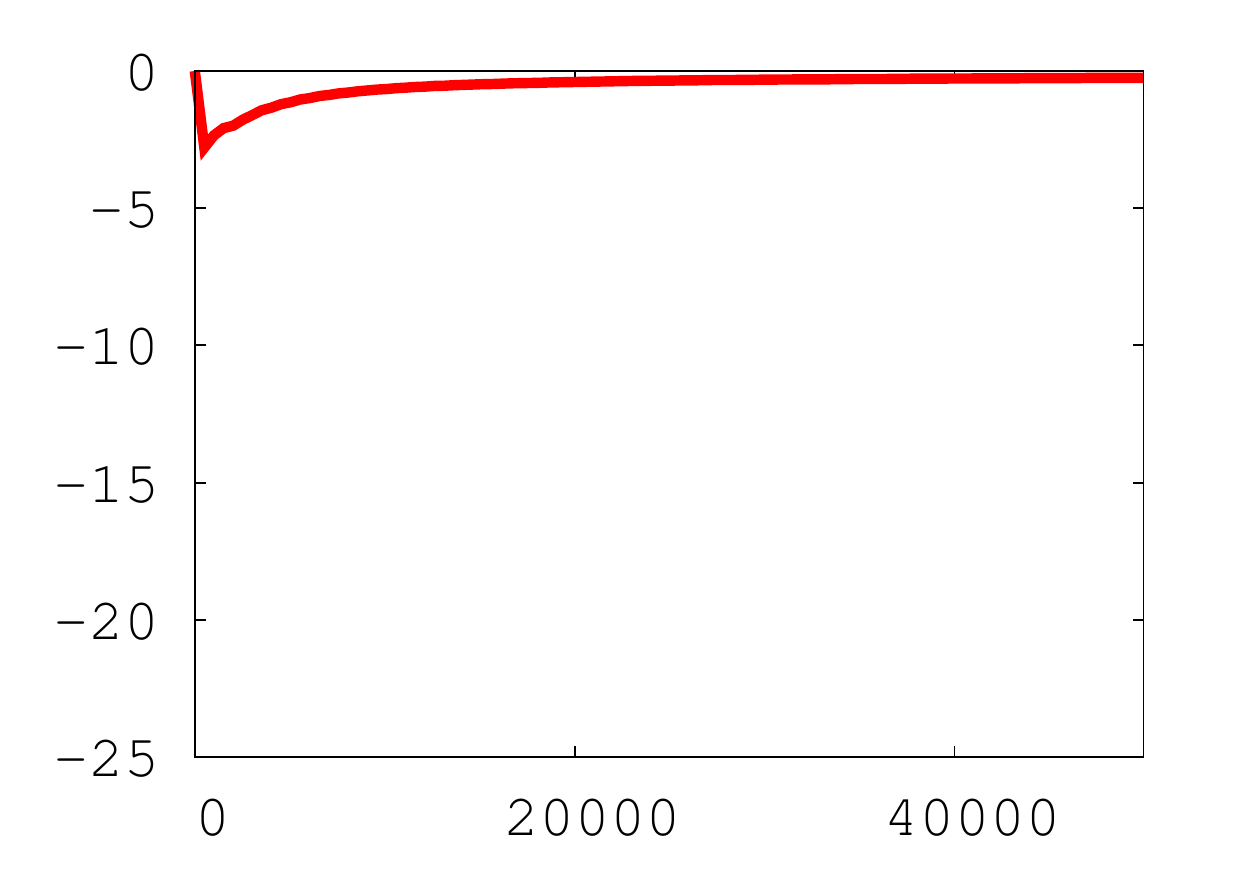}
&
\includegraphics[trim=10bp 25bp 30bp 10bp,clip,width=.15\linewidth]{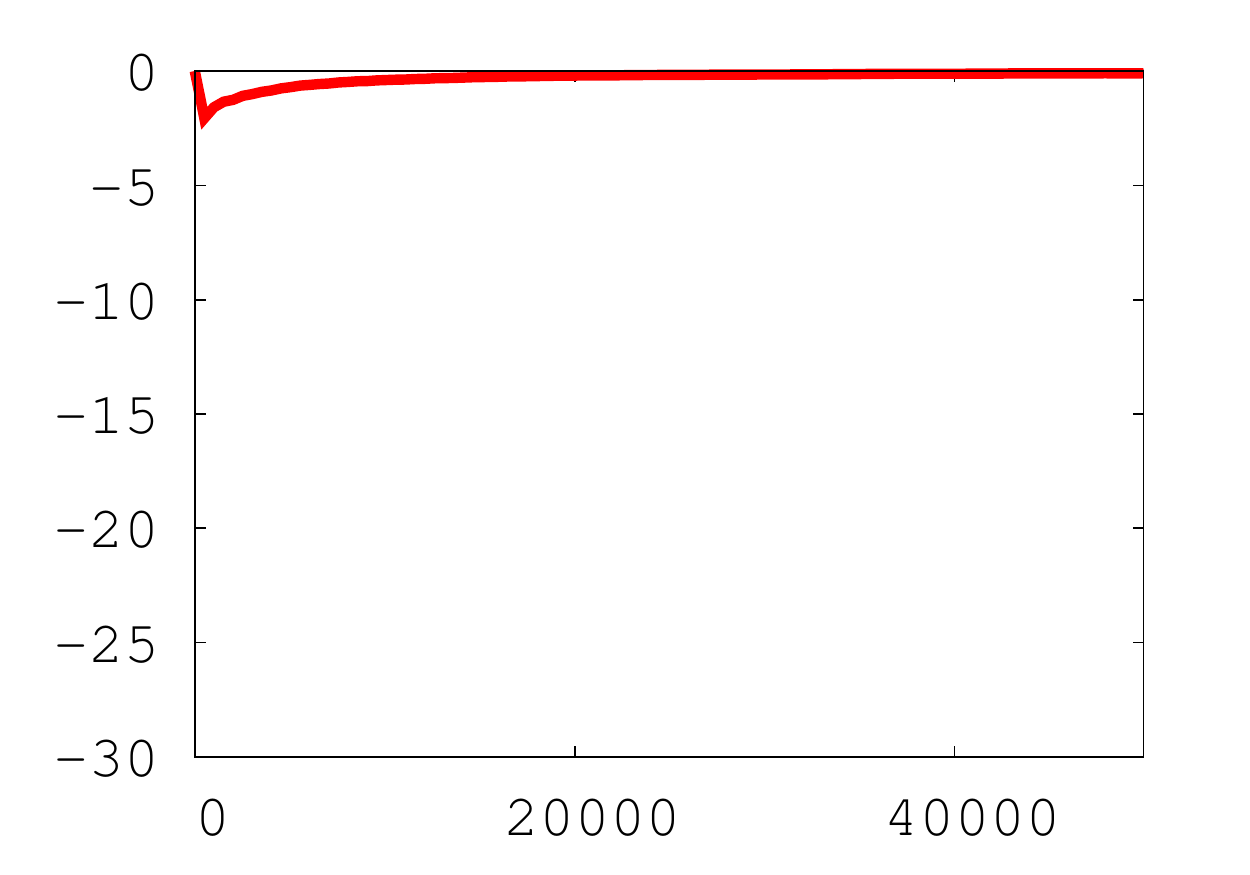}
&
\includegraphics[trim=10bp 25bp 30bp 10bp,clip,width=.15\linewidth]{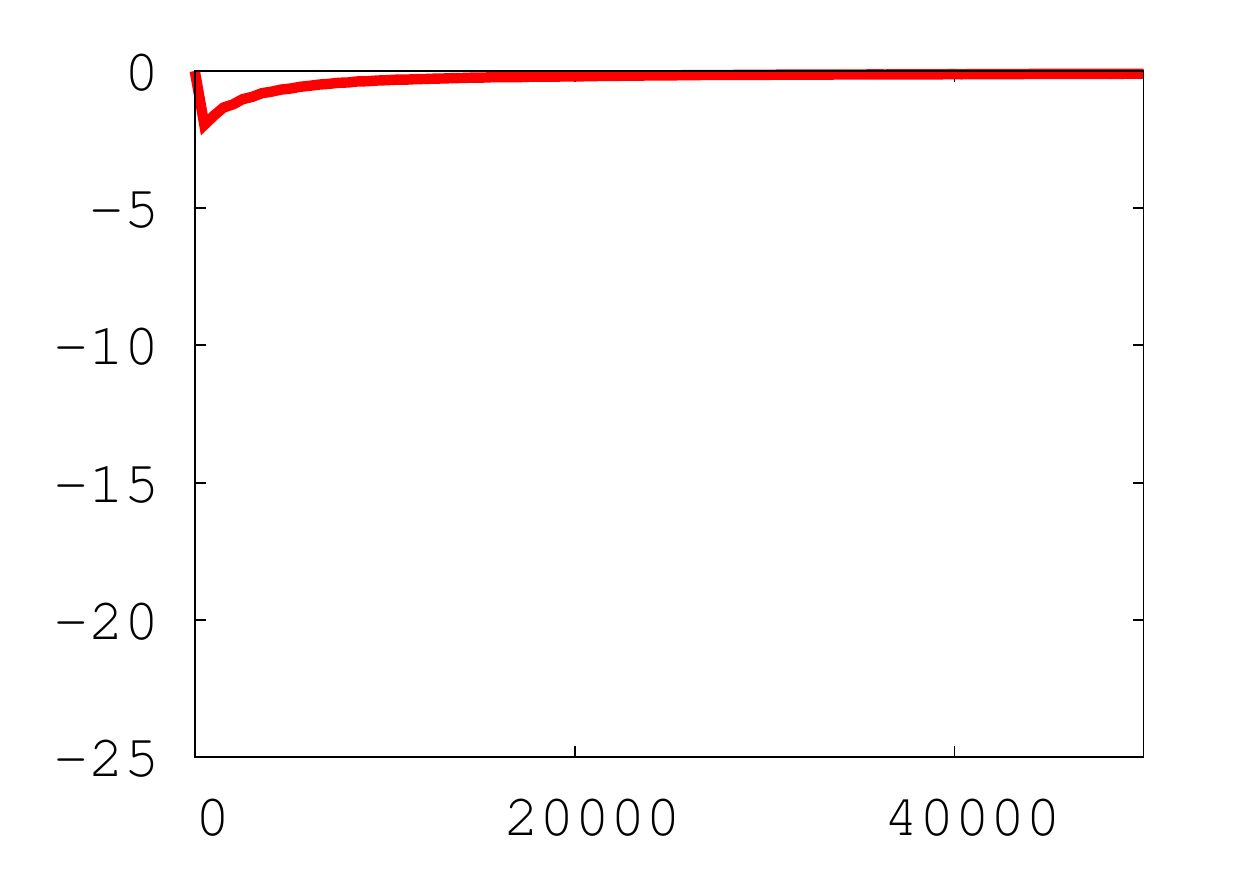}
&
\includegraphics[trim=10bp 25bp 30bp 10bp,clip,width=.15\linewidth]{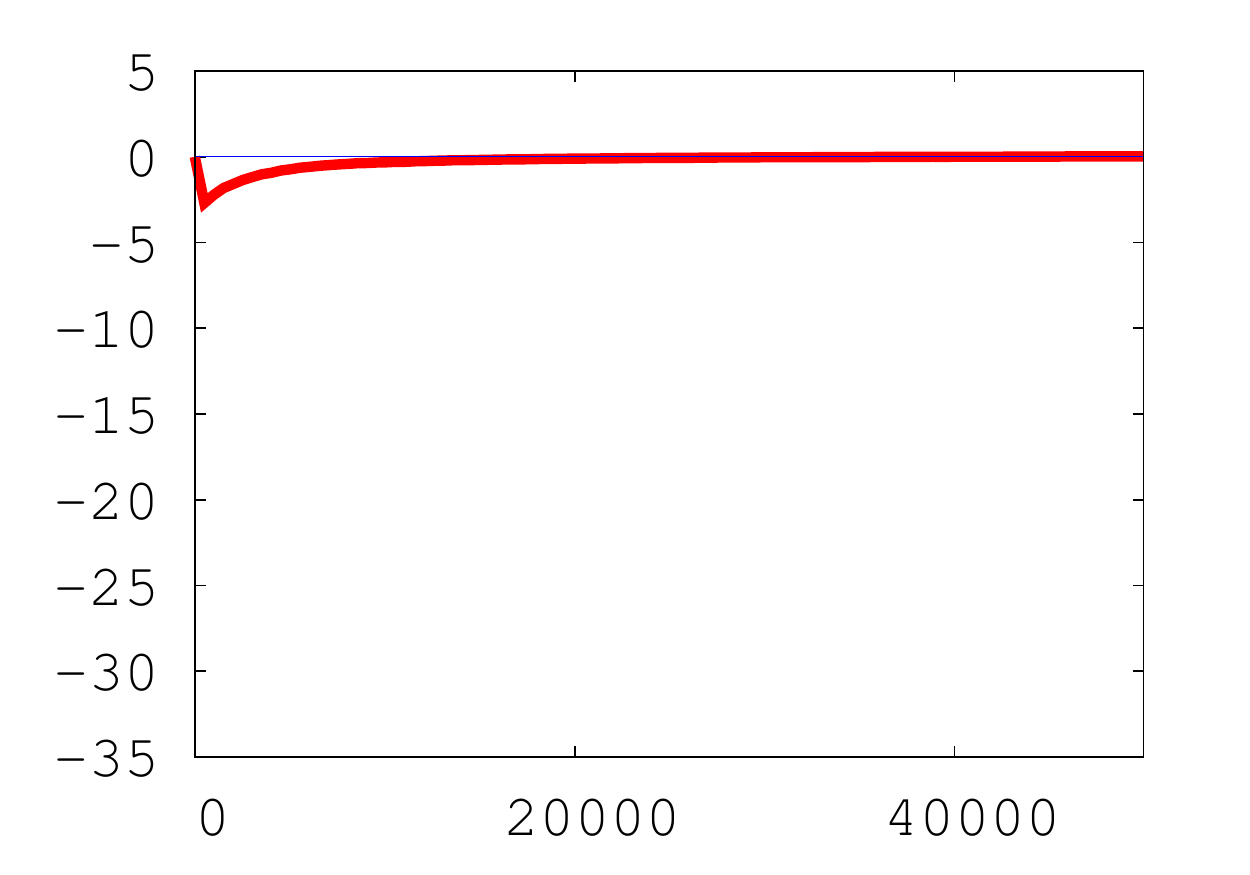}
&
\includegraphics[trim=10bp 25bp 30bp 10bp,clip,width=.15\linewidth]{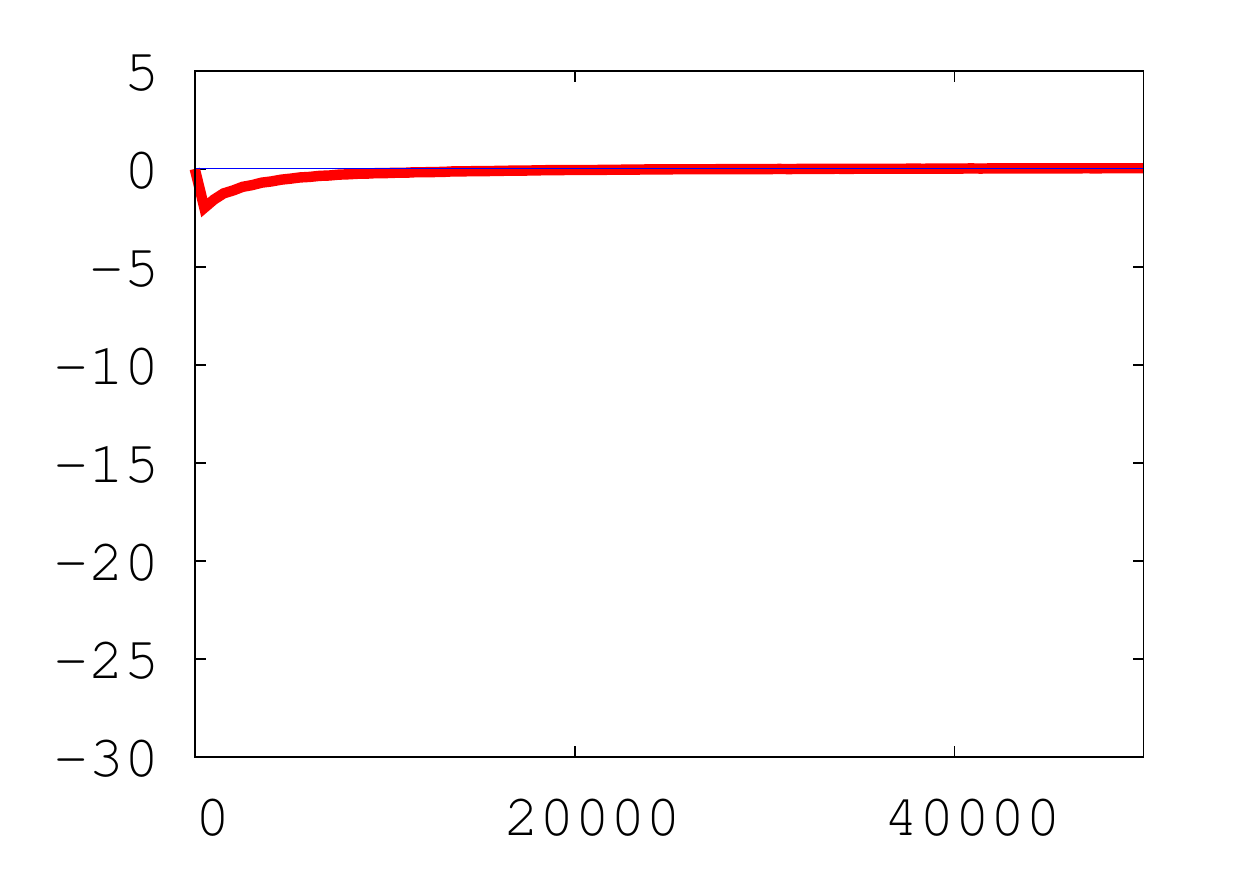}
&
\includegraphics[trim=10bp 25bp 30bp 10bp,clip,width=.15\linewidth]{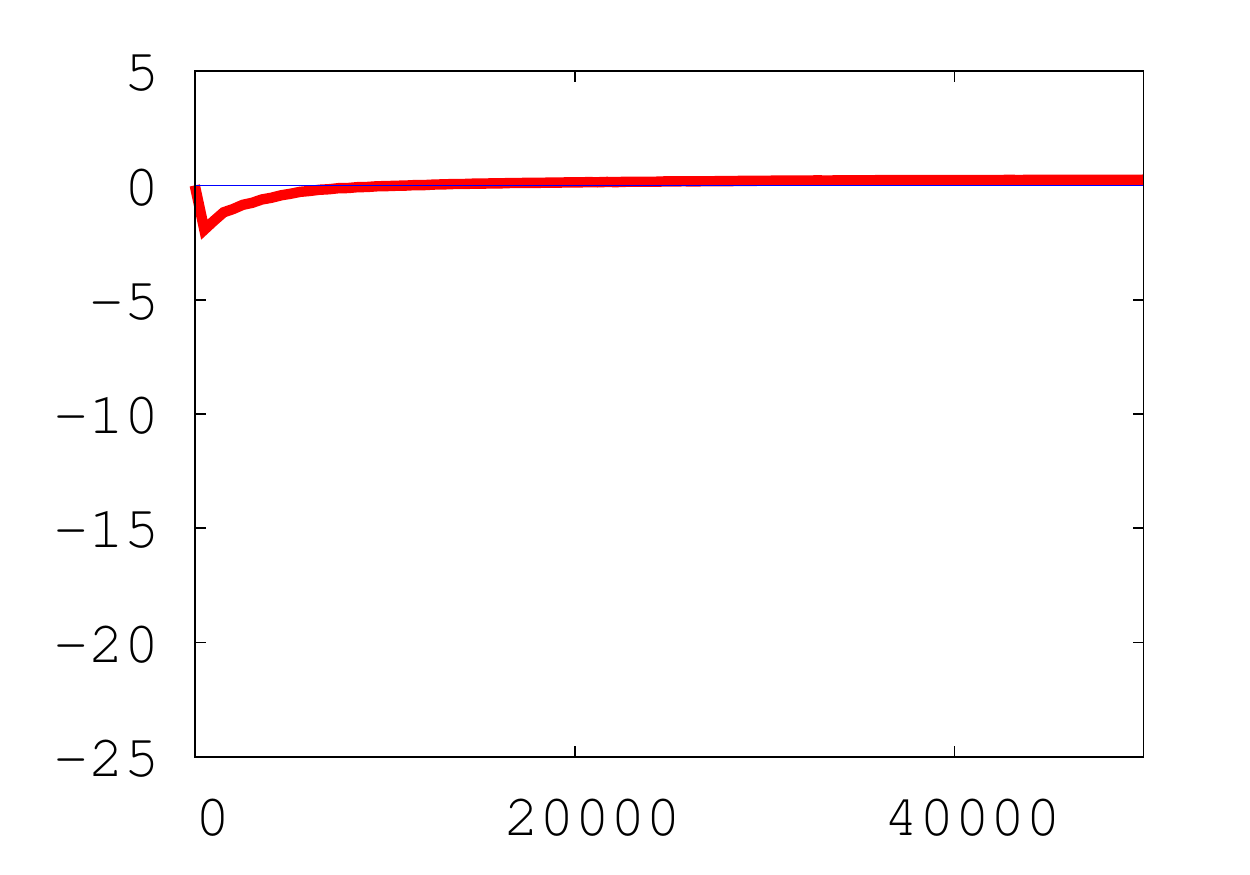}
\\
$\rho = 0.7$ & $\rho = 0.8$ & $\rho = 0.9$ & $\rho = \mbox{\textbf{1.0}}$ & $\rho = 1.1$ & $\rho = 1.2$ \\\hline
\includegraphics[trim=10bp 25bp 30bp 10bp,clip,width=.15\linewidth]{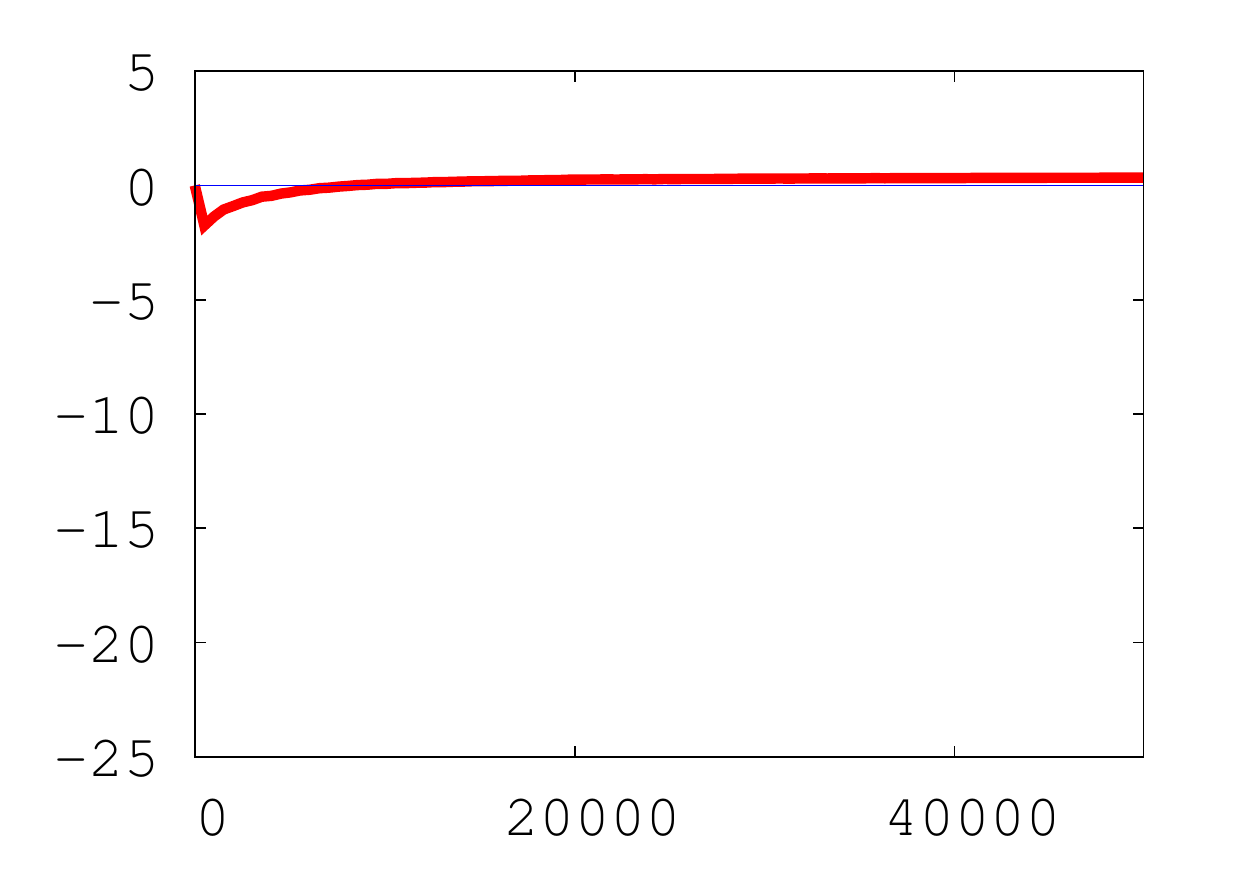}
&
\includegraphics[trim=10bp 25bp 30bp 10bp,clip,width=.15\linewidth]{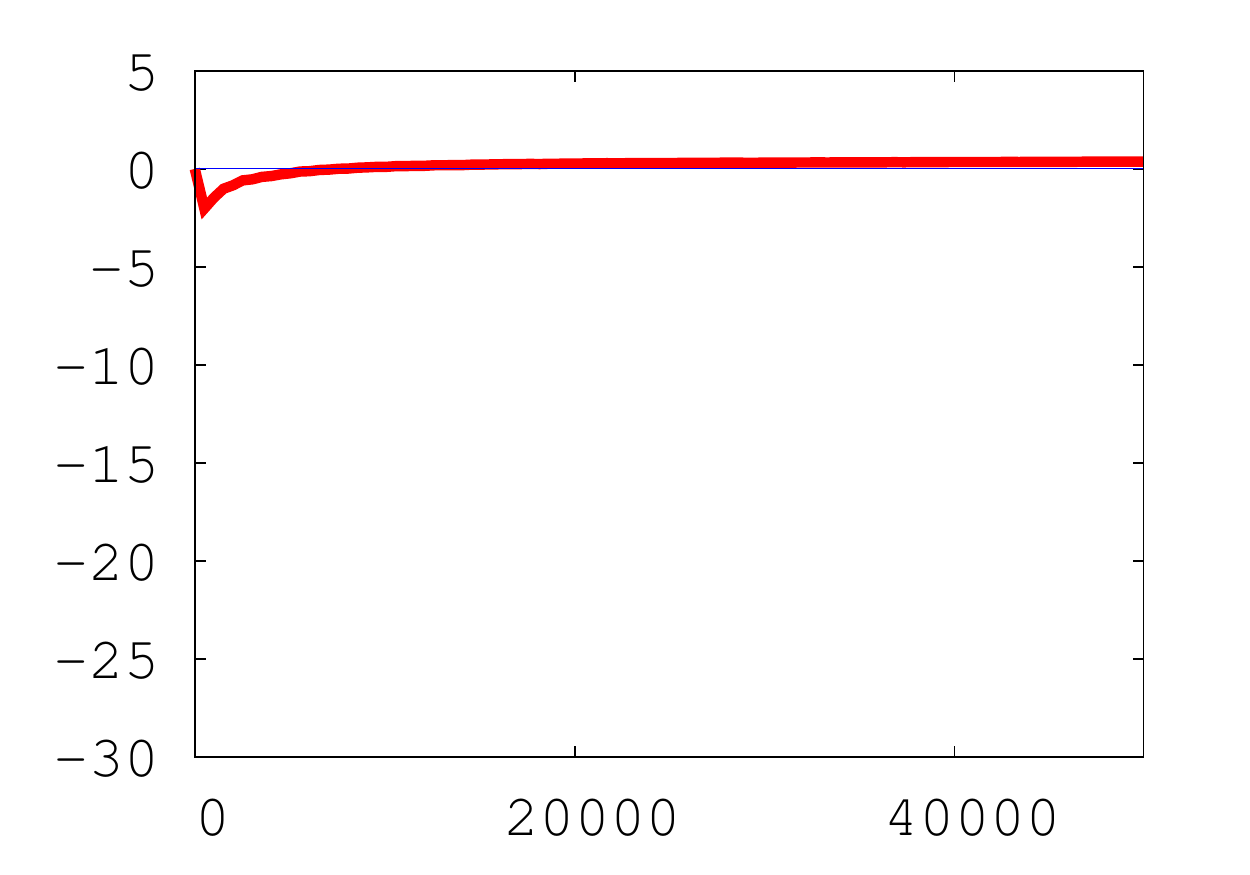}
&
\includegraphics[trim=10bp 25bp 30bp 10bp,clip,width=.15\linewidth]{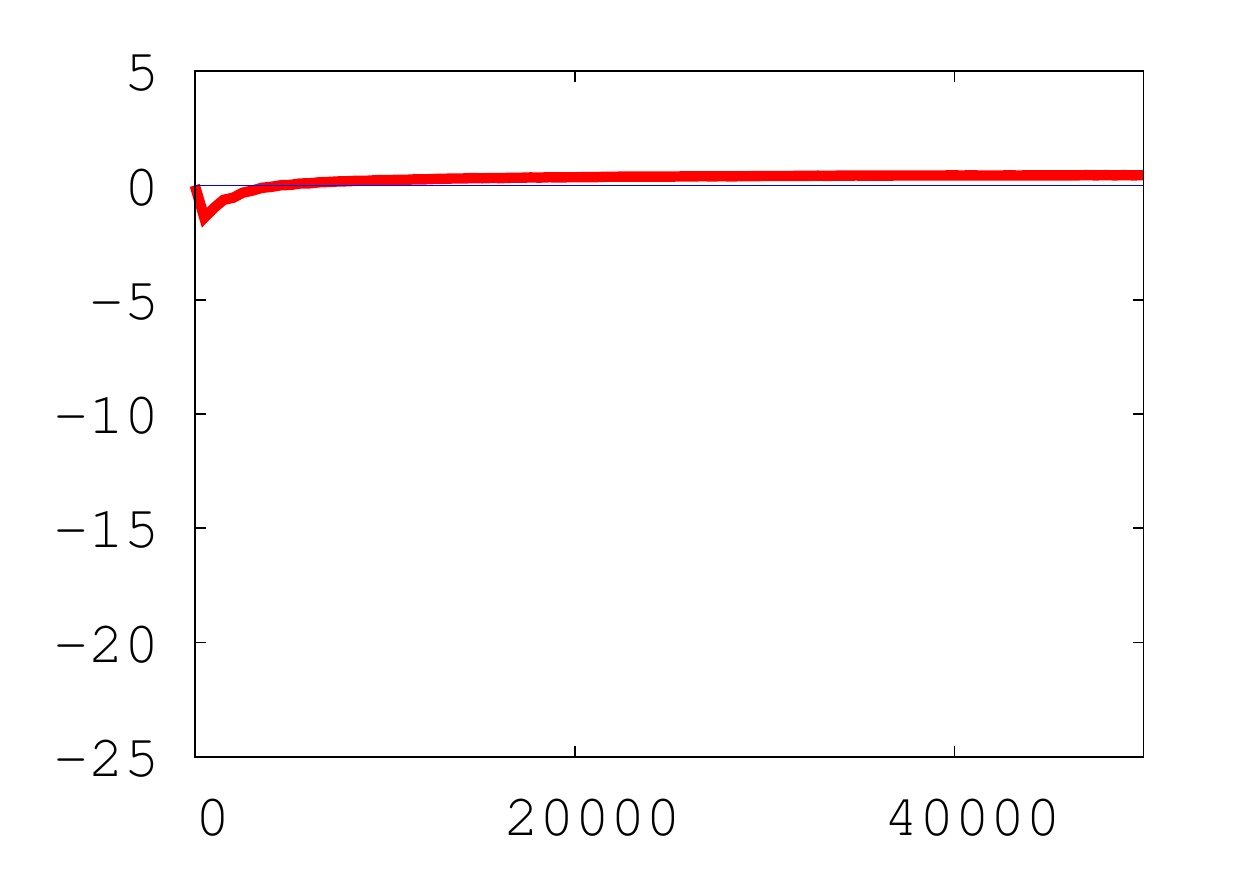}
&
\includegraphics[trim=10bp 25bp 30bp 10bp,clip,width=.15\linewidth]{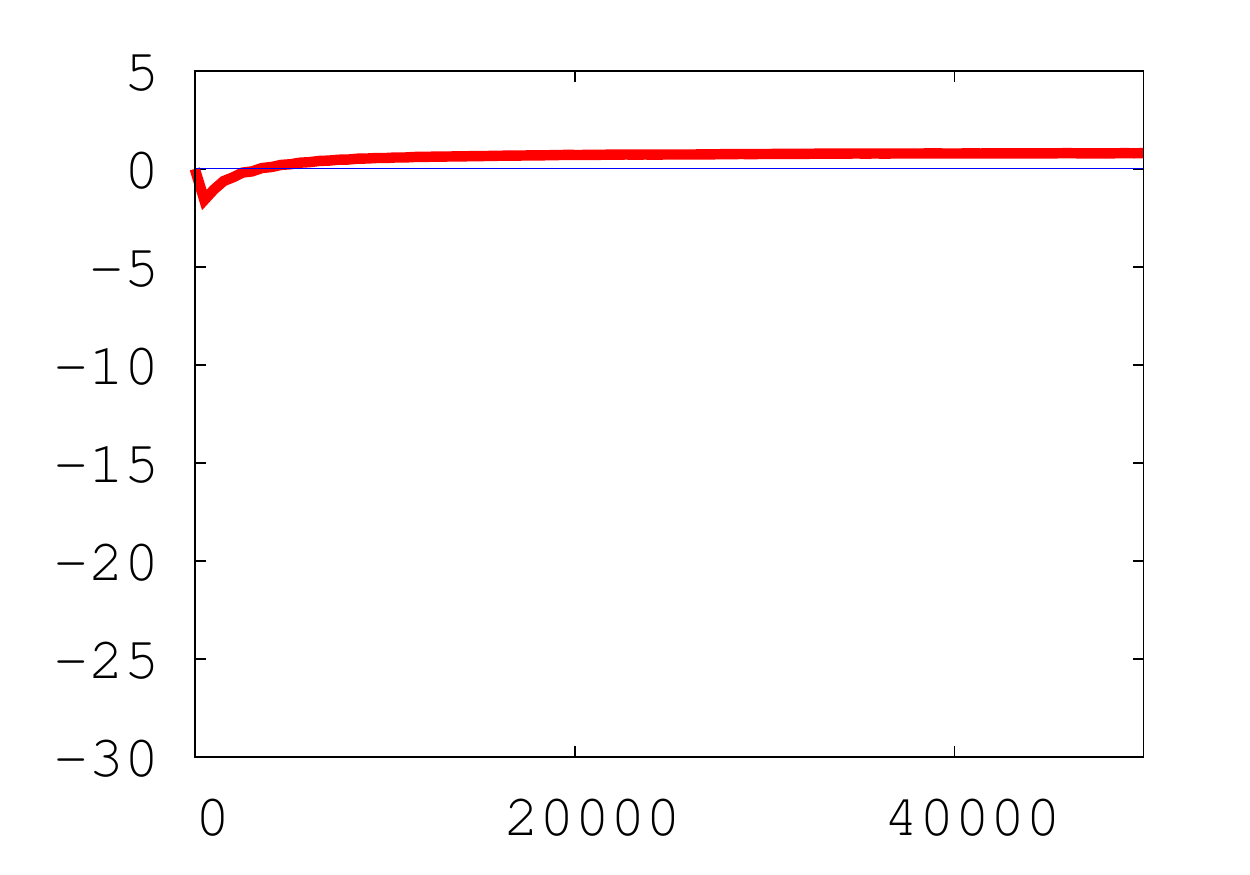}
&
\includegraphics[trim=10bp 25bp 30bp
10bp,clip,width=.15\linewidth]{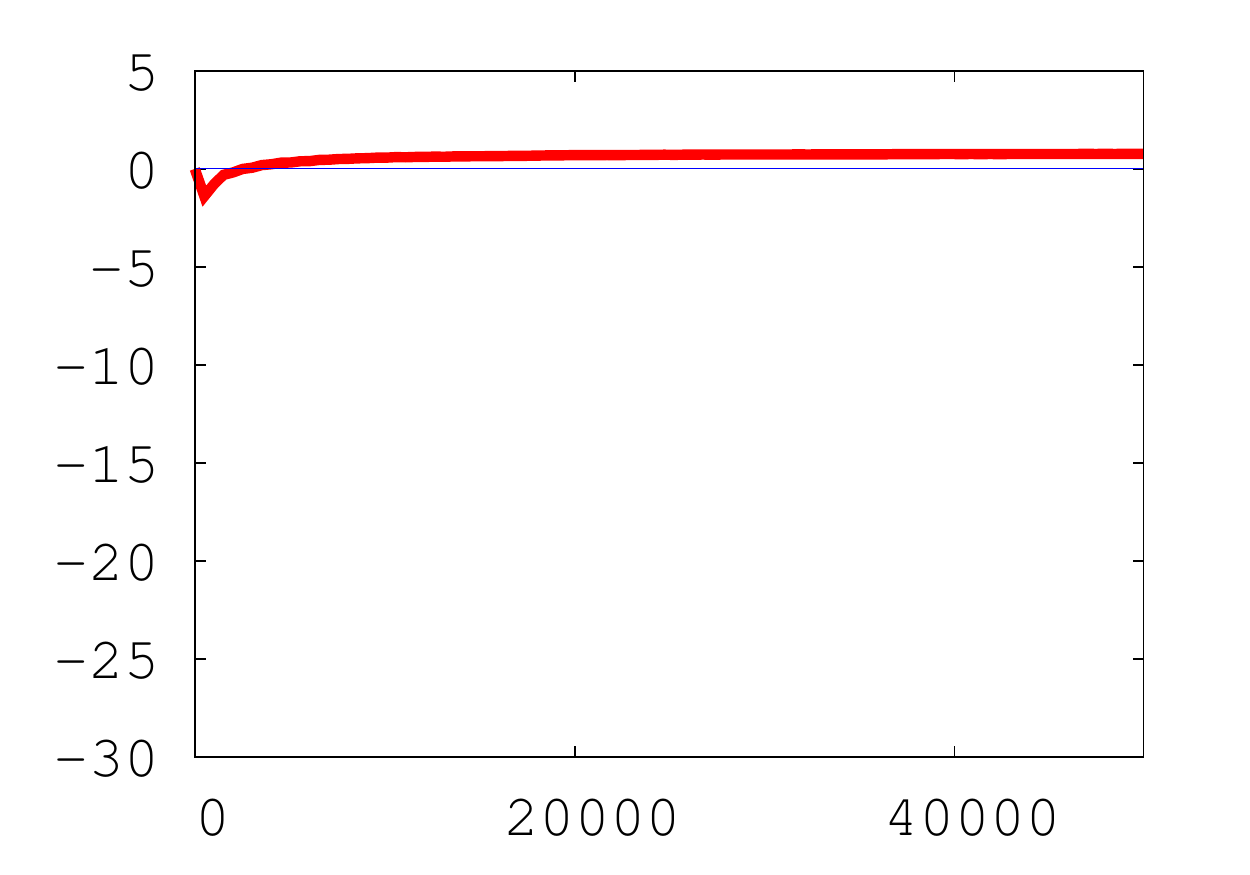}
&\\
$\rho =1.3$ & $\rho = 1.4$ & $\rho = 1.5$ & $\rho = 1.6$ & $\rho =
1.7$ &  \\\hline \hline
\end{tabular}
}
\end{center}
\caption{Error($p$-LMS) - Error(DN-$p$-LMS) as a function of $t$ ($\in \{1, 2, ..., 50 000\}$), $\bm{u}$ = sparse, $(p,q) = (6.9, 1.17)$.}
  \label{tc1_supp_rr6}
\end{sidewaystable}

\section{Comment: Theorem \ref{th00} is a scaled isometry in disguise (sometimes)}
\label{app:scaled-iso}

\begin{figure}[t]
\centering
\scalebox{0.925}{
\begin{tabular}{c}
\includegraphics[width=.8\linewidth]{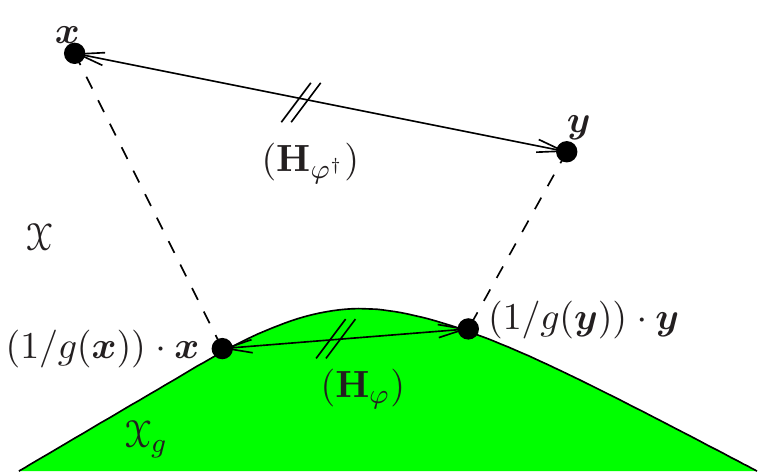}
\end{tabular}
}

\caption{A depiction of the adaptive isometry that Theorem \ref{th00} provides.
	\label{fig:iso1}} 
\vspace{-0.125in}  
\end{figure}

Theorem \ref{th00} states in fact an isometry under some conditions, but an adaptive one in
the sense that metrics involved rely on all parameters, and in
particular on the points involved in the divergences (See Figure \ref{fig:iso1}). Indeed, a simple
Taylor expansion of the equation (\ref{eqBreg}) (main file) shows that
any such Bregman distortion with a twice differentiable generator can
be expressed as:
\begin{eqnarray}
D_{\varphi}(
  \ve{x}\|\ve{y}) & = & \frac{1}{2}\cdot (\ve{x} - \ve{y})^\top \textbf{H}_{\varphi} (\ve{x} - \ve{y}) \:\:, \label{eqBreg2}
\end{eqnarray}
for \textit{some} value of the Hessian $\textbf{H}_{\varphi}$
depending on $\ve{x}, \ve{y}$ (see for example \cite[Appendix
I]{kwhTP}, \citep{anMO}). Hence,
under the constraint that both $\varphi$ and $\varphi^{\dagger}$ are
twice differentiable, eq. (\ref{eq11}) becomes
\begin{eqnarray}
g(\ve{x}) \cdot \left(\frac{1}{g(\ve{x}) } \cdot \ve{x} -
    \frac{1}{g(\ve{y}) } \cdot \ve{y} \right)^\top
\textbf{H}_{\varphi} \left(\frac{1}{g(\ve{x}) } \cdot \ve{x} -
    \frac{1}{g(\ve{y}) } \cdot \ve{y}\right)  & = & (\ve{x} -
  \ve{y})^\top \textbf{H}_{\varphi^\dagger} (\ve{x} - \ve{y}) \:\:.\label{eq11b}
\end{eqnarray}
Assuming $g$ non-negative (which, by the way, enforces the convexity
of $\varphi^{\dagger}$), we get by taking square roots,
\begin{eqnarray}
\sqrt{g(\ve{x})}\cdot \left\| \frac{1}{g(\ve{x}) } \cdot \ve{x} -
    \frac{1}{g(\ve{y}) } \cdot \ve{y}  \right\|_{\textbf{H}_{\varphi}}
  & = & \left\| \ve{x} - \ve{y}\right\|_{\textbf{H}_{\varphi^\dagger}}\:\:,\label{eqISO}
\end{eqnarray}
which is a scaled isometry relationship between $\XCal_g$ (left) and
$\XCal$ (right), but again the metrics involved depend on the
arguments. Nevertheless, eq. (\ref{eqISO}) displays a sophisticated
relationship between distances in $\XCal_g$ and in
$\XCal$ which may prove useful in itself.

\setlength{\bibsep}{0.5pt plus 0.5ex}
\bibliography{bibgen}
}{
\setlength{\bibsep}{0pt plus 0.1ex}
{\footnotesize
\bibliography{bibgen}
}
}

\end{document}